\def\colorful{1}
\newif\ifred
\titleformat*{\paragraph}{\bfseries}
\pgfplotsset{compat=1.17}
\definecolor[named]{ACMBlue}{cmyk}{1,0.1,0,0.1}
\definecolor[named]{ACMYellow}{cmyk}{0,0.16,1,0}
\definecolor[named]{ACMOrange}{cmyk}{0,0.42,1,0.01}
\definecolor[named]{ACMRed}{cmyk}{0,0.90,0.86,0}
\definecolor[named]{ACMLightBlue}{cmyk}{0.49,0.01,0,0}
\definecolor[named]{ACMGreen}{cmyk}{0.20,0,1,0.19}
\definecolor[named]{ACMPurple}{cmyk}{0.55,1,0,0.15}
\definecolor[named]{ACMDarkBlue}{cmyk}{1,0.58,0,0.21}
\crefname{sub}{Subsection}{Subsection}
\crefname{sdp}{SDP}{SDP}
\crefname{lp}{LP}{LP}
\crefname{ineq}{Inequality}{Inequality}
\crefname{sub}{Subsection}{Subsection}
\crefname{sdp}{SDP}{SDP}
\crefname{lp}{LP}{LP}
\newtheorem{theorem}{Theorem}[section]
\newtheorem{lemma}{Lemma}
\newtheorem{informal theorem}[theorem]{Theorem (informal statement)}
\newtheorem{fact}[theorem]{Fact}
\newtheorem{assumption}[theorem]{Assumption}
\newtheorem{definition}[theorem]{Definition}
\newcommand{\lp}{\left}
\newcommand{\rp}{\right}
\newcommand\norm[1]{\left\| #1 \right\|}
\renewcommand\vec[1]{\mathbf{#1}}
\DeclareMathOperator*{\pr}{\mathbf{Pr}}
\DeclareMathOperator*{\E}{\mathbf{E}}
\newcommand{\proj}{\mathrm{proj}}
\newcommand{\normal}{\mathcal{N}}
\newcommand{\bx}{\mathbf{x}}
\newcommand{\by}{\mathbf{y}}
\newcommand{\bv}{\mathbf{v}}
\newcommand{\bu}{\mathbf{u}}
\newcommand{\bz}{\mathbf{z}}
\newcommand{\bw}{\mathbf{w}}
\newcommand{\bI}{\mathbf{I}}
\newcommand{\bV}{\mathbf{V}}
\newcommand{\sym}{\mathrm{sym}}
\newcommand{\err}{\mathrm{err}}
\newcommand{\B}{\mathbb{B}}
\newcommand{\p}{\mathbf{P}}
\newcommand{\R}{\mathbb{R}}
\newcommand{\Z}{\mathbb{Z}}
\newcommand{\N}{\mathbb{N}}
\newcommand{\eps}{\epsilon}
\newcommand{\poly}{\mathrm{poly}}
\newcommand{\D}{\mathcal{D}}
\newcommand{\sgn}{\mathrm{sign}}
\newcommand{\sign}{\mathrm{sign}}
\newcommand{\Ind}{\mathds{1}}
\newcommand{\littlesum}{\mathop{\textstyle \sum}}
\newcommand{\bb}{\mathbf{b}}
\newcommand{\be}{\mathbf{e}}
\newcommand{\bs}{\mathbf{s}}
\newcommand{\bg}{\mathbf{g}}
\newcommand{\x}{\vec x}
\newcommand{\lamdba}{\lambda}
\newcommand{\iid}{{i.i.d.}\ }
\newcommand{\abs}[1]{\lp| #1 \rp|}
\newcommand{\A}{\mathcal{A}}
\newcommand{\card}[1]{|#1|}
\newcommand{\algparbox}[1]{\parbox[t]{\dimexpr\linewidth-\algorithmicindent}{#1\strut}}
\renewcommand\Pr{\pr}
\begin{document}

\title{Learning Intersections of Two Margin Halfspaces\\ under Factorizable Distributions \footnote{Appeared in COLT 2025}}

\author{
Ilias Diakonikolas\thanks{Supported by NSF Medium Award CCF-2107079 and an H.I. Romnes Faculty Fellowship.}\\
University of Wisconsin-Madison\\
\texttt{ilias@cs.wisc.edu}
\and
Mingchen Ma\thanks{Supported by NSF Award  CCF-2144298 (CAREER).}\\
University of Wisconsin-Madison\\
\texttt{mingchen@cs.wisc.edu}
\and
Lisheng Ren\thanks{Supported in part by NSF Medium Award CCF-2107079.}\\
University of Wisconsin-Madison\\
\texttt{lren29@wisc.edu}
\and
Christos Tzamos\thanks{Supported in part by NSF Award  CCF-2144298 (CAREER).}\\
University of Athens and Archimedes AI\\
\texttt{ctzamos@gmail.com}
}

\maketitle
\thispagestyle{empty}

\begin{abstract}%

Learning intersections of halfspaces is a central problem in Computational Learning Theory. Even for just two halfspaces, it remains a major open question whether learning is possible in polynomial time with respect to the margin $\gamma$ of the data points and their dimensionality $d$. The best-known algorithms run in quasi-polynomial time $d^{O( \log{1/\gamma} )}$, and it has been shown that this complexity is unavoidable for any algorithm relying solely on correlational statistical queries (CSQ).

In this work, we introduce a novel algorithm that provably circumvents 
the CSQ hardness barrier. Our approach applies to a broad class of 
distributions satisfying a natural, previously studied, factorizability assumption. Factorizable distributions lie between distribution-specific and distribution-free settings, and significantly extend previously known tractable cases.
Under these distributions, 
we show that CSQ-based methods still require quasipolynomial time even for weakly learning, whereas our algorithm achieves $\text{poly}(d,1/\gamma)$ 
time by leveraging more general statistical queries (SQ), establishing a strong separation between CSQ and SQ for this simple realizable PAC learning problem. 

Our result is grounded in a rigorous analysis utilizing a novel duality framework that characterizes the moment tensor structure induced by the marginal distributions. Building on these structural insights, we propose new, efficient learning algorithms. These algorithms combine a refined variant of Jennrich’s Algorithm with PCA over random projections of the moment tensor, along with a gradient-descent-based non-convex optimization framework.




\end{abstract}

\newpage
\setcounter{page}{1}


\section{Introduction}

A halfspace $h=\sign(\bu^*\cdot \bx +t_1):\R^d \to \{\pm 1\}$ is a Boolean function defined by its weight vector $\bu^* \in \R^d$ 
and threshold $t_1 \in \R$. Halfspace learning is one of the oldest and most fundamental problems in Machine Learning \citep{rosenblatt1958perceptron, block1962perceptron}. 
While learning a single halfspace is well-understood, 
learning intersections of halfspaces is significantly more challenging. 
Even for intersections of two halfspaces, polynomial-time algorithms 
are known only under strong  distributional assumptions about the datapoints $\bx$, which are e.g., assumed to be drawn from a Gaussian or log-concave distribution \citep{blum1997learning, vempala2010learning, vempala2010random}.  
Beyond these assumptions, little is known about the 
problem's complexity. Prior work \citep{klivans2007unconditional,klivans2009cryptographic,daniely2016complexity,tiegel2024improved} establish hardness results for learning intersections of $\omega_d(1)$ halfspaces. It is a central open question in computational 
learning theory whether an intersection of even two halfspaces can be 
efficiently learned in the distribution-free setting.
 
To design efficient learning algorithms in the more challenging distribution-free setting, 
a popular approach is to assume that the underlying distribution has a margin with respect to the target hypothesis; see, e.g.,~\citep{arriaga2006algorithmic,klivans2004learningmargin}. Under the $\gamma$-margin assumption, it is well-known that the Perceptron algorithm   properly learns a single halfspace in time $\Tilde{O}(d/(\gamma^2\epsilon))$; see, e.g.,~\cite{cristianini2000introduction}. 
Unfortunately, a similar result does not hold for learning an intersection of two halfspaces. Even under a margin assumption, it is computationally hard to output 
an intersection of any constant number of halfspaces 
with an error better than $1/2$~\citep{khot2008hardness}.
The best known algorithm~\citep{klivans2004learningmargin} in this setting, developed over 20 years ago, 
runs in time $d^{O(\log(1/\gamma))}$ and 
outputs a polynomial threshold function 
with degree $O(\log(1/\gamma))$. 
Such a learning algorithm not only has super-polynomial 
time complexity, but also needs super-polynomial 
time to evaluate its hypothesis on a single example. 
An important open question, 
posed in~\cite{klivans2004perceptron}, 
is whether 
a $\poly(d,1/\gamma,1/\eps)$ time learning algorithm 
exists under only a $\gamma$-margin assumption.

Specifically, the algorithm developed by \cite{klivans2008learning} closely relates to
the notion of Correlation Statistical Queries (CSQ), 
which are queries of the form $\E_{(x,y)\sim D}[yq(x)]$, 
where $q$ is an arbitrary bounded function. 
The main idea of this type of algorithm relies 
on the fact that the target hypothesis 
can be represented as a high-degree 
polynomial threshold function, 
and thus one can make CSQ queries---independent 
of the marginal distribution---to obtain a weak hypothesis; 
a strong hypothesis can then be obtained via boosting. Algorithms of this type usually do not leverage 
useful structural properties of the underlying 
learning problem, 
and are hard to adapt to obtain more efficient algorithms.
In fact, even for weak learning, 
$d^{\Omega(\log(1/\gamma))}$ complexity is the best 
one can hope for via a CSQ algorithm. 
This suggests that, to make progress toward 
a polynomial time algorithm, 
a new algorithmic framework is needed. 
In particular, one needs to design 
\emph{instance-dependent} statistical queries 
by learning information about
the marginal distribution $D_X$. 
In this work, we introduce a novel algorithm 
that provably circumvents the CSQ-hardness barrier 
under the $\gamma$-margin assumption. 
Our algorithm runs in fully-polynomial time 
for a broad class of  distributions satisfying 
a factorizability assumption. 
We now formally define the problem we study in this paper.

\begin{definition}[Learning Intersections of Margin Halfspaces 
Under Factorizable Distributions]\label{def problem}
Let $V \subseteq \R^d$ be an unknown two-dimensional subspace 
and $W=V^\perp \subseteq \R^d$ be the orthogonal complement of $V$. 
Let $h^*(\bx) = \sgn(\bu^*\cdot \bx+t_1) \wedge \sgn(\bv^* \cdot \bx+t_2): \R^d \to \{\pm 1\}$, 
where $\bu^*,\bv^* \in V \cap S^{d-1}$ be the target directions and $t_1,t_2 \in \R$ are the thresholds of the defining halfspaces.   
Let $D$ be a distribution over $\R^d \times \{\pm 1\}$ 
satisfying the following:
\begin{enumerate}[leftmargin=*,nosep]
    \item The distribution $D$ is consistent with an instance of learning intersections of two halfspaces $h^*$, i.e., for $(\bx,y)\sim D$, $y=h^*(\bx)$ holds almost surely.
    \item  The distribution $D$ satisfies the $\gamma$-margin assumption, i.e., for  
$\bx\sim D_X$, it holds $\norm{\bx}_2 \le 1$ and $\abs{\bu^*\cdot \bx_V+t_1} \ge \gamma$, 
$\abs{\bv^*\cdot \bx_V+t_2} \ge \gamma$ holds almost surely. 
Here $\bx_V$ is the projection of $\bx$ on $V$.

\item We say that $D$ is factorizable if $D_X=D_V\times D_W$, 
where $D_V$ is the marginal distribution of $D_X$ over $D_V$ 
and $D_W$ is the marginal distribution of $D_X$ over $W$.
\end{enumerate}
Given parameters $\epsilon,\delta \in (0,1)$, a learning algorithm $\A$ 
draws a set $S=\{(\bx^{(i)},y^{(i)})\}_{i=1}^m$ of $m$ examples \iid from $D$ 
and outputs a hypothesis $\hat{h}:\R^d \to \{\pm 1\}$ 
such that with probability at least $1-\delta$,
$\err(\hat{h}):= \Pr_{(\bx,y) \sim D}\left(\hat{h}(\bx) \neq y\right) \le \eps. $  
\end{definition}
Throughout this paper, we will use $D_X$ for the marginal distribution of $D$ on 
the feature space, $D^+$ to denote the marginal distribution of $D_X$ on positive examples, and $D^-$ to denote the marginal distribution 
of $D_X$ on negative examples. 

\paragraph{Discussion} 
Factorizable distributions lie between the distribution-specific 
and the distribution-free settings. 
Specifically, an efficient learning algorithm 
for such distributions would significantly 
extend previously known tractable settings, 
such as under the Gaussian or uniform distribution over the unit 
sphere, 
as no assumptions are made over $D_V$ and $D_W$. 
Factorizable distributions are not new in this learning context.
The original motivation of studying them 
can be traced back at least 
to \cite{blum1994relevant} 
in the context of learning $k$-juntas under the uniform distribution on the hypercube, 
and to \citep{kOS2008learning,vempala2010learning} 
for learning convex concepts under the Gaussian distribution.  
In both of these settings, the target hypothesis 
only depends on the projection of the points 
on some unknown low-dimensional subspace, 
the marginal distributions are factorizable 
and satisfy additional strong assumptions. 
With this motivation, 
\cite{vempala2011structure} 
first formally proposed the setting 
of learning $k$-subspace juntas
(functions that only depend on the projection on a $k$-dimensional subspace) 
under factorizable distributions. 
The original observation of \cite{vempala2011structure} 
was that if there are $k$ directions in $V$ 
along which the moments of $D_V$ 
are different from those of a standard Gaussian, 
then one can information-theoretically recover the subspace $V$. 
However, as we will discuss in detail in \Cref{sec direction}, 
this approach incurs an exponential dependence on $d$ 
and the accuracy parameter. 
To obtain computationally efficient algorithms, 
\cite{vempala2011structure} additionally assume 
that $D_W$ is the standard Gaussian. 
Under this assumption, if $D_V$ satisfies 
the aforementioned moment conditions 
and $H$ satisfies some additional robustness assumptions, 
they gave an algorithm that approximately recovers 
the relevant subspace and 
then learns over a low-dimensional space. 
In contrast, our work focuses on the original setting 
where no additional assumptions are made over $D_V,D_W$ 
for the basic case of intersections of two large-margin halfspaces. 
In this context, we establish novel structural results 
for intersections of two halfspaces, 
and design a fully-polynomial time learning algorithm. 
We summarize our results below.

\paragraph{Our Contribution and Technical Overview}
We start by establishing a quasi-polynomial 
$d^{\Omega(\log(1/\gamma))}$
Correlational Statistical Query (CSQ) lower bound  
(\Cref{thm:csq-lb}) for our learning task (\Cref{def problem}). 
Our CSQ lower bound shows that, unlike learning under the Gaussian distribution 
where there exists a fully-polynomial CSQ algorithm, 
learning in the factorizable setting is more challenging and CSQ algorithms even fail to efficiently weakly learn.
Furthermore, the CSQ lower bound we obtain matches the running time 
of the algorithm developed by \cite{klivans2008learning} for learning 
intersections of two $\gamma$-margin halfspaces (even without the factorizable assumption).  
This suggests that a new algorithmic framework is required to obtain 
more efficient algorithms. Due to space limitations, the proof of this lower bound is deferred to Appendix~\ref{app lb}.


\begin{theorem}[CSQ Lower Bound] \label{thm:csq-lb}
    Let $\gamma>0$, $q, d \in \N$, $\tau\in (0,1)$ and $d'=\min(d,1/\gamma^2)$. 
    Any CSQ algorithm that learns intersections of two halfspaces with $\gamma$-margin in $d$ dimensions 
    under factorizable distributions 
    to error 
    $1/2-\max(d'^{-\Omega(\log(1/\gamma))},2^{-d'^{\Omega(1)}})$ requires 
    $q$ queries of tolerance at most $\tau$, 
    where $q/\tau^2\geq \min(d'^{\Omega(\log(1/\gamma))},2^{d'^{\Omega(1)}})$.  
\end{theorem}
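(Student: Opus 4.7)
My plan is to follow the standard pairwise nearly-orthogonal CSQ lower bound framework, adapted to the factorizable setting. The construction has three ingredients: a moment-vanishing two-dimensional hard instance, many nearly orthogonal embeddings of it into $\R^{d'}$, and the statistical-query dimension argument. Throughout, write $k = \Theta(\log(1/\gamma))$.

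\paragraph{Step 1: A two-dimensional moment-vanishing instance.}
The first step is to construct a probability measure $\mu$ on the unit disk in $\R^{2}$, supported on the positive and negative regions of an intersection $H(\bz) = \sgn(\bar\bu\cdot \bz + t_1) \wedge \sgn(\bar\bv\cdot \bz + t_2)$ of two halfspaces with $\gamma$-margin, satisfying
\[
\E_{\bz\sim\mu}[H(\bz)\, p(\bz)] = 0 \quad \text{for every polynomial } p:\R^{2}\to\R \text{ of degree at most } k.
\]
Since the space of such polynomials has dimension only $O(k^{2})$ while $\mu$ has many more degrees of freedom, the existence of such a $\mu$ is plausible by a parameter count. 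The cleanest route is LP duality against the classical PTF-degree lower bound for two $\gamma$-margin halfspace intersections: a measure of the above form is \emph{exactly} the dual certificate of the (tight) $\Omega(\log(1/\gamma))$ bound on the sign-representing degree. One must additionally verify that the positive and negative parts of $\mu$ both lift to valid factorizable distributions with norm bounded by $1$.

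\paragraph{Step 2: Nearly orthogonal factorizable lifts.}
Next, I produce a family $\{V_j\}_{j=1}^{N}$ of two-dimensional subspaces of $\R^{d'}$, each with a fixed orthonormal basis $(\bu_j^{*},\bv_j^{*})$, whose orthogonal projectors satisfy $\tr(\Pi_{V_i}\Pi_{V_j}) \le d'^{-\Omega(1)}$ for all $i\neq j$. A probabilistic packing argument on the Grassmannian yields $N \ge \min\bigl(d'^{\Omega(k)},\, 2^{d'^{\Omega(1)}}\bigr)$. For each $V_j$, let $D_j$ be the distribution on $\R^{d'}\times\{\pm 1\}$ whose feature marginal is $D_{j,X} = \mu_{V_j} \times \delta_{\vec 0}$ (the rotation of $\mu$ into $V_j$, producted with the point mass at $0$ on $W = V_j^{\perp}$) and whose labels are $y = H(\bx_{V_j})$. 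These are valid instances of \Cref{def problem}: they are factorizable, realizable by an intersection of two $\gamma$-margin halfspaces, and satisfy the norm bound. Zero-padding embeds everything in $\R^{d}$.

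\paragraph{Step 3: Bounding pairwise CSQ correlations and concluding.}
The key estimate is that for any bounded test function $q:\R^{d}\to[-1,1]$, the correlation $\E_{D_j}[y\, q(\bx)]$ is small except for a tiny fraction of indices $j$. Expanding $q$ in an orthonormal basis adapted to the projection onto $V_j$, the moment-vanishing property of Step~1 kills all contributions of degree at most $k$ in the $V_j$-projection; only the ``high-degree'' part of $q$ can correlate with the label. The near-orthogonality of the $V_j$'s from Step~2 then controls these remaining terms via a Parseval/Bessel-type argument, yielding $\sum_{j=1}^{N} \bigl|\E_{D_j}[y\,q(\bx)]\bigr|^{2} \le d'^{-\Omega(k)}\,\norm{q}_{\infty}^{2}$, and hence $\bigl|\{j : |\E_{D_j}[y\,q(\bx)]| > \tau\}\bigr| \le d'^{-\Omega(k)}/\tau^{2}$. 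Feeding this bound into the standard CSQ statistical-dimension machinery for distinguishing the $D_j$'s from the trivial ``random label'' null instance produces the advertised lower bound $q/\tau^{2} \ge \min\bigl(d'^{\Omega(\log(1/\gamma))},\, 2^{d'^{\Omega(1)}}\bigr)$, which in turn translates to the claimed weak-learning error $1/2 - \max(d'^{-\Omega(\log(1/\gamma))}, 2^{-d'^{\Omega(1)}})$.

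\paragraph{Main obstacle.}
The most delicate step is Step~1: producing a \emph{positive} measure $\mu$ that simultaneously respects the $\gamma$-margin, is supported on the region labeled by an intersection of two halfspaces, and annihilates all $O(k^{2})$ low-degree moments for $k=\Theta(\log(1/\gamma))$. A naive parameter count shows the system is underdetermined, but ensuring non-negativity (and compatibility with the factorizable lift in Step~2) is subtle; the natural resolution is to invoke the duality mentioned above, so that the existence of $\mu$ reduces to a known tight PTF-approximation lower bound for two-halfspace intersections. The remaining bookkeeping is to keep track of the two regimes -- the $d'^{\Omega(\log(1/\gamma))}$ bound (dominant when $\gamma$ is not too small) and the $2^{d'^{\Omega(1)}}$ bound (dominant when $\gamma$ is polynomially small in $d'$) -- which is where the definition $d' = \min(d,1/\gamma^{2})$ comes in.
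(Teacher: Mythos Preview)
Your outline captures the right skeleton (two-dimensional moment-vanishing instance, many nearly orthogonal planar embeddings, CSQ-dimension argument), but Step~2 contains a fatal choice: you set the marginal on $W=V_j^\perp$ to be the point mass $\delta_{\vec 0}$. With that choice the feature distributions $D_{j,X}$ are supported on the \emph{distinct} $2$-planes $V_j$, which intersect only at the origin. Hence the supports are essentially disjoint, there is no common reference measure with finite $\chi^2$-distance, and a single bounded query $q$ can be set equal to the label function on every $V_j$ simultaneously, giving $|\E_{D_j}[y\,q(\bx)]|=1$ for \emph{all} $j$ at once. The Bessel-type bound you claim in Step~3 is therefore false for this construction, and the standard pairwise-correlation CSQ framework cannot be invoked at all.

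The paper resolves this by putting a \emph{standard Gaussian} on $W$ (the ``hidden-subspace'' construction of NGCA type). That yields a common reference $\normal_d$ against which all $D_j^{\pm}$ have bounded $\chi^2$-distance, and the pairwise correlation $\E_{\normal_d}[g_U g_V]$ is then controlled by a Hermite-expansion correlation lemma: the moment-vanishing in the $2$-plane kills Hermite degrees up to $\Theta(\log(1/\gamma))$, and near-orthogonality of $U,V$ damps the remaining terms by $\|UV^\intercal\|_2^{t}$. This is exactly where the $d'^{-\Omega(\log(1/\gamma))}$ rate comes from. The Gaussian tails force a truncation-and-rescale step (divide by $\Theta(\sqrt{d})$) to land back in the unit ball, which shrinks the margin by $\sqrt{d}$; keeping a $\gamma$-margin then requires $d\le 1/\gamma^2$, and this is precisely the origin of $d'=\min(d,1/\gamma^2)$ that you could not account for from your construction.

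On Step~1, your LP-duality idea is a reasonable alternative route to existence, but the paper instead lifts Sherstov's explicit hypercube construction (which already comes with both the realizability by two halfspaces and the $\Omega(\sqrt n)$ polynomial-orthogonality), projects to the $2$-plane spanned by the two weight vectors, and adds a small truncated Gaussian to make the measure absolutely continuous with bounded $\chi^2$. This sidesteps the positivity worry you flag as your ``main obstacle.''
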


Our main algorithmic result 
bypasses the CSQ-hardness by giving 
a polynomial-time algorithm 
for learning any intersection of two halfspaces 
with $\gamma$-margin, 
as long as $D_X$ is factorizable. 
This implies the first strong separation between 
CSQ and SQ algorithms for \emph{weak} (realizable) 
PAC learning of a natural concept class. 
We refer the reader to the related work 
for more detailed discussion.

\begin{theorem}[Main Result]\label{th main}
There is an algorithm that for any distribution $D$ over $\B^d(1) \times \{\pm 1\}$ 
satisfying the conditions of \Cref{def problem}, for any $0< \eps, \delta <1$, 
has the following guarantees: 
it draws $n=\poly(d,1/\gamma,1/\epsilon,\log(1/\delta))$ labeled examples from $D$, 
runs in $\poly(n, d)$ time, 
and outputs a hypothesis 
$\hat{h}:\B^d(1) \to \{\pm 1\}$ 
such that with probability at least $1-\delta$, 
$\err(\hat{h}) \le \eps$.
\end{theorem}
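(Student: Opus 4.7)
The strategy is to decouple the high-dimensional and planar aspects of the problem: first recover the relevant $2$-dimensional subspace $V$ from moment statistics of $D$, then project the sample onto the recovered $\hat V$ and solve the induced $2$-dimensional problem. The key identity powering the reduction is that since $\bx_V$ and $\bx_W$ are independent under $D_X$ and $y$ depends only on $\bx_V$, cross-terms involving $\bx_W$ cancel cleanly after centering by the unconditional mean $\boldsymbol{\mu}:=\E_{D_X}[\bx]$. Crucially, $\boldsymbol{\mu}$ and $\Sigma:=\cov_{D_X}(\bx)$ are computable via general SQ queries (not merely CSQ ones); this is exactly the lever used to bypass the $d^{\Omega(\log(1/\gamma))}$ CSQ lower bound of \Cref{thm:csq-lb}.

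\textbf{Step 1: recovering $V$.} I would first compute the low-order label-weighted centered moments
\begin{align*}
\vec m &\;:=\; \E_D[y(\bx-\boldsymbol{\mu})]\,, \\
B &\;:=\; \E_D[y(\bx-\boldsymbol{\mu})(\bx-\boldsymbol{\mu})^\top] \;-\; \E_D[y]\,\Sigma \,.
\end{align*}
A short calculation using factorizability shows $\vec m\in V$ and $B\in V\otimes V$: the $W$-components vanish after centering, and subtracting $\E[y]\,\Sigma$ removes the residual $\cov(\bx_W)$ contribution. If $\vec m$ and the column space of $B$ together span $V$, a single SVD returns $V$. To handle degenerate cases (rank deficiencies caused by symmetries in $D_V$), the plan is to compute higher-order centered tensors $T_k=\E_D[y(\bx-\boldsymbol{\mu})^{\otimes k}]$, subtract explicit correction terms (all expressible through the unsupervised moments of $D_X$), and then extract $V$ via a refined Jennrich-style tensor decomposition; for the most degenerate configurations, we would apply PCA to matrices obtained by contracting $T_k$ with $k{-}2$ random Gaussian vectors, whose column space lies in $V$ and generically equals $V$.

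\textbf{Step 2 and main obstacle.} Given $\hat V$ with principal angle $\sin\theta(\hat V,V)\le \poly(\gamma,\eps)$, I would project the sample onto $\hat V$ and reduce to learning an intersection of two $\Omega(\gamma)$-margin halfspaces in $\R^2$. The resulting class has $O(1)$ VC dimension and only four parameters (two angles, two thresholds), so either a brute-force search over a $\poly(1/\gamma)$-cover of parameter space or a gradient-descent scheme on a smoothed surrogate of the $0\text{-}1$ loss finds an $\eps$-accurate classifier in $\poly(1/\gamma,1/\eps)$ time; standard uniform convergence then yields generalization. The hard part will be the quantitative version of Step 1: under no distributional assumption on $D_V$ beyond the $\gamma$-margin condition, one must show that \emph{some} bounded-order moment tensor restricted to $V$ has a $1/\poly(\gamma)$ singular-value gap that survives the random-projection PCA or Jennrich step. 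This is where the novel duality framework must be invoked, to convert the combinatorial structure of a $\gamma$-margin intersection of two halfspaces into a polynomial-scale gap in a constant-order moment tensor along $V$. Without such a gap, the tensor-decomposition step would be numerically unstable and $\hat V$ would be too inaccurate for the error in Step 2 to stay below the margin.
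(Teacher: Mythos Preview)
Your high-level split into ``recover $V$, then learn in two dimensions'' is natural, but Step~1 as written runs straight into the CSQ lower bound you are trying to bypass. All of your tensors $T_k=\E_D[y(\bx-\boldsymbol{\mu})^{\otimes k}]$ are correlational statistics: each entry is of the form $\E_D[y\,q(\bx)]$ for some bounded polynomial $q$. Centering by the unsupervised mean $\boldsymbol{\mu}$ does not change this, because in the hard instances of \Cref{thm:csq-lb} the marginal $D_X$ is essentially fixed (close to a rescaled Gaussian), so $\boldsymbol{\mu}\approx 0$ and your ``instance-dependent'' $q$ collapses to an instance-independent one. The lower bound then guarantees that \emph{every} such $T_k$ with $k=O(\log(1/\gamma))$ can be made $d^{-\Omega(\log(1/\gamma))}$-small simultaneously, so no bounded-order label-weighted tensor has a $1/\poly(\gamma)$ gap along $V$. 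Your plan to ``go to higher $k$ until the gap appears'' therefore forces $k=\Theta(\log(1/\gamma))$ and a $d^{\Theta(\log(1/\gamma))}$ runtime---exactly the barrier.

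The paper's escape is to look at a genuinely non-correlational statistic in the hard regime: when the label-weighted moments $\E_{D^+}[\bx^{\otimes m}]-\E_{D^-}[\bx^{\otimes m}]$ match for $m\le 3$, the duality argument (\Cref{th:intersection-structure-main}) shows the \emph{unlabeled} third moment $\E_{D_X}[\bx^{\otimes 3}]$ must have a $\poly(\gamma)$-large $V$-component. They then contract this unlabeled tensor with a random Gaussian vector and do eigendecomposition (\Cref{th tensor}) to extract \emph{one} direction close to $V$---not the full subspace. A second difference is Step~2: rather than recovering both directions and projecting, the paper uses the single recovered direction $\bw$ to slice $\R^d$ into thin bands along $\bw$; within each band the target becomes a degree-$2$ PTF with $\Omega(\gamma^2)$ margin (\Cref{lem:banding-PTF}), giving a weak learner that is then boosted. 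So the missing idea in your proposal is precisely the switch from label-weighted to unlabeled moments in the moment-matched case, together with the structural lemma certifying that the unlabeled third moment is informative there.
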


The full proof of \Cref{th main} and 
the main learning algorithm 
are deferred to Appendix~\ref{app main}. 
In the rest of this section, we give a detailed 
outline of the techniques involved in proving this result.

Intuitively, the CSQ lower bound of \Cref{thm:csq-lb} implies 
that without learning some information about the marginal distribution $D_X$, 
it is impossible to efficiently find a CSQ, $q$, 
such that $\abs{\E_{(\bx,y)}yq(\bx)}>\poly(\gamma/d)$; 
as otherwise, one could 
output $\sign(q(\bx)-t), t\sim[-1,1]$,  
as a weak hypothesis with $1/2-\poly(\gamma/d)$ error. 
This suggests that a plausible approach is to first learn 
(some information about) the marginal distribution $D_X$, 
and use it to design \emph{instance-dependent} 
statistical queries. 
The construction of these queries hinges 
on the following observation. 
If we are given a direction $\bw$ that is $\poly(\gamma)$-close to $V$, 
then restricted over bands $B_i:=\{\bx \mid \bw\cdot\bx \in [i\gamma,(i+1)\gamma]\}$, 
the labels $y$ are consistent with a degree-$2$ polynomial threshold function. 
This in turn implies that a CSQ of the form 
$q(\bx)=\Ind(\bx \in B_i)p(\bx)$, 
for some degree-$2$ polynomial $p$, 
can be used to give a weak hypothesis with $\poly(\gamma)$ advantage. 
Once we have a weak hypothesis, we can run a standard boosting algorithm 
to get a strong hypothesis. With this goal, the question 
is how to efficiently find such a direction $\bw$. 
To achieve this, we start with an easier case, 
where 
$\norm{(\E_{\bx\sim D^+}-\E_{\bx\sim D^-})\bx^{\otimes m}}_F$
is large. 
Since $D_X$ is factorizable, 
we show in \Cref{thm:csq-direction-extraction} 
that any local maximum or local minimum 
of the objective function 
$f(\bu)=(\E_{\bx\sim D^+}-\E_{\bx\sim D^-})(\bu\cdot \bx)^m$ 
must be in $V$. 
Though finding exact locally optimal solutions for $f$ 
is computationally intractable, 
we show that an approximate solution, 
obtained by running a standard gradient-descent method, 
suffices for our purposes.

The more challenging case is when the underlying instance is 
indeed CSQ-hard. 
In this case, a low-degree polynomial function 
$q$ will also satisfy 
$\abs{\E_{(\bx,y)\sim D}yq(\bx)}\approx 0$, 
which implies that 
$\E_{\bx\sim D^+}\bx^{\otimes m}\approx \E_{\bx\sim D^-}\bx^{\otimes m}$ 
for small $m \in \Z_+$. 
Leveraging the fact that the label $y$ only depends on the subspace $V$, 
we establish the following novel structural property for $D_V$. 
Our key structural result can be summarized as follows.

\begin{theorem}[Informal statement of \Cref{th:intersection-structure-main}]
For any distribution $D$ 
satisfying the conditions of \Cref{def problem}, 
if for $m \in [3]$, $\norm{(\E_{\bx \sim D^+}-\E_{\bx \sim D^-})\bx^{\otimes m} }_F \le \poly(\gamma)$ and $\norm{\E_{\bx\sim D_X}\bx}_F \le \poly(\gamma)$, then $\norm{\E_{\bx\sim D_V}\bx_V^{\otimes3}}_F \ge \poly(\gamma)$.
\end{theorem}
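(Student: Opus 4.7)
My approach has three stages: first, use factorizability to project the moment hypotheses onto the two-dimensional subspace $V$; second, recast the result as an approximate orthogonality of the centered label function to all polynomials of degree at most $3$; and third, exploit the $\gamma$-margin corner geometry inside $V$ to force the third moment to be large.

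\emph{Stage 1 (reduction to $V$).} Since $D_X = D_V \times D_W$ and $y = h^*(\bx_V)$ depends only on $\bx_V$, the class-conditional distributions split as $D^\pm = D_V^\pm \times D_W$, where $D_V^\pm$ is the conditional of $D_V$ on $h^* = \pm 1$. Expanding $(\bx_V + \bx_W)^{\otimes m}$ via the multinomial theorem and invoking independence, every tensor entry of $\E_{D^+}[\bx^{\otimes m}] - \E_{D^-}[\bx^{\otimes m}]$ factorises into an entry of $\delta_k := \E_{D_V^+}[\bx_V^{\otimes k}] - \E_{D_V^-}[\bx_V^{\otimes k}]$ (on the $V$-indices) times a $D_W$-moment (on the $W$-indices), where $k$ is the number of $V$-indices; the $k=0$ entries cancel. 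Inverting the resulting triangular system for $m \in \{1,2,3\}$, using $\norm{\bx_W} \le 1$, gives $\norm{\delta_k}_F \le \poly(\gamma)$ for $k \in \{1,2,3\}$. In parallel, $\E_{D_X}[\bx]$ decomposes into orthogonal $V$- and $W$-components, so $\norm{\E_{D_X}[\bx]} \le \poly(\gamma)$ yields $\norm{\mu_V} \le \poly(\gamma)$, where $\mu_V := \E_{D_V}[\bx_V]$.

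\emph{Stages 2 and 3 (orthogonality and corner geometry).} Set $\alpha := \E_{D_V}[h^*]$, $p^\pm := \Pr_{D_V}[h^* = \pm 1]$, $u := \bu^* \cdot \bx_V + t_1$, and $v := \bv^* \cdot \bx_V + t_2$, so the margin gives $|u|, |v| \ge \gamma$ almost surely. The identity $\E_{D_V}[(h^* - \alpha) \bx_V^{\otimes k}] = 2 p^+ p^- \delta_k$ converts Stage~1 into the approximate orthogonality $|\E_{D_V}[(h^* - \alpha) q(\bx_V)]| \le \poly(\gamma)$ for every polynomial $q$ of total degree $\le 3$ on $V$ with bounded coefficients. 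Moreover $\mu_3 := \E_{D_V}[\bx_V^{\otimes 3}] = \mu_3^- + p^+ \delta_3$, so lower-bounding $\norm{\mu_3^\pm}_F$ by $\poly(\gamma)$ suffices. I would argue by contradiction: suppose $\norm{\mu_3}_F \le \poly(\gamma)$, hence also $\norm{\mu_3^\pm}_F, \norm{\mu_1^\pm} \le \poly(\gamma)$. Construct a degree-$3$ test polynomial $P(\bx_V) := u \cdot v \cdot \ell(\bx_V)$ with $\ell$ an affine function chosen so that, by the margin ($uv \ge \gamma^2$ on $R_1 \cup R_4$ and $uv \le -\gamma^2$ on $R_2 \cup R_3$), the product $h^* \cdot P$ is $\gtrsim \gamma^3$ on $R_1$ and has a definite sign on the other regions, yielding $|\E_{D_V}[h^* P]| \gtrsim \gamma^3 p^+$. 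Decomposing $\E_{D_V}[h^* P] = \E_{D_V}[(h^* - \alpha) P] + \alpha \E_{D_V}[P]$, the first summand is $\poly(\gamma)$ by the orthogonality, and the second is a linear functional of $\mu_V, \mu_2, \mu_3$; the $\mu_V$ and $\mu_3$ contributions are $\poly(\gamma)$ by assumption, and the three free coefficients of $\ell$ can be used to suppress the $\mu_2$-contribution, since their count matches the dimension of the symmetric $2$-tensor space on $V$. This produces a contradiction and thus the desired bound $\norm{\mu_3}_F \ge \poly(\gamma)$.

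\emph{Main obstacle.} The crux of the argument is the third stage: the second moment $\mu_2 = \E_{D_V}[\bx_V^{\otimes 2}]$ is not constrained by the hypotheses, so a naive test polynomial carries a degree-$2$ bias that can dominate the degree-$3$ signal. Making the construction of $\ell$ rigorous will likely require a case split---if $|\alpha|$ is bounded below, then $\E_{D_V}[h^* \bx_V^{\otimes 3}] \approx \alpha \mu_3$ already forces $\norm{\mu_3^\pm}$ to be large through the distance from the origin to the corner $R_1$, whereas in the balanced regime $|\alpha| \ll 1$ one must perform a delicate orthogonalization of $P$ against $\mu_2$ using the $\delta_2$ bound together with the specific angle between $\bu^*$ and $\bv^*$. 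Handling both regimes uniformly is the main technical hurdle.
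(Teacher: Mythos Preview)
Your Stage~1 reduction is sound and matches the paper's implicit treatment. The difficulty is Stage~3, and the gap is more than a technical hurdle.

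\textbf{The sign constraint and the orthogonality constraint on $\ell$ conflict.} With $u,v$ the two affine factors, $h^*\cdot(uv)$ is $\ge \gamma^2$ on $R_1\cup R_2\cup R_3$ and $\le -\gamma^2$ on the doubly-negative sector $R_4=\{u\le -\gamma,\ v\le -\gamma\}$. Hence for $h^*\cdot P = h^*\cdot(uv)\cdot \ell$ to be pointwise nonnegative (which is what you need to lower-bound $\E[h^*P]$ by the $R_1$ contribution), the affine function $\ell$ must be $\ge 0$ on $R_1\cup R_2\cup R_3$ and $\le 0$ on $R_4$. But $R_4$ is an angular sector, not a halfspace, so no affine $\ell$ can separate it from its complement; even on the bounded ball the constraints force $\ell$ to be (nearly) a specific halfspace tangent to the corner of $R_4$, leaving no freedom to also annihilate the $\mu_2$-term. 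If instead you allow $h^*P<0$ on $R_4$, you have no control on $\Pr(R_4)$, so the negative contribution can swamp the $R_1$ term. (There is also a small dimension-counting slip: the $\mu_2$-contribution to the scalar $\E[P]$ is one linear equation in the three coefficients of $\ell$, not three.)

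\textbf{The paper's route is structurally different.} Rather than a single test polynomial balancing sign and orthogonality, the paper decouples the second-moment issue in two steps. First (\Cref{lm 2nd}) it proves that under the first two moment-matching conditions the covariance $\Sigma^+\approx\Sigma^-$ has all eigenvalues $\ge\Omega(\gamma^4)$; the proof uses the degree-$2$ polynomial $uv$ together with a geometric case analysis on the direction of the small eigenvector to control the mass on $R_4$. This lets one rescale $D$ to be (approximately) isotropic while preserving a $\poly(\gamma)$ margin. Second (\Cref{lm 3rd moment}), in the isotropic setting the paper applies LP duality separately to $D^+$ and $D^-$: a one-sided certificate polynomial nonnegative on $\mathrm{supp}(D^+)$ forces the common variance scale $\alpha^2\le t^2\sin^2\theta$, while a different certificate nonpositive on $\mathrm{supp}(D^-)$ forces $\alpha^2\ge (1+\sigma)t^2\tan^2\theta$. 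These bounds are incompatible by $\Omega(\gamma)$, contradicting the assumption that the third moments vanish. The key conceptual difference is that each certificate only has to satisfy a one-sided sign condition on a single class region, so no single polynomial is asked to flip sign across $R_4$.
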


That is, for any distribution $D$ consistent 
with our learning task, 
if the first three moments of $D^+,D^-$ 
are nearly matching
and the mean of $D_X$ is close to $0$, 
then the third moment tensor of $D_V$ 
must significantly deviate from $0$. 
The proof of this structural result is rather technical, 
because the only condition we assumed about $D_V$ 
is the margin assumption. 
To prove this result, we develop a novel technique 
which we term \emph{one-sided polynomial approximation.}
Roughly speaking, if we are able to appropriately 
construct a polynomial function $p(\bx)$ 
such that for some $z \in \{\pm 1\}$, 
$\sign(p(\bx))=z, \forall \bx, h^*(\bx)=z$, 
then such a polynomial can be used as a certificate 
to show that distributions satisfying certain moment conditions do not exist. 
One-sided polynomial approximation 
is a powerful tool for proving moment properties of distributions. In \Cref{sec structure}, 
we will carefully design these polynomials to prove useful 
results for the marginal distribution $D_V$. 

The next step is to efficiently find a 
direction $\bw$ close to $V$, 
by leveraging our structural result.
Our initial attempt was inspired by the observation of \cite{vempala2011structure} 
on generalized independent component analysis: for $m\ge 3$ 
if $D_V$ has $m$-th moment different from that of a standard Gaussian, but the first $m-1$ moments are the same as those of a Gaussian, 
then a locally optimal solution to $f(\bu) = \E_{\bx\sim D_X}(\bu \cdot \bx)^m$ 
must be either in $V$ or $W$. Unfortunately, such an idea does not lead to an efficient algorithm for the following reason. Since we make no distributional assumptions 
over $D_W$, $D_W$ could also be factorized as $\Omega(d)$ many distributions 
that are isomorphic to $D_V$. This implies that information-theoretically 
we are only able to find a list of $O(d)$ unit vectors such 
that one of them is close to $V$. 
The natural approach is to optimize $f$ in order to find 
one direction, and optimize the same function over the orthogonal subspace 
to find the next directions. As we explain in \Cref{sec direction}, 
such an approach has exponential sample complexity 
(as also demonstrated in \cite{vempala2011structure}). 
Interestingly, we show that by carefully adapting 
such an idea, 
we are able to obtain an efficient SQ algorithm, 
which in turn leads to $\poly(d/\gamma)$ sample complexity. 
Still, 
it is unclear how to get a computationally 
efficient learning algorithm. 
To overcome this difficulty, we develop 
a completely different approach 
inspired by the tensor-decomposition literature. 
Though the third moment tensor $T$ of $D_X$ 
is in general very high rank 
and does not admit a low-rank decomposition, 
we know that the third-moment tensor $T_V$ of $D_V$ significantly deviates from $0$. 
This implies that if we take the product of $T$ 
with a Gaussian random vector $\bv$, 
then with good probability 
one of the eigenvalues of $T_V \cdot \bv$ 
must be significantly different 
from the other eigenvalues of $T\cdot \bv$. 
By the factorizable assumption, 
an eigenvector of $T \cdot \bv$ must be close to $V$. 
Thus, even for the hard instance mentioned above, 
we are able to efficiently find a direction close to $V$. 
We give \Cref{alg sketch}, a sketch of our approach 
for learning intersections of two halfspaces, 
and give the full algorithm in Appendix~\ref{app main}. 

\begin{algorithm}[h]
		\caption{\textsc{LearningIntersections} (Sketch)}\label{alg sketch}
		\begin{algorithmic} [1]
\State Find a list $\cal O$ of unit vectors as follows: if $D^+$ and $D^-$ have nearly matching first three moments, run the algorithm in \Cref{sec matched}; otherwise, run the algorithm in \Cref{sec:direction-extraction-csq}.
\State For each $\bw \in \mathcal{O}$, run Adaboost with the algorithm in \Cref{sec local} to get a hypothesis $h_{\bw}$. 
\State Return the best hypothesis $h \in \{h_{\bw} \mid \bw \in \mathcal{O}\}$ via a standard hypothesis testing approach.
\end{algorithmic}
\end{algorithm}
\vspace{-0.1cm}

\subsection{Related Work} 

\paragraph{Learning Intersections of Halfspaces}

Learning intersections of halfspaces is one of the central problems in learning theory. Despite a long line of work studying this problem algorithmically \citep{baum1990polynomial, long1994composite,kwek1996pac,blum1997learning,klivans2004learning,klivans2008learning,kOS2008learning,klivans2009baum,vempala2010learning,vempala2010random}, relatively 
little is known about its complexity.
In the distribution-specific setting, \cite{blum1997learning} 
first showed that under the uniform distribution over the unit 
sphere (or under the Gaussian distribution), 
for any fixed constant $k$, one can learn 
an intersection of $k$ halfspaces in polynomial time via PCA.  \cite{vempala2010random} later extend this result 
to isotropic log-concave distributions 
via a random sampling method. For discrete distributions, \cite{klivans2004learning}, developed Fourier-based 
algorithms for learning an intersection of any constant number 
of halfspaces under the uniform distribution over the Boolean hypercube via Fourier analysis (albeit with complexity exponential in the inverse of the accuracy parameter $\eps$). Many subsequent works in the distribution-specific setting 
developed algorithms with improved complexity under the 
Gaussian or uniform distribution over the hypercube. 
It remains an open question whether, 
under these strong distributional assumptions, 
an intersection of $k$ halfspaces can be learned 
in fully-polynomial time. 
For the special case of two halfspaces, \citep{baum1990polynomial,klivans2009baum}  
developed polynomial-time algorithms 
for learning an intersection of two homogeneous halfspaces 
under mean zero symmetric/isotropic log-concave distributions. We emphasize that the distributional assumptions 
of these two works are fairly strong 
and the underlying algorithms do not work 
if the defining halfspaces do not go through 
the origin.

In the distribution-free setting, much less is understood. \citep{klivans2007unconditional,klivans2009cryptographic,daniely2016complexity,tiegel2024improved} showed that 
if the number of halfspaces $k=\omega_d(1)$, 
then distribution-free learning is hard. \cite{tiegel2024improved} recently gave 
an SQ lower bound of $d^{\Omega(k)}$ for distribution-free learning intersections of $k$ halfspaces. 
However, no super-polynomial SQ lower bound 
(or any other representation-independent hardness result) 
is known for learning intersections of a 
constant number of halfspaces.

\paragraph{Independent Component Analysis and Its Generalization}
The learning setting where the marginal distribution is factorizable is closely related to the work of \cite{vempala2011structure} on 
an unsupervised learning setting, known as 
generalized independent component analysis. 
Independent component analysis \citep{jutten1991blind}, originally considered the following problem: 
given examples $y \in \R^d$ generated by $y= A\bx$, 
where $A \in \R^{d \times d}$ is an unknown matrix 
and $\bx \in \R^d$ is a random vector such that 
$\bx_i, i \in [d]$ are independent, 
recover the underlying direction $\bx_1,\dots,\bx_d$. 
ICA is a natural generalization of PCA, 
another task that identifies the source components 
given only their linear combinations. 
PCA can be viewed as the task of finding vectors
on the unit sphere that are local optima 
of the second moment of the observed data. 
Such an approach fails when eigenvalues repeat 
and ICA bypasses the difficulty by considering finding 
a local optimum of functions related to higher-order moments of observed data. It is known that the original ICA problem 
can be efficiently solved via second-order method  \citep{frieze1996learning,arora2012provable}. 
Generalized ICA \citep{vempala2011structure} instead 
aims to recover the distribution $D_V$ 
given examples generated from a factorizable distribution $D_X=D_V \times D_W$. Such a problem can also be viewed 
as a generalization of Non-Gaussian Component Analysis(NGCA)~\citep{tan2018polynomial,goyal2019non}. 
However, unlike the original ICA problem, 
the generalized ICA suffers issues of numerical instability, 
and no fully polynomial-time algorithm 
is known for the problem so far.

\paragraph{SQ Model and CSQ Model} The CSQ model~\citep{BshoutyFeldman:02} 
is a subset of the SQ model~\citep{Kearns:98}, where the oracle access
is of a special form (see Appendix~\ref{sec notation}).
In particular, any SQ query function $q_{\mathrm{sq}}:X\times \{0,1\}\to [-1,1]$ 
can always be decomposed to $q_{\mathrm{sq}}(\bx,y)=q_1(\bx)+q_2(\bx,y)$, 
where $q_1(\bx)$ 
is a query function independent of the label $y$, 
and $q_2(\bx,y)$ 
is a CSQ query.
An intuitive interpretation is that, compared with the SQ model, 
the CSQ model loses exactly the power to make label-independent queries 
about the distribution, i.e., the power to ask queries about the marginal 
distribution of $\bx$. 
In the context of learning Boolean-valued functions, the two models are 
known to be equivalent in the distribution-specific setting 
(i.e., when the marginal distribution on feature vectors 
is known to the learner)~\citep{BshoutyFeldman:02}. 
However, they are not in general 
equivalent in the distribution-free PAC model. 
In the realizable PAC setting, there are known 
natural separations between the CSQ and SQ models. 
Notably,~\cite{feldman2011distribution} showed that Boolean halfspaces 
are not efficiently learnable 
up to an arbitrarily small accuracy 
via CSQ algorithms (even though they are efficiently SQ learnable). Intuitively, such a separation exists 
because even though CSQ algorithms can be used 
to learn a weak hypothesis, 
without using stronger SQs, we cannot implement 
boosting algorithms to get a strong hypothesis. 
Our results for learning intersections of two 
halfspaces exhibit a new separation between 
CSQ and SQ models in the realizable PAC learning setting, 
in terms of weak learning. That is to say, 
under the assumption of factorizable distributions, 
it is hard to efficiently find a hypothesis 
with error $1/2-o(1)$ using CSQ algorithms; 
but this is possible via efficient SQ algorithms. 
This shows the necessity of using SQ queries 
for PAC learning is not only due to boosting, 
but also due to the hardness of finding a weak hypothesis.

\paragraph{Organization}
In \Cref{sec structure}, we present our main structural result 
for learning intersections of two halfspaces. In \Cref{sec direction}, 
we make use of the structural results developed in \Cref{sec structure} 
to design computationally efficient algorithms that find a direction close to $V$. 
In \Cref{sec local}, we show how to use the direction we find in \Cref{sec direction} 
to obtain an efficient weak learner. 

\section{Structural Result for Distribution-Free Learning Intersections of Two Halfspaces}\label{sec structure}




By \Cref{thm:csq-lb}, 
we know that without looking at the marginal distribution $D_X$, 
it is impossible to find a Correlational Statistical Query (CSQ) 
that can detect the correlation between $D_X$ and $D_y$; 
thus, even performing weak learning efficiently
via CSQs is impossible.
To bypass this inherent limitation of CSQ algorithms, 
we need to design ``instance-dependent'' CSQs by first 
looking at the marginal $D_X$.
Motivated by this intuition, we 
provide a novel structural result 
for learning intersections of two halfspaces, 
which we will make essential use of in \Cref{sec direction} 
to design instance-dependent statistical queries. 
To start with, we present the following $(\alpha,m)$-moment matching condition that will be heavily discussed throughout the paper.
\begin{definition}[$(\alpha,m)$-moment matching condition]\label{def moment match}
Let $\alpha \ge 0$, $m \in \Z_+$, 
and $D$ be a distribution of $(\bx,y)$ 
over $\R^d\times \{\pm 1\}$.
We say that $D$ satisfies the $(\alpha,m)$-moment matching condition 
if 
$\norm{\left(\E_{\bx \sim D^+} - \E_{\bx \sim D^-}\right) \bx^{\otimes m}  }_F \le \alpha$.
\end{definition}

To simplify the notation, in this section 
we make the following assumption about the subspace $V$.
We assume that $V$, the subspace spanned by $\bu^*,\bv^*$,  
is exactly equal to $\textbf{span}\{\be_1,\be_2\}$. 
Such an assumption can be made without loss of generality, as 
applying a rotation transformation will not affect 
the $\gamma$-margin assumption. 
Since the label of an example $\bx$ only depends 
on its projection $\bx_V$ onto $V$, to simplify notation 
in this section, we restrict attention to the dimensions 
of $V$ and consider examples $\bx\in \R^2$ drawn 
from $D^+$ (or $D^-$).
Our main structural result, \Cref{th:intersection-structure-main}, 
shows that if $D^+,D^-$ have nearly matched their first three moments, 
and $\E_{\bx \sim D_X}(\bx)$ is close to $0$, then the third-moment 
tensors of $D^+, D^-$ must significantly deviate from $0$.



\begin{theorem}\label{th:intersection-structure-main}
Let $D$ be a distribution over $\B^2(1) \times \{\pm 1\}$ that is 
consistent with an instance of learning intersections of two 
halfspaces with $\gamma$-margin. Let $c>0$ be \emph{any} 
suitably large constant. Suppose that $D$ satisfies the 
$(\gamma^c,m)$-moment matching condition for $m \in [3]$, 
and $\norm{\E_{\bx\sim D^+} \bx}_F \le \gamma^c$. 
Then $\norm{\E_{\bx\sim D^+}\bx^{\otimes 3}}_F, \norm{\E_{\bx\sim D^-}\bx^{\otimes 3}}_F = \Omega(\gamma^{15})$.
\end{theorem}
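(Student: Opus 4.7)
The plan is to argue by contradiction using the one-sided polynomial approximation technique described in the introduction. Suppose, for contradiction, that $\norm{\E_{\bx\sim D^+}\bx^{\otimes 3}}_F \le c_0 \gamma^{15}$ for a sufficiently small absolute constant $c_0>0$. Together with the hypotheses $\norm{\E_{D^+}\bx}_F\le\gamma^c$ and $(\gamma^c,m)$-moment matching for $m\in[3]$, this gives quantitative control on low-degree moment differences: for any polynomial $q\in\R[\bx_1,\bx_2]$ of degree at most $3$ with coefficients bounded by $1$, the moment-matching assumption yields $|\E_{D^+}[q]-\E_{D^-}[q]|\le O(\gamma^c)$. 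The strategy is to exhibit a degree-$3$ polynomial $p$ that is quantitatively \emph{one-sided} on the target hypothesis and derive a contradiction with this bound.

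Let $\ell_1(\bx)=\bu^*\cdot\bx+t_1$ and $\ell_2(\bx)=\bv^*\cdot\bx+t_2$, so the positive margin region is $R^+=\{\ell_1\ge\gamma,\ell_2\ge\gamma\}\cap\B^2(1)$ and the negative margin region is $R^-=(\{\ell_1\le-\gamma\}\cup\{\ell_2\le-\gamma\})\cap\B^2(1)$. The margin splits $R^-$ into three canonical sub-regions: the ``doubly-negative'' sub-region $\{\ell_1,\ell_2\le-\gamma\}$ and two ``mixed'' sub-regions $\{\ell_1\ge\gamma,\ell_2\le-\gamma\}$ and $\{\ell_1\le-\gamma,\ell_2\ge\gamma\}$. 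The goal is to build a degree-$3$ polynomial $p$ with $O(1)$ coefficients satisfying $p\ge\gamma^{k_1}$ on $R^+$ and $p\le-\gamma^{k_2}$ on $R^-$ for some constants $k_1,k_2<c$. Any such polynomial gives $\E_{D^+}[p]\ge\gamma^{k_1}$ and $\E_{D^-}[p]\le-\gamma^{k_2}$, so $\E_{D^+}[p]-\E_{D^-}[p]\ge \gamma^{\max(k_1,k_2)}$, contradicting the $O(\gamma^c)$ moment-matching bound when $c$ is large enough. The symmetric conclusion for $\norm{\E_{D^-}\bx^{\otimes 3}}_F$ then follows by interchanging the roles of $D^+$ and $D^-$, with the mean bound transferring through the $m=1$ case of the moment-matching hypothesis.

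The natural ansatz for $p$ combines a degree-$1$ term (which is $\ge 2\gamma$ on $R^+$ and $\le -2\gamma$ on the doubly-negative sub-region) with a degree-$3$ product term such as $\ell_1\ell_2(\ell_1+\ell_2)$ (which is $\ge 2\gamma^3$ on $R^+$ and $\le -2\gamma^3$ on the doubly-negative sub-region). Neither is individually one-sided on the mixed sub-regions, where $\ell_1$ and $\ell_2$ have opposite large signs. I would augment $p$ with an additional cubic correction of the form $L(\bx)\,\ell_1\ell_2$ for a suitable linear $L$, and carefully choose the coefficients to kill the sign ambiguity on the mixed regions while preserving the sign on $R^+$ and the doubly-negative sub-region. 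I would split the construction by cases on the geometry of the wedge (the angle between $\bu^*$ and $\bv^*$ and the relative magnitudes of $t_1,t_2$), with degenerate configurations (nearly parallel or antiparallel normals) reducing to slab shapes where lower-degree polynomials suffice; in the generic case, a Positivstellensatz-style decomposition on the polyhedra defining $R^\pm$ produces the certificates explicitly. The final exponent $\gamma^{15}$ arises from carefully tracking the margin factors through these certificates and the comparison with matched moments.

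\emph{Main obstacle.} The central difficulty is constructing the degree-$3$ one-sided polynomial uniformly across all geometric configurations, and in particular on the two mixed sub-regions of $R^-$, where the basic cubic product $\ell_1\ell_2(\ell_1+\ell_2)$ changes sign. The correction term $L(\bx)\,\ell_1\ell_2$ must flip the sign on these mixed regions without destroying the positive lower bound on $R^+$ or the negative upper bound on the doubly-negative sub-region, and a single linear $L$ may not suffice in all geometric configurations, forcing a sub-case analysis. Making the construction quantitative---producing explicit polynomial-in-$\gamma$ lower bounds on $|p|$ on the whole of $R^+\cup R^-$---is where most of the technical work should lie, and obtaining a clean exponent like $15$ requires careful accounting of the margin losses across the different sub-regions.
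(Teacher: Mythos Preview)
Your plan has a structural gap: the degree-$3$ polynomial $p$ you propose to construct, with $p\ge\gamma^{k_1}$ on all of $R^+$ and $p\le-\gamma^{k_2}$ on all of $R^-$, makes no use of the contradiction hypothesis $\norm{\E_{D^+}\bx^{\otimes 3}}_F\le c_0\gamma^{15}$. If such a $p$ existed, your argument would show that the $(\gamma^c,m)$-moment matching condition for $m\in[3]$ is \emph{impossible} for any distribution consistent with $h^*$ and the margin --- which is strictly stronger than the theorem and in fact false. Indeed, the CSQ lower bound construction (Lemma used in the proof of Theorem~\ref{thm:csq-lb}) produces distributions consistent with an intersection of two $\gamma$-margin halfspaces for which $\E_{D^+}[q]=\E_{D^-}[q]$ for \emph{every} polynomial $q$ of degree up to $\Omega(\log(1/\gamma))$; on such an instance your $p$ would force $\E_{D^+}[p]>0>\E_{D^-}[p]$, a contradiction. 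Equivalently, a polynomial that is strictly positive on $R^+$ and strictly negative on $R^-$ gives a degree-$3$ PTF computing $h^*$ on the margin region, which is ruled out for small $\gamma$. So the ``main obstacle'' you identify --- fixing the sign on the mixed sub-regions --- is not a technical difficulty but an actual impossibility.

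The paper's argument is qualitatively different and genuinely uses the small-third-moment hypothesis. It proceeds in two stages. First (Lemma~\ref{lm 2nd}), the first two moment-matching conditions alone force $\Sigma^+$ and $\Sigma^-$ to be well-conditioned, so after a whitening transformation one may assume both covariances are nearly isotropic with common scale $\alpha^2$, at the cost of shrinking the margin to $\gamma^{O(1)}$. Second (Lemma~\ref{lm 3rd moment}), one constructs \emph{two separate} one-sided polynomials: a degree-$3$ polynomial $f^+$ that is nonnegative on $R^+$ (with no sign constraint on $R^-$) and a degree-$2$ polynomial $f^-$ that is nonpositive on $R^-$ (with no sign constraint on $R^+$). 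The point is that $\E_{D^+}[f^+]\ge 0$ and $\E_{D^-}[f^-]\le 0$, and --- this is where the contradiction hypothesis enters --- if the first and third moments of $D^+,D^-$ are nearly zero, these inequalities reduce to linear inequalities in the second-moment scale, yielding $\alpha^2\le t^2\sin^2\theta - \Omega(\gamma)$ from $f^+$ and $\alpha^2\ge t^2\tan^2\theta - o(\gamma)$ from $f^-$. These two bounds are incompatible, which is the contradiction. The final exponent $15$ comes from composing the margin loss in the whitening step with the $\Omega(\gamma^2)$ gap in the isotropic case. The key conceptual correction to your plan is that the one-sided polynomials constrain the positive and negative regions \emph{separately}, and the contradiction is between the two resulting moment inequalities, not between $\E_{D^+}[p]$ and $\E_{D^-}[p]$ for a single $p$.
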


Here we present some high-level intuition 
for the proof of \Cref{th:intersection-structure-main}.  
The formal proof is given in Appendix~\ref{app structure main}. 
To prove \Cref{th:intersection-structure-main}, 
we establish two technical lemmas.
The first lemma, \Cref{lm 2nd}, 
shows that the conditions in 
\Cref{th:intersection-structure-main}
imply that the smallest 
eigenvalue of the covariance matrix of both $D^+$ and $D^-$ 
will be at least $\gamma^c$. 
Thus, by properly rescaling $D$ 
(by at most a $\poly(\gamma)$ factor), 
we can create a new distribution $D'$ 
such that both $(D')^+$ and 
$(D')^-$ have isotropic covariance matrices 
and the distribution $D'$ 
still satisfies a $\gamma'$-margin condition 
with respect to an 
intersection of two halfspaces, 
where $\gamma'=\gamma^{c'}$. 
Now assuming, for the purpose of contradiction, 
that $\norm{\E_{\bx\sim D^+}\bx^{\otimes 3}}_F, \norm{\E_{\bx\sim D^-}\bx^{\otimes 3}}_F < \poly(\gamma)$,
we must also have 
$\norm{\E_{\bx\sim D'^+}\bx^{\otimes 3}}_F, \norm{\E_{\bx\sim D'^-}\bx^{\otimes 3}}_F < \poly(\gamma)$ 
on the new distribution $D'$ as well.
This assumption implies that both $(D')^+$ and $(D')^-$ 
must have their first and third moment tensors roughly $0$.
Our second lemma, \Cref{lm 3rd moment}, 
shows that the scale of the 
covariance matrices of $D'^+$ and $D'^-$ 
must be significantly different,  
which contradicts the previous assumption 
that the first 
three moments of $D'^+$ and $D'^-$ are nearly matched. 
The proofs of 
these technical lemmas rely on a novel technique, 
which we term \emph{polynomial one-sided approximation}, 
that leverages weak duality between distributions 
with specific 
moments 
and 
polynomial certificates. We summarize the property of one-sided 
polynomial approximation in the following theorem 
and defer its proof to Appendix~\ref{app one-side}.

    
    

\begin{theorem}[Polynomial One-sided Approximation]\label{th one-side}
For any $d,m\in \N$, $C\subseteq\R^d$, 
$T_{i}\in (\R^d)^{\otimes i}$ for $i\in [m]$, 
and $\tau\in \R_{\geq 0}$, at most one of the following conditions can be satisfied:
\begin{enumerate}[leftmargin=*,nosep]
    \item [a)] There is a distribution $D$ supported on $C$ such that $\|\E_{\bx\sim D}(\bx^{\otimes i})-T_{i}\|_F\leq \tau$ for any $i\in [m]$; 
    \item [b)] There is a degree-$m$ polynomial $p:\R^d\to \R$, defined as $p(\bx)=\littlesum_{i=1}^m  A_i\cdot\bx^{\otimes i}$, 
    where $p(\bx)\geq 0$ for any $\bx\in C$,
    $\littlesum_{i=1}^m\|A_i\|_F\leq 1$ and $\littlesum_{i=1}^m  A_i\cdot T_i<-\tau$.
\end{enumerate}
We call such a polynomial a {\em one-sided approximation for $C$ w.r.t. to moments $T_i$ and tolerance $\tau$}.
\end{theorem}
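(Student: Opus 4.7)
The plan is to prove the theorem as a weak-duality statement via direct contradiction. I would assume both (a) and (b) hold simultaneously for the same data $(C, T_1, \dots, T_m, \tau)$ and evaluate the single scalar quantity $\E_{\bx\sim D}[p(\bx)]$ in two incompatible ways.

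First, from (b) together with the fact that $D$ is supported on $C$, linearity of expectation immediately gives $\E_{\bx\sim D}[p(\bx)] \ge 0$, since the integrand is pointwise nonnegative on the support. Second, expanding using the definition of $p$,
\[
\E_{\bx\sim D}[p(\bx)] \;=\; \littlesum_{i=1}^m A_i \cdot \E_{\bx\sim D}[\bx^{\otimes i}] \;=\; \littlesum_{i=1}^m A_i \cdot T_i \;+\; \littlesum_{i=1}^m A_i \cdot \bigl(\E_{\bx\sim D}[\bx^{\otimes i}] - T_i\bigr).
\]
The first sum is strictly less than $-\tau$ by assumption (b). For the error sum, I would apply the Cauchy--Schwarz inequality for the Frobenius inner product to each term, which yields $|A_i \cdot E_i| \le \|A_i\|_F \cdot \|E_i\|_F \le \tau \|A_i\|_F$ by assumption (a); the norm budget $\littlesum_{i} \|A_i\|_F \le 1$ from (b) then caps the total perturbation by $\tau$. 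Combining the two pieces gives $\E_{\bx\sim D}[p(\bx)] < -\tau + \tau = 0$, contradicting the first bound.

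There is essentially no obstacle here---this is a textbook weak-duality computation (the LP-style dichotomy between feasibility of a moment-constrained distribution and existence of a nonnegative polynomial certificate, specialized to tolerance $\tau$). The only points worth minor care are (i) invoking Cauchy--Schwarz in the Frobenius inner product on tensors, which reduces to the standard vector inequality by flattening each $A_i$ and each $E_i$ into a vector of length $d^i$, and (ii) tracking strict versus non-strict inequalities carefully, so that the strict $< -\tau$ from (b) survives the additive $+\tau$ slack and produces the final strict inequality $<0$ needed for the contradiction.
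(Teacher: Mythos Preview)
Your proposal is correct and is essentially identical to the paper's own proof: assume both (a) and (b), observe $\E_{\bx\sim D}[p(\bx)]\ge 0$ from nonnegativity on $C$, then expand $\E_{\bx\sim D}[p(\bx)]=\sum_i A_i\cdot T_i+\sum_i A_i\cdot(\E[\bx^{\otimes i}]-T_i)<-\tau+\tau=0$ using Cauchy--Schwarz and the norm budget. The paper's write-up is slightly terser (it leaves the Cauchy--Schwarz step implicit), but the argument is the same.
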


Given \Cref{th one-side}, in order to certify the non-existence of 
certain distributions on $C$ with specific moments, 
it suffices to construct a one-sided approximating 
polynomial, as 
stated in \Cref{th one-side}. However, a direct application of 
\Cref{th one-side} may not be intuitive. 
In the rest of the section, we explain how to use 
this idea 
to prove our two technical lemmas.
To give a clearer intuition, it is convenient to parameterize the 
instance of learning intersections of two halfspaces 
as in \Cref{as parameter} 
(see \Cref{fig geometry} for a geometric illustration). 
We defer further discussion to Appendix~\ref{app parameter} showing 
that such a parameterization can be made without loss of generality.

\begin{assumption}\label{as parameter}
    Given an intersection of two halfspaces 
    $h^*=\sign(\bu^*\cdot \bx + t_1) \wedge \sign(\bv^* \cdot \bx + t_2)$ 
    and a distribution $D$ over $\B^2(1) \times \{\pm 1\}$ 
    satisfying the $\gamma$-margin condition w.r.t. $h^*$, 
    we parameterize $h^*$ by an angle $\theta \in (0,\pi/2)$, 
    and thresholds $t \ge 0, \sigma\ge 0$, where
    $\bu^* = \sin \theta \be_1 - \cos \theta \be_2, t_1 = t\sin\theta$ 
    and $\bv^* = \sin \theta \be_1 + \cos \theta \be_2, t_2 = (1+\sigma)t\sin\theta$ such that 
    $\norm{\E_{\bx\sim D^+} \bx} \le \gamma^c$, $t_1,t_2 \ge \gamma$, $|t_1|,|t_2|\leq 1$.
\end{assumption} 

\begin{figure}
    \centering
    \includegraphics[width=0.6\linewidth]{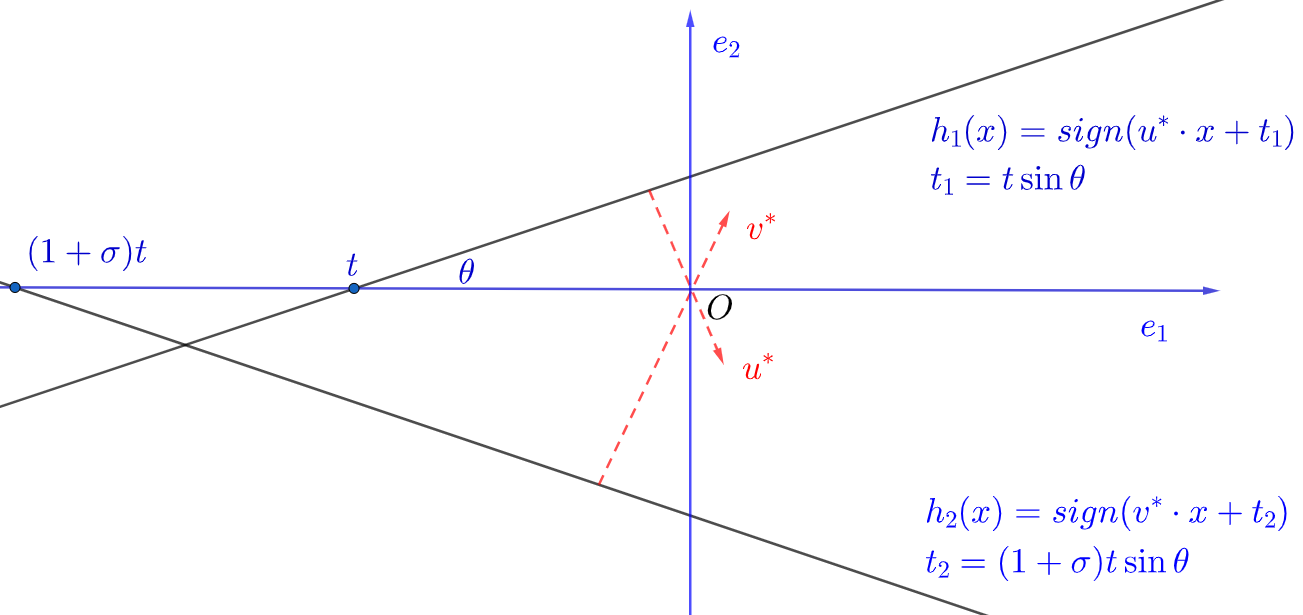}
    \caption{Geometry of Intersection of Two Halfspaces under \Cref{as parameter}.}
    \label{fig geometry}
\end{figure}

\paragraph{Moment-matched Distributions Cannot Have Ill-Conditioned Covariance Matrices}
Our first technical lemma (\Cref{lm 2nd}) shows that 
if $D^+,D^-$ have nearly matched 
their first two moments, then the covariance matrices of $D^+,D^-$ 
cannot have small eigenvalues. 

\begin{lemma}\label{lm 2nd}
     Let $D$ be a distribution over $\B^2(1) \times \{\pm 1\}$ that is consistent 
     with an instance of learning intersections of two halfspaces 
     with $\gamma$-margin, where $\gamma$ is smaller than 
     some sufficiently small constant. 
     Let $c>0$ be any suitably large constant.  
    Suppose that $D$ satisfies the $(\gamma^c,m)$-moment matching 
    condition for $m \in [2]$ and $\norm{\E_{\bx\sim D^+} \bx}_F\le \gamma^c,\norm{\E_{\bx\sim D^-} \bx}_F\le \gamma^c$. 
Then $\norm{(\Sigma^+)^{-1}}_2\le O(1/\gamma^{4})$ and 
$\norm{(\Sigma^-)^{-1}}_2\le O(1/\gamma^{4})$, where 
$\Sigma^+:=\E_{\bx\sim D^+}\bx\bx^T$ and $\Sigma^-:=\E_{\bx\sim D^-}\bx\bx^T$.
\end{lemma}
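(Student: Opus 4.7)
The plan is to prove \Cref{lm 2nd} by contradiction using the one-sided polynomial approximation framework (\Cref{th one-side}). Suppose that $\lambda_{\min}(\Sigma^+) < c_0 \gamma^4$ for a small enough absolute constant $c_0 > 0$, and let $\bw$ be the corresponding unit eigenvector. By the $(\gamma^c, 2)$-moment matching hypothesis with $c$ taken much larger than $4$, we also have $\bw^T \Sigma^- \bw \le 2 c_0 \gamma^4$, and by assumption the class-conditional means $\mu^\pm \coloneqq \E_{\bx \sim D^\pm} \bx$ satisfy $\|\mu^\pm\| \le \gamma^c$. The goal is to construct a quadratic polynomial $p(\bx) \ge 0$ on $C^+$ or $C^-$ whose expectation under the assumed moments is strictly negative; by \Cref{th one-side}, this rules out the existence of such a $D^+$ (or $D^-$), yielding the desired contradiction.

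The guiding principle is the following: for any unit $\bw$, the quadratic $p(\bx) = (\bw \cdot \bx - \beta)^2 - c^2$ is non-negative on a set $C \subseteq \B^2(1)$ exactly when the $\bw$-projection of $C$ avoids the open interval $(\beta - c, \beta + c)$. Applied to a distribution with mean $\mu$ (with $\bw \cdot \mu \approx 0$) and choosing $\beta = \bw \cdot \mu$, the resulting certificate gives $\bw^T \Sigma \bw \ge c^2$. Hence the task reduces to exhibiting, for every unit $\bw$, a projection gap of half-width $c = \Omega(\gamma^2)$ in either $C^+$ or $C^-$ around $0$. Under \Cref{as parameter}, $C^+$ is a truncated wedge opening in the $+\be_1$ direction with apex near $(-t, 0)$, while $C^-$ is the union of two circular caps $A_1 = \{\bx \in \B^2(1) : \bu^* \cdot \bx \le -(t_1+\gamma)\}$ and $A_2 = \{\bx \in \B^2(1) : \bv^* \cdot \bx \le -(t_2+\gamma)\}$ that are bounded away from the origin.

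The argument splits by cases on $\bw$. When $\bw$ is close to $\pm\bu^*$ or $\pm\bv^*$, the $\gamma$-margin supplies a strip of width $2\gamma$ in the corresponding halfspace normal direction avoided by $\operatorname{supp}(D)$; together with the mean condition (which prevents the distribution from collapsing onto a single side of the strip), this yields $\bw^T \Sigma^\pm \bw = \Omega(\gamma^2) \gg \gamma^4$. When $\bw$ is close to $\bu^{*\perp}$ or $\bv^{*\perp}$, the caps $A_1, A_2$ project well away from $0$ along $\bw$ by $\Omega(\gamma)$, yielding a similar bound on $C^-$. The main obstacle lies in the remaining intermediate directions, where the $\bw$-projection of $C^-$ may straddle $0$ and no direct support-gap argument works. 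Here one employs a composite polynomial of the form $p(\bx) = \alpha_1 (\bu^* \cdot \bx + t_1)^2 + \alpha_2 (\bv^* \cdot \bx + t_2)^2 - M$ plus linear corrections, which is lower bounded by $(\alpha_1 + \alpha_2)\gamma^2$ on the margin-avoiding support; tuning $\alpha_1, \alpha_2$ to isolate the $\bw$-component of the quadratic form and invoking \Cref{th one-side} produces an inequality involving $\bu^{*T}\Sigma^\pm \bu^*$ and $\bv^{*T}\Sigma^\pm \bv^*$ in which the two $\gamma^2$-scale margin contributions combine to produce the desired $\gamma^4$ lower bound on $\lambda_{\min}(\Sigma^\pm)$. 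Once $\lambda_{\min}(\Sigma^-) = \Omega(\gamma^4)$ is established, the moment-matching condition immediately transfers the bound to $\Sigma^+$.
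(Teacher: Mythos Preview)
Your high-level strategy---assume $\lambda_{\min}(\Sigma^\pm)$ is small along some unit $\bw$ and seek a quadratic certificate via \Cref{th one-side}---matches the paper's, but the concrete polynomials you propose do not deliver the contradiction, and your case split misses the geometric dichotomy the proof actually hinges on.

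The main gap is your ``intermediate'' case. The quadratic part of $\alpha_1(\bu^*\cdot\bx+t_1)^2+\alpha_2(\bv^*\cdot\bx+t_2)^2$ is $\alpha_1\,\bu^*(\bu^*)^\top+\alpha_2\,\bv^*(\bv^*)^\top$, a two-parameter family inside the three-dimensional space of symmetric $2\times 2$ matrices; for generic $\bw$ no choice of $\alpha_1,\alpha_2$ makes this equal to $\bw\bw^\top$ (or even have $\bw$ as an eigenvector), so ``isolating the $\bw$-component'' is not possible. Moreover, since both squared terms are $\ge\gamma^2$ everywhere on $\operatorname{supp}(D)$ by the margin, your certificate carries no sign information distinguishing $D^+$ from $D^-$, and taking expectations yields only $\alpha_1\bigl((\bu^*)^\top\Sigma\,\bu^*+t_1^2\bigr)+\alpha_2\bigl((\bv^*)^\top\Sigma\,\bv^*+t_2^2\bigr)\ge(\alpha_1+\alpha_2)\gamma^2$, which is vacuous once $t_1,t_2\ge\gamma$. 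Your ``easy'' cases are also not settled: for $\bw\approx\bu^*$, all of $D^+$ lies on \emph{one side} of the margin strip (and $D^-$ may straddle it via the other cap), so there is no projection gap around $0$ and the certificate $(\bw\cdot\bx-\beta)^2-c^2$ gives nothing.

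The paper's argument splits instead on whether $\sign(\bu^*\cdot\bw^\perp)=\sign(\bv^*\cdot\bw^\perp)$. When the signs differ, the key certificate is the \emph{product} $p(\bx)=(\bu^*\cdot\bx+t_1)(\bv^*\cdot\bx+t_2)$: it is $\ge\gamma^2$ on $C^+$, and one checks that $p\le-\gamma^2/2$ on $C^-\cap\{|\bw\cdot\bx|\le\gamma/2\}$; moment matching then forces $\E_{D^-}p\approx\E_{D^+}p\ge\gamma^2$, which pushes $\Omega(\gamma^2)$ of the $D^-$-mass outside the band, giving $\bw^\top\Sigma^-\bw=\Omega(\gamma^4)$. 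When the signs agree, the paper uses a first-moment (not second-moment) argument: every negative point in the thin band $\{|\bw\cdot\bx|\le\gamma/2\}$ has $\bw^\perp\cdot\bx\le-\gamma/2$, so a small $\bw^\top\Sigma^-\bw$ would force $\E_{D^-}[\bw^\perp\cdot\bx]\le-\gamma/8$, contradicting $\|\E_{D^-}\bx\|\le\gamma^c$.
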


The proof strategy of \Cref{lm 2nd} is to construct a suitable polynomial function 
as a certificate, as stated in \Cref{th one-side}. We will use the polynomial 
function $f(\bx) = (\bu^*\cdot \bx+t_1)(\bv^*\cdot \bx+t_2)$ as a certificate. Intuitively, if the first two moments of $D^+,D^-$ 
are nearly matched, then $\E_{\bx\sim D^+} f(x) \approx \E_{\bx\sim D^-} f(\bx)$. Furthermore, by the $\gamma$-margin assumption, 
for every positive example $\bx$, $f(\bx) \ge \gamma^2$ 
and for every negative example $\bx$ with $h_1(\bx)h_2(\bx)=-1 $, 
$f(\bx) \le -\gamma^2$. This implies that if the probability 
$\Pr_{\bx\sim D^-}(h_1(\bx)=h_2(\bx))$ is small, 
then  $\E_{\bx\sim D^+} f(\bx) - \E_{\bx\sim D^-} f(\bx) \ge \Omega(\gamma^2)$, 
which gives a contradiction. Geometrically, if 
$\Pr_{\bx\sim D^-}(h_1(\bx)=h_2(\bx))$ is large, 
then examples in this region must have a significant contribution to $\Sigma^-$ 
along every direction $\bv$ to make $\E_{\bx \sim D^-} \bx$ close to $0$, which 
contradicts the fact that $\Sigma^-$ has a direction with small variance.




\paragraph{Moment-matched Distributions Must Have Large Third Moments}

By \Cref{lm 2nd}, it is safe to assume that the marginal distribution $D_X$ 
has an isotropic covariance matrix. Our main structural result, 
\Cref{lm 3rd moment}, shows that if $D^+,D^-$ have nearly matched first three moments and their covariance matrices are nearly isotropic, 
then their third moments must significantly deviate from $0$. 
We defer the proof of \Cref{lm 3rd moment} to Appendix~\ref{app lm 3rd}.

\begin{lemma}\label{lm 3rd moment}
    Let $D$ be a distribution over $\B^2(1) \times \{\pm 1\}$ 
    that is consistent with an instance of learning intersections of two halfspaces with $\gamma$-margin. 
    Let $c>0$ be any suitably large constant. Suppose
    \begin{enumerate} [leftmargin=*, nosep]
        \item $\norm{\E_{\bx\sim D^+} \bx}_F\le \gamma^c, \norm{\E_{\bx\sim D^-} \bx}_F\le \gamma^c$.
        \item $\E_{\bx\sim D^+}\bx\bx^T = \alpha^2 I+\Delta_+, \E_{\bx\sim D^-}\bx\bx^T = \alpha^2 I+\Delta_-$, where $\Delta_+,\Delta_- \in \R^{2\times 2}$ are symmetric matrices such that $\norm{\Delta_+}_F \le \gamma^c, \norm{\Delta_-}_F \le \gamma^c$ and $\alpha^2 > 0$.
        \item $\norm{\left(\E_{\bx\sim D^+}- \E_{\bx\sim D^-}\right)\bx^{\otimes 3}}_F\le \gamma^c$. 
    \end{enumerate}
    Then we have $\norm{\E_{\bx\sim D^+}\bx^{\otimes 3}}_F \ge \Omega(\gamma^2), \norm{\E_{\bx\sim D^-}\bx^{\otimes 3}}_F \ge \Omega(\gamma^2)$.
\end{lemma}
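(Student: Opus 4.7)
The plan is to prove \Cref{lm 3rd moment} by contradiction, via the polynomial one-sided approximation of \Cref{th one-side}. Suppose toward contradiction that $\|\E_{\bx\sim D^+}\bx^{\otimes 3}\|_F < c_0\gamma^2$ for a sufficiently small absolute constant $c_0>0$. The third-order moment matching hypothesis~(3) then also forces $\|\E_{\bx\sim D^-}\bx^{\otimes 3}\|_F < c_0\gamma^2+\gamma^c$, so under hypotheses (1)--(3) both $D^+$ and $D^-$ have their first three moments approximately equal to $(0,\alpha^2 I,0)$. A first useful observation is that $\alpha^2=\Omega(\gamma^2)$: indeed, $D^-$ is supported on $C^-$, the union of two halfplanes each at distance at least $\gamma$ from the origin, so $\|\bx\|\ge\gamma$ for every $\bx\in\mathrm{supp}(D^-)$; combined with hypothesis~(2) this gives $2\alpha^2\ge\E_{D^-}\|\bx\|^2-O(\gamma^c)\ge\gamma^2-O(\gamma^c)$. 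This lower bound on $\alpha^2$ is what makes the $\Omega(\gamma^2)$ bound on the third moment quantitatively meaningful.

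I would construct polynomial certificates based on the natural margin functions $g_1(\bx):=\bu^*\cdot\bx+t_1$ and $g_2(\bx):=\bv^*\cdot\bx+t_2$, which satisfy $g_1,g_2\ge\gamma$ on $C^+$ and $\min(g_1(\bx),g_2(\bx))\le -\gamma$ on $C^-$. The argument splits according to the mass $q:=\Pr_{\bx\sim D^-}[g_1(\bx)\le-\gamma\text{ and }g_2(\bx)\le-\gamma]$ that $D^-$ places on the doubly-violated ``opposite wedge''. In Case~1 ($q\le\gamma^2/20$, say), I use the degree-$2$ polynomial $P(\bx)=g_1(\bx)g_2(\bx)$. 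By the margin assumption, $P\ge\gamma^2$ on $C^+$, $P\le-\gamma^2$ on the singly-violated portion of $C^-$, and $|P|\le O(1)$ on the doubly-violated portion, so $\E_{D^+}P\ge\gamma^2$ and $\E_{D^-}P\le-(1-q)\gamma^2+O(q)\le -\gamma^2/2$. Since $P$ has $O(1)$-bounded coefficients and the first and second moments of $D^+,D^-$ agree up to $O(\gamma^c)$, we obtain $|\E_{D^+}P-\E_{D^-}P|=O(\gamma^c)$, contradicting the gap $\ge 3\gamma^2/2$ for any $c>2$.

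In Case~2 ($q\ge\gamma^2/20$), a non-trivial fraction of $D^-$ lies in the opposite wedge, where $g_1,g_2\le-\gamma$ combined with \Cref{as parameter} imply $\be_1\cdot\bx\le-(t_1+t_2+2\gamma)/(2\sin\theta)\le-\gamma$. The mean-zero condition $\E_{D^-}[\be_1\cdot\bx]\approx 0$ then forces compensating mass in the singly-violated region with $\be_1\cdot\bx>0$, and the resulting asymmetry prevents the third moment from being close to zero. I would then invoke \Cref{th one-side} for $D^-$ on $C^-$ using a carefully constructed degree-$3$ polynomial, schematically of the form $p(\bx)=P(\bx)(g_1(\bx)+g_2(\bx))+r(\bx)$, where $r$ is a linear/quadratic correction chosen so that $p\ge 0$ on the full non-convex set $C^-$ (the delicate part being the singly-violated subregion, where $P$ and $g_1+g_2$ can have opposite signs) while keeping $\mathrm{tr}(A_2)$ strictly negative. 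With coefficients normalized so that $\sum_i\|A_i\|_F\le 1$, the integral $\sum_i A_i\cdot T_i\approx\alpha^2\,\mathrm{tr}(A_2)=-\Omega(\gamma^2)$, contradicting \Cref{th one-side}.

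The main obstacle is the explicit construction of the Case~2 polynomial: it must be simultaneously non-negative on the non-convex set $C^-=H_1\cup H_2$, have Frobenius-normalized coefficients summing to at most $1$, and exhibit a strictly negative quadratic trace of magnitude $\Omega(\gamma^2/\alpha^2)$. An additional subtlety is uniformity across all parameter regimes of \Cref{as parameter}; some rescaling of the polynomial is needed both when $\theta$ is small (narrow wedge) and when $\theta$ is near $\pi/2$ (near-orthogonal halfspaces), and the margin normalization of \Cref{lm 2nd} is what keeps these rescalings bounded by a polynomial in $1/\gamma$. Once the contradiction is established for $D^+$, the third-order moment matching immediately transfers the $\Omega(\gamma^2)$ lower bound to $\|\E_{\bx\sim D^-}\bx^{\otimes 3}\|_F$, completing the proof.
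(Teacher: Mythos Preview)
Your high-level plan---argue by contradiction and supply polynomial certificates via \Cref{th one-side}---is the paper's plan too, but the execution is structurally different, and your Case~2 is precisely where the content lies and is left incomplete.

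The paper does not split on the mass $q$ of the doubly-violated wedge. Instead it runs the duality argument \emph{twice}, once for $D^+$ on the positive region and once for $D^-$ on the negative region, extracting from each a bound on the common scale $\alpha^2$:
\begin{itemize}
\item For $D^+$, the certificate is the cubic in a \emph{single} linear form (a margin-shifted version of \Cref{lm positive polynomial}),
\[
f^*(\bx)=\frac{1}{t'}\bigl(\bu^*\cdot\bx - t'\bigr)^2\bigl(\bu^*\cdot\bx + t'\bigr),\qquad t'=t\sin\theta-\tfrac{\gamma}{2},
\]
which is trivially nonnegative on $\{\bu^*\cdot\bx+t_1\ge\gamma\}\supseteq\mathrm{supp}(D^+)$ and has quadratic trace $a_{11}+a_{22}=-\|\bu^*\|^2=-1$. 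With the first and third moments (approximately) zero, this yields $\alpha^2\le t^2\sin^2\theta-\Omega(\gamma)$.
\item For $D^-$, the certificate is the explicit \emph{quadratic} of \Cref{lm negative polynomial},
\[
f^*(\bx)=(1+\sigma)t^2\tan^2\theta+(2+\sigma)t\tan^2\theta\,\bx_1-\sigma t\tan\theta\,\bx_2-\bx_2^2,
\]
which is checked region-by-region to be $\le 0$ on \emph{all} of the non-convex negative region, including the doubly-violated wedge. This yields $\alpha^2\ge t^2\tan^2\theta\bigl(1-O(\gamma)\bigr)\ge t^2\sin^2\theta-\gamma/16$.
\end{itemize}
The two bounds on $\alpha^2$ are incompatible, which is the contradiction.

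Your Case~1 argument is correct but is essentially the second-moment calculation already done in the proof of \Cref{lm 2nd}; it uses neither the third-moment hypothesis nor the contradiction assumption and simply forces you into Case~2. There you offer only the schematic candidate $P(\bx)(g_1+g_2)+r(\bx)$ and correctly flag its construction as ``the main obstacle''. That candidate is not the right shape: $g_1g_2(g_1+g_2)$ changes sign erratically on the singly-violated subregions, and you give no mechanism by which a linear/quadratic corrector $r$ repairs this uniformly over the full parameter range $(\theta,t,\sigma)$ of \Cref{as parameter}. The paper sidesteps exactly this difficulty by (i) making the $C^+$ certificate depend on $\bu^*\cdot\bx$ alone, so positivity is automatic, and (ii) taking the $C^-$ certificate to be only degree $2$, reducing sign control to checking a parabola against two halfplanes. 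Neither certificate references $q$; the lower bound $\alpha^2=\Omega(\gamma^2)$ you derive is correct but plays no role.
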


The intuition behind \Cref{lm 3rd moment} relies on 
\Cref{fact pos} and \Cref{fact neg} below that characterize 
the covariance matrix of \emph{any} pair of distributions $D^+,D^-$ 
with zero mean and zero third moment tensor. 

\begin{fact}
\label{fact pos}
    Let $h^*=\sign(\bu^*\cdot \bx + t_1) \wedge \sign(\bv^* \cdot \bx + t_2)$ 
    be the target hypothesis of an instance of the problem of learning 
    intersections of two halfspaces, 
    and $D$ be a distribution that is consistent with $h^*$. 
    Under \Cref{as parameter}, if 
    $\E_{\bx\sim D^+}  (\bx) = 0, \E_{\bx\sim D^+} \bx^{\otimes 3} =0$ 
    and for every $\bv \in S^{d-1}\cap V$, it holds
    $\E_{\bx\sim D^+}(\bv\cdot \bx)^2 = \alpha^2$, 
    then $\alpha^2 \le t^2 \sin^2\theta$.
\end{fact}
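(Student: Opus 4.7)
The plan is to reduce the claim to a one-dimensional moment problem, and then exhibit an explicit one-sided polynomial certificate in the spirit of \Cref{th one-side}. Consider the random variable $Z = \bu^* \cdot \bx$ for $\bx \sim D^+$. Since $D^+$ is supported inside the positive region, we have $\bu^* \cdot \bx + t_1 \ge 0$ almost surely, which is exactly $Z \ge -t_1$ almost surely. The hypotheses of the fact translate into one-dimensional moment identities for $Z$: first, $\E[Z] = \bu^* \cdot \E_{\bx \sim D^+}[\bx] = 0$; second, $\E[Z^3] = \sum_{i,j,k} u^*_i u^*_j u^*_k \, \E[x_i x_j x_k] = 0$ because the full third moment tensor vanishes; and third, $\E[Z^2] = \bu^{*\top} \E_{\bx\sim D^+}[\bx \bx^\top]\, \bu^* = \alpha^2 \|\bu^*\|_2^2 = \alpha^2$, because the covariance is $\alpha^2 I$ along every direction in $V$ and $\bu^* \in V \cap S^{d-1}$.

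The key step is to produce the cubic certificate
\[
q(z) \;=\; \tfrac{1}{t_1}\,(z+t_1)(z-t_1)^2 \;=\; \tfrac{1}{t_1}\, z^3 \;-\; z^2 \;-\; t_1\, z \;+\; t_1^2.
\]
Since $t_1 = t\sin\theta > 0$, this is well defined; and because $(z+t_1)\ge 0$ on $[-t_1,\infty)$ while $(z-t_1)^2 \ge 0$ always, the polynomial $q$ is nonnegative on the support of $Z$, so $q(Z)\ge 0$ almost surely under $D^+$. Taking expectations and plugging in the moment identities above yields
\[
0 \;\le\; \E[q(Z)] \;=\; \tfrac{1}{t_1}\,\E[Z^3] \;-\; \E[Z^2] \;-\; t_1 \,\E[Z] \;+\; t_1^2 \;=\; t_1^2 - \alpha^2,
\]
which rearranges to $\alpha^2 \le t_1^2 = t^2 \sin^2\theta$, the desired conclusion.

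The only genuinely creative step is guessing the polynomial $q$; once it is on the page, the rest of the argument is just linearity of expectation. One can derive $q$ systematically as the dual certificate supplied by \Cref{th one-side} for the one-dimensional moment problem of maximizing $\E[Z^2]$ over distributions supported on $[-t_1, \infty)$ with $\E[Z] = \E[Z^3] = 0$. In fact, the bound $\alpha^2 \le t_1^2$ is tight, attained by the symmetric two-point distribution putting mass $1/2$ on each of $\pm t_1$; the complementary-slackness intuition is that $q$ must vanish on the support of the extremal distribution, which forces a double root at $z = t_1$ and a simple root at $z = -t_1$ and hence singles out the polynomial above up to positive scaling. I do not anticipate any real obstacle beyond this one-line construction: the dependencies on $\bv^*$, $t_2$, and $\sigma$ do not enter at all, and only the weakest consequence of the positive-region constraint ($Z \ge -t_1$) is used.
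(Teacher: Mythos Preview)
Your proof is correct and uses exactly the same certificate as the paper: the paper's dual polynomial $f^*(\bx)=\frac{1}{t\sin\theta}(\bu^*\cdot\bx-t\sin\theta)^2(\bu^*\cdot\bx+t\sin\theta)$ is precisely $q(\bu^*\cdot\bx)$ with your $q$, and the paper verifies $f^*\ge 0$ on the positive region and reads off $a_0=t^2\sin^2\theta$ in the same way. The only cosmetic difference is that the paper frames the argument via LP duality in two dimensions while you project to the scalar $Z=\bu^*\cdot\bx$ first; the substance is identical.
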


\begin{fact}\label{fact neg}
        Let $h^*=\sign(\bu^*\cdot \bx + t_1) \wedge \sign(\bv^* \cdot \bx + t_2)$ 
        be the target hypothesis of an instance of the problem of learning 
        intersections of two halfspaces, 
        and $D$ be a distribution that is consistent with $h^*$. 
        Under \Cref{as parameter},
        if $\E_{\bx\sim D^-}  \bx = 0, \E_{\bx\sim D^-} \bx^{\otimes 3} =0$ 
        and for every $\bv \in S^{d-1}\cap V$, it holds 
        $\E_{\bx\sim D^-}(\bv\cdot \bx)^2 = \beta^2$, 
        then $\beta^2 \ge (1+\sigma)t^2\tan^2\theta$.
\end{fact}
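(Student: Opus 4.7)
The plan is to invoke Theorem~\ref{th one-side} with target set $C = \{\bx \in \B^2(1) : h^*(\bx) = -1\}$ (the support of $D^-$) and target moments $T_1 = 0$, $T_2 = \beta^2 \bI$, $T_3 = 0$. Suppose, for contradiction, that $\beta^2 < (1+\sigma)t^2\tan^2\theta$. I will exhibit a degree-$2$ polynomial $p$ that is nonnegative on $C$ yet satisfies $\E[p] < 0$ under any distribution matching these moments; by Theorem~\ref{th one-side}, this rules out the existence of $D^-$ and yields the claimed bound.

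Under Assumption~\ref{as parameter}, one checks that $\bu^*\cdot\bx+t_1 = -\cos\theta\bigl(\bx_2-\tan\theta(\bx_1+t)\bigr)$ and $\bv^*\cdot\bx+t_2 = \cos\theta\bigl(\bx_2+\tan\theta(\bx_1+(1+\sigma)t)\bigr)$, so the negative region decomposes as $A \cup B$, where $A = \{\bx_2 > \tan\theta(\bx_1+t)\}$ (halfspace $1$ violated) and $B = \{\bx_2 < -\tan\theta(\bx_1+(1+\sigma)t)\}$ (halfspace $2$ violated). On the slice $\bx_1 = 0$ the negative region is $\bx_2 \ge t\tan\theta$ or $\bx_2 \le -(1+\sigma)t\tan\theta$, which motivates the candidate
\[
q(\bx) = (\bx_2 - t\tan\theta)(\bx_2 + (1+\sigma)t\tan\theta) = \bx_2^2 + \sigma t\tan\theta\,\bx_2 - (1+\sigma)t^2\tan^2\theta.
\]
Its expectation under the assumed moments equals exactly $\beta^2 - (1+\sigma)t^2\tan^2\theta < 0$. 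However, $q$ fails to be nonnegative on the ``both-violated'' part of $C$ (e.g.\ $q(-1,0) < 0$). To repair this without disturbing $\E[q]$, I would append the linear-in-$\bx_1$ correction
\[
p(\bx) = q(\bx) - (2+\sigma)t\tan^2\theta\cdot\bx_1,
\]
which still satisfies $\E[p] = \beta^2 - (1+\sigma)t^2\tan^2\theta < 0$ since $\E[\bx_1] = 0$.

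The main verification is that $p \ge 0$ on $C$. I would fix $\bx_1$ and view $p$ as a convex quadratic in $\bx_2$ with vertex $\bx_2^\ast = -\sigma t\tan\theta/2$. The non-obvious point is that the coefficient $(2+\sigma)t\tan^2\theta$ is chosen precisely so that on each of the two boundary lines (of $A$ and of $B$), $p$ reduces after algebraic cancellation to $\tan^2\theta\,\bx_1^2 \ge 0$. If $\bx_1 \ge -(1+\sigma/2)t$, the vertex $\bx_2^\ast$ lies outside both wings, so the minimum of $p$ over $A$ (or $B$) at this $\bx_1$ is on the boundary line, where $p = \tan^2\theta\,\bx_1^2 \ge 0$. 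If $\bx_1 < -(1+\sigma/2)t$, the vertex lies in the both-violated subregion and $p(\bx_1,\bx_2^\ast) = -t\tan^2\theta[(1+\sigma/2)^2 t + (2+\sigma)\bx_1] \ge 0$, because the bracket is bounded above by $(1+\sigma/2)^2 t - (2+\sigma)(1+\sigma/2)t = -(1+\sigma/2)^2 t < 0$; convexity of $p$ in $\bx_2$ then extends nonnegativity from the vertex to the whole wing. Combining the cases gives $p \ge 0$ throughout $C$, as required.

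The main obstacle is identifying the polynomial. The linear-in-$\bx_1$ correction is ``free'' in expectation since $\E[\bx_1] = 0$, so its coefficient can be tuned; the key algebraic observation is that the specific choice $(2+\sigma)t\tan^2\theta$ makes $p$ collapse to $\tan^2\theta\,\bx_1^2$ on \emph{both} boundary lines of the negative region simultaneously, and this identity is what pins down the coefficient and makes the bound tight. Naive alternatives either fail to be nonnegative on the both-violated subregion (particularly for $\sigma > 0$) or yield a bound strictly weaker than $(1+\sigma)t^2\tan^2\theta$.
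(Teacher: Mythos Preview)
Your proposal is correct and essentially identical to the paper's proof: your polynomial $p$ is exactly $-f^*$ for the paper's dual certificate $f^*(\bx)=a_0+a_1\bx_1+a_2\bx_2-\bx_2^2$ from Lemma~\ref{lm negative polynomial}, and your key identity that $p$ collapses to $\tan^2\theta\,\bx_1^2$ on both boundary lines is precisely the observation driving the paper's proof of that lemma. The only cosmetic difference is that the paper packages the argument as LP weak duality (with $f^*\le 0$ on the negative region and $a_{11}+a_{22}=-1$), whereas you argue directly by contradiction via $\E[p]\ge 0$ versus $\E[p]<0$; your case split by the threshold $\bx_1=-(1+\sigma/2)t$ is equivalent to the paper's split by $\bx_2=-\sigma t\tan\theta/2$.
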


Here we give an overview of the proof techniques 
behind \Cref{fact pos} and \Cref{fact neg}, and defer 
the full proofs to Appendix~\ref{app fact pos neg}. 
We take \Cref{fact pos} as an example and show how the 
certificate one-sided approximating 
polynomial used in \Cref{th one-side} 
is derived. For every $\bx$ that is labeled positive by $h^*$, 
denote by $p(\bx)$ the variable of the density of a distribution $D^+$ 
over $\R^2$. Notice that any distribution $D^+$ that satisfies 
the statement of \Cref{fact pos} gives a feasible solution 
to the following LP~\eqref{LP max}. 
Thus, to upper bound the variance of $D^+$, 
it is equivalent to upper bound the optimal value of LP~\eqref{LP max}.
\begin{align}\label{LP max}
\begin{split}
\max \  \alpha^2 \quad  
\text{s.t. } & \littlesum_{\bx}p(\bx)\bx = 0, \quad 
 \littlesum_\bx p(\bx)\bx\bx^T = \alpha^2 I, \\
& \littlesum_\bx p(\bx)\bx^{\otimes 3} = 0, \quad 
 \littlesum_\bx p(\bx) =1, \quad p(\bx) \ge 0, \forall \bx \in   \textbf{supp}(D^+)
\end{split}
\end{align}
To upper bound the optimal value of LP~\eqref{LP max}, 
we use LP duality theory \citep{bertsimas1997introduction,Shapiro2001}. 
The dual linear program to LP~\eqref{LP max} is defined by LP~\eqref{LPdual max}, 
whose variable is defined over the coefficients of $f(\bx)$, 
a degree-3 polynomial over $\R^2$, 
and the objective function is given by its constant term, namely 
\begin{align}\label{LPdual max}
\begin{split}
\min \  a_{0}  \quad
\text{s.t. } & \forall \bx \in \textbf{supp}(D^+), \ f(\bx) \ge 0 , \quad a_{11}+a_{22} =-1.
\end{split}
\end{align}
Here, $a_{11}, a_{22}$ are the coefficients of $f$ 
with respect to monomials $\bx_1^2,\bx_2^2$.
Thus, to give tight bounds for $\alpha^2,\beta^2$, 
the key technical difficulty is to design a pair of polynomials 
that are feasible to the dual LPs with nearly optimal objective values. 
The polynomials we used here are given by \Cref{lm positive polynomial} 
and \Cref{lm negative polynomial} 
(illustrated in \Cref{fig geometry}), 
the proofs of which are deferred to Appendix~\ref{app exact match}.

\begin{lemma}\label{lm positive polynomial}
    Let $h^*=\sign(\bu^*\cdot \bx + t_1) \wedge \sign(\bv^* \cdot \bx + t_2)$ 
    be the target hypothesis of an instance of the problem of learning intersections of two halfspaces and $D$ be a distribution that is consistent with $h^*$. 
    Under \Cref{as parameter}, the polynomial 
    $f^*(\bx) = \frac{1}{t\sin\theta}(\bu^*\cdot \bx-t\sin\theta)^2(\bu^*\cdot \bx+t\sin \theta)$
    satisfies $f^*(\bx) \ge 0, \forall \bx \in \textbf{supp}(D^+)$.
\end{lemma}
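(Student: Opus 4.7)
The plan is a direct sign analysis: since $f^*$ is already presented in factored form, the statement reduces to showing that each factor is nonnegative on $\textbf{supp}(D^+)$. I will first confirm that $1/(t\sin\theta)$ is positive, then that $(\bu^*\cdot\bx - t\sin\theta)^2$ is automatically nonnegative, and finally that $\bu^*\cdot\bx + t\sin\theta \ge 0$ on every positive example.

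For the prefactor, Assumption~\ref{as parameter} gives $t_1 = t\sin\theta \ge \gamma > 0$, so in particular $t > 0$ and $\sin\theta > 0$ (the latter because $\theta \in (0,\pi/2)$). The middle factor is a square, hence nonnegative. For the linear factor, observe that $\bu^*\cdot\bx + t\sin\theta = \bu^*\cdot\bx + t_1$; any $\bx \in \textbf{supp}(D^+)$ is labeled $+1$ by $h^*$, which forces $\sign(\bu^*\cdot\bx + t_1) = +1$ from the first defining halfspace, and the $\gamma$-margin assumption in Definition~\ref{def problem} strengthens this to $\bu^*\cdot\bx + t_1 \ge \gamma$. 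Multiplying the three nonnegative quantities yields $f^*(\bx) \ge 0$, proving the lemma. Note that the second halfspace constraint involving $\bv^*, t_2$ is not needed, reflecting the fact that $f^*$ relies only on the first halfspace for its sign certificate.

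There is essentially no obstacle in the verification itself; the substantive content lies in the design of $f^*$. A quick expansion with $u := \bu^*\cdot\bx$ and $a := t\sin\theta$ gives $f^*(\bx) = u^3/a - u^2 - au + a^2$. Writing the quadratic piece $-(\bu^*\cdot\bx)^2$ in the standard basis of $V = \textbf{span}\{\be_1,\be_2\}$ yields $-\sin^2\theta\,x_1^2 + 2\sin\theta\cos\theta\,x_1 x_2 - \cos^2\theta\, x_2^2$, whose diagonal coefficients sum to $-(\sin^2\theta + \cos^2\theta) = -1$; this matches the normalization $a_{11}+a_{22} = -1$ in the dual LP~\eqref{LPdual max}. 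The constant term is exactly $a^2 = t^2\sin^2\theta$, which is the minimization objective of that dual. Hence $f^*$ is tailored to be a feasible dual certificate whose objective value delivers the upper bound $\alpha^2 \le t^2\sin^2\theta$ stated in Fact~\ref{fact pos}; the role of the present lemma is solely to confirm that this dual witness is valid on $\textbf{supp}(D^+)$.
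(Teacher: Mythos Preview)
Your proof is correct and follows essentially the same approach as the paper: both identify that $1/(t\sin\theta)>0$, the squared factor is nonnegative, and $\bu^*\cdot\bx + t\sin\theta = \bu^*\cdot\bx + t_1 \ge 0$ on $\textbf{supp}(D^+)$ because positive examples satisfy the first halfspace constraint. Your additional remarks about the dual LP interpretation are accurate and helpful context, though not required for the lemma itself.
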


\begin{lemma}\label{lm negative polynomial}
    Let $h^*=\sign(\bu^*\cdot \bx + t_1) \wedge \sign(\bv^* \cdot \bx + t_2)$ be the target hypothesis of an instance of the problem of learning intersections of two halfspaces and $D$ be a distribution that is consistent with $h^*$.
    Under \Cref{as parameter}, the polynomial 
    $f^*(\bx) = a_0 + a_1 \bx_1 + a_2 \bx_2 - \bx_2^2$, where
    $a_0 = (1+\sigma)\tan^2 \theta t^2$,
    $a_1 = (2+\sigma)\tan^2\theta t$ and
    $a_2 =-\sigma \tan \theta t$
    satisfies $f^*(\bx) \le 0, \forall \bx \in \textbf{supp}(D^-)$.
\end{lemma}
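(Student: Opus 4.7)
The approach is to verify, by direct expansion, the clean algebraic identity
$$(\bu^*\cdot\bx + t_1)(\bv^*\cdot\bx + t_2) = \cos^2\theta \cdot f^*(\bx) + \sin^2\theta \cdot \bx_1^2,$$
which rewrites the product of the two halfspace slacks in terms of $f^*$. First I would expand the left-hand side using $\bu^* = \sin\theta\, \be_1 - \cos\theta\, \be_2$ and $\bv^* = \sin\theta\, \be_1 + \cos\theta\, \be_2$ from \Cref{as parameter}: the factorization $(A-B)(A+B) = A^2 - B^2$, with $A$ collecting the common $\be_1$ component plus a constant and $B$ the $\be_2$ component plus a constant, immediately produces $\sin^2\theta\, \bx_1^2 - \cos^2\theta\, \bx_2^2$ together with linear and constant cross terms. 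Matching the remaining monomials against $\cos^2\theta \cdot f^*(\bx)$ pins down the specific choices $a_0 = (1+\sigma)\tan^2\theta\, t^2$, $a_1 = (2+\sigma)\tan^2\theta\, t$, $a_2 = -\sigma\tan\theta\, t$ stated in the lemma, verifying the identity.

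Granting the identity, proving $f^*(\bx)\le 0$ on $\textbf{supp}(D^-)$ is equivalent to proving $u(\bx)v(\bx) \le \sin^2\theta\, \bx_1^2$ there, writing $u := \bu^*\cdot\bx + t_1$ and $v := \bv^*\cdot\bx + t_2$. I would then case-split on the signs of $u, v$. For any negative example with $\gamma$-margin, at least one of $u, v$ lies below $-\gamma$; the easy subcase is when the other one is at least $\gamma$, since then $uv < 0 \le \sin^2\theta\, \bx_1^2$ and the conclusion is immediate.

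The real work is the subcase $u \le -\gamma$ \emph{and} $v \le -\gamma$, where $uv > 0$. Here I would use $\bu^* + \bv^* = 2\sin\theta\, \be_1$ to write $\sin\theta\, \bx_1 = (u+v)/2 - c$ with $c := (1+\sigma/2)t\sin\theta > 0$. Setting $s := u+v \le -2\gamma$, a short manipulation yields
$$\sin^2\theta\, \bx_1^2 - uv = \bigl(\tfrac{u-v}{2}\bigr)^2 + c(c - s),$$
which is a sum of two nonnegative quantities since $c > 0$ and $s \le 0$. Hence $uv \le \sin^2\theta\, \bx_1^2$, and $f^*(\bx) \le 0$ follows from the identity.

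The main obstacle, as in \Cref{lm positive polynomial}, lies in the ansatz itself: recognizing that the dual certificate should take the specific shape $a_0 + a_1\bx_1 + a_2\bx_2 - \bx_2^2$ and, crucially, correspond exactly to $(uv - \sin^2\theta\, \bx_1^2)/\cos^2\theta$. Once this structural observation is in hand, the remaining argument is essentially a sum-of-squares certificate that falls out of elementary inequalities, analogous to how \Cref{lm positive polynomial} factors as a manifestly nonnegative product along the single direction $\bu^*$.
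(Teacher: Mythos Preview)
Your argument is correct and takes a genuinely different route from the paper's. The paper partitions the negative region along the symmetry line $\bx_2=-\sigma t\tan\theta/2$ into two wedges $N_1,N_2$, checks that $f^*$ vanishes as $-\tan^2\theta\,\bx_1^2\le 0$ on each decision boundary, and then uses the sign of $\partial f^*/\partial\bx_2=-2\bx_2+a_2$ to push the inequality from the boundaries into the interiors. Your approach instead discovers the single algebraic identity $uv=\cos^2\theta\,f^*(\bx)+\sin^2\theta\,\bx_1^2$ and reduces everything to the sum-of-squares certificate $\sin^2\theta\,\bx_1^2-uv=((u-v)/2)^2+c(c-s)$ in the doubly-negative case. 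This is shorter and more transparent: it makes clear \emph{why} the particular coefficients $a_0,a_1,a_2$ are forced (they are exactly what makes the identity hold), whereas in the paper they appear as an ansatz whose correctness is verified a posteriori. One small remark: your invocation of the $\gamma$-margin is not actually needed, since $u,v\le 0$ already gives $s\le 0\le c$ and $uv\le 0$ in the mixed-sign case; so your argument, like the paper's, in fact establishes $f^*\le 0$ on the entire region $\{h^*=-1\}$, not merely on $\textbf{supp}(D^-)$.
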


Given \Cref{fact pos} and \Cref{fact neg}, under the $\gamma$-margin assumption, 
we know that if $D^+,D^-$ have zero means, zero third moments, 
and isotropic covariance matrices, 
then the variances of $D^+,D^-$ 
must differ by at least $\gamma^2$. 
In other words, if $D^+,D^-$ have zero means 
and matched second, third moments, 
then their third moments must differ from $0$ 
by $\Omega(\gamma^c)$. 
However, in general, we are not able to guarantee 
that the moments of $D^+,D^-$ satisfy the condition of \Cref{fact pos} 
and \Cref{fact neg} exactly. 
Thus, we need \Cref{lm 3rd moment}, 
a robust version of the above argument. 
Importantly, the polynomials we construct in \Cref{lm positive polynomial} 
and \Cref{lm negative polynomial} are stable enough 
and can still be used in our proof even if the moment conditions are perturbed. 
Thus, using these one-sided approximating polynomials 
as certificates, 
we are able to prove \Cref{lm 3rd moment}.


\section{Relevant Direction Extraction for Intersections of Two Halfspaces}\label{sec direction}
In \Cref{sec structure}, we developed structural results for the marginal 
distribution $D_V$ of any instance of learning intersections of two halfspaces with a margin assumption.
In this section, we will show that with these structural results, 
we are able to efficiently find a unit vector $\bw$ 
that is close to $V$ for \emph{any} factorizable distribution $D$ 
consistent with an instance of learning intersections 
of two halfspaces. As we will show later, 
with such a unit vector $\bw$, we are able to design 
non-smoothed statistical queries that can be used for weak learning.
Recall by \Cref{thm:csq-lb} that a necessary condition that makes 
CSQ algorithms not work is that low-degree moments of $D^+,D^-$ 
are nearly the same. So, in \Cref{sec matched}, we will present 
algorithms that work under this ``hard'' condition, 
while in \Cref{sec:direction-extraction-csq}, 
we will give an algorithm under the easier condition 
where the low-degree moments of $D^+,D^-$ are mismatched.


\subsection{Relevant Direction Extraction with Matched Moments}\label{sec matched}

When $D^+,D^-$ have nearly the same low-degree moments, 
the moments of both of $D^+, D^-$ look like those of $D_X$. 
As we mentioned in \Cref{sec structure}, in this case, 
the third moment of $D_X$ must deviate from $0$ significantly. 
In this step, we will make use of this property 
to perform certain unsupervised learning tasks over $D_X$ 
to find some $\bw \in S^{d-1}$ close to $V$.




\paragraph{An Efficient SQ Algorithm for Relevant Direction Extraction with Matched Moments}
Our initial attempt was inspired by independent component analysis (ICA) \citep{frieze1996learning,arora2012provable} 
and its generalization \citep{vempala2011structure}. 
The generalized ICA studies the following problem. 
Consider a distribution $D_X$ over $\R^d$, where there 
is a $k$-dimensional subspace $V$ such that $D_V$ and $D_W$ ($D_{V^\perp}$) 
are independent. Given sample access to $D_X$, ICA is asked to recover 
the subspaces $V$ and $W$. 
The core idea of generalized ICA is 
to solve some non-convex optimization task 
based on higher moments of $D_X$.
\cite{vempala2011structure} observed that if $D_X$ has 
the same first $m-1$ moments as the standard Gaussian, 
for $m \ge 3$, but has a different $m$th moment, 
then any local maximum (minimum) 
$\bu^*$ of $f^*(\bu^*)$ over $S^{d-1}$ 
with $f^*(\bu^*)> \gamma_m$ ($f^*(\bu^*)< \gamma_m$) 
must be either in $V$ or $W$. 
Here, $f^*(\bu) = \E_{\bx \sim D_X}(\bu\cdot \bx)^m$ 
and $\gamma_m$ is the $m$-th moment 
of the standard normal distribution.
In particular, if the distribution $D_W$ is a standard Gaussian, 
then any $\bu^*$ obtained above must be in $V$, 
which gives a reasonable method that finds one direction in $V$. 

However, for two general distributions $D_V, D_W$, this observation 
does not immediately give a method to find a direction $\bu \in V$, 
because we are not able to guarantee whether the local optimum 
of $f^*(\bu)$ is in $V$ or $W$. In fact, for the problem of learning 
intersections of two halfspaces, $V$ only has dimension $2$; 
but it is possible that $D_W$ is also a factorizable distribution 
that can be factorized into $\Omega(d)$ distributions, each of which is isomorphic to $D_V.$ 
Thus, information-theoretically, finding a list of $O(d)$ directions 
such that one of them is close to $V$ is the best one can achieve. 
To do this, the direct attempt is to find the first local optimum 
$\bu^{(1)}$, look at the subspace $(\bu^{(1)})^\perp$, 
find $\bu^{(2)}$, the next local optimum of $f^*$ within 
$(\bu^{(1)})^\perp$, and perform this procedure recursively $d$ times.  
This can be done because every time we make a projection, the resulted distribution is still factorizable. 
Unfortunately, such a direct approach cannot be turned into 
an efficient algorithm, and no fully polynomial time algorithm for generalized ICA is known so far. 
This is because, due to the sampling error and optimization error for optimizing $f^*$, we are only able to find an approximate solution 
for $\bu^{(1)}$, which is not in $V$ or $W$ exactly. 
Thus, the local optimum of $f^*$ restricted at $(\bu^{(1)})^\perp$ 
is not guaranteed to be a local optimum of $f^*$ over $S^{d-1}$. 
Such an error can accumulate exponentially fast with respect 
to the order in the output list (as demonstrated in \cite{vempala2011structure}).
Since we are not able to guarantee which $\bu$ 
in the list is close to $V$, in the worst case, 
our target direction could be the last few discovered 
directions in the list. 
To guarantee that these vectors are still close to $V$, 
the first several solutions must be found with error $\gamma^{-\Omega(d)}$. 
Our first result in this section shows that, 
although such a framework cannot give us a computationally efficient algorithm, we can modify it to get an
SQ-efficient algorithm that outputs a list of $\poly(d/\gamma)$ 
unit vectors such that at least one of them is $\poly(\gamma/d)$-close 
to $V$. In other words, we show that extracting one relevant direction 
can be done in a sample-efficient manner. 
Formally, we establish the following theorem 
(we defer the algorithm and the proof to Appendix~\ref{app SQ efficient}).


\begin{theorem}\label{th SQ efficient}
There is a Statistical Query learning algorithm $\A$ such that for $c>0$, a suitably large constant, 
and for an instance of learning intersections of two $\gamma$-margin 
halfspaces under factorizable distributions, 
if the input distribution $D$ satisfies the $(\gamma^c,m)$-moment matching condition for $m \in [3]$,
the algorithm makes $\poly(d/\gamma)$ many statistical queries, each of which has tolerance $\poly(\gamma/d)$, 
and outputs a direction $\bw \in \R^d$ such that with probability at least $\poly(\gamma/d)$, $\norm{\bw_W}_2 \le \poly(\gamma/d)$.    
\end{theorem}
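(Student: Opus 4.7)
The plan is to adapt the generalized-ICA paradigm of \cite{vempala2011structure} so that it exploits both the structural guarantee of \Cref{th:intersection-structure-main} and the factorizability of $D_X$. Since only SQ, not time, efficiency is claimed, I may freely spend exponential computation to optimize cubic forms over subspace spheres; only the number and tolerance of SQs must be polynomial. First, using $O(d)$ SQ queries of tolerance $\poly(\gamma/d)$, I would estimate the mean $\mu = \E_{\bx\sim D_X}\bx$ and work with the centered distribution, which remains factorizable and has vanishing means in both the $V$- and $W$-marginals. Under this centering the third-moment tensor $T := \E[\bx^{\otimes 3}]$ has no surviving cross terms by independence, so $T = T_V + T_W$ with $T_V$ supported on $V^{\otimes 3}$ and $T_W$ on $W^{\otimes 3}$. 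Combining \Cref{th:intersection-structure-main} with $(\gamma^c, m)$-moment matching and the identity $\E_{D_X}[\bx^{\otimes 3}] = p\,\E_{D^+}[\bx^{\otimes 3}] + (1-p)\,\E_{D^-}[\bx^{\otimes 3}]$ (plus brief casework on $p = \Pr(y=1)$), one concludes $\|T_V\|_F = \Omega(\gamma^{15})$; because $V$ has dimension $2$, this in turn yields some $\bu_V^* \in V \cap S^{d-1}$ with $|\la T_V, (\bu_V^*)^{\otimes 3}\ra| = \Omega(\gamma^{15})$.

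The second step is a critical-point analysis on $S^{d-1}$ of the cubic $f(\bu) := \la T, \bu^{\otimes 3}\ra = f_V(\bu_V) + f_W(\bu_W)$. Writing $a := \|\bu_V\|$ so $\|\bu_W\| = \sqrt{1-a^2}$ and using that each piece is $3$-homogeneous, a short Lagrangian computation shows that every interior critical point with $a \in (0,1)$ is a strict local minimum of the reduced one-variable problem, so all extrema of $|f|$ on $S^{d-1}$ lie in $V \cup W$. The same property holds after restriction to any subspace that splits compatibly with $V$ and $W$. Given this, I would estimate $T$ entry-wise via $\poly(d)$ SQs of tolerance $\poly(\gamma/d)$, and then iterate: starting with $L = \emptyset$ and $U_0 = \R^d$, at step $t$ compute an approximate global extremum $\bu^{(t)}$ of $|f|$ on $U_{t-1} \cap S^{d-1}$ using only the stored tensor (no new SQs), append it to $L$, and set $U_t := U_{t-1} \cap (\bu^{(t)})^\perp$; terminate after $d$ iterations or once $|f(\bu^{(t)})|$ drops below a threshold $\gamma^C$ for a suitable constant $C$. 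Finally, output a uniformly random $\bw \in L$.

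The success analysis is then a pigeonhole argument. By the critical-point splitting in its robust form, each $\bu^{(t)}$ lies within $\poly(\gamma/d)$ of $V \cup W$; since $V$ is $2$-dimensional, at most two iterations can extract directions close to $V$, and since any such direction has $|f|$ value at least $\Omega(\gamma^{15})$ and therefore survives the threshold $\gamma^C$, at least one $V$-close direction necessarily appears in $L$. Because $|L| \le d$, a uniformly random $\bw \in L$ satisfies $\|\bw_W\|_2 \le \poly(\gamma/d)$ with probability at least $1/d \ge \poly(\gamma/d)$, as claimed.

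The main obstacle is a \emph{robust} version of the critical-point splitting. In each iteration, $f|_{U_{t-1}}$ is at best an $\eta$-perturbation of an ideal factorizable cubic (due to SQ tolerance, residual mean error, and the fact that the iterate subspace $U_{t-1}$ is only approximately aligned with $V \oplus W$), so approximate extrema of $f|_{U_{t-1}}$ need not lie exactly in $V \cup W$. The danger is that projecting orthogonally to an $\eta$-imperfect direction could perturb the ideal $V/W$ split of the residual subspace by a multiplicative factor, compounding to $\gamma^{-\Omega(d)}$ over $d$ iterations --- precisely the source of exponential complexity in plain generalized-ICA noted in \cite{vempala2011structure}. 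The heart of the argument is therefore a stability lemma showing that (i) approximate extrema of a perturbed factorizable cubic remain within $O(\eta)$ of $V \cup W$ with only additive dependence on $\eta$, and (ii) the $V/W$ split of $U_t$ degrades additively in $\eta$ rather than multiplicatively. With both in place, taking $\eta = \poly(\gamma/d)$ (achievable via $\poly(d/\gamma)$ SQs of tolerance $\poly(\gamma/d)$) controls the error across all iterations and delivers the claimed bound on $\|\bw_W\|_2$.
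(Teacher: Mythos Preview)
Your plan is the naive generalized-ICA recipe: at step $t$, restrict $\hat f$ to the exact orthogonal complement $U_{t-1}=(\bu^{(1)},\dots,\bu^{(t-1)})^\perp$ and take an (approximate) global extremum there. You correctly identify that everything hinges on your ``stability lemma'' (ii)---that the $V/W$ split of $U_t$ degrades only \emph{additively} across iterations---but you do not prove it, and this is exactly where the approach fails. An extremum of $\hat f|_{U_{t-1}}$ has vanishing tangential gradient \emph{within} $U_{t-1}$, not on all of $S^{d-1}$, so the structural fact ``approximate critical points of $f$ lie near $V\cup W$'' does not apply to it. Consequently the tilt of $U_t$ away from an ideal $V\oplus W'$ feeds back into the next step's error, and the recursion is multiplicative: after $\Theta(d)$ iterations the tolerance required on $\hat T$ is $\gamma^{\Omega(d)}$, not $\poly(\gamma/d)$. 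This is precisely the phenomenon the paper (citing \cite{vempala2011structure}) warns about; your assertion that the degradation is additive is the whole difficulty, and nothing in your outline substantiates it.

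The paper's route sidesteps the accumulation entirely by never restricting the function. At each step it seeks a $(\gamma/d)^{c'}$-approximate critical point of the \emph{unrestricted} $\hat f$ on $S^{d-1}$ (large value, small tangential gradient, nonpositive tangential Hessian) subject only to the side constraint $|\bu_t\cdot\bu|\le\poly(\gamma/d)$ for every $\bu$ already in the list $\mathcal O$. The lemma ``any such point lies within $\poly(\gamma/d)$ of $V\cup W$'' is proved once for the full-sphere problem and applies verbatim at every iteration, so the error never compounds. As long as $\mathcal O$ contains no $V$-close vector, the maximizer of $f$ on $V\cap S^{d-1}$ (which has value $\Omega(\gamma^{15})$ by \Cref{th:intersection-structure-main}) is itself nearly orthogonal to everything in $\mathcal O$ and witnesses that a new admissible point exists; termination is then forced by Alon's rank bound, which caps any family of unit vectors with pairwise inner products $\le\poly(\gamma/d)$ at size $\poly(d/\gamma)$. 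The idea you are missing is this decoupling: search in the \emph{band} $\{\bu\in S^{d-1}:|\bu\cdot\bu^{(j)}|\le\poly(\gamma/d)\ \forall j\}$ for unrestricted approximate critical points, rather than optimizing $\hat f$ restricted to the exact orthogonal complement.
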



\paragraph{Computationally Efficient Algorithm for Relevant Direction Extraction with Matched Moments}

Given the above discussion, finding a relevant direction 
via a direct non-convex optimization method is technically challenging. 
In summary, since there is no structural assumption over $D_W$, 
the function $f(\bu) = \E_{\bx \sim D_X}(\bu \cdot \bx)^3$ 
could have too many locally optimal solutions 
and some of them (including the ones that are close to $V$) 
are hard to find; this makes the error accumulate fast 
when sequentially finding each local optimum.
Thus, to avoid such an issue of error accumulation 
and get a computationally efficient algorithm, 
one hope is to find directions in $V$ and $W$ simultaneously. 
Following this idea, we give a fully-polynomial time algorithm 
that solves this task using techniques from the 
tensor decomposition literature. 
Formally, we have the following theorem.

\begin{theorem}\label{th tensor}
There is a learning algorithm $\A$ such that for every $c$, 
a suitably large constant, and any instance of learning 
intersections of two $\gamma$-margin halfspaces 
under factorizable distributions, 
if the input distribution $D$ satisfies the $(\gamma^c,m)$-moment matching condition for $m \in [3]$,
$\A$ runs in $\poly(d,1/\gamma)$ time and outputs a list of $d$ 
unit vectors $\mathcal{O}$ such that at least one direction 
$\bw \in \mathcal{O}$ satisfies $\norm{\bw_W}_2 \le \poly(\gamma)$ 
with probability $\Omega(\gamma/d)$.    
\end{theorem}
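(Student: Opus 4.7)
The plan is to combine the third-moment lower bound from \Cref{th:intersection-structure-main} with the product structure implied by factorizability, and then extract a $V$-direction through a Jennrich-style random contraction followed by eigendecomposition (``PCA over random projections of the moment tensor'').

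\textbf{Setup and tensor decomposition.} From $n=\poly(d,1/\gamma)$ labeled samples I would form the empirical estimate $\hat T$ of $T:=\E_{\bx\sim D^+}[\bx^{\otimes 3}]$, so that $\|\hat T-T\|_F\le \gamma^c$ with high probability. Writing $\bx=\bx_V+\bx_W$ and using that, under $D^+$, the components $\bx_V$ and $\bx_W$ are independent (factorizability of $D_X$ plus the fact that the label depends only on $\bx_V$), one obtains
\begin{equation*}
T \;=\; T_V^+ \;+\; T_W \;+\; \mathrm{sym}(\Sigma_V^+\otimes \mu_W) \;+\; \mathrm{sym}(\mu_V^+\otimes \Sigma_W),
\end{equation*}
where $T_V^+:=\E[\bx_V^{\otimes 3}]\in V^{\otimes 3}$, $T_W:=\E[\bx_W^{\otimes 3}]\in W^{\otimes 3}$, and $\mu_V^+,\mu_W,\Sigma_V^+,\Sigma_W$ are the corresponding first/second moments. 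Since $V\perp W$ and $\|\E_{D^+}[\bx]\|\le \gamma^c$, both $\|\mu_V^+\|$ and $\|\mu_W\|$ are at most $\gamma^c$, so the two cross terms have Frobenius norm $O(\gamma^c)$. Thus $T=T_V^++T_W+E_0$ with $\|E_0\|_F\le O(\gamma^c)$, and \Cref{th:intersection-structure-main} gives $\|T_V^+\|_F=\Omega(\gamma^{15})$.

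\textbf{Random contraction and block structure.} Draw $\bv\sim\mathcal N(0,I_d)$ and form the symmetric matrix $M:=\hat T(\cdot,\cdot,\bv)$. The decomposition above yields $M=M_V+M_W+E$, where $M_V:=T_V^+(\cdot,\cdot,\bv)$ is a rank-$\le 2$ matrix supported on $V\otimes V$, $M_W:=T_W(\cdot,\cdot,\bv)$ is supported on $W\otimes W$, and $\|E\|_F\le O(\gamma^c\sqrt d)$ after absorbing both the sampling error and the contraction of $E_0$ against $\bv$ (using $\|\bv\|_2=O(\sqrt d)$). Since $\bv_V\sim\mathcal N(0,I_2)$ and $\E\|M_V\|_F^2=\|T_V^+\|_F^2=\Omega(\gamma^{30})$, with constant probability the $2\times 2$ block $M_V$ has some eigenvalue $\lambda$ with $|\lambda|=\Omega(\gamma^{15})$.

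\textbf{Eigenvalue separation and perturbation.} The crux is to show that with probability at least $\Omega(\gamma/d)$ this eigenvalue $\lambda$ is separated from every eigenvalue $\mu_1,\dots,\mu_{d-2}$ of $M_W$ by a gap $\delta=\gamma^{c'}$, where $c'$ is a constant much smaller than $c$. The point is that $\lambda$ depends only on $\bv_V$ while the $\mu_j$'s depend only on $\bv_W$, and these are independent; conditioning on $\bv_W$ we only need anti-concentration for $\lambda$ over $\bv_V\sim\mathcal N(0,I_2)$. Because $M_V$ is linear in $\bv_V$ and $\|T_V^+\|_F=\Omega(\gamma^{15})$, a standard argument bounds the density of each eigenvalue of $M_V$ by $\poly(d/\gamma)$, giving $\Pr[|\lambda-\mu_j|<\delta]\le \poly(d/\gamma)\cdot\delta$, and a union bound over the $d-2$ values $\mu_j$ yields the claimed separation. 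Given the separation, the Davis--Kahan theorem applied to $M=(M_V+M_W)+E$ implies that the eigenvector $\bw$ of $M$ associated with $\lambda$ satisfies $\|\bw_W\|_2=O(\|E\|_F/\delta)=\poly(\gamma)$, provided $c$ is chosen sufficiently larger than $c'$. The algorithm outputs the list $\mathcal O$ of all $d$ eigenvectors of $M$; by construction at least one of them is $\poly(\gamma)$-close to $V$.

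\textbf{Main obstacle.} The principal technical difficulty is the eigenvalue-separation step. Since $D_W$ is arbitrary, the spectrum $\{\mu_j\}$ can cluster anywhere, so we cannot rule out coincidences deterministically; we must exploit randomness in $\bv_V$ and the quantitative spread of the eigenvalues of the $2\times 2$ matrix $M_V(\bv_V)$, which in turn depends sensitively on the lower bound $\|T_V^+\|_F=\Omega(\gamma^{15})$ and on the fact that $M_V$ is not near-degenerate as a linear map of $\bv_V$. A secondary bookkeeping issue is tracking the cumulative $\gamma^c$ errors through the approximate block decomposition, the sampling noise, and the Davis--Kahan perturbation, so that the final bound $\|\bw_W\|_2\le \poly(\gamma)$ indeed holds for a suitably large absolute constant $c$.
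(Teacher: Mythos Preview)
Your overall architecture matches the paper's exactly (random Gaussian contraction of the third-moment tensor, block structure $M_V+M_W$, eigenvalue separation via the independence of $\bv_V$ and $\bv_W$, then Davis--Kahan). Two points, however, are real gaps rather than just sketchiness.

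First, the claim $\|\mu_V^+\|,\|\mu_W\|\le\gamma^c$ does not follow from the hypotheses: the $(\gamma^c,1)$-moment matching condition only says $\E_{D^+}[\bx]$ is close to $\E_{D^-}[\bx]$, not that either is close to zero. In particular $\mu_W=\E_{D_X}[\bx_W]$ can be $\Theta(1)$, so your cross terms need not be small, and \Cref{th:intersection-structure-main} (which also assumes $\|\E_{D^+}[\bx]\|\le\gamma^c$) cannot be invoked directly. The paper fixes this by first estimating and subtracting $\mu:=\E_{D_X}[\bx]$; after centering, moment matching forces $\E_{D^+}[\bx]\approx 0$, and the centered third moment of $D_X$ decomposes \emph{exactly} as $T_V+T_W$ with no cross terms at all.

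Second, the anti-concentration step is not as ``standard'' as you suggest. Although $M_V$ is linear in $\bv_V$, its eigenvalues are not (they involve $\sqrt{(\mathrm{tr}\,M_V)^2-4\det M_V}$), so bounding the density of $\lambda$ directly is delicate. The paper's clean device is a polar decomposition: write $\bv_V=\alpha\,\bv_V^0$ with $\bv_V^0$ uniform on $S^1\cap V$ and $\alpha$ the radial part. Conditioning on $\bv_V^0$ fixes $M_V/\alpha$ and hence its eigenvalues $\sigma_1\ge\sigma_2$; the eigenvalues of $M_V$ are then $\alpha\sigma_1,\alpha\sigma_2$, which \emph{are} linear in $\alpha$. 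One first argues (via the structural lower bound) that with probability $\Omega(\gamma^{c})$ over $\bv_V^0$ one has $|\sigma_1|\ge\gamma^c$, and then uses boundedness of the $\chi$-density of $\alpha$ to avoid each of the $d-2$ eigenvalues of $M_W$. A further case split on whether $\sigma_1-\sigma_2$ is large or small is needed: if it is small, both $V$-eigenvalues could collide with each other (harmless) but one must then apply Davis--Kahan to the $2$-dimensional $V$-eigenspace rather than to a single eigenvector.
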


Tensor decomposition techniques usually deal with problems 
of the following type. 
Given a tensor $T \in \R^{d\times d\times d}$ 
of the form $T=\littlesum_{i=1}^k (\bv^{(i)})^{\otimes 3}$, 
recover $\bv^{(i)}, i \in [k]$ for some small $k$. A number of prior works 
address this problem from a computational point of view. 
Unfortunately, for the moment tensor of a general distribution, $k$ can be large and it can be challenging to compute the decomposition. 
This also happens for our problem. 
However, our goal is to find a direction $\bw \in V$, 
instead of doing a complete tensor decomposition. 
Assuming that $D_X$ has zero mean, then we can write 
$T^*:=\E_{\bx \sim D} \bx^{\otimes3} = \E_{\bx \sim D_V} \bx_V^{\otimes3} + \E_{\bx \sim D_W} \bx_W^{\otimes3}.$ 
Notice that for every $\bv \in \R^d$, we have 
$
    M=    T^* \cdot \bv = \E_{\bx \sim D_V} \bx_V\bx_V^T (\bx_V \cdot \bv) + \E_{x \sim D_W} \bx_W\bx_W^T(\bx_W \cdot \bv) = M_V + M_W \;,
$
where
$M_V = \E_{\bx \sim D_V} \bx_V\bx_V^T (\bx_V \cdot \bv)$ 
and $M_W = \E_{\bx \sim D_W} \bx_W\bx_W^T(\bx_W \cdot \bv)$. 
Since $V \perp W$, every eigenvector $\bw$ of $M_V$ 
must also be an eigenvector of $M$. Thus, as long as $M_W$ 
does not have a common eigenvalue as $M_V$, 
we are able to find one direction $\bw \in V$ 
using eigendecomposition algorithms.
On the other hand, if the eigenvalues of $M_V$ 
are the same as (or close to) the eigenvalues of $M_W$, 
vectors that have heavy components in both $V$ and $W$ 
can also be eigenvectors of $M$, which makes finding $\bw \in V$ hard. 
To overcome this difficulty, we choose $\bv \sim N(0,\frac{1}{d}I)$.
Such a choice makes the eigenvalues of $M_V$ and $M_W$ are independent. Importantly, by \Cref{th:intersection-structure-main}, 
$T_V$ significantly deviates from $0$. 
Thus, if we write $\bv_V = \alpha^2 \bv_V^0$, 
where $\bv_V^0$ is the direction of $\bv_V$, 
then with constant probability $M_V/\alpha^2$ 
has at least one eigenvalue $\sigma_1$ with magnitude at least $\gamma^c$. 
On the other hand, the corresponding eigenvalue $\alpha^2\sigma_1$ 
of $M_V$ is a random variable that satisfies an anti-concentration property. 
In the proof of \Cref{th tensor}, we will show that this 
anti-concentration property can make
$\alpha^2\sigma_1$ far away from any eigenvalue of $M_W$ 
with a non-trivial probability. Thus, as long as we estimate 
the moment tensor of $T^*$ up to $\poly(\gamma/d)$ accuracy, 
we are able to find a direction $\bw$ close to $V$ 
with a non-trivial probability. We defer the algorithm and the proof of \Cref{th tensor} 
to Appendix~\ref{app tensor}. 



\subsection{Relevant Direction Extraction with Mismatched Moments} \label{sec:direction-extraction-csq}

In \Cref{sec matched}, we described 
an efficient algorithm that outputs a direction 
$\bw$ that is close to $V$ when $D^+$ and $D^-$ have nearly matched 
low-degree moments.
In this section, we focus on a different regime, 
where the low-degree moments of $D^+,D^-$ are not matched.
Recall that in \Cref{def moment match}, we use the $(\alpha,m)$-moment matching condition to measure the level of mismatch 
of the low-degree moments of $D^+,D^-$. 
This characterizes the difficulty of using polynomials 
to detect the correlation between the labels and the unlabeled examples. 
If for small $m$, the $(\alpha,m)$ moment matching condition 
always does not hold, then one can use the polynomial regression method 
to output a degree-$m$ Polynomial Threshold Function (PTF) 
with $\poly(\alpha)$ advantage. However, this does not imply that 
we are able to run a boosting algorithm with PTFs.  
Indeed, this would require the moment-matching condition to not 
hold throughout the process.
However, when the distribution $D$ is factorizable, 
instead of boosting using polynomials, 
our strategy will be to extract a direction $\bu$ in the relevant subspace $V$ 
by solving a carefully defined non-convex optimization problem. 
The main result we obtain is summarized 
in \Cref{thm:csq-direction-extraction}. 
We defer the full proof of \Cref{thm:csq-direction-extraction} and the corresponding algorithm to Appendix~\ref{app csq}.




\begin{theorem} \label{thm:csq-direction-extraction}
There is an algorithm $\mathcal{A}$ (\Cref{alg csq}) such that for any instance of learning intersections of two halfspaces under factorizable distributions, 
if the distribution $D$ does not satisfy the 
$(\alpha,m)$-moment matching condition 
and $D$ satisfies the $(\alpha^2d^{-c}/2^m,t)$-moment matching condition 
for any $t\le m-1, m \le 3$ and a sufficiently large universal constant $c$, 
then $\mathcal{A}$ draws 
$\poly(d, 1/\alpha)$ \iid samples from $D$, 
runs in time $\poly(d, 1/\alpha)$, 
and outputs a unit vector $\bu\in S^{d-1}$ 
such that $\norm{\bu_W} = O(\alpha)$ with probability $2/3$. 
\end{theorem}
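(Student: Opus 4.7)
The plan is to phrase direction-finding as non-convex optimization of a degree-$m$ homogeneous objective on $S^{d-1}$ and show that any approximate critical point with non-trivial objective value is forced to have a small $W$-component. Let $m\in\{1,2,3\}$ be the smallest level at which the $(\alpha,m)$-matching condition fails, and define $f(\bu) := (\E_{\bx\sim D^+} - \E_{\bx\sim D^-})(\bu\cdot\bx)^m.$ Since labels depend only on $\bx_V$ and $D_X=D_V\times D_W$ is factorizable, $\bx_W$ has the same distribution $D_W$ conditional on either label. Expanding $\bx=\bx_V+\bx_W$ and using this independence decomposes $f$ as
\[ f(\bu) = g_m(\bu_V) + E(\bu), \qquad g_m(\bu_V) := \E_{\bx\sim D^+}(\bu_V\cdot\bx_V)^m - \E_{\bx\sim D^-}(\bu_V\cdot\bx_V)^m, \]
where $E(\bu)=\sum_{k=1}^{m-1}\binom{m}{k}g_k(\bu_V)\, h_{m-k}(\bu_W)$ combines the lower-order $V$-marginal moment differences $g_k$ with pure $D_W$-moments $h_j(\bu_W):=\E_{\bx\sim D_W}(\bu_W\cdot\bx_W)^j$. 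The $(\alpha^2 d^{-c}/2^m,t)$-matching hypothesis for $t\le m-1$, together with $\|\bx\|\le 1$, yields $|g_k(\bu_V)|,\|\nabla g_k(\bu_V)\|\le O(\alpha^2 d^{-c})$ on the unit ball, so $|E|,\|\nabla E\|\le O(\alpha^2 d^{-c})$ uniformly on $S^{d-1}$. Moreover, because the components of $(\E_{\bx\sim D^+}-\E_{\bx\sim D^-})\bx^{\otimes m}$ with different numbers of $V$-factors live in mutually Frobenius-orthogonal subspaces, failure of $(\alpha,m)$-matching implies that the $V$-marginal coefficient tensor $A_m := (\E_{\bx\sim D^+}-\E_{\bx\sim D^-})\bx_V^{\otimes m}$ of $g_m$ satisfies $\|A_m\|_F=\Omega(\alpha)$.

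At an $\eps$-approximate sphere-critical point $\bu\in S^{d-1}$, one has $\nabla f(\bu)=\lambda\bu+\bg$ with $\|\bg\|\le\eps$, and Euler's relation for the degree-$m$ homogeneous $f$ fixes $\lambda=m f(\bu)$. Because $\nabla g_m(\bu_V)$ lies in $V$, projecting this relation onto $W$ yields $(\nabla E(\bu))_W = m f(\bu)\,\bu_W+\bg_W$, and therefore
\[ \|\bu_W\| \le \frac{\|\nabla E(\bu)\| + \eps}{m\,|f(\bu)|} = O\!\left(\frac{\alpha^2 d^{-c}+\eps}{|f(\bu)|}\right). \]
Any approximate critical point with $|f(\bu)|\ge \alpha/\poly(d)$ and $\eps = \poly(\alpha/d)$ therefore satisfies $\|\bu_W\|=O(\alpha)$ once the universal constant $c$ is sufficiently large. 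For initialization, a uniformly random $\bu^{(0)}\in S^{d-1}$ has $\|\bu^{(0)}_V\| = \Theta(1/\sqrt d)$ with constant probability (since $\dim V = 2$); viewing $g_m$ restricted to the unit circle in $V$ as a trigonometric polynomial of constant degree with $L^2$-norm $\Theta(\|A_m\|_F)=\Theta(\alpha)$, standard anti-concentration on $S^1$ gives $|g_m(\bu^{(0)}_V)|\ge \Omega(\alpha/d^{m/2})$, hence $|f(\bu^{(0)})| \ge \Omega(\alpha/d^{m/2})$, with constant probability.

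The algorithm draws a constant number of random seeds from $S^{d-1}$, keeps any whose empirical value of $|f|$ exceeds $\Omega(\alpha/d^{m/2})$, and from each seed runs projected gradient ascent on $f^2$ (which preserves $|f|$ from below) until $\eps=\poly(\alpha/d)$-stationarity. Since $f$ is a bounded-coefficient polynomial of constant degree, it is $\poly(d)$-smooth on the sphere, so stationarity is reached in $\poly(d/\alpha)$ iterations; empirical estimates of $f$ and $\nabla f$ from $\poly(d,1/\alpha)$ i.i.d.\ samples suffice because $\|\bx\|\le 1$ controls the empirical-moment concentration rates. The output $\bu$ then satisfies the hypotheses of the critical-point inequality above and so has $\|\bu_W\| = O(\alpha)$ with constant probability per seed, which boosts to $2/3$ over a constant number of seeds.

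The main obstacle is making the leakage analysis tight enough to absorb the $d^{m/2}$ loss from the two-dimensional initialization: the $W$-component of $\nabla f$ must be much smaller than $|f(\bu)|$, which requires bounding $\|\nabla E\|$ by the tiny tolerance $\alpha^2 d^{-c}$ rather than by potentially large $D_W$-moments. Factorizability is essential here, since it forces every $W$-dependent contribution to $E(\bu)$ to carry a lower-order marginal-difference factor $g_k$ (with $k\ge 1$), which is exactly the quantity constrained by the matching hypothesis; no a priori control over $D_W$ itself is required. Choosing $c$ so that $\alpha^2 d^{-c}/(\alpha/d^{m/2}) = O(\alpha)$ (i.e.\ so $c$ dominates $m/2$ by a large enough constant) closes the argument.
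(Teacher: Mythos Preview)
Your proposal is correct and follows essentially the same approach as the paper: both identify the objective $f(\bu)=(\E_{D^+}-\E_{D^-})(\bu\cdot\bx)^m$, observe that its ``ideal'' $V$-part $g_m$ has gradient lying entirely in $V$, control the remaining cross terms via the lower-order moment-matching hypothesis, and conclude that any approximate sphere-critical point with non-trivial objective value must have small $W$-component. Your Euler-relation-plus-projection argument is a coordinate-free repackaging of the paper's computation with the specific tangent vector $\tilde\bu=t\bar\bu_V-s\bar\bu_W$ (their Lemma~\ref{lm csq-approximate}/\ref{lm stability-f}); your function-level decomposition $f=g_m+E$ with a direct bound on $\|\nabla E\|$ is equivalent to the paper's tensor-level estimate $\|\hat T-T^*\|_F$ small (their Lemma~\ref{lm tensor estimate csq}); and your use of gradient ascent on $f^2$ in place of $f$ is a harmless variant that cleanly handles the sign of $f$.
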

In the rest of the section, we provide an overview 
of the proof of \Cref{thm:csq-direction-extraction}. 
Suppose that $\E_{\bx\sim D^+}(\bx_V^{\otimes t})$ 
and $\E_{\bx\sim D^-}(\bx_V^{\otimes t})$ 
are different, for some $t=m\in \N$. 
Then the tensor 
$T=(\E_{\bx\sim D^+}-\E_{\bx\sim D^-})(\bx_V^{\otimes m})$ 
is nonzero and $T\cdot \bu^{\otimes m}=0$ for all points $\bu \in W$. 
Therefore, the function $f(\bu) = \bu^{\otimes m}\cdot T$  
obtains a local maximum/minimum only if $\bu^{\otimes m}\in V^{\otimes m}$, 
which implies that $\bu\in V$. 
Unfortunately, we are not able to solve 
such an optimization problem exactly. 
As we will show in the proof, an approximate solution 
(that can be efficiently found via a gradient-descent method) 
suffices for our purposes.
Moreover, there is still a technical challenge to implement this approach.
Since $V$ is unknown to us, it is impossible for us to estimate $T$. 
However, if $\E_{\bx\sim D^+}(\bx^{\otimes t})$ and 
$\E_{\bx\sim D^-}(\bx^{\otimes t})$ are the same for all $t \le m-1$, 
then $(\E_{\bx\sim D^+}-\E_{\bx\sim D^-})(\bx^{\otimes m}) = T$, for which we can efficiently estimate with samples. 
We will show that if we take $m$ to be the smallest index such that 
$D$ does not satisfy the $(\alpha,m)$-moment matching condition, 
but satisfies the $(\alpha^2d^{-c}/2^m,t)$-moment matching condition, 
then we are able to take $\poly(d^m,1/\alpha)$ examples 
from $D^+,D^-$ to estimate $T$, 
so that any approximate solution to the estimated function 
gives a direction close to $V$. In particular, to use this approach 
to learn an intersection of two halfspaces, we only need $m\le 3$.







\section{Localization with the Relevant Direction and Learning Intersections of Halfspaces}\label{sec local}
In the previous sections, 
we have shown that for every factorizable distribution $D$ 
that is consistent with an instance of learning intersections of two halfspaces with a margin, we are able to efficiently find one direction $\bw$ 
that is close to the relevant subspace $V$.
 Based on these results, a natural attempt is to find 
 the next relevant direction so that we can approximately 
 find the relevant subspace $V$; 
 and do a brute-force search over all intersections of two halfspaces over $V$. 
 However, the structural result we obtained in \Cref{sec structure} 
 only allows us to find one direction. 
 Furthermore, since we make no distributional assumptions over $D_V$, 
 to make the brute-force search method succeed, 
 a small mismatch between $V$ and the approximate subspace we found 
 could lead to a large error. On the other hand, instead 
 of trying to recover the relevant subspace, 
 we make the following observation.
\begin{lemma} \label{lem:banding-PTF}
    Let $D$ be a joint distribution of $(\bx,y)$ on $\B^d(1)\times \{\pm 1\}$ that is consistent with an intersection of halfspaces with $\gamma$-margin and $\bw\in S^{d-1}$ such that $\norm{\bw_V}_2 \le c \gamma$, for some small constant $c$, where $V$ is the relevant subspace of the intersection of halspaces. Then for any band $B_t:=\{\bx\in \B^d(1)\mid \bx\cdot \bw\in [t,t+c\gamma]\}$ where $t\in \R$ and $c$ is a sufficiently small constant, 
    the distribution of $(\bx,y)$ conditioned on $\bx\in B_t$ is consistent with an instance of learning a degree-$2$ polynomial threshold function with $\Omega(\gamma^2)$-margin.
\end{lemma}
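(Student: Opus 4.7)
The statement appears to contain a typo: to be consistent with the algorithms of \Cref{sec direction}, which output $\bw$ with $\norm{\bw_W}_2$ small, and with the intuition in the technical overview, the intended hypothesis is $\norm{\bw_W}_2 \le c\gamma$; I proceed under this reading. The plan is to exploit that (i) the label depends only on $\bx_V$, so conditioning on the band effectively restricts $\bx_V$ to a thin slab in the $2$-dimensional subspace $V$, and (ii) the intersection of two halfspaces restricted to a sufficiently thin slab is, up to a small perturbation, the indicator of an interval along the direction perpendicular to $\bw_V$ in $V$, which is naturally represented by a degree-$2$ PTF of the form $-(r-a)(r-b)\ge 0$.

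First I would decompose $\bx\cdot\bw = \bx_V\cdot\bw_V + \bx_W\cdot\bw_W$; since $\norm{\bx_W}_2\le 1$ and $\norm{\bw_W}_2\le c\gamma$, conditioning on $\bx\in B_t$ forces $\bx_V\cdot\bw_V$ into an interval of length $O(c\gamma)$. I would then fix an orthonormal frame $\{\bu,\bu^\perp\}$ of $V$ with $\bu=\bw_V/\norm{\bw_V}_2$, write $\bx_V=s\bu+r\bu^\perp$, and observe that $s$ is restricted to a window of width $O(c\gamma)$ while $r$ varies freely in $[-1,1]$.

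For each defining halfspace, I would expand about a reference $s_0$ in the window:
\[
\bu^*\cdot\bx+t_i \;=\; \alpha_i r + \beta_i + \epsilon_i, \qquad \alpha_i=\bu^*\cdot\bu^\perp, \ \beta_i=s_0(\bu^*\cdot\bu)+t_i, \ |\epsilon_i|=O(c\gamma),
\]
and analogously for $\bv^*$. Taking $c$ small enough that $|\epsilon_i|\le\gamma/2$, the $\gamma$-margin assumption gives $\sgn(\bu^*\cdot\bx+t_i)=\sgn(\alpha_i r+\beta_i)$ and $|\alpha_i r+\beta_i|\ge\gamma/2$ throughout the band. Hence the label on the band equals the indicator of the intersection of two half-lines in the scalar variable $r$, which is always an interval (possibly a half-line, empty, or all of $\R$). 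Writing this interval as $[a,b]$ and using that $r=\bx\cdot\bu^\perp$ is linear in $\bx$, the polynomial $p(\bx):=-(\bx\cdot\bu^\perp-a)(\bx\cdot\bu^\perp-b)$ is a degree-$2$ polynomial in $\bx$ that correctly labels every point in the band.

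For the margin, $|\alpha_i r+\beta_i|\ge\gamma/2$ together with $|\alpha_i|\le 1$ gives $|r-a|,|r-b|\ge\gamma/2$, so $|p(\bx)|\ge\gamma^2/4$. Roots $a,b$ lying outside a bounded neighborhood of $[-1,1]$ correspond to halfspaces that are effectively constant on the slab; dropping these factors reduces $p$ to degree $\le 1$ with an even larger margin. With the active roots bounded, the coefficient norm of $p$ is $O(1)$, yielding the claimed $\Omega(\gamma^2)$-margin. The main obstacle---which is routine---is the case analysis for these degenerate intervals and the verification that the perpendicular slope $|\alpha_i|$ is either $\Omega(\gamma)$ or the halfspace is constant on the slab; the geometric content of the argument is simply that a $2$D intersection of two halfspaces, restricted to a thin slab, collapses to a $1$D interval realized by a degree-$2$ PTF.
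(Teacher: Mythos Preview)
Your identification of the typo is correct (the paper's own proof confirms it by using $\|\bw^*-\bw\|_2\le 2c\gamma$ for $\bw^*=\bw_V/\|\bw_V\|_2$, which only makes sense when $\|\bw_W\|_2\le c\gamma$), and your argument is essentially the same as the paper's. Both introduce an orthonormal basis $\{\bw_V/\|\bw_V\|_2,\bw'\}$ of $V$ (your $\{\bu,\bu^\perp\}$), observe that the band pins down the $\bu$-coordinate of $\bx_V$ to within $O(c\gamma)$, absorb this into the thresholds to reduce each halfspace to an affine function of $r=\bx\cdot\bu^\perp$ with $\Omega(\gamma)$ margin, and then take the product to obtain the degree-$2$ PTF; the paper likewise disposes of the degenerate cases by noting that when a threshold lands outside $[-1,1]$ (equivalently, your root leaves a bounded neighborhood) the corresponding factor is constant and can be dropped.
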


\Cref{lem:banding-PTF} states that for any instance of learning 
an intersection of two halfspaces with a margin, 
if we cut the space into bands $B_i, i=1,2,\dots$, along a direction $\bu$ 
that is close to $V$, then the distribution in each band 
is consistent with a degree-$2$ PTF. This means that some correlational 
statistical query of the form $q(\bx)= p(\bx) \Ind(\bx \in B_i)$ 
can be used to detect the correlation between $D_X$ and $D_Y$, 
and allows us to efficiently find a weak hypothesis $h$ 
with $\poly(\gamma)$-advantage. 
We state the algorithmic result for the weak learning algorithm 
in \Cref{thm:main-weak-learner}.

\begin{theorem} \label{thm:main-weak-learner}
    There is an algorithm $\A$ such that for every instance of learning intersections of two halfspaces with $\gamma$-margin, given $\bw\in S^{d-1}$ such that $\norm{\bw_W}_2 \leq c\gamma$ where $c$ is a sufficiently small constant, 
    $\A$ draws $\poly(d,1/\gamma)$ examples from $D$, 
    runs in $\poly(d,1/\gamma)$ time, and outputs a hypothesis 
    $h: \B^d(1) \to \{\pm 1\}$ such that with probability at least $2/3$, 
    $\err(h) \le 1/2-\Omega(\gamma)$.
    
\end{theorem}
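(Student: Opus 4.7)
The strategy is to use the direction $\bw$ to localize the problem via \Cref{lem:banding-PTF}: we partition $\B^d(1)$ into thin bands along $\bw$, within each of which the labels are a degree-$2$ PTF with $\Omega(\gamma^2)$-margin, and learn these PTFs by reducing to a large-margin halfspace problem in the degree-$2$ lifted feature space.

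\emph{Step 1 (Banding).} Partition $\B^d(1)$ into $k = O(1/\gamma)$ disjoint bands $B_t = \{\bx : \bw\cdot \bx \in [t,\,t+c'\gamma)\}$ for a suitably small constant $c' > 0$ and $t$ ranging over a grid in $[-1,1]$. By \Cref{lem:banding-PTF}, for each $t$ the conditional distribution $D \mid (\bx \in B_t)$ is consistent with a degree-$2$ PTF of $\Omega(\gamma^2)$-margin.

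\emph{Step 2 (Per-band learner via lifting).} Consider the degree-$2$ feature map $\phi(\bx) = (1,\bx,\bx\bx^\top) \in \R^{O(d^2)}$, which satisfies $\norm{\phi(\bx)}_2 = O(1)$ for $\bx \in \B^d(1)$. The PTF promised by \Cref{lem:banding-PTF} corresponds to a linear separator in $\phi$-space whose coefficient vector has norm $O(1)$, since it can be written as (a rescaling of) the product of the two defining linear forms $\bu^*\cdot\bx+t_1$ and $\bv^*\cdot\bx+t_2$ after restricting to the band, each of which has bounded coefficients. Hence the lifted margin is $\Omega(\gamma^2)$, and running Perceptron (or any margin-based halfspace learner) on $\poly(d,1/\gamma)$ \iid samples from $D \mid B_t$ yields a hypothesis $h_t$ with conditional error at most $\gamma/(10k)$, with failure probability at most $1/(3k)$.

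\emph{Step 3 (Aggregation).} Draw $n = \poly(d,1/\gamma)$ labeled samples from $D$. Estimate each band mass $\hat p_t$ up to additive error $O(\gamma^2)$ via Chernoff, and discard every band with $\hat p_t$ below a threshold $\Theta(\gamma^2)$; the true mass of all discarded bands is then $O(k\gamma^2) = O(\gamma)$. For each kept band, invoke the procedure of Step 2 using the subset of samples falling in that band; since the number of samples per kept band is at least $\Theta(n\gamma^2) = \poly(d,1/\gamma)$, this gives enough data. Output the global hypothesis
\[
h(\bx) = \begin{cases} h_t(\bx) & \text{if } \bx \in B_t \text{ for some kept } t,\\ -1 & \text{otherwise.} \end{cases}
\]

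\emph{Step 4 (Error analysis).} A union bound over the $k = O(1/\gamma)$ kept bands gives, with probability at least $2/3$,
\[
\err(h) \le \sum_{t \text{ kept}} p_t \cdot \frac{\gamma}{10k} \; + \; \Pr[\bx \text{ in a discarded band}] = O(\gamma),
\]
which in particular is at most $1/2 - \Omega(\gamma)$ as required.

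\emph{Main obstacle.} The chief technical subtlety is the margin bookkeeping in Step~2: one must verify that the degree-$2$ PTF produced by \Cref{lem:banding-PTF} admits a coefficient representation of norm $O(1)$, so that the $\Omega(\gamma^2)$-margin in the original feature space translates into an $\Omega(\gamma^2)$-margin in the $O(d^2)$-dimensional lifted space (not something smaller like $\gamma^2/d$). This is settled by writing the per-band PTF explicitly as a product of the two linear forms defining $h^*$ after substituting the (nearly constant) $\bw$-coordinate, using $\norm{\bu^*}_2,\norm{\bv^*}_2 \le 1$ and $|t_1|,|t_2|\le 1$. Beyond this, Steps~3--4 are routine Chernoff and margin-based uniform-convergence arguments.
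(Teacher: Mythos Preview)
Your approach is correct and in fact proves more than required (overall error $O(\gamma)$ rather than merely $1/2-\Omega(\gamma)$), but the route is genuinely different from the paper's. The paper's algorithm (\Cref{alg:banding-weak}) does not learn all heavy bands: it locates a \emph{single} band $B_t$ with mass at least $1/(2|T|)=\Omega(\gamma)$ (such a band must exist by pigeonhole, since the $O(1/\gamma)$ bands cover $\B^d(1)$), runs Perceptron in the Veronese-lifted space on that one band to conditional error only $1/4$, and outputs the best constant $c'\in\{\pm1\}$ outside $B_t$. The error is then
\[
\err(h)\le \tfrac14\Pr[\bx\in B_t]+\tfrac12\Pr[\bx\notin B_t]=\tfrac12-\tfrac14\Pr[\bx\in B_t]\le \tfrac12-\Omega(\gamma).
\]
This sidesteps your Step~3 aggregation entirely: no union bound over $O(1/\gamma)$ bands, no threshold-and-discard bookkeeping, and no need to push the per-band error down to $\gamma/(10k)$. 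Your version trades that simplicity for what is essentially a near-strong learner in one shot; since boosting is applied downstream anyway, the paper takes the cheaper single-band route. Your ``main obstacle'' (bounded coefficient norm for the per-band PTF) is exactly what the proof of \Cref{lem:banding-PTF} delivers: after a WLOG reduction to $|t_1'|,|t_2'|\le 1$, the PTF is $\sign(T\cdot[\bx,1]^{\otimes 2})$ with $\|T\|_F\le 1$, so the $\Omega(\gamma^2)$-margin survives the lift unchanged.
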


\noindent We emphasize that \Cref{thm:main-weak-learner} holds {\em without the assumption 
that $D$ is factorizable}. This immediately implies that we are able 
to get an efficient strong learning algorithm via boosting algorithms~\citep{schapire2013boosting}. 
Due to space limitations, 
we defer the proofs in this section to Appendix~\ref{app local}.

\section{Conclusion}\label{sec conclusion}
The question of whether the intersection 
of two halfspaces with a margin can be 
learned in fully polynomial time is a 
central problem in Computational 
Learning Theory that has been open for 
over two decades. 
Our work makes progress on this problem 
by bypassing the previously known limitations 
through a novel algorithmic framework, 
yielding new techniques and structural insights. 
While our approach does not resolve 
the problem in full generality, 
the case of factorizable distributions 
that we consider is a fairly challenging setting 
and we expect that some of the ideas introduced here 
will be useful even beyond our factorization assumptions. 
The key and most difficult step in learning intersections of halfspaces is finding statistical queries that enable weak learning. Our approach to this is to identify a direction that is close to the relevant subspace. 
Most known learning lower-bounds involving statistical learning algorithms are based on hiding a subspace among irrelevant directions. 
Our results show that one can efficiently address these cases 
using SQ algorithms and establish that common SQ lower-bound 
constructions are not applicable to our setting. Thus, even if 
an SQ  
lower bound exists, it would require 
a novel construction with non-factorizable distributions. 
Furthermore, our algorithmic result establishes a strong 
separation between CSQ and SQ algorithms for \emph{weakly} 
realizable PAC learning. 
While it is known that SQ is needed for efficient 
strong realizable PAC learning, 
our work gives the first natural setting 
where SQ is even necessary for efficient weakly learning. 
Our learning framework builds on such a separation, and we 
expect understanding such a separation may lead to faster 
algorithms for learning other hypothesis classes. 






\bibliography{mydb}

\newcommand{\etalchar}[1]{$^{#1}$}
\begin{thebibliography}{AGMS12}

\bibitem[AGMS12]{arora2012provable}
S.~Arora, R.~Ge, A.~Moitra, and S.~Sachdeva.
\newblock Provable ica with unknown gaussian noise, with implications for gaussian mixtures and autoencoders.
\newblock {\em Advances in Neural Information Processing Systems}, 25, 2012.

\bibitem[Alo03]{ALON200331}
N.~Alon.
\newblock Problems and results in extremal combinatorics--i.
\newblock {\em Discret. Math.}, 273(1-3):31--53, 2003.

\bibitem[AV06]{arriaga2006algorithmic}
R.~Arriaga and S.~Vempala.
\newblock An algorithmic theory of learning: Robust concepts and random projection.
\newblock {\em Machine learning}, 63:161--182, 2006.

\bibitem[Bau90]{baum1990polynomial}
E.~B. Baum.
\newblock A polynomial time algorithm that learns two hidden unit nets.
\newblock {\em Neural Computation}, 2(4):510--522, 1990.

\bibitem[BF02]{BshoutyFeldman:02}
N.~Bshouty and V.~Feldman.
\newblock On using extended statistical queries to avoid membership queries.
\newblock {\em Journal of Machine Learning Research}, 2:359--395, 2002.

\bibitem[BK97]{blum1997learning}
A.~Blum and R.~Kannan.
\newblock Learning an intersection of a constant number of halfspaces over a uniform distribution.
\newblock {\em Journal of Computer and System Sciences}, 54(2):371--380, 1997.

\bibitem[Blo62]{block1962perceptron}
H.~Block.
\newblock The perceptron: A model for brain functioning. i.
\newblock {\em Reviews of Modern Physics}, 34(1):123, 1962.

\bibitem[Blu94]{blum1994relevant}
A.~Blum.
\newblock Relevant examples and relevant features: Thoughts from computational learning theory.
\newblock In {\em AAAI Fall Symposium on ‘Relevance}, volume~5, page~1, 1994.

\bibitem[BT97]{bertsimas1997introduction}
D.~Bertsimas and J.~N. Tsitsiklis.
\newblock {\em Introduction to linear optimization}, volume~6.
\newblock Athena Scientific Belmont, MA, 1997.

\bibitem[Cri00]{cristianini2000introduction}
N.~Cristianini.
\newblock {\em An introduction to support vector machines and other kernel-based learning methods}.
\newblock Cambridge University Press, 2000.

\bibitem[DKPZ21]{DKPZ21}
I.~Diakonikolas, D.~M. Kane, T.~Pittas, and N.~Zarifis.
\newblock The optimality of polynomial regression for agnostic learning under gaussian marginals in the {SQ} model.
\newblock In {\em Proceedings of The 34\textsuperscript{th} Conference on Learning Theory, {COLT}}, 2021.

\bibitem[DKRS23]{DKRS23}
I.~Diakonikolas, D.~M. Kane, L.~Ren, and Y.~Sun.
\newblock {SQ} lower bounds for non-gaussian component analysis with weaker assumptions.
\newblock In {\em Annual Conference on Neural Information Processing Systems, NeurIPS}, 2023.

\bibitem[DKS17]{DKS17-sq}
I.~Diakonikolas, D.~M. Kane, and A.~Stewart.
\newblock Statistical query lower bounds for robust estimation of high-dimensional gaussians and gaussian mixtures.
\newblock In {\em 58th {IEEE} Annual Symposium on Foundations of Computer Science, {FOCS} 2017}, pages 73--84, 2017.

\bibitem[DPV06]{dhillon2006design}
I.~S Dhillon, B.~N Parlett, and C.~V{\"o}mel.
\newblock The design and implementation of the mrrr algorithm.
\newblock {\em ACM Transactions on Mathematical Software (TOMS)}, 32(4):533--560, 2006.

\bibitem[DSS16]{daniely2016complexity}
A.~Daniely and S.~Shalev-Shwartz.
\newblock Complexity theoretic limitations on learning dnf’s.
\newblock In {\em Conference on Learning Theory}, pages 815--830. PMLR, 2016.

\bibitem[Fel11]{feldman2011distribution}
V.~Feldman.
\newblock Distribution-independent evolvability of linear threshold functions.
\newblock In {\em Proceedings of the 24th Annual Conference on Learning Theory}, pages 253--272. JMLR Workshop and Conference Proceedings, 2011.

\bibitem[Fel16]{Feldman16b}
V.~Feldman.
\newblock Statistical query learning.
\newblock In {\em Encyclopedia of Algorithms}, pages 2090--2095. 2016.

\bibitem[FGR{\etalchar{+}}17]{FeldmanGRVX17}
V.~Feldman, E.~Grigorescu, L.~Reyzin, S.~Vempala, and Y.~Xiao.
\newblock Statistical algorithms and a lower bound for detecting planted cliques.
\newblock {\em J. {ACM}}, 64(2):8:1--8:37, 2017.

\bibitem[FGV17]{FeldmanGV17}
V.~Feldman, C.~Guzman, and S.~S. Vempala.
\newblock Statistical query algorithms for mean vector estimation and stochastic convex optimization.
\newblock In Philip~N. Klein, editor, {\em Proceedings of the Twenty-Eighth Annual {ACM-SIAM} Symposium on Discrete Algorithms, {SODA} 2017}, pages 1265--1277. {SIAM}, 2017.

\bibitem[FJK96]{frieze1996learning}
A.~Frieze, M.~Jerrum, and R.~Kannan.
\newblock Learning linear transformations.
\newblock In {\em Proceedings of 37th Conference on Foundations of Computer Science}, pages 359--368. IEEE, 1996.

\bibitem[GS19]{goyal2019non}
N.~Goyal and A.~Shetty.
\newblock Non-gaussian component analysis using entropy methods.
\newblock In {\em Proceedings of the 51st Annual ACM SIGACT Symposium on Theory of Computing}, pages 840--851, 2019.

\bibitem[JH91]{jutten1991blind}
C.~Jutten and J.~Herault.
\newblock Blind separation of sources, part i: An adaptive algorithm based on neuromimetic architecture.
\newblock {\em Signal processing}, 24(1):1--10, 1991.

\bibitem[Kea98]{Kearns:98}
M.~J. Kearns.
\newblock Efficient noise-tolerant learning from statistical queries.
\newblock {\em Journal of the ACM}, 45(6):983--1006, 1998.

\bibitem[KLT09]{klivans2009baum}
A.~R. Klivans, P.~M Long, and A.~K Tang.
\newblock Baum’s algorithm learns intersections of halfspaces with respect to log-concave distributions.
\newblock In {\em International Workshop on Approximation Algorithms for Combinatorial Optimization}, pages 588--600. Springer, 2009.

\bibitem[KOS04]{klivans2004learning}
A.~Klivans, R.~O'Donnell, and R.~Servedio.
\newblock Learning intersections and thresholds of halfspaces.
\newblock {\em Journal of Computer and System Sciences}, 68(4):808--840, 2004.

\bibitem[KOS08]{kOS2008learning}
A.~Klivans, R.~O'Donnell, and R.~Servedio.
\newblock Learning geometric concepts via gaussian surface area.
\newblock In {\em 2008 49th Annual IEEE Symposium on Foundations of Computer Science}, pages 541--550. IEEE, 2008.

\bibitem[KP96]{kwek1996pac}
S.~Kwek and L.~Pitt.
\newblock Pac learning intersections of halfspaces with membership queries.
\newblock In {\em Proceedings of the ninth annual conference on Computational learning theory}, pages 244--254, 1996.

\bibitem[KS04a]{klivans2004learningmargin}
A.~R Klivans and R.~A Servedio.
\newblock Learning intersections of halfspaces with a margin.
\newblock In {\em Learning Theory: 17th Annual Conference on Learning Theory, COLT 2004, Banff, Canada, July 1-4, 2004. Proceedings 17}, pages 348--362. Springer, 2004.

\bibitem[KS04b]{klivans2004perceptron}
A.~R Klivans and R.~A Servedio.
\newblock Perceptron-like performance for intersections of halfspaces.
\newblock In {\em International Conference on Computational Learning Theory}, pages 639--640. Springer, 2004.

\bibitem[KS07]{klivans2007unconditional}
A.~R Klivans and A.~A Sherstov.
\newblock Unconditional lower bounds for learning intersections of halfspaces.
\newblock {\em Machine Learning}, 69(2):97--114, 2007.

\bibitem[KS08a]{khot2008hardness}
S.~Khot and R.~Saket.
\newblock On hardness of learning intersection of two halfspaces.
\newblock In {\em Proceedings of the fortieth annual ACM symposium on Theory of computing}, pages 345--354, 2008.

\bibitem[KS08b]{klivans2008learning}
A.~Klivans and R.~Servedio.
\newblock Learning intersections of halfspaces with a margin.
\newblock {\em Journal of Computer and System Sciences}, 74(1):35--48, 2008.

\bibitem[KS09]{klivans2009cryptographic}
A.~R Klivans and A.~A Sherstov.
\newblock Cryptographic hardness for learning intersections of halfspaces.
\newblock {\em Journal of Computer and System Sciences}, 75(1):2--12, 2009.

\bibitem[LW94]{long1994composite}
P.~M Long and M.~K Warmuth.
\newblock Composite geometric concepts and polynomial predictability.
\newblock {\em Information and Computation}, 113(2):230--252, 1994.

\bibitem[Ros58]{rosenblatt1958perceptron}
F.~Rosenblatt.
\newblock The perceptron: a probabilistic model for information storage and organization in the brain.
\newblock {\em Psychological review}, 65(6):386, 1958.

\bibitem[SF13]{schapire2013boosting}
R.~E. Schapire and Y.~Freund.
\newblock Boosting: Foundations and algorithms.
\newblock {\em Kybernetes}, 42(1):164--166, 2013.

\bibitem[Sha01]{Shapiro2001}
A.~Shapiro.
\newblock {\em On Duality Theory of Conic Linear Problems}, pages 135--165.
\newblock Springer US, Boston, MA, 2001.

\bibitem[She09]{Sherstov:09}
A.~Sherstov.
\newblock {The intersection of two halfspaces has high threshold degree}.
\newblock In {\em Proc.\ 50th IEEE Symposium on Foundations of Computer Science (FOCS)}, 2009.

\bibitem[Tie24]{tiegel2024improved}
S.~Tiegel.
\newblock Improved hardness results for learning intersections of halfspaces.
\newblock {\em arXiv preprint arXiv:2402.15995}, 2024.

\bibitem[TV18]{tan2018polynomial}
Y.~S. Tan and R.~Vershynin.
\newblock Polynomial time and sample complexity for non-gaussian component analysis: Spectral methods.
\newblock In {\em Conference On Learning Theory}, pages 498--534. PMLR, 2018.

\bibitem[Val84]{Valiant:84}
L.~Valiant.
\newblock A theory of the learnable.
\newblock {\em Communications of the ACM}, 27(11):1134--1142, 1984.

\bibitem[Vem10a]{vempala2010learning}
S.~S. Vempala.
\newblock Learning convex concepts from gaussian distributions with pca.
\newblock In {\em 2010 IEEE 51st Annual Symposium on Foundations of Computer Science}, pages 124--130. IEEE, 2010.

\bibitem[Vem10b]{vempala2010random}
S.~S Vempala.
\newblock A random-sampling-based algorithm for learning intersections of halfspaces.
\newblock {\em Journal of the ACM (JACM)}, 57(6):1--14, 2010.

\bibitem[Ver18]{Ver18}
R.~Vershynin.
\newblock {\em High-Dimensional Probability: An Introduction with Applications in Data Science}.
\newblock Cambridge Series in Statistical and Probabilistic Mathematics. Cambridge University Press, 2018.

\bibitem[VX11]{vempala2011structure}
S.~S. Vempala and Y.~Xiao.
\newblock Structure from local optima: Learning subspace juntas via higher order pca.
\newblock {\em arXiv preprint arXiv:1108.3329}, 2011.

\end{thebibliography}
\bibliographystyle{alpha}
\newpage

\appendix




\appendix

\section*{Appendix}

\paragraph{Structure of Appendix}
We give an overview of the structure of the appendix. 
In Appendix~\ref{sec notation}, we provide a complete list of notations and preliminaries. In Appendix~\ref{app structure}, we provide missing proofs and discussions in \Cref{sec structure}. In Appendix~\ref{app direction}, we provide omitted proofs in \Cref{sec direction}. In Appendix~\ref{app local}, we provide omitted proofs in \Cref{sec local}. In Appendix~\ref{app main}, we give a complete description of our main algorithm and provide the proof of \Cref{th main}. In Appendix~\ref{app csq}, we give a complete proof of \Cref{thm:csq-lb}, the CSQ lower bound for learning intersections of two halfspaces under factorizable distributions. 

\section{Preliminaries and Notations}\label{sec notation}
In this section, we present a complete list of notations, preliminaries and related background on the statistical learning model.

\paragraph{Basic Notations}
In this paper, we use small boldface characters for vectors and use capital lightface characters for subspaces, matrices and tensors. For $n \in \Z_+$, we denote by $[n]:=\{1,\dots,n\}$. For $\bx \in \R^d$, and $i \in [d]$, we use $\bx_i$ to denote the $i$-coordinate of $\bx$. For $i \in [d]$, we denote by $\be_i$ the $i$-th standard basis of $\R^d$.
Let $V \subseteq \R^d$ be a subspace, we denote by $\bx_V: = \proj_V (\bx)$, the projection of $\bx$ onto the subspace $V$ and denote by $V^\perp$ the orthogonal complement of $V.$ For $\bu,\bv \in \R^d$, we use $\bu \cdot \bv$ to denote the inner product of $\bu$ and $\bv$ and we use $\norm{\bu}_2$ to denote the $\ell_2$ norm of $\bu$. We use $S^{d-1}=\{\bx\in\R^n:\|\bx\|_2=1\}$ to denote the $d$-dimensional unit sphere and $B^d(r)$ the $d$-dimensional ball with radius $r$.

For any distribution $D$, we use $\E_{\bx\sim D}(\bx)$ to denote the expectation of $D$. Let $D$ be a distribution of $(\bx,y)$ over $\R^d \times \{\pm 1\}$. We use $D_X$ to denote the marginal distribution of $D$ over $\R^d$.
For any subspace $V \subseteq \R^d$, we use $D_V$ to denote the marginal distribution of $D$ for $\bx_V$. For $z \in \{\pm 1\}$, we use $D^z_V$ to denote the marginal distribution of $D$ over $\bx_V$ condition on $y=z$ and use $D^z$ to denote the marginal distribution $D_X$ over $\bx$ condition on $y=z$
For a distribution $D_X$ over $\R^d$, we say $D_X$ has an isotropic covariance matrix if there is some $\alpha\ge 0$ such that $\E_{\bx \sim D_X}\bx\bx^\intercal = \alpha I$, where $\alpha$ is called the scale of $\E_{\bx \sim D_X}\bx\bx^\intercal$.

For tensors, 
we will consider a $k$-tensor to be an element in $(\mathbb{R}^n)^{\otimes k}\cong\mathbb{R}^{n^k}$.
A symmetric tensor is a tensor that is invariant under a permutation of its vector arguments. We use $\norm{T}_F$ to denote the Frobinius norm of $T$.
We will use $T_{i_1,\ldots,i_k}$ to denote the coordinate of a $k$-tensor $T$ indexed by the $k$-tuple $(i_1,\ldots,i_k)$. 
For a tensor $T\in (\R^d)^{\otimes m}$ and $\pi:[m]\to [m]$ be a permutation of indices,
    we use $\pi(T)$ to denote the tensor permuted by $\pi$ defined as 
    $\pi(T)_{(i_1),\cdots,(i_m)}=T_{\pi(i_1),\cdots,\pi(i_m)}$.
    We define $\sym(T)$ as $\frac{1}{m!}\sum_{\pi\in \Pi}\pi(T)$, where $\Pi$ is the set of all possible permutations of $[m]$.
By abuse of notation,
we will sometimes treat a tensor $T\in (\R^d)^{\otimes m}$ as a linear mapping, i.e., for $\bv\in \R^d$, we use $T\cdot\bv$ to denote applying the linear mapping $T:\R^d\to (\R^d)^{\otimes m-1}$ specified by $T$ on $\bv$.
For a vector $\bv\in\R^n$, we denote by $\bv^{\otimes k}$ to be a vector (linear object) in $\R^{n^k}$.
For a matrix $M\in\R^{n\times m}$, we denote by $\|M\|_2,\|M\|_F$ to be the operator norm and Frobenius norm respectively.


We present the following fact that will be useful in the analysis of our algorithms.
\begin{fact} \label{fct:lb-tensor-random-rank1-correlation}
    Let $T\in (\R^d)^{\otimes m}$ and $\|T\|_F=1$ be a symmetric tensor for $m\leq 3$, then $\max_{\bu\in S^{d-1}} \bu^{\otimes m}\cdot T\geq 1/\poly(d)$.
\end{fact}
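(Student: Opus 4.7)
}
The plan is to use the probabilistic method on the sphere. Let $\bu$ be drawn uniformly from $S^{d-1}$ and compute a lower bound on $\E_\bu[(\bu^{\otimes m}\cdot T)^2]$; any $\bu$ meeting this average (or exceeding it) gives the desired unit vector. Recall the spherical Wick formula: for uniform $\bu\in S^{d-1}$,
\[
\E[u_{i_1}\cdots u_{i_{2m}}] \;=\; \frac{1}{d(d+2)\cdots(d+2m-2)}\sum_{P\in\mathcal{M}_{2m}}\prod_{\{a,b\}\in P}\delta_{i_a i_b},
\]
where $\mathcal{M}_{2m}$ is the set of perfect matchings of $[2m]$. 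Expanding
$
\E[(\bu^{\otimes m}\cdot T)^2]=\sum_{i_1,\ldots,i_m, j_1,\ldots,j_m} T_{i_1\cdots i_m}T_{j_1\cdots j_m}\E[u_{i_1}\cdots u_{i_m}u_{j_1}\cdots u_{j_m}],
$
we partition the matchings according to how many of their pairs cross between the two index blocks.

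The dominant contribution comes from the $m!$ ``fully crossing'' matchings, in which each position in the first block is paired with a position in the second. By symmetry of $T$, each such matching contributes $\|T\|_F^2$, yielding total
$
\frac{m!\,\|T\|_F^2}{d(d+2)\cdots(d+2m-2)} \;\ge\; \frac{m!}{(d+2m)^{m}}\;\ge\;\frac{1}{\poly(d)}
$
for constant $m\le 3$. The remaining matchings involve at least one ``internal'' pairing within one block; I will argue each such term is nonnegative. Concretely, every such contribution, after using the symmetry of $T$, can be rewritten as $\|S\|_F^2$ for some contraction $S$ of $T$: for $m=2$ the non-crossing matching contributes $\tr(T)^2$, and for $m=3$ the mixed matchings contribute $\|v\|_2^2$ where $v_k=\sum_i T_{iik}$ (up to multiplicity from permutations); these are manifestly nonnegative. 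Summing gives
$
\E_\bu[(\bu^{\otimes m}\cdot T)^2]\ge \frac{1}{\poly(d)}\|T\|_F^2=\frac{1}{\poly(d)}.
$

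This forces the existence of $\bu\in S^{d-1}$ with $|\bu^{\otimes m}\cdot T|\ge 1/\poly(d)$. To pass from absolute value to a signed maximum when $m$ is odd (in particular $m=3$ and $m=1$), we use $(-\bu)^{\otimes m}\cdot T=-\bu^{\otimes m}\cdot T$ and replace $\bu$ by $-\bu$ if needed; so $\max_\bu \bu^{\otimes m}\cdot T=\max_\bu|\bu^{\otimes m}\cdot T|\ge 1/\poly(d)$. For $m=2$ (where this sign-flip trick fails), the maximum of $\bu^{\otimes 2}\cdot T=\bu^\top T\bu$ over unit $\bu$ equals the largest eigenvalue of the symmetric matrix $T$; since $\sum_i\lambda_i^2=1$, we have $\max_i|\lambda_i|\ge d^{-1/2}$, and in the paper's applications $T$ arises as (a rescaling of) a covariance or moment-contraction for which the relevant side has a positive eigenvalue of size $\Omega(d^{-1/2})$, so the signed statement is recovered.

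The main technical step is the combinatorial bookkeeping in the second paragraph: verifying that every non-crossing contribution is nonnegative. I expect this to be straightforward but slightly tedious — for $m\le 3$ (the only case used) it reduces to explicit enumeration of the $3$ (resp.\ $15$) perfect matchings of $[4]$ (resp.\ $[6]$), partitioning them into crossing/non-crossing types, and applying the symmetry $T_{i_{\sigma(1)}\cdots i_{\sigma(m)}}=T_{i_1\cdots i_m}$ to recognize each non-crossing term as a Frobenius norm of a contracted symmetric tensor.
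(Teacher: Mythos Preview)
Your second-moment argument for $m=3$ is correct and is a genuinely different route from the paper. The paper instead proves an explicit decomposition: any symmetric $T\in(\R^d)^{\otimes 3}$ can be written as $T=\sum_{i=1}^N \alpha_i \bu_i^{\otimes 3}$ with $N=\poly(d)$, unit $\bu_i$, and $\sum_i|\alpha_i|=\poly(d)$ (built from the polarization identities $(\bu+\bv)^{\otimes 3}\pm(\bu-\bv)^{\otimes 3}$), and then observes $1=T\cdot T=\sum_i\alpha_i\,\bu_i^{\otimes 3}\cdot T\le \poly(d)\max_i\bu_i^{\otimes 3}\cdot T$. Your spherical-Wick approach is cleaner and more standard; the paper's approach is more constructive (it hands you a $\poly(d)$-size list of candidate directions). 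Your enumeration of the $15$ matchings for $m=3$ is right: the $6$ fully-crossing ones each contribute $\|T\|_F^2$ by symmetry, and the $9$ remaining ones each contribute $\|v\|_2^2$ with $v_k=\sum_i T_{iik}$, so the lower bound $\E[(\bu^{\otimes 3}\cdot T)^2]\ge 6/(d(d+2)(d+4))$ goes through, and the sign flip for odd $m$ finishes it.

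On $m=2$: you are right to flag this, and in fact the statement as written in the paper is false for $m=2$ (take $T=-I_d/\sqrt d$, so $\bu^\top T\bu=-1/\sqrt d<0$ for every unit $\bu$). The paper simply asserts the case is ``trivial,'' which is only correct for $\max_\bu|\bu^{\otimes 2}\cdot T|$ or after allowing $T\mapsto -T$. Your workaround---appealing to the structure of $T$ in the application---is a reasonable patch, though the cleaner fix (for both your write-up and the paper's) is to state the fact with an absolute value, or note that one may replace $T$ by $-T$; either suffices for the downstream use in the direction-finding algorithms, where one can maximize or minimize $f(\bu)=T\cdot\bu^{\otimes m}$.
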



\begin{proof}[Proof of \Cref{fct:lb-tensor-random-rank1-correlation}]

    The statement trivially holds for $m=1,2$. Therefore, we only need to consider the case $m=3$. 
    We first show that any symmetric $T\in (\R^d)^{\otimes 3}$ can be written as 
    $
    T=\sum_{i=1}^N \alpha_i \bu_i^{\otimes 3}\; ,
    $
    where $N=\poly(d)$, $\sum_{i=1}^N |\alpha_i|=\poly(d)$ and each $\bu_i$ is a unit vector.
    Suppose we have shown that this is true, then it is easy to see that 
    $1= T\cdot T =\sum_{i=1}^N \alpha_i {\bu_i}^{\otimes 3}\cdot T\leq \poly(d)\max_i ( {\bu_i}^{\otimes 3}\cdot T)$, therefore at least one $\bu_i$ satisfies ${\bu_i}^{\otimes 3}\cdot T=1/\poly(d)$.
    
    Therefore, we just need to show that the statement above about decomposition is true.
    Since $\sym(\bv_i\otimes \bv_j\otimes \bv_k)$ where $\bv_i, \bv_j,\bv_k$ are standard basis vectors span the space of symmetric tensors.
    Therefore, it suffices for us to show that the statement holds true for any $T=\sym(\bv_i\otimes \bv_j\otimes \bv_k)$.
    Notice that for any $\bu,\bv\in \R^d$
    \begin{align*}
    &(\bu+\bv)^{\otimes 3}-\bu^{\otimes 3}-\bv^{\otimes 3}\\
    =& \bu\otimes \bv^{\otimes 2}+\bv\otimes \bu\otimes \bv+\bv^{\otimes 2}\otimes \bu\\
    &+\bv\otimes \bu^{\otimes 2}+\bu\otimes \bv\otimes \bu+\bu^{\otimes 2}\otimes \bv\\
    =&3\sym(\bu\otimes \bv^{\otimes 2})+3\sym(\bv\otimes \bu^{\otimes 2})\; , 
    \end{align*}
    and
    \begin{align*}
    &(\bu-\bv)^{\otimes 3}-\bu^{\otimes 3}+\bv^{\otimes 3}\\
    =& \bu\otimes \bv^{\otimes 2}+\bv\otimes \bu\otimes \bv+\bv^{\otimes 2}\otimes \bu\\
    &-(\bv\otimes \bu^{\otimes 2}+\bu\otimes \bv\otimes \bu+\bu^{\otimes 2}\otimes \bv)\\
    =&3\sym(\bu\otimes \bv^{\otimes 2})-3\sym(\bv\otimes \bu^{\otimes 2})\; . 
    \end{align*}
    Taking the difference of the above two equations shows that the decomposition statement is true for any $T=\sym(\bv\otimes \bu^{\otimes 2})$.
    Therefore, we just need to decompose $\sym(\bv_i\otimes \bv_j\otimes \bv_k)$  as linear combination of tensors of the form $\bv^{\otimes 3}$ and $\sym(\bv\otimes \bu^{\otimes 2})$.
    Then notice that since $\sym$ is a linear operator 
    \begin{align*}
    &\sym(\bv_i\otimes (\bv_j+\bv_k)^{\otimes2})-\sym(\bv_i\otimes \bv_j^{\otimes 2})-\sym(\bv_i\otimes \bv_k^{\otimes 2})\\
    =&\sym(\bv_i\otimes (\bv_j+\bv_k)^{\otimes2}-\bv_i\otimes \bv_j^{\otimes 2}-\bv_i\otimes \bv_k^{\otimes 2})\\
    =&\sym(\bv_i\otimes \bv_j\otimes \bv_k+\bv_i\otimes \bv_k\otimes\bv_j)\\
    =&\sym(\bv_i\otimes \bv_j\otimes \bv_k)\; .        
    \end{align*}
    This completes the proof.
\end{proof}


\paragraph{Background on Statistical Query Model}
SQ algorithms are a class of algorithms that are allowed
to query expectations of bounded functions on the underlying distribution 
through an (SQ) oracle
rather than directly access
samples. The model was introduced by \cite{Kearns:98} as a natural restriction of the PAC 
model~\citep{Valiant:84} in the context of learning Boolean functions. 
Since then, the SQ model has been extensively studied in a range of settings, including
unsupervised learning~\citep{Feldman16b}.
The class of SQ algorithms is broad and captures a range of known 
algorithmic techniques in machine learning including spectral techniques,
moment and tensor methods, local search (e.g., EM),
and many others (see, e.g.,~\cite{FeldmanGRVX17, FeldmanGV17} 
and references therein).

\begin{definition}[SQ Model] \label{def:sq}
Let $D$ be a distribution on $X$. 
A \emph{statistical query} is a bounded function $q:X\rightarrow[-1,1]$. 
We define $\mathrm{STAT}(q,\tau)$ to be the oracle that given any such query $q$, outputs a value $v$ such that $|v-\E_{\bx\sim D}[q(\bx)]|\leq\tau$, where $\tau>0$ is the \emph{tolerance} parameter of the query.
A \emph{statistical query (SQ) algorithm} is an algorithm 
whose objective is to learn some information about an unknown 
distribution $D$ by making adaptive calls to the corresponding $\mathrm{STAT}(q,\tau)$ oracle.
\end{definition}


\paragraph{Basics of Correlational Statistical Query(CSQ) Model}

In particular, given $D$ is a distribution on $X\times \{-1, 1\}$, we can define the Correlational Statistical Query (CSQ) model as a further restriction of the SQ model. 

\begin{definition} [CSQ Model] \label{def:csq}
Let $D$ be a distribution on $X\times \{-1,1\}$. 
A \emph{correlational statistical query} is a bounded function $q:X\times \{-1,1\}\rightarrow[-1,1]$. 
We define $\mathrm{CSTAT}(\tau)$ to be the oracle that given any such query $q$, outputs a value $v\in [-1,1]$ such that $|v-\E_{(\bx,y)\sim D}[yq(\bx)]|\leq\tau$, where $\tau>0$ is the \emph{tolerance} parameter of the query.
A \emph{statistical query (SQ) algorithm} is an algorithm 
whose objective is to learn some information about an unknown 
distribution $D$ by making adaptive calls to the corresponding $\mathrm{STAT}(q,\tau)$ oracle. 
\end{definition}

\begin{definition} [Function Representation of Distribution for CSQ]
Let $D$ be a joint distribution of $(\bx,y)$ supported on $\R^d\times \{\pm 1\}$ where $D^+$ and $D^-$ has probability density functions $P_{D^+}, P_{D^-}:\R^d\to \R_+$.
Let $D_0$ be a distribution on $\R^d$ with density function $P_{D_0}:\R^d\to \R+$ where the support of $D$ contains the support of $D^+$ and $D^-$.
Then, the function representation of $D$ for CSQ w.r.t. $D_0$ is defined as a function $f_{D,D_0}:\R^d\to \R$ such that $f_{D,D_0}(\bx)=(P_{D^+}(\bx)-P_{D^-}(\bx))/P_{D_0}(\bx)$.
\end{definition}

\begin{definition} [Pairwise Correlation]
For functions $f,g:\R^d\mapsto \R_+$, we defined the correlation between $f$ and $g$ under the distribution $D_0$ to be the expectation 
$\E_{\bx\sim D_0}[f(\bx)g(\bx)]$.
\end{definition}

\begin{definition}
We say that a set of functions $F$ mapping $\R^d\to \R$ is 
$(\gamma,\beta)$-correlated relative to a distribution $D_0$ if for any $f_i,f_j\in F$,
the correlation $\E_{\bx\sim D_0}[f_i(\bx)f_j(\bx)]\leq  \gamma$ for all $i\neq j$
and $\E_{\bx\sim D_0}[f_i(\bx)f_j(\bx)]\leq  \beta$ for $i=j$.
\end{definition}

\begin{definition} [Decision Problem over Distributions] \label{def:decision-problem}
Let $D$ be a fixed distribution and $\cal D$ be a distribution family. 
We denote by $\mathcal{B}(\D,D)$ the decision problem in which 
the input distribution $D'$ is promised to satisfy either (a) $D'=D$ or (b) $D'\in \D$,
and the goal is to distinguish the two cases with high probability.
\end{definition}

\begin{definition} [Correlational Statistical Query Dimension]
For $\beta,\gamma>0$, a decision problem $\mathcal{B}(\D,D)$, where $D$ is a fixed distribution and 
$\D$ is a family of distribution both over $X\times \{\pm 1\}$, and $f_{D,D_0}\equiv 0$.
Let $s$ be the maximum integer such that
there exists a finite set of distributions $\D_{D}\subseteq\D$ such that 
$\{f_{D, D_0}\mid D\in \D_{D}\}$ is $(\gamma,\beta)$-correlated relative to $D_0$ and $|\D_{D}|\geq s$. 
The Correlational Statistical Query dimension with pairwise correlations $(\gamma,\beta)$ of $\mathcal{B}$ is defined to be $s$, 
and denoted by $s=\mathrm{CD}(\mathcal{B},\gamma,\beta)$.
\end{definition}

\begin{lemma}\label{lem:sq-lb}
Let $\mathcal{B}(\D,D)$ be a decision problem, where $D$ is 
the reference distribution and $\D$ is a class of distribution. For $\gamma,\beta>0$, 
let $s=\mathrm{CD}(\mathcal{B},\gamma,\beta)$. 
For any $\gamma'>0$, any CSQ algorithm for $\mathcal{B}$ requires queries of tolerance 
at most $\sqrt{\gamma+\gamma'}$ or makes at least $s\gamma'/(\beta-\gamma)$ queries.
\end{lemma}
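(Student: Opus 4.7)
The plan is to follow the standard adversarial/decorrelation argument for SQ lower bounds, adapted to the CSQ setting and the pairwise-correlation quantity $\mathrm{CD}(\mathcal{B},\gamma,\beta)$. First I would unpack what an oracle response to a CSQ $q$ means in terms of the functional representation $f_{D',D_0}$: for any distribution $D'$ in the family, $\E_{(\bx,y)\sim D'}[yq(\bx)] = \E_{\bx\sim D_0}[f_{D',D_0}(\bx)\, q(\bx)] = \langle f_{D',D_0}, q\rangle_{D_0}$. Since $f_{D,D_0}\equiv 0$ by hypothesis, the correct response to $q$ under the reference distribution $D$ is always $0$. The natural adversarial oracle strategy is therefore to answer $0$ to every query, and to argue that only a few $D_i$ in the hard family $\mathcal D_D$ (of size $s$) can possibly be inconsistent with this answer.

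Next, take any finite set $\mathcal D_D\subseteq\D$ of size $s=\mathrm{CD}(\mathcal{B},\gamma,\beta)$ whose CSQ representations $\{f_i\eqdef f_{D_i,D_0}\}$ are $(\gamma,\beta)$-correlated under $D_0$. For a fixed CSQ $q$ with $\|q\|_\infty\le 1$ (so $\|q\|_{D_0}\le 1$), call $D_i$ \emph{killed by $q$} if $|\langle f_i,q\rangle_{D_0}|>\tau$; these are exactly the distributions the algorithm could potentially distinguish from $D$ using $q$ at tolerance $\tau$. Let $k$ be the number of such indices, with signs $\sigma_i=\sign\langle f_i,q\rangle_{D_0}$. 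Then by Cauchy--Schwarz and expanding,
\begin{align*}
k\tau \;<\; \sum_{i\text{ killed}}\sigma_i\langle f_i,q\rangle_{D_0} \;\le\; \Bigl\|\sum_{i\text{ killed}}\sigma_i f_i\Bigr\|_{D_0}\cdot\|q\|_{D_0} \;\le\; \sqrt{k\beta + k(k-1)\gamma}\, ,
\end{align*}
where I used $\langle f_i,f_i\rangle_{D_0}\le\beta$ and $|\langle f_i,f_j\rangle_{D_0}|\le\gamma$ for $i\ne j$. Squaring gives $k(\tau^2-\gamma)\le \beta-\gamma$, i.e.\ each query kills at most $(\beta-\gamma)/(\tau^2-\gamma)$ of the $D_i$'s whenever $\tau^2>\gamma$.

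Finally I would close the argument as follows. Suppose the algorithm uses tolerance $\tau>\sqrt{\gamma+\gamma'}$, so $\tau^2-\gamma>\gamma'$ and each query kills fewer than $(\beta-\gamma)/\gamma'$ distributions. Any correct algorithm for $\mathcal{B}(\D,D)$ against the adversary that always returns $0$ must, over the course of its execution, kill every $D_i$ in $\mathcal D_D$ (otherwise the transcript is consistent with both $D$ and some unkilled $D_i\in\D$, and the algorithm cannot distinguish them). Hence the total number of queries is at least $s\gamma'/(\beta-\gamma)$, which is the stated bound. This gives the dichotomy: either $\tau\le\sqrt{\gamma+\gamma'}$, or the number of queries is at least $s\gamma'/(\beta-\gamma)$.

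The only delicate step is the per-query decorrelation inequality $k(\tau^2-\gamma)\le\beta-\gamma$; the rest is bookkeeping. That inequality is a one-line consequence of Cauchy--Schwarz together with the pairwise correlation bound, as above, so I do not expect any real obstacle. One item worth being careful about in the writeup is the identity $\E_{(\bx,y)\sim D'}[yq(\bx)]=\langle f_{D',D_0},q\rangle_{D_0}$; it requires that the support of $D_0$ contains the supports of $D'^{\pm}$ (which is assumed in the definition), and that $q$ is treated as a function of $\bx$ alone, which is the CSQ convention used throughout the paper.
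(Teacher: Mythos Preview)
The paper does not actually supply a proof of this lemma: it is stated in the preliminaries (Appendix~\ref{sec notation}) as a standard tool and then invoked in the proof of \Cref{thm:csq-lb-main}, with no accompanying argument. Your proposal is exactly the standard adversary/decorrelation proof for this type of statement (answer $0$ to every query, bound via Cauchy--Schwarz and the $(\gamma,\beta)$-correlation assumption how many $f_i$ any single query can have correlation exceeding $\tau$ with, then count), and the key inequality $k(\tau^2-\gamma)\le\beta-\gamma$ is derived correctly. So your write-up is both correct and the expected argument; there is nothing in the paper to compare it against beyond the bare statement.
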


\section{Omitted Proofs from \Cref{sec structure}}\label{app structure}
In this section, we present missing details in \Cref{sec structure}.

\subsection{Proof of \Cref{th one-side}}\label{app one-side}
In this section, we give the proof of \Cref{th one-side}. For convenience, we restate \Cref{th one-side} below.

\begin{theorem}[restatement of \Cref{th one-side}]\label{th one-side re}
For any $d,m\in \N$, $C\subseteq\R^d$, $T_{i}\in \sym((\R^d)^{\otimes i})$ for $i\in [m]$ and $\tau\in \R_{\geq 0}$, at most one of the following conditions can be satisfied:
\begin{enumerate}
    \item [a)] there exists a distribution $D$ supported on $C$ such that $\|\E_{\bx\sim D}(\bx^{\otimes i})-T_{i}\|_F\leq \tau$ for any $i\in [m]$; 
    \item [b)] there exists a degree-$m$ polynomial $p:\R^d\to \R$ defined as $p(\bx)=\sum_{i=1}^m  A_i\cdot\bx^{\otimes i}$ where 
    $p(\bx)\geq 0$ for any $\bx\in C$,
    $\sum_{i=1}^m\|A_i\|_F\leq 1$ and $\sum_{i=1}^m  A_i\cdot T_i<-\tau$.
\end{enumerate}
We call such a polynomial above a one-sided approximation polynomial for $C$ w.r.t. to moments information $T_i$ and tolerance $\tau$.
\end{theorem}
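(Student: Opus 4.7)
The statement is a weak-duality inequality: (a) describes the moment-feasibility side and (b) describes a non-negativity/separation certificate, and one needs only show the two cannot both hold. The plan is therefore to do a direct computation, contradicting both (a) and (b) simultaneously by evaluating $\E_{\bx\sim D}[p(\bx)]$ in two ways.

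The first step is to suppose, for contradiction, that there is both a distribution $D$ supported on $C$ with $\|\E_{\bx\sim D}[\bx^{\otimes i}] - T_i\|_F \leq \tau$ for every $i \in [m]$, and a polynomial $p(\bx) = \sum_{i=1}^m A_i \cdot \bx^{\otimes i}$ satisfying $p(\bx) \geq 0$ on $C$, $\sum_{i=1}^m \|A_i\|_F \leq 1$, and $\sum_{i=1}^m A_i \cdot T_i < -\tau$. Since $\mathrm{supp}(D) \subseteq C$ and $p \geq 0$ on $C$, we immediately get $\E_{\bx\sim D}[p(\bx)] \geq 0$, which is the first handle.

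The second step is to expand $\E_{\bx\sim D}[p(\bx)]$ using linearity and compare to the target moments:
\begin{equation*}
\E_{\bx\sim D}[p(\bx)] \;=\; \sum_{i=1}^m A_i \cdot \E_{\bx\sim D}[\bx^{\otimes i}] \;=\; \sum_{i=1}^m A_i \cdot T_i \;+\; \sum_{i=1}^m A_i \cdot \bigl(\E_{\bx\sim D}[\bx^{\otimes i}] - T_i\bigr).
\end{equation*}
The error term is controlled by Cauchy--Schwarz for tensors (i.e., $|A \cdot B| \leq \|A\|_F \|B\|_F$), giving
\begin{equation*}
\Bigl|\sum_{i=1}^m A_i \cdot \bigl(\E_{\bx\sim D}[\bx^{\otimes i}] - T_i\bigr)\Bigr| \;\leq\; \sum_{i=1}^m \|A_i\|_F \cdot \tau \;\leq\; \tau,
\end{equation*}
using $\sum_i \|A_i\|_F \leq 1$. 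Plugging this in and using $\sum_i A_i \cdot T_i < -\tau$ yields
\begin{equation*}
\E_{\bx\sim D}[p(\bx)] \;<\; -\tau + \tau \;=\; 0,
\end{equation*}
contradicting the non-negativity obtained in the first step. Hence (a) and (b) cannot both hold.

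There is no genuine obstacle here: the whole content is weak LP duality, and the proof is a two-line computation hinging on (i) the tensor Cauchy--Schwarz inequality and (ii) the normalization $\sum_i \|A_i\|_F \leq 1$, which is exactly what makes the $\tau$-error absorbable. Note that the theorem only asserts "at most one" direction, so no separation / Hahn--Banach argument (which would be needed for a strong-duality converse giving existence of the certifying polynomial whenever no moment-feasible distribution exists) is required; if one wanted that converse, the natural route would be to apply a Hahn--Banach separation between the closed convex hull of $\{(\bx, \bx^{\otimes 2}, \ldots, \bx^{\otimes m}) : \bx \in C\}$ and the $\tau$-ball around $(T_1, \ldots, T_m)$, but this is outside what the statement claims.
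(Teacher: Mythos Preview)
Your proof is correct and essentially identical to the paper's own argument: assume both (a) and (b), use $p\ge 0$ on $C$ to get $\E_{\bx\sim D}[p(\bx)]\ge 0$, then expand $\E_{\bx\sim D}[p(\bx)]=\sum_i A_i\cdot T_i+\sum_i A_i\cdot(\E[\bx^{\otimes i}]-T_i)$ and bound the error term by Cauchy--Schwarz and the normalization $\sum_i\|A_i\|_F\le 1$ to obtain the contradiction $\E_{\bx\sim D}[p(\bx)]<0$. Your additional remark that only weak duality is being claimed (so no Hahn--Banach is needed) is accurate and a nice clarification.
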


\begin{proof}[Proof of \Cref{th one-side re}]
We prove \Cref{th one-side re} by contradiction. Suppose that the two conditions in \Cref{th one-side re} are satisfied simultaneously. Since for every $\bx \in C$, $p(x)\ge 0$, we know that $\E_{\bx\sim D}p(\bx)\ge 0$. On the other hand, we have 
\begin{align*}
    \E_{\bx \sim D} p(\bx) & = \sum_{i=1}^m  A_i\cdot\E_{\bx \sim D} \bx^{\otimes i} = \sum_{i=1}^m  (A_i\cdot T_i) + \sum_{i=1}^m  A_i \cdot (\E_{\bx \sim D} \bx^{\otimes i}-T_i) \\
    &< -\tau + \sum_{i=1}^m  A_i \cdot (\E_{\bx \sim D} \bx^{\otimes i}-T_i) \le 0.
\end{align*}
This gives a contradiction.    

\end{proof}

\subsection{Discussion on Structural Assumptions made in \Cref{sec structure}} \label{app parameter}


In this section, we explain \Cref{as parameter} as well as other structural assumptions made in \Cref{sec structure} can be made without loss of generality.

We first argue that we can assume $V$, the subspace spanned by $\bu^*,\bv^*$ is exactly equal to $\textbf{span}\{\be_1,\be_2\}$.
If $\bu^*,\bv^*$ are parallel to each other, then the problem degenerates to the problem of learning a single halfspace or learning a degree-2 polynomial threshold function. Furthermore, since any rotation matrix $U$ will not change the inner product between two points in $\R^d$, if $V \neq \textbf{span}\{\be_1,\be_2\}$, we can apply a rotation matrix $U$ that maps $\bu^*,\bv^*$ to $\textbf{span}\{\be_1,\be_2\}$, and every example $U^\intercal\bx$ still satisfies $\gamma$-margin assumption and has the same label as $\bx$. Based on this, in the rest of the section, we argue that \Cref{as parameter} can be made without loss of generality.

\begin{assumption}[restatement of \Cref{as parameter}]\label{as parameter re}
    Given an intersection of two halfspaces $h^*=\sign(\bu^*\cdot \bx + t_1) \wedge \sign(\bv^* \cdot \bx + t_2)$ and a distribution $D$ over $\B^2(1) \times \{\pm 1\}$ that consistent with $h^*$ with the $\gamma$-margin condition, we parameterize $h^*$ by $\theta \in (0,\pi/2), t \ge 0, \sigma\ge 0$ where $\bu^* = \sin \theta \be_1 - \cos \theta \be_2, t_1 = t\sin\theta$ and $\bv^* = \sin \theta \be_1 + \cos \theta \be_2, t_2 = (1+\sigma)t\sin\theta$. Furthermore, we assume $\norm{\E_{\bx\sim D^+} \bx}_2 \le \gamma^c$, $t_1,t_2 \ge \gamma$, $|t_1|,|t_2|\leq 1$, where $c$ is some large constant.
\end{assumption}

First, we argue that we can without loss of generality assume $\norm{\E_{\bx\sim D^+} \bx}_2 \le \gamma^c$, for any large constant $c$. Assuming $\norm{\E_{\bx\sim D^+} \bx}_2 \ge \gamma^c$ instead, by drawing $\poly(d/\gamma)$ positive examples from $D_X^+$, we are able to estimate some $\hat{\bx} \in V$ such that $\norm{\hat{\bx}-\E_{\bx\sim D^+} \bx}_2 \le \gamma^c/2$. Since each example $\bx$ has $\norm{\bx}_2 \le 1$, we know that $\norm{\bx-\hat{\bx}}_2 \le 2$, thus by rescaling, $(\bx-\hat{\bx})/2$ satisfies $\gamma/2$-margin assumption and the resulting positive example has mean $\gamma^c/2$-close to the origin. 

Consider the target halfspace $h^*=\sign(\bu^*\cdot \bx + t_1) \wedge \sign(\bv^* \cdot \bx + t_2)$. We furthermore argue we can without loss of generality make the following two assumptions on $h^*$
\begin{enumerate}
    \item Under the assumption $\norm{\E_{\bx\sim D^+} \bx}_2 \le \gamma^c$, $t_1,t_2 \ge 0$. This is because if $t_1 \le 0$ (without loss of generality), then every positive example $\bx$ satisfies $\bu^*\cdot \bx\ge \gamma$, which implies $\norm{\E_{\bx\sim D^+} \bx}_2 \ge \gamma$. 
\item $|t_1|,|t_2|\leq 1$. If this is not the case, then the problem degenerates to learning a single halfspace and can be solved trivially.

\end{enumerate}

Given the above assumptions, it will be convenient for us to 
parameterize $h^*=\sign(\bu^*\cdot \bx + t_1) \wedge \sign(\bv^* \cdot \bx + t_2)$ to be described by $t\in [0,\infty)$, $\theta\in [0,\pi/2)$ and $\sigma\in [0,\infty)$ where we have $\bu^* = \sin \theta \be_1 - \cos \theta \be_2$, $t_1 = t\sin\theta$ and $\bv^* = \sin \theta \be_1 + \cos \theta \be_2$, $t_2 = (1+\sigma)t\sin\theta$ (as illustrated in \Cref{fig geometry re}). This will be convenient for later calculations.

\begin{figure}
    \centering
    \includegraphics[width=0.8\linewidth]{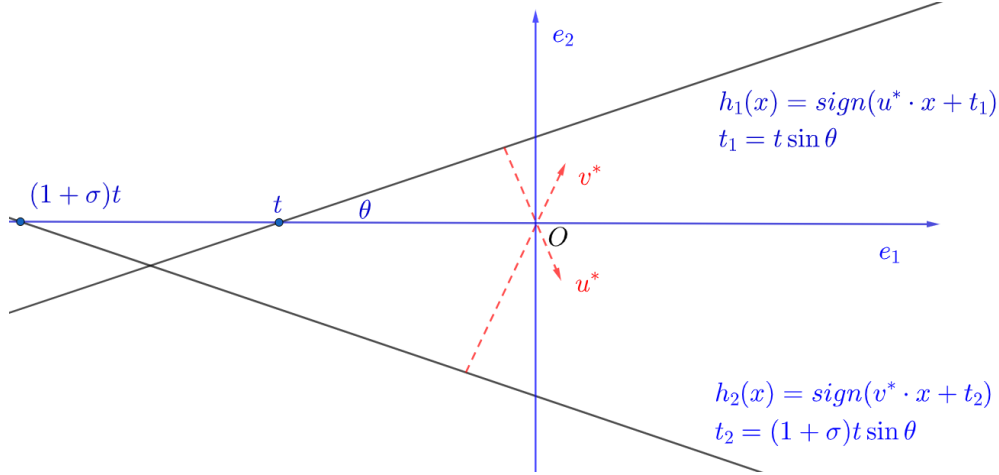}
    \caption{Geometrical Illustration of \Cref{as parameter}. Two halfspaces $h_1=\sign(\bu^*\cdot \bx+t_1)$ and $h_2=\sign(\bv^*\cdot \bx+t_2)$ are colored in black. Red dashed lines represent the directions of weight vectors $\bu^*,\bv^*$.}
    \label{fig geometry re}
\end{figure}

Notice that for every $\bu^*,\bv^*$ such that $\theta(\bu^*,\bv^*)=\pi-2\theta$, there is a rotation matrix $U$ such that $U\bu^*=\sin \theta \be_1 - \cos \theta \be_2, \bv^*=\sin \theta \be_1 + \cos \theta \be_2$. Since rotation matrix $U$ maintains the $\gamma$-margin assumption, parameterizing $h^*$ in such a way does not lose the generality.

\subsection{Proof of \Cref{lm 2nd}}\label{app lm 2nd}

In this section, we present the proof of \Cref{lm 2nd}. For convenience, we restate \Cref{lm 2nd} as follows.


\begin{lemma}[restatement of \Cref{lm 2nd}]\label{lm 2nd re}
    Let $D$ be a distribution over $\B^2(1) \times \{\pm 1\}$ that is consistent with an instance of learning intersections of two halfspaces with $\gamma$-margin assumption, where $\gamma$ is smaller than some sufficiently small constant. Let $c>0$ be any suitable large constant.
    Suppose 
        \begin{enumerate}
            \item $\norm{\E_{\bx\sim D^+} \bx}_F\le \gamma^c, \norm{\E_{\bx\sim D^-} \bx}_F\le \gamma^c$, $\norm{\left(\E_{\bx\sim D^+}-\E_{\bx \sim D^-}\right) \bx}_F\le \gamma^c$
            \item $\norm{ \left(\E_{\bx\sim D^+} - \E_{\bx\sim D^-}\right)\bx\bx^\intercal}_F \le \gamma^c$,
    \end{enumerate}
    then $\norm{(\Sigma^+)^{-1}}_2= O(1/\gamma^{4})$ and $\norm{(\Sigma^-)^{-1}}_2= O(1/\gamma^{4})$, where $\Sigma^+:=\E_{\bx\sim D^+}\bx\bx^\intercal$ and $\Sigma^-:=\E_{\bx\sim D^-}\bx\bx^\intercal$.
\end{lemma}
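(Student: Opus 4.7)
My plan is to apply the polynomial one-sided approximation framework of \Cref{th one-side} using the product polynomial $f(\bx) = (\bu^*\cdot\bx + t_1)(\bv^*\cdot\bx + t_2)$ as the central certificate, together with two auxiliary linear certificates. By the $\gamma$-margin assumption, $f$ is a degree-$2$ polynomial with $O(1)$ coefficients satisfying $f \geq \gamma^2$ on $R_+$ and on the sub-region $R_{\mathrm{bb}} \subseteq R_-$ where both halfspaces evaluate to $-1$, and $f \leq -\gamma^2$ on the complementary ``opposite-signs'' sub-region $R_{\mathrm{op}} = R_-\setminus R_{\mathrm{bb}}$; moreover $|f|=O(1)$ pointwise on $\B^2(1)$.

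I would first combine the $1$st- and $2$nd-moment-matching hypotheses to get $|\E_{D^+}[f] - \E_{D^-}[f]| = O(\gamma^c)$. Since $\E_{D^+}[f] \geq \gamma^2$, decomposing $\E_{D^-}[f]$ over $R_{\mathrm{bb}} \sqcup R_{\mathrm{op}}$ and using $|f| = O(1)$ on $R_{\mathrm{bb}}$ together with $f \leq -\gamma^2$ on $R_{\mathrm{op}}$ yields $\Pr_{D^-}[R_{\mathrm{bb}}] = \Omega(\gamma^2)$. Adding the defining margin inequalities on $R_{\mathrm{bb}}$ gives $(\bu^*+\bv^*)\cdot\bx \leq -4\gamma$; under \Cref{as parameter} this reads $\be_1\cdot\bx \leq -2\gamma/\sin\theta$ on $R_{\mathrm{bb}}$, and with $\|\bx\|\leq 1$ this also forces $\sin\theta \geq 2\gamma$. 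Jensen's inequality then produces $\Sigma^-_{11} \geq \Pr_{D^-}[R_{\mathrm{bb}}]\cdot(2\gamma/\sin\theta)^2 = \Omega(\gamma^4)$. An analogous argument with the linear certificates $p_i(\bx) = \bu^*\cdot\bx + t_i$ (each $\geq \gamma$ on its ``$+$'' side and $\leq -\gamma$ on its ``$-$'' side) and $1$st-moment matching yields $\Pr_{D^-}[R_{\mathrm{lo}}] = \Omega(\gamma)$ on the ``single-flip'' sub-region, and subtracting the margin inequalities on $R_{\mathrm{lo}}$ gives $|\be_2\cdot\bx| \geq \gamma/\cos\theta$ there, so $\Sigma^-_{22} = \Omega(\gamma^3)$.

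The main obstacle is upgrading these direction-specific variance bounds to a bound on $\lambda_{\min}(\Sigma^-)$ itself, because an off-diagonal $\Sigma^-_{12}$ could in principle drive $\lambda_{\min}$ far below $\min(\Sigma^-_{11},\Sigma^-_{22})$. My plan is to lower-bound the determinant via the $U$-statistic identity
\begin{equation*}
\det(\Sigma^-) \;=\; \tfrac12\, \E_{\bx,\by\overset{\mathrm{iid}}{\sim} D^-}\bigl[(\bx_1\by_2 - \bx_2\by_1)^2\bigr],
\end{equation*}
isolating the contribution from the product region $R_{\mathrm{bb}}\times R_{\mathrm{lo}}$. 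There $\E[\bx_1\mid R_{\mathrm{bb}}]$ and $\E[\by_2\mid R_{\mathrm{lo}}]$ both have magnitude $\geq \gamma$ with matching sign, while the cross term $\E[\bx_2\mid R_{\mathrm{bb}}]\,\E[\by_1\mid R_{\mathrm{lo}}]$ can fully cancel the main term only if the two conditional means are nearly parallel, which is incompatible with the global mean-zero hypothesis on $D^-$ together with the disparity $\Pr_{D^-}[R_{\mathrm{bb}}] \ll \Pr_{D^-}[R_{\mathrm{lo}}]$. Formalizing this cancellation obstruction---possibly via an additional one-sided polynomial certificate tailored to constrain the direction of $\E[\bx\mid R_{\mathrm{bb}}]$---is the hardest step. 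Once I have $\det(\Sigma^-) = \Omega(\gamma^{c'})$ for a suitable constant $c'$, the bound $\operatorname{tr}(\Sigma^-) \leq 1$ gives $\lambda_{\min}(\Sigma^-) \geq \det(\Sigma^-)/\operatorname{tr}(\Sigma^-) = \Omega(\gamma^{c'})$, delivering the desired $\Omega(\gamma^4)$ lower bound. The matching bound on $\Sigma^+$ then follows immediately from $2$nd-moment matching, since $\|\Sigma^+-\Sigma^-\|_F \leq \gamma^c$ implies $\lambda_{\min}(\Sigma^+) \geq \lambda_{\min}(\Sigma^-) - \gamma^c = \Omega(\gamma^4)$ once $c$ is chosen large enough.
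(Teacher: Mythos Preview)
Your use of the product polynomial $f(\bx)=(\bu^*\cdot\bx+t_1)(\bv^*\cdot\bx+t_2)$ together with moment matching to force $\Pr_{D^-}[R_{\mathrm{bb}}]=\Omega(\gamma^2)$ is exactly right, and in fact is the core of one of the two cases in the paper's proof. The genuine gap is the one you yourself flag: passing from the coordinate-wise bounds $\Sigma^-_{11},\Sigma^-_{22}=\Omega(\gamma^{O(1)})$ to $\lambda_{\min}(\Sigma^-)=\Omega(\gamma^4)$. Your determinant/$U$-statistic plan is only a sketch, and the ``cancellation obstruction'' you appeal to (that the conditional means on $R_{\mathrm{bb}}$ and $R_{\mathrm{lo}}$ cannot be nearly parallel) is not established and does not obviously follow from the hypotheses you have accumulated; indeed you explicitly defer it to a ``possibly'' additional certificate. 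Without that step, nothing prevents $\Sigma^-_{12}$ from pushing $\lambda_{\min}$ far below $\min(\Sigma^-_{11},\Sigma^-_{22})$.

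The paper sidesteps this difficulty entirely by never fixing a basis. Instead of bounding $\Sigma^-_{11}$ and $\Sigma^-_{22}$ and then wrestling with the off-diagonal, it shows directly that $\bv^\intercal\Sigma^-\bv\geq c'\gamma^4$ for \emph{every} unit vector $\bv$, via a case split on the orientation of $\bv^\perp$ relative to $\bu^*,\bv^*$. When $\sign(\bu^*\cdot\bv^\perp)\neq\sign(\bv^*\cdot\bv^\perp)$, your polynomial $f$ (combined with the observation that negative examples inside the band $\{|\bv\cdot\bx|\leq\gamma/2\}$ must have $f(\bx)\leq -\gamma^2/2$) gives $\Pr_{D^-}[|\bv\cdot\bx|>\gamma/2]=\Omega(\gamma^2)$, from which $\bv^\intercal\Sigma^-\bv=\Omega(\gamma^4)$ is immediate. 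When $\sign(\bu^*\cdot\bv^\perp)=\sign(\bv^*\cdot\bv^\perp)$, every negative example in that same band satisfies $\bv^\perp\cdot\bx\leq -\gamma/2$; if the band carried mass close to $1$ (i.e., $\bv^\intercal\Sigma^-\bv$ were too small), this would force $\|\E_{D^-}\bx\|_2=\Omega(\gamma)$, contradicting the small-mean hypothesis. Your $R_{\mathrm{bb}}/R_{\mathrm{lo}}$ machinery is essentially these two cases specialized to $\bv=\be_1$ and $\bv=\be_2$; the missing idea is simply to run the argument for an arbitrary $\bv$, which makes the determinant detour unnecessary.
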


We first give some high-level intuition behind \Cref{lm 2nd re}.
For the purpose of contradiction, we assume that $\norm{(\Sigma^+)^{-1}}_2> \Omega(1/\gamma^{c})$ or $\norm{(\Sigma^-)^{-1}}_2> \Omega(1/\gamma^{c})$.
Therefore, there must be a unit vector $\bv$ such that $\bv^\intercal\Sigma^+\bv\le O(\gamma^{c})$ or $\bv^\intercal\Sigma^+\bv\le O(\gamma^{c})$.
However, since $\Sigma^+$ and $\Sigma^-$ are close to each other in Frobinous norm, it must be that $\bv^\intercal\Sigma^+\bv\le O(\gamma^{c})$ and  
$\bv^\intercal\Sigma^-\bv\le O(\gamma^{c})$. 
Roughly speaking, this means most of the samples are inside a thin band along the direction of $\bv^\perp$, i.e., inside the band region $B:=\{\bx\in \R^2\mid |\bx\cdot \bv|\leq  \gamma/2\}$.
Let $f^*(\bx)=\sgn(\bu^*\cdot x+t_1) \wedge \sgn(\bv^* \cdot x+t_2)$ be the true concept function, and let $A^+$, $A^-$ be the region of $\{\bx\in \R^2\mid f^*(\bx)=1\}$ and $\{\bx\in \R^2\mid f^*(\bx)=-1\}$ respectively.
Notice that there are two cases for this band region $B$(see \Cref{fig:covariance} for illustration): either
\begin{enumerate}
    \item  $\sign(\bu^*\cdot \bv^\perp)= \sign(\bv^*\cdot \bv^\perp)$. In this case, we show that the first moments of $D^+, D^-$ are not close along the direction of $v^\perp$.
    \item $\sign(\bu^*\cdot \bv^\perp)\neq \sign(\bv^*\cdot \bv^\perp)$.
    In this case, we show that the moment information must differ by giving a degree-2 polynomial $p$ that $\E_{\bx\sim D^+}[p(\bx)]$ and $\E_{\bx\sim D^-}[p(\bx)]$ differs from each other, where we choose this $p(\bx):=(\bu^*\cdot x+t_1)(\bv^* \cdot x+t_2)$.
\end{enumerate}
In both cases, this contradicts the assumption that $D$ is a moment-matching distribution.
We give the formal proof of \Cref{lm 2nd re} below.

\begin{figure}
    \centering
    \includegraphics[width=0.8\linewidth]{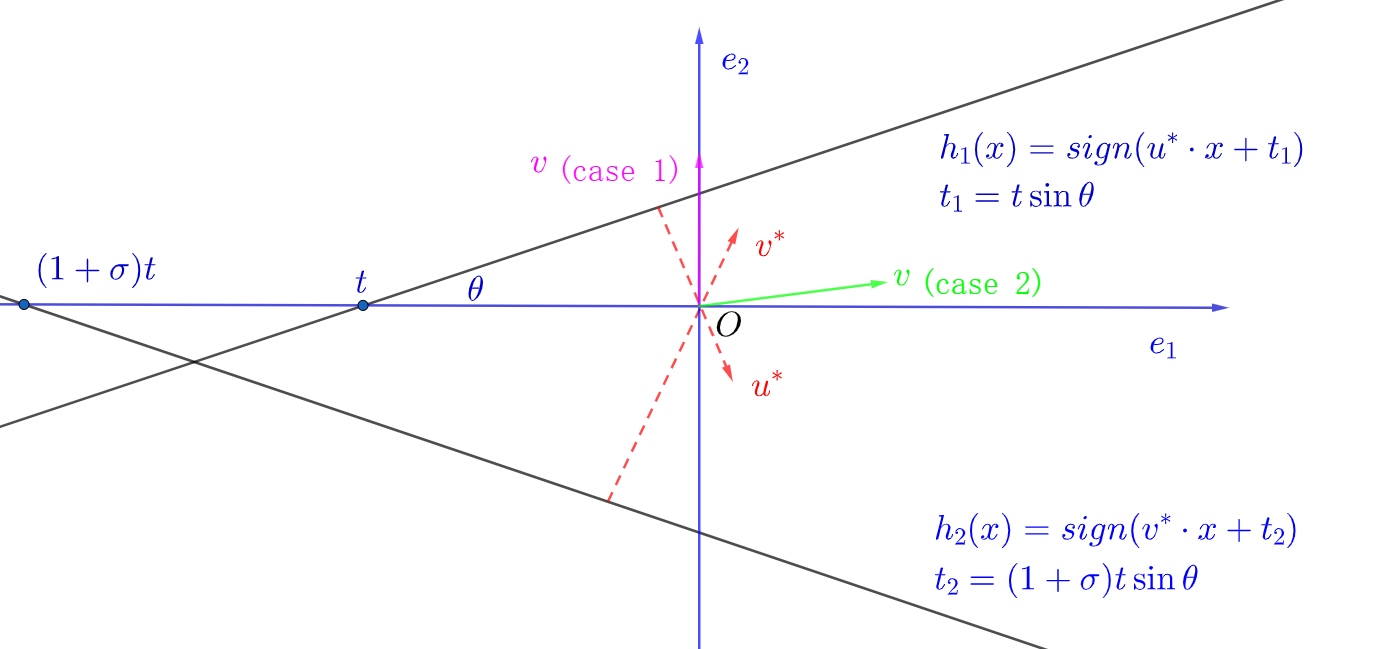}
    \caption{Geometrical illustration for the proof of \Cref{lm 2nd}. The vector colored in purple corresponds to case 1 in the proof and the vector colored in green corresponds to case 2 in the proof.}
    \label{fig:covariance}
\end{figure}
\begin{proof}[Proof of \Cref{lm 2nd re}]
To prove the statement, it suffices for us to show that there exists a universal constant $c'>0$, given 
    \begin{enumerate}
        \item $\norm{\E_{\bx\sim D^+} \bx}_F\le c'\gamma^4/100, \norm{\E_{\bx\sim D^-} \bx}_F\le c'\gamma^4/100$ and
        \item $\norm{\E_{\bx\sim D^+}\bx\bx^\intercal - \E_{\bx\sim D^-}\bx\bx^\intercal}_F \le c'\gamma^4/100$,
    \end{enumerate}
then for any $\bv\in S^{1}$, $\bv^\intercal\Sigma^- \bv\geq c'\gamma^4$.  
Suppose we can prove the above. 
Then given $c$ is a sufficiently large constant and $\gamma$ is at most a sufficiently small constant in \Cref{lm 2nd re}, the assumption of the above statement must be satisfied.
Therefore, we must have 
$\bv^\intercal\Sigma^- \bv\geq c'\gamma^4$ and $\bv^\intercal\Sigma^-\bv=\bv^\intercal\Sigma^+\bv -\bv^\intercal(\Sigma^- -\Sigma^+)\bv\ge c'\gamma^4/2$, 
which implies that $\norm{(\Sigma^+)^{-1}}_2=O(1/\gamma^{4})$ and $\norm{(\Sigma^-)^{-1}}_2=O(1/\gamma^{4})$.

We will prove $\bv^\intercal\Sigma^- \bv\geq c'\gamma^4$ for two cases.
Let $\bv^{\perp}$ be the unique unit vector up to negation that $\bv^{\perp}\cdot \bv=0$.
We consider the cases that:
\begin{enumerate}
    \item $\sign(\bu^*\cdot \bv^\perp)= \sign(\bv^*\cdot \bv^\perp)$ or (either $\bu^*\cdot \bv^\perp$ or $\bv^*\cdot \bv^\perp=0$), and \label{cas:ill-condition-same-sign}
    \item $\sign(\bu^*\cdot \bv^\perp)\neq \sign(\bv^*\cdot \bv^\perp)$.\label{cas:ill-condition-different-sign}
\end{enumerate}

For the case $\sign(\bu^*\cdot \bv^\perp)= \sign(\bv^*\cdot \bv^\perp)$, let $\bv^\perp$ be the unit direction that $\bv^{\perp}\cdot \bv=0$, $\bu^*\cdot \bv^\perp\ge 0$ and $\bv^*\cdot \bv^\perp\ge 0$.
Take $c'>0$ to be a sufficiently small constant and
assume for the purpose of contradiction that $\bv^\intercal\Sigma^- \bv\leq c'\gamma^3$.
Let $B:=\{\bx\in \B^2(1)\mid |\bx\cdot \bv|\leq  \gamma/2\}$.
Then notice that by Markov's inequality,
\[\pr_{\bx\sim D^-}[\bx\not \in B]\leq \E_{\bx\sim D^-}[(\bx\cdot \bv)^2]/(\gamma^2/4)\leq \bv^\intercal\Sigma^-\bv/(\gamma^2/4)\leq c\gamma\; ,\]
where $c$ is a sufficiently small constant.
Now, we show that for any $\bx\in B$ such that $\bx\cdot \bu^*+t_1\leq -\gamma$, $\bx\cdot \bv^{\perp}\leq -\gamma/2$.
First notice that 
\begin{equation} \label{eq:ill-condition-decomposition}
\bx\cdot\bu^*=\bx\cdot \proj_\bv\bu^*+\bx\cdot \proj_{\bv^{\perp}} \bu^*=
(\bx\cdot \bv)(\bv\cdot \bu^*)+(\bx\cdot \bv^{\perp})(\bv^{\perp}\cdot \bu^*)\; .
\end{equation}
Suppose $\bu^*\cdot \bv^\perp=0$, then we immediately get $\bx\cdot\bu^*+t_1\ge -|\bx\cdot\bv|\ge -\gamma/2 $. Therefore, no such $\bx$ exists, and we can assume without loss of generality that
$\bu^*\cdot \bv^\perp>0$.
Furthermore, notice that
for any $\bx\in B$ and $\bx\cdot \bv^*+t_1\leq -\gamma$,
given \Cref{eq:ill-condition-decomposition}, we get
\[
\bx\cdot \bv^{\perp}=\frac{\bx\cdot\bu^*-(\bx\cdot \bv)(\bv\cdot \bu^*)}{\bv^{\perp}\cdot \bu^*}
\leq \frac{-\gamma-(\bx\cdot \bv)(\bv\cdot \bu^*)}{\bv^{\perp}\cdot \bu^*}
\leq -\gamma-(\bx\cdot \bv)(\bv\cdot \bu^*)\leq -\gamma/2\; ,\]
where the third from the last inequality follows from $\bx\cdot \bu^*\leq -\gamma-t_1\leq -\gamma$
and the last inequality follows from $\bx\in B$ and $|\bv\cdot \bu^*|\leq 1$.
Similarly, we also have that
for any $\bx\in B$ and $\bx\cdot \bv^*+t_2\leq -\gamma$, $\bx\cdot \bv^{\perp}\leq -\gamma/2$.
Combining the above two and the $\gamma$-margin condition gives that, 
for any $\bx\in B$ and $\bx\in \mathbf{supp}(D^-)$, $\bx\cdot \bv^{\perp}\leq -\gamma/2$.
Then we get 
\begin{align*}
\E_{\bx\sim D^-} [\bv^{\perp}\cdot \bx]=&\E_{\bx\sim D^-} [\bv^{\perp}\cdot \bx|\bx\in B]\pr_{\bx\sim D^-}[\bx\in B]+\E_{\bx\sim D^-} [\bv^{\perp}\cdot \bx|\bx\not\in B]\pr_{\bx\sim D^-}[\bx\not\in B]\\
\le & (-\gamma/2)(1-c\gamma)+c\gamma\\
\le & -\gamma/4+\gamma/8
\le -\gamma/8\; ,
\end{align*}
where that last inequality follows from that $c$ is a sufficiently small constant.
This contradicts that $\norm{\E_{\bx\sim D^-} \bx}_F\le c'\gamma^4/100$ in the assumption.
Therefore, we must have $\bv^\intercal\Sigma^- \bv\geq c'\gamma^3\geq c'\gamma^4$. This proves the statement for Case~\ref{cas:ill-condition-same-sign}.

For the case that $\sign(\bu^*\cdot \bv^\perp)\neq \sign(\bv^*\cdot \bv^\perp)$,
let $B:=\{\bx\in \B^2(1) \mid \abs{\bv\cdot \bx} \le \gamma/2 \}$. Let the $c'$ in the statement be a sufficiently small constant.
Suppose we can prove that $\pr_{\bx\sim D^-}[\bx\not\in B]= \Omega(\gamma^2)$, then we are done, since from the definition of $B$, we immediately get 
\[
\bv^\intercal\Sigma^-\bv\ge\pr_{\bx\sim D^-}[(\bx\cdot \bv)^2\mid \bx\not\in B]\pr_{\bx\sim D^-}[\bx\not \in B]=\Omega(\gamma^4)\; .
\]
To show that $\pr_{\bx\sim D^-}[\bx\not\in B]= \Omega(\gamma^2)$, we consider the 
degree-2 polynomial $p(\bx):={(\bu^*\cdot \bx+t_1)(\bv^*\cdot \bx+t_2)}$. 
By \Cref{as parameter}, we know that $t_2 \le 1, t_1 \le 1$. 
Therefore,
\begin{align*}
    &\abs{\E_{\bx\sim D^+}p(\bx) - \E_{\bx\sim D^-}p(\bx)}\\ 
    = &\abs{(\bu^*)^\intercal(\Sigma^+-\Sigma^-)\bv^* + (t_2 \bu^*+t_1 \bv^*) \cdot (\E_{x\sim D^+}x_V-\E_{x\sim D^-}x_V)} \\
    = &\abs{\|(\bu^*)^\intercal\bv^*\|_F\norm{\Sigma^+-\Sigma^-}_F + \|t_2 \bu^*+t_1 \bv^*\|_2 \norm{\E_{x\sim D^+}x_V-\E_{x\sim D^-}x_V}_2} 
    \le c\gamma^4\; ,
\end{align*}
where $c$ is a sufficiently small constant.
From the $\gamma$-margin assumption, 
we have $\E_{\bx\sim D^+}p(\bx)\geq \gamma^2$. Therefore, we must have $\E_{\bx\sim D^-}p(\bx)\geq \gamma^2/2$.
We show that in order to satisfy $\E_{\bx\sim D^-}p(\bx)\geq \gamma^2/2$, we must have 
$\pr_{\bx\sim D^-}[\bx\not\in B]= \Omega(\gamma^2)$.
Notice that for any $\bx\in \mathbf{supp}(D^-)$ and $\bx\in B$, 
we must have either $\bu^*\cdot \bx+t_1\leq -\gamma$ or $\bv^*\cdot \bx+t_2\leq -\gamma$.
Suppose that $\bu^*\cdot \bx+t_1\leq -\gamma$, then we have $\bu^* \cdot \bx\leq -\gamma-t_1\leq -\gamma$ ($t_1\geq 0$ from \Cref{as parameter}).
Combining the above with $\bu^*\cdot \bx=\proj_{\bv} \bu^*\cdot \bx+\proj_{\bv^\perp}\bu^*\cdot \bx$ and $|\proj_{\bv} \bu^*\cdot \bx|\leq |\bv\cdot \bx|\leq \gamma/2$, we get
$\proj_{\bv^\perp}\bu^*\cdot \bx\leq 0$.
Notice that $\proj_{\bv^\perp}\bu^*\cdot \bx=(\bu^*\cdot \bv^\perp)(\bv^\perp\cdot \bx)\leq 0$
and we assumed $\sign(\bv^\perp\cdot \bu^*)\neq \sign(\bv^\perp\cdot \bv^*)$,
then $\proj_{\bv^\perp}\bv^*\cdot \bx=(\bv^*\cdot \bv^\perp)(\bv^\perp\cdot \bx)\geq 0$.
Plug it into the equation below, we get
\[
\bv^*\cdot \bx+t_2=\proj_{\bv} \bv^*\cdot \bx+\proj_{\bv^\perp} \bv^*\cdot \bx+t_2\geq (\bv \cdot \bv^*)(\bv\cdot \bx)+(\bv^*\cdot \bv^\perp)(\bv^\perp\cdot \bx)+t_2\geq -\gamma/2+\gamma\ge\gamma/2\; ,
\]
where the second from the last inequality comes from $\bx\in B$ and $t_2\geq \gamma$ in \Cref{as parameter}.
Similarly, we can also show that for any $\bx\in \mathbf{supp}(D^-)$ and $\bx\in B$, if $\bu^*\cdot \bx+t_1\leq -\gamma$, then
$\bv^*\cdot \bx+t_2\geq \gamma/2$.
Combining the two cases, we get that for any $\bx\in \mathbf{supp}(D^-)$ and $\bx\in B$,
\[
p(\bx)=(\bu^*\cdot \bx+t_1)(\bv^*\cdot \bx+t_2)\leq -\gamma^2/2\; .
\]
Therefore, we get
\begin{align*}
\E_{\bx\sim D^-}[p(\bx)]
=&\E_{\bx\sim D^-}[p(\bx)\mid \bx\in R]\pr_{\bx\sim D^-}[\bx\in R]+\E_{\bx\sim D^-}[p(\bx)\mid \bx\not \in R]\pr_{\bx\sim D^-}[\bx\not\in R]\\
\leq & -\gamma^2/2\pr_{\bx\sim D^-}[\bx\in R]+\pr_{\bx\sim D^-}[\bx\not\in R]\; .
\end{align*}
Combining the above with that $\E_{\bx\sim D^-}p(\bx)\geq \gamma^2/2$,
we get $\pr_{\bx\sim D^-}[\bx\not\in R]\geq \gamma^2/3$.
This completes the proof.

\end{proof}

\subsection{Proof of \Cref{lm positive polynomial} and \Cref{lm negative polynomial}} \label{app exact match}

In this section, we present the proof of \Cref{lm positive polynomial} and \Cref{lm negative polynomial}. For convenience, we restate the lemmas as follows.

\begin{lemma}[restatement of \Cref{lm positive polynomial}]\label{lm positive polynomial re}
    Let $h^*=\sign(\bu^*\cdot \bx + t_1) \wedge \sign(\bv^* \cdot \bx + t_2)$ be the target hypothesis of an instance of the problem of learning intersections of two halfspaces and $D$ be a distribution that is consistent with $h^*$
    Under \Cref{as parameter}, the following polynomial 
\begin{align*}
    f^*(\bx) = \frac{1}{t\sin\theta}(\bu^*\cdot \bx-t\sin\theta)^2(\bu^*\cdot \bx+t\sin \theta)
\end{align*}
satisfies $f^*(\bx) \ge 0, \forall \bx \in \textbf{supp}(D^+)$.
\end{lemma}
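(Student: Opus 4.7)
The plan is to observe that under the parameterization of Assumption~\ref{as parameter}, the threshold $t_1$ of the first halfspace equals $t\sin\theta$, so the polynomial $f^*(\bx)$ factors as $\frac{1}{t\sin\theta}(\bu^*\cdot\bx - t_1)^2 (\bu^*\cdot\bx + t_1)$. The result should then follow by showing that each of the three factors is nonnegative on $\mathbf{supp}(D^+)$.

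First I would check positivity of the leading coefficient: since $\theta \in (0,\pi/2)$ and $t \ge 0$ with $t_1 = t\sin\theta \ge \gamma > 0$, we have $t\sin\theta > 0$, so the multiplicative factor $1/(t\sin\theta)$ is strictly positive. The squared factor $(\bu^*\cdot\bx - t\sin\theta)^2$ is obviously nonnegative for every $\bx \in \mathbb{R}^2$. It therefore remains only to argue that the third factor $\bu^*\cdot\bx + t\sin\theta = \bu^*\cdot\bx + t_1$ is nonnegative on the support of $D^+$.

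This last point is exactly the $\gamma$-margin condition for positive examples: any $\bx \in \mathbf{supp}(D^+)$ satisfies $h^*(\bx) = 1$, which means both $\sign(\bu^*\cdot\bx + t_1) = 1$ and $\sign(\bv^*\cdot\bx + t_2) = 1$, and in fact $\bu^*\cdot\bx + t_1 \ge \gamma > 0$ by the margin assumption. Multiplying the three nonnegative factors together then gives $f^*(\bx) \ge 0$ for all $\bx \in \mathbf{supp}(D^+)$, completing the argument. There is no genuine obstacle here — the lemma is essentially a tailored algebraic identity chosen so that the vanishing locus of $f^*$ coincides with the supporting hyperplane $\{\bu^*\cdot\bx = -t_1\}$ (with multiplicity at the ``far'' parallel copy $\{\bu^*\cdot\bx = t_1\}$), which makes it well suited as a dual certificate in the LP~\eqref{LP max}; the only care required is in verifying that the chosen shift $t\sin\theta$ is exactly $t_1$, which is immediate from the parameterization.
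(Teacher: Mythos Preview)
Your proposal is correct and follows essentially the same approach as the paper's proof: both use that $t_1 = t\sin\theta$ under the parameterization, that the squared factor and the leading coefficient $1/(t\sin\theta)$ are nonnegative, and that $\bu^*\cdot\bx + t_1 \ge 0$ for positive examples. Your write-up is in fact slightly more explicit than the paper's (you spell out why $t\sin\theta > 0$ and invoke the margin condition), but the argument is the same.
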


For a clear intuition, we plot the contour of the polynomial constructed in \Cref{lm positive polynomial} in \Cref{fig: pos}.

\begin{proof}[Proof of \Cref{lm positive polynomial re}]
    For every positive example $\bx$, we have $\bu^*\cdot x+t_1 = \bu^*\cdot x+t \sin\theta \ge 0$. Thus, $\forall x\in \textbf{supp}(D^+),$
    \begin{align*}
    f^*(\bx) = \frac{1}{t\sin\theta}(\bu^*\cdot \bx-t\sin\theta)^2(\bu^*\cdot \bx+t\sin \theta) \ge 0.
\end{align*}

\end{proof}

\begin{figure}
    \centering
    \includegraphics[width=0.45\linewidth]{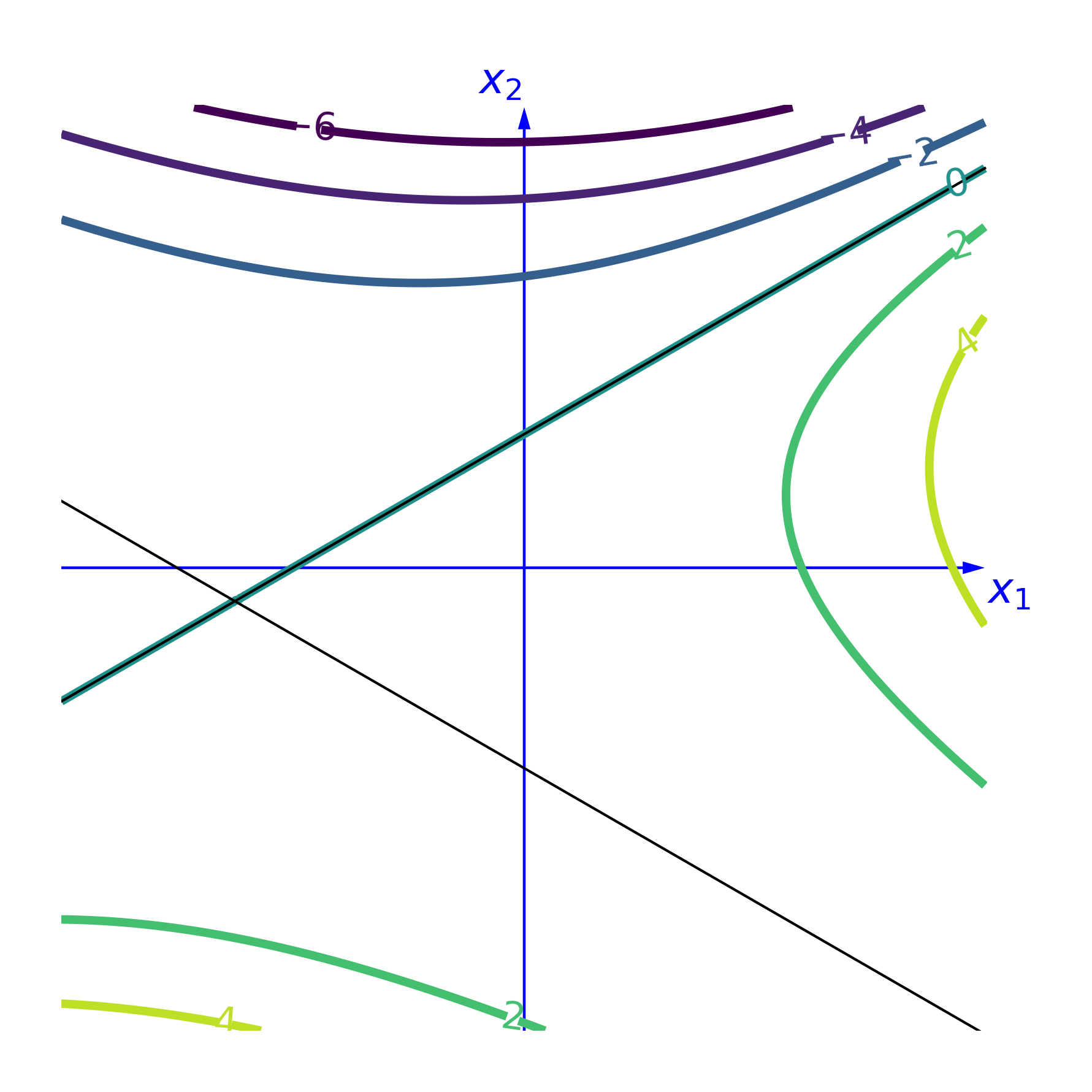}
    \caption{Illustration for \Cref{lm positive polynomial}. The target intersection of two halfspace $h^*$ is plotted in black. Colored lines represent the contours of the polynomial $f^*$. $f^*(\bx)>0$ for every example $\bx$ labeled positive by $h^*$.}
    \label{fig: pos}
\end{figure}

\begin{lemma}[restatement of \Cref{lm negative polynomial}]\label{lm negative polynomial re}
    Let $h^*=\sign(\bu^*\cdot \bx + t_1) \wedge \sign(\bv^* \cdot \bx + t_2)$ be the target hypothesis of an instance of the problem of learning intersections of two halfspaces and $D$ be a distribution that is consistent with $h^*$
    Under \Cref{as parameter}, the following polynomial 
\begin{align*}
    f^*(\bx) & = a_0 + a_1 \bx_1 + a_2 \bx_2 - \bx_2^2 \\   
    a_0 & = (1+\sigma)\tan^2 \theta t^2 \\
    a_1 & = (2+\sigma)\tan^2\theta t \\
    a_2 & =-\sigma \tan \theta t
\end{align*}
satisfies $f^*(\bx) \le 0, \forall \bx \in \textbf{supp}(D^-)$.
\end{lemma}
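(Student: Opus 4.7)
The plan is to rewrite $f^*$ in a factored form that makes its sign transparent on each region of the plane. A direct expansion, using the values of $a_0, a_1, a_2$ given in the statement, verifies the identity
\[
f^*(\bx) \;=\; -\bigl(\bx_2 - \tan\theta\,(\bx_1+t)\bigr)\bigl(\bx_2 + \tan\theta\,(\bx_1+(1+\sigma)t)\bigr) \;-\; \tan^2\theta\,\bx_1^2.
\]
The two linear factors are precisely the defining lines $\bu^*\cdot\bx+t_1=0$ and $\bv^*\cdot\bx+t_2=0$ of the two halfspaces, rewritten in the $(\bx_1,\bx_2)$-plane under Assumption~\ref{as parameter re}. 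Thus the sign of the product is controlled by which halfspace condition a point violates.

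Next I would partition $\textbf{supp}(D^-)$ according to which of the two halfspace constraints fails. Any $\bx \in \textbf{supp}(D^-)$ satisfies either (i) $\bx_2 \geq \tan\theta(\bx_1+t)$ or (ii) $\bx_2 \leq -\tan\theta(\bx_1+(1+\sigma)t)$ (or both). In the case that \emph{exactly} one of (i),(ii) holds, the two linear factors above have the same sign, so their product is $\geq 0$, and hence
\[
f^*(\bx) \;\leq\; -\tan^2\theta\,\bx_1^2 \;\leq\; 0.
\]
This handles the bulk of $\textbf{supp}(D^-)$ immediately.

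The main obstacle is the ``corner'' region where both (i) and (ii) hold simultaneously; here the two linear factors have opposite signs, the product is $\leq 0$, and the first summand in the factored expression is $\geq 0$, so one cannot simply drop it. In this corner, however, $\tan\theta(\bx_1+t) \leq -\tan\theta(\bx_1+(1+\sigma)t)$ forces $\bx_1 \leq -(1+\sigma/2)t$, so $|\bx_1|$ is bounded below and $\tan^2\theta\,\bx_1^2$ is substantial. I would apply AM-GM to the two factors (one nonnegative, one nonpositive, with their difference equal to $\tan\theta\,(2\bx_1+(2+\sigma)t)$) to get
\[
-\bigl(\bx_2 - \tan\theta(\bx_1+t)\bigr)\bigl(\bx_2 + \tan\theta(\bx_1+(1+\sigma)t)\bigr) \;\leq\; \tfrac{1}{4}\tan^2\theta\,\bigl((2+\sigma)t+2\bx_1\bigr)^2.
\]
A short computation using $\bx_1\leq-(1+\sigma/2)t$ (so that $(2+\sigma)t + 2\bx_1 \leq 0$ and $|(2+\sigma)t+2\bx_1|\leq 2|\bx_1|$) then shows the right-hand side is at most $\tan^2\theta\,\bx_1^2$, which cancels the correction term and yields $f^*(\bx)\leq 0$. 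Combining the two cases completes the proof.
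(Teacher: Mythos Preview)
Your proof is correct and takes a genuinely different route from the paper's. The paper partitions the negative region into $N_1=\{\bx:\bx_2\ge -\sigma t\tan\theta/2,\ \bu^*\cdot\bx+t_1\le 0\}$ and $N_2=\{\bx:\bx_2\le -\sigma t\tan\theta/2,\ \bv^*\cdot\bx+t_2\le 0\}$, verifies $f^*\le 0$ on the boundaries of each piece by direct substitution (where the computation collapses to $-\tan^2\theta\,\bx_1^2$), and then extends to the interiors by checking the signs of $\partial f^*/\partial\bx_1$ and $\partial f^*/\partial\bx_2$. Your argument instead rests on the algebraic identity $f^*(\bx)=-AB-\tan^2\theta\,\bx_1^2$ with $A=\bx_2-\tan\theta(\bx_1+t)$ and $B=\bx_2+\tan\theta(\bx_1+(1+\sigma)t)$, which immediately dispatches the case where exactly one halfspace is violated, and handles the corner $A\ge 0,\ B\le 0$ via $A\cdot(-B)\le(A-B)^2/4=\tfrac14\tan^2\theta\,(2\bx_1+(2+\sigma)t)^2\le\tan^2\theta\,\bx_1^2$ (the last step using $\bx_1\le -(1+\sigma/2)t\le 0$ and $(2+\sigma)t\ge 0$). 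Your factorization makes the geometric meaning of $f^*$ completely transparent---the two linear factors are literally the two halfspace boundaries---and the AM-GM step replaces the paper's derivative computations with a single inequality. One minor remark: the ``difference'' $A-B$ equals $-\tan\theta(2\bx_1+(2+\sigma)t)$, not $+\tan\theta(2\bx_1+(2+\sigma)t)$ as you wrote, but this is irrelevant since only its square enters.
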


For a clear intuition, we plot the contour of the polynomial constructed in \Cref{lm negative polynomial} in \Cref{fig: pos}.

\begin{proof}[Proof of \Cref{lm negative polynomial re}]

To show for every $\bx\in \R^d$ such that $h^*(\bx)=-1$, $f^*(\bx)\le 0$., we partition the region of negative examples $N:=\{\bx \mid h^*(\bx)=-1\}$ into regions $N_1:=\{\bx \in V \mid \bx_2 \ge -\sigma t \tan\theta/2, \bu^*\cdot \bx+ t_1 \le 0\}$ and $N_2:=\{\bx \in V \mid \bx_2 \le -\sigma t \tan\theta/2, \bv^*\cdot \bx+ t_2 \le 0\}$, and show that in each region $f^*(\bx)\le 0$.

We first consider the region $N_1:=\{\bx \in V \mid \bx_2 \ge -\sigma t \tan\theta/2, \bu^*\cdot \bx+ t_1 \le 0\}.$ To start with, we focus on examples that are on the boundary of $N_1$. Let $\bx$ be any example on the decision boundary $\{\bx \mid \bu^*\cdot \bx + t_1 =0\}$. By \Cref{as parameter}, we know that $\bx$ satisfies $\bx_2 = \tan\theta(\bx_1+t)$. So,
\begin{align}\label{eq boundary 1}
    f^*(\bx) & = -\tan^2\theta \bx^2_1+(a_1+a_2\tan\theta-2t\tan^2\theta) \bx_1 + (a_0+a_2\tan\theta t-\tan^2\theta t^2) \notag\\
    & = -\tan^2\theta \bx_1^2 \le 0.
\end{align}
Thus, $f^*(\bx) \le 0$ for every $\bx$ that satisfies $\bu^*\cdot \bx+t_1=0$. Based on \eqref{eq boundary 1}, we show that $f^*(\bx) \le 0$ holds for every example $\bx$ with $\bx_2 = -\sigma t \tan\theta/2, \bx_1 \le -(1+\sigma/2)\tan\theta t$. 

For every fixed $\bx_2$, the partial derivative of $f^*(\bx)$ with respect to $\bx_1$ is 
\begin{align} \label{eq derivative 1}
    \frac{\partial f^*(\bx)}{\partial \bx_1} = a_1 = (2+\sigma)\tan^2 \theta t>0.
\end{align}
By \eqref{eq boundary 1}, we know that the point $\bx':=(-(1+\sigma/2)\tan\theta t,-\sigma t \tan\theta/2)$, the only vertex of the region $N_1$ satisfies $f^*(\bx')\le 0$. This implies that $f^*(\bx) \le 0$ holds for every example $\bx$ with $\bx_2 = -\sigma t \tan\theta/2, \bx_1 \le -(1+\sigma/2)\tan\theta t$.

So far, we have shown that $f^*(\bx) \le 0$ for every example $\bx$ on the boundary of $N_1$.
We next show that $f^*(\bx)\le 0$ holds for every example $\bx$ in the interior of $N_1$. Fix any $\bx_1$, the partial derivative of $f^*(\bx)$ with respect to $\bx_2$ is 
\begin{align} \label{eq derivative 2}
    \frac{\partial f^*(\bx)}{\partial \bx_2} = -2 \bx_2 + a_2 = -2 \bx_2 -\sigma\tan\theta t\le \sigma\tan\theta t-\sigma\tan\theta t=0,
\end{align}
when $\bx_2 \ge -\sigma t \tan\theta/2$. Since $f^*(\bx) \le 0$ holds for every $\bx$ on the boundary of $N_1$, \eqref{eq derivative 2} implies that $f^*(\bx) \le 0$ for every $\bx$ in the interior of $N_1$.

In the rest of the proof, we show that $f^*(\bx) \le 0$ for every $\bx \in N_2= \{\bx \in V \mid \bx_2 \le -\sigma t \tan\theta/2, \bv^*\cdot \bx+ t_2 \le 0\}.$ Let $\bx$ be any example on the decision boundary $\{\bx \mid \bv^*\cdot \bx+t_2=0\}$. By \Cref{as parameter} of $\bu^*$ and $t_1$, we know that $\bx$ satisfies $\bx_2 = -\tan\theta(\bx_1+(1+\sigma)t)$.

\begin{align*}
    f^*(\bx) & = -\tan^2\theta \bx^2_1 + (a_1-a_2\tan\theta-2(1+\sigma)t\tan^2\theta) \bx_1\\ 
    &+ (a_0-(1+\sigma)\tan\theta ta_2-(1+\sigma)^2\tan^2\theta t^2)\notag\\
    & = -\tan^2\theta \bx_1^2 \le 0.
\end{align*}

Thus, $f^*(\bx) \le 0$ for every $\bx$ that satisfies $\bv^*\cdot \bx+t_2=0$. Recall that $f^*(\bx) \le 0$ holds for every example $\bx$ with $\bx_2 = -\sigma t \tan\theta/2, \bx_1 \le -(1+\sigma/2)\tan\theta t$. Thus, $f^*(\bx) \le 0$ for every example $\bx$ on the boundary of $N_2$.
We next show that $f^*(\bx)\le 0$ holds for every example $\bx$ in the interior of $N_2$. Fix any $\bx_1$, the partial derivative of $f^*(\bx)$ with respect to $\bx_2$ is 
\begin{align} \label{eq derivative 3}
    \frac{\partial f^*(\bx)}{\partial \bx_2} = -2 \bx_2 + a_2 = -2 \bx_2 -\sigma\tan\theta t\ge \sigma\tan\theta t-\sigma\tan\theta t=0,
\end{align}
when $\bx_2 \le -\sigma t \tan\theta/2$. Since $f^*(\bx) \le 0$ holds for every $\bx$ on the boundary of $N_2$, \eqref{eq derivative 3} implies that $f^*(\bx) \le 0$ for every $\bx$ in the interior of $N_2$.
\end{proof}
\begin{figure}
    \centering
    \includegraphics[width=0.45\linewidth]{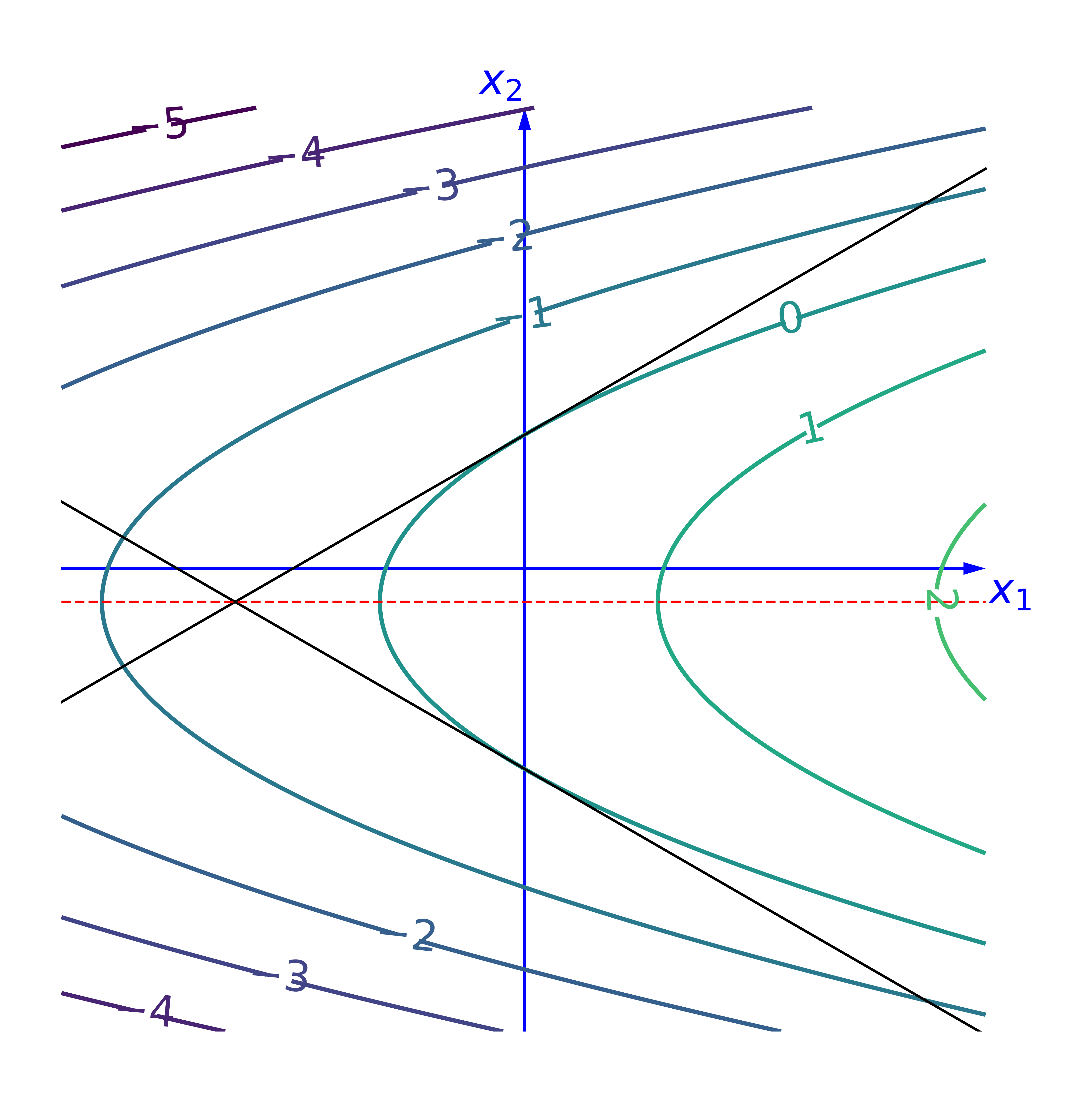}
    \caption{Illustration for \Cref{lm negative polynomial}. The target intersection of two halfspace $h^*$ is plotted in black. $h^*$ is symmetric according to the red dashed line $\bx_2= -\sigma t \tan\theta/2$. The red dashed line partitions the region of negative examples into two regions  $N_1:=\{\bx \in V \mid \bx_2 \ge -\sigma t \tan\theta/2, \bu^*\cdot \bx+ t_1 \le 0\}$ and $N_2:=\{\bx \in V \mid \bx_2 \le -\sigma t \tan\theta/2, \bv^*\cdot \bx+ t_2 \le 0\}$. 
    Colored lines represent the contours of the polynomial $f^*$. $f^*(\bx)<0$ for every example $\bx$ labeled negative by $h^*$.}
    \label{fig: neg}
\end{figure}

\subsection{Proof of \Cref{fact pos} and \Cref{fact neg}}\label{app fact pos neg}

\begin{fact}[restatement of \Cref{fact pos}]
\label{fact pos re}
    Let $h^*=\sign(\bu^*\cdot \bx + t_1) \wedge \sign(\bv^* \cdot \bx + t_2)$ be the target hypothesis of an instance of the problem of learning intersections of two halfspaces and $D$ be a distribution that is consistent with $h^*$
    Under \Cref{as parameter}, if $\E_{\bx\sim D^+}  (\bx) = 0, \E_{\bx\sim D^+} \bx^{\otimes 3} =0$ and for every $\bv \in S^{d-1}\cap V, \E_{\bx\sim D^+}(\bv\cdot \bx)^2 = \alpha^2$, then $\alpha^2 \le t^2 \sin^2\theta$.
\end{fact}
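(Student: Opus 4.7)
The plan is to use the one-sided approximation polynomial $f^*$ from Lemma~\ref{lm positive polynomial} as the certificate in the duality framework outlined after Theorem~\ref{th one-side}. Since $f^*(\bx)\geq 0$ on $\mathrm{supp}(D^+)$, taking expectations gives $\E_{\bx\sim D^+}[f^*(\bx)]\geq 0$, which after simplification using the vanishing first and third moments of $D^+$ together with the uniform second moment $\alpha^2$, will yield exactly $-\alpha^2+t^2\sin^2\theta \geq 0$.

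Concretely, first I would expand
\[
(\bu^*\cdot\bx - t\sin\theta)^2(\bu^*\cdot\bx + t\sin\theta) = (\bu^*\cdot\bx)^3 - t\sin\theta\,(\bu^*\cdot\bx)^2 - t^2\sin^2\theta\,(\bu^*\cdot\bx) + t^3\sin^3\theta,
\]
so that dividing by $t\sin\theta$ gives
\[
f^*(\bx) = \frac{(\bu^*\cdot\bx)^3}{t\sin\theta} - (\bu^*\cdot\bx)^2 - t\sin\theta\,(\bu^*\cdot\bx) + t^2\sin^2\theta.
\]
Next, taking the expectation under $D^+$ and using $\E_{\bx\sim D^+}[\bx]=0$ (which gives $\E[\bu^*\cdot\bx]=0$), $\E_{\bx\sim D^+}[\bx^{\otimes 3}]=0$ (which gives $\E[(\bu^*\cdot\bx)^3]=(\bu^*)^{\otimes 3}\cdot \E[\bx^{\otimes 3}]=0$), and $\E_{\bx\sim D^+}[(\bu^*\cdot\bx)^2]=\alpha^2$ (since $\bu^*\in V\cap S^{d-1}$ and the second moment is isotropic with scale $\alpha^2$), the three non-constant terms collapse and we obtain
\[
\E_{\bx\sim D^+}[f^*(\bx)] = -\alpha^2 + t^2\sin^2\theta.
\]
Finally, by Lemma~\ref{lm positive polynomial}, $f^*(\bx)\geq 0$ pointwise on $\mathrm{supp}(D^+)$, so the expectation is nonnegative, yielding $\alpha^2 \leq t^2\sin^2\theta$, as claimed.

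There is essentially no obstacle: the whole content is already packed into the construction of $f^*$ in Lemma~\ref{lm positive polynomial}, which is exactly the one-sided approximating polynomial that is nonnegative on the positive region and whose expansion contains only the monomials $1$, $\bu^*\cdot\bx$, $(\bu^*\cdot\bx)^2$, $(\bu^*\cdot\bx)^3$. The fact that the odd-degree monomials are killed by the moment hypotheses is what produces the clean inequality. The analogous Fact~\ref{fact neg} will be handled in exactly the same way, but using the degree-$2$ polynomial of Lemma~\ref{lm negative polynomial}, which is non-positive on $\mathrm{supp}(D^-)$ and whose expectation reduces to $(1+\sigma)t^2\tan^2\theta - \beta^2 \leq 0$ under the corresponding moment assumptions.
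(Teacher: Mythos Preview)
Your proof is correct and is essentially the same approach as the paper's: the paper wraps the computation in an explicit LP/dual-LP formulation (LP~\eqref{LP max re} and LP~\eqref{LPdual max re}) and invokes weak duality, but the substance is exactly your direct calculation of $\E_{\bx\sim D^+}[f^*(\bx)]=-\alpha^2+t^2\sin^2\theta\geq 0$ using the same polynomial $f^*$ from Lemma~\ref{lm positive polynomial}. Your version simply strips away the LP scaffolding and applies the one-sided approximation idea of Theorem~\ref{th one-side} directly.
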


\begin{proof}[Proof of \Cref{fact pos}]

For every $\bx$ that is labeled positive by $h^*$, denote by $p(\bx)$ the variable of the density of a distribution $D^+$ over $\R^d$. Notice that any distribution $D^+$ that satisfies the statement of \Cref{fact pos}, gives a feasible solution to the following LP~\eqref{LP max re}. Thus, to upper bound the variance of $D^+$, it is equivalent to upper bound the optimal value of LP~\eqref{LP max re}.
\begin{align}\label{LP max re}
\begin{split}
\max &\  \alpha^2  \\
\text{s.t. } & \sum_{\bx}p(\bx)\bx = 0\\
& \sum_\bx p(\bx)\bx\bx^\intercal = \alpha^2 I \\
& \sum_\bx p(\bx)\bx^{\otimes 3} = 0 \\
& \sum_\bx p(\bx) =1 \\
& p(\bx) \ge 0 \quad \forall \bx
\end{split}
\end{align}

To do this, we use an LP duality argument to derive a tight upper bound for the optimal value of LP~\eqref{LP max re}.

Write $\bx\in V$ as $\bx = \bx_1 e_1 + \bx_2 e_2$ and $\bx_0=1\in \R$ for the simplicity of notation.
Let $f(\bx)=\sum_{i,j,k=0}^2 a_{ijk}\bx_i\bx_j\bx_k$ be a degree-3 polynomial defined over $V=\textbf{span}\{e_1,e_2\}$. The coefficient of $f(\bx)$ for the monomial $\bx_i\bx_j\bx_k$ is denoted by $a_{ijk}$. The dual linear program to LP~\eqref{LP max} is defined by LP~\eqref{LPdual max}, whose variable is defined over the coefficients of $f(\bx)$.
\begin{align}\label{LPdual max re}
\begin{split}
\min &\  a_{0}  \\
\text{s.t. } & f(\bx) \ge 0, \quad \forall \bx \in \textbf{supp}(D^+)\\
& a_{11}+a_{22} =-1
\end{split}
\end{align}
Here, $a_{11},a_{22}$ are coefficients of polynomial $f$ with respect to monomials $\bx_1^2,\bx_2^2$.

Every feasible solution to \eqref{LPdual max re} defines a degree-3 polynomial $f(\bx)$ such that $f(\bx)\ge 0$ for every example $\bx$ that is labeled positive by $h^*$. In particular, by LP duality theory \citep{bertsimas1997introduction, Shapiro2001}, the constant term $a_{0}$ of any feasible polynomial $f(\bx)$ to LP~\eqref{LPdual max re} gives an upper bound for the optimal value $\alpha^2$ to LP~\eqref{LP max re}. We explicitly construct the following polynomial feasible to LP~\eqref{LPdual max re}, with a small constant term.
\begin{align*}
    f^*(\bx) = \frac{1}{t\sin\theta}(\bu^*\cdot \bx-t\sin\theta)^2(\bu^*\cdot \bx+t\sin \theta)
\end{align*}
Notice that the constant term $a_{0} = f^*(0) = t^2\sin^2\theta$, which means if $f^*(\bx)$ gives a feasible solution to LP~\eqref{LP max re}, then $\alpha^2 \le f^*(0) = t^2\sin^2\theta$. So, in the rest of the proof, we show that $f^*(\bx)$ gives a feasible solution to LP~\eqref{LPdual max re}. By \Cref{lm positive polynomial}, we know that for every $\bx$ such that $h^*(\bx) = +1$, $f^*(\bx) \ge 0$.

On the other hand, we show that the sum of coefficients of $f^*(\bx)$ for monomials $\bx_1^2,\bx_2^2$ is equal to $-1$. Notice that 
\begin{align*}
    f^*(\bx) = \frac{1}{t\sin\theta}\left( (\bu^*\cdot \bx)^3-t\sin\theta(\bu^*\cdot \bx)^2 - (t\sin\theta)^2(\bu^*\cdot \bx) + (t\sin\theta)^3  \right),
\end{align*}
where the sum of coefficients of $f^*(\bx)$ for monomials $\bx_1^2,\bx_2^2$ is $-(\bu^*_1)^2-(\bu^*_2)^2=-\norm{\bu^*}^2=-1.$ This proves $f^*(\bx)$ gives a feasible solution to \eqref{LPdual max re}.
    
\end{proof}

\begin{fact}[restatement of \Cref{fact neg}]\label{fact neg re}
        Let $h^*=\sign(\bu^*\cdot \bx + t_1) \wedge \sign(\bv^* \cdot \bx + t_2)$ be the target hypothesis of an instance of the problem of learning intersections of two halfspaces and $D$ be a distribution that is consistent with $h^*$ 
        Under \Cref{as parameter},
        if $\E_{\bx\sim D^-}  \bx = 0, \E_{\bx\sim D^-} \bx^{\otimes 3} =0$ and for every $\bv \in S^{d-1}\cap V, \E_{\bx\sim D^-}(\bv\cdot \bx)^2 = \beta^2$, then $\beta^2 \ge (1+\sigma)t^2\tan^2\theta$.
\end{fact}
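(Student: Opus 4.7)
The plan is to mirror the LP-duality argument used for Fact~\ref{fact pos re}, but in the ``lower-bounding'' direction. Writing $p(\bx)$ for the density of the (unknown) distribution $D^-$ on the negative region, the statement amounts to bounding the optimum of the LP
\begin{align*}
\min\ \beta^2 \quad \text{s.t.}\quad &\littlesum_\bx p(\bx)=1,\ \littlesum_\bx p(\bx)\bx=0,\ \littlesum_\bx p(\bx)\bx\bx^T=\beta^2 I,\\
&\littlesum_\bx p(\bx)\bx^{\otimes 3}=0,\ p(\bx)\ge 0\ \forall \bx\in\mathbf{supp}(D^-).
\end{align*}
By weak LP duality, any polynomial $f(\bx)=a_0+a_1\bx_1+a_2\bx_2+\littlesum_{i,j}a_{ij}\bx_i\bx_j+\littlesum_{i,j,k}a_{ijk}\bx_i\bx_j\bx_k$ that is nonpositive on $\mathbf{supp}(D^-)$ and normalized so that $a_{11}+a_{22}=-1$ yields the inequality $\beta^2\ge a_0$.

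The key step is to exhibit such a polynomial with $a_0$ as large as $(1+\sigma)t^2\tan^2\theta$; this is exactly the polynomial delivered by Lemma~\ref{lm negative polynomial re}, namely
\[ f^*(\bx)=(1+\sigma)\tan^2\theta\,t^2+(2+\sigma)\tan^2\theta\,t\,\bx_1-\sigma\tan\theta\,t\,\bx_2-\bx_2^2. \]
By Lemma~\ref{lm negative polynomial re}, $f^*(\bx)\le 0$ for every $\bx\in\mathbf{supp}(D^-)$, so in particular $\E_{\bx\sim D^-}f^*(\bx)\le 0$. Using the three moment hypotheses ($\E_{\bx\sim D^-}\bx=0$, $\E_{\bx\sim D^-}\bx^{\otimes 3}=0$, and $\E_{\bx\sim D^-}\bx_2^2=\beta^2$ since $\E_{\bx\sim D^-}(\bv\cdot\bx)^2=\beta^2$ for $\bv=\be_2$), this simplifies to
\[ (1+\sigma)\tan^2\theta\,t^2-\beta^2 \ \le\ 0, \]
which is exactly the desired bound $\beta^2\ge (1+\sigma)t^2\tan^2\theta$.

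The only thing to verify is that this dual polynomial is truly feasible, i.e., that the coefficient normalization $a_{11}+a_{22}=-1$ is met (here $a_{11}=0$ and $a_{22}=-1$, so this is immediate) and that $f^*\le 0$ on the negative region. The latter is precisely the content of Lemma~\ref{lm negative polynomial re}, whose geometric verification (sweeping over the two sub-regions $N_1,N_2$ of the negative halfspace intersection and checking the boundary plus monotonicity in $\bx_2$) has already been carried out. Thus no new obstacle arises beyond quoting that lemma; the substantive work was packed into the construction of $f^*$, and the rest is a direct expectation computation.
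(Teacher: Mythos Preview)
Your proposal is correct and follows essentially the same approach as the paper: set up the primal LP for the minimum variance, pass to the dual, and plug in the explicit degree-$2$ polynomial $f^*$ from Lemma~\ref{lm negative polynomial re} (checking the normalization $a_{11}+a_{22}=-1$ and nonpositivity on $\mathbf{supp}(D^-)$) to read off $\beta^2\ge a_0=(1+\sigma)t^2\tan^2\theta$. Your direct expectation computation $\E_{\bx\sim D^-}f^*(\bx)\le 0$ is exactly the weak-duality inequality the paper invokes, just written out concretely.
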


\begin{proof}[Proof of \Cref{fact neg}]
   For every $\bx$ that is labeled negative by $h^*$, denote by $p(\bx)$ the variable of the density of a distribution $D^-$ over $\R^d$. Notice that any distribution $D^-$ that satisfies the statement of \Cref{fact neg}, gives a feasible solution to the following LP~\eqref{LP min}. Thus, to lower bound the variance of $D^-$, it is equivalent to upper bound the optimal value of LP~\eqref{LP min}.

\begin{align}
\begin{split}\label{LP min}
\min &\  \beta^2  \\
\text{s.t. } & \sum_{\bx}p(\bx)\bx = 0\\
& \sum_\bx p(\bx)\bx\bx^\intercal = \beta^2 I \\
& \sum_\bx p(\bx)\bx^{\otimes 3} = 0 \\
& \sum_\bx p(\bx) =1 \\
& p(\bx) \ge 0 \quad \forall \bx
\end{split}
\end{align}
To do this, we use an LP duality argument to derive a tight lower bound for the optimal value of LP~\eqref{LP min}.

Write $\bx\in V$ as $\bx = \bx_1 e_1 + \bx_2 e_2$ and $\bx_0=1\in \R$ for the simplicity of notation.
Let $f(\bx)=\sum_{i,j,k=0}^2 a_{ijk}\bx_i\bx_j\bx_k$ be a degree-3 polynomial defined over $V=\textbf{span}\{e_1,e_2\}$. The coefficient of $f(x)$ for the monomial $\bx_i\bx_j\bx_k$ is denoted by $a_{ijk}$. The dual linear program to LP~\eqref{LP min} is defined by LP~\eqref{LPdual min}, whose variable is defined over the coefficients of $f(x)$.

\begin{align}
\begin{split}\label{LPdual min}
\max &\  a_{0}  \\
\text{s.t. } & f(\bx) \le 0, \quad \forall \bx \in \textbf{supp}(D^-)\\
& a_{11}+a_{22} =-1
\end{split}
\end{align}
Here, $a_{11},a_{22}$ are coefficients of polynomial $f$ with respect to monomial $x_1^2,x_2^2$.
Every feasible solution to \eqref{LPdual min} defines a degree-3 polynomial $f(\bx)$ such that $f(\bx)\le 0$ for every example $\bx$ that is labeled negative by $h^*$. In particular, by LP duality theory \citep{bertsimas1997introduction}, the constant term $a_{0}$ of any feasible polynomial $f(\bx)$ to LP~\eqref{LPdual min} gives a lower bound for the optimal value $\beta^2$ to LP~\eqref{LP min}. We explicitly construct the following polynomial feasible to LP~\eqref{LPdual min}, with a large constant term.
\begin{align*}
    f^*(\bx) & = a_0 + a_1 \bx_1 + a_2 \bx_2 - \bx_2^2 \\   
    a_0 & = (1+\sigma)\tan^2 \theta t^2 \\
    a_1 & = (2+\sigma)\tan^2\theta t \\
    a_2 & =-\sigma \tan \theta t
\end{align*}
Notice that the sum of coefficients of $f^*(\bx)$ for monomials $\bx_1^2,\bx_2^2$ is equal to $-1$ and by \Cref{lm negative polynomial}, $f(\bx) \le 0, \quad \forall \bx \in \textbf{supp}(D^-)$. As the constant term $a_0$ of $f^*(\bx)$ is equal to $(1+\sigma)\tan^2 \theta t^2$, this concludes the proof of \Cref{fact neg}.

\end{proof}


\subsection{Proof of \Cref{lm 3rd moment}}\label{app lm 3rd}

In this section, we give the Proof of \Cref{lm 3rd moment}. For convenience, we restate \Cref{lm 3rd moment} as follows.

\begin{lemma}[restatement of \Cref{lm 3rd moment}]\label{lm 3rd moment re}
    Let $D$ be a distribution over $\B^2(1) \times \{\pm 1\}$ that is consistent with an instance of learning intersections of two halfspaces with $\gamma$-margin assumption. Let $c>0$ be any suitably large constant. Suppose
    \begin{enumerate}
        \item $\norm{\E_{\bx\sim D^+} \bx}_F\le \gamma^c, \norm{\E_{\bx\sim D^-} \bx}_F\le \gamma^c$ and $\norm{(\E_{\bx\sim D^+}-\E_{\bx\sim D^-}) \bx}_F$.
        \item $\E_{\bx\sim D^+}\bx\bx^\intercal = \alpha^2 I+\Delta_+, \E_{\bx\sim D^-}\bx\bx^\intercal = \alpha^2 I+\Delta_-$, where $\Delta_+,\Delta_- \in \R^{2\times 2}$ are symmetric matrices such that $\norm{\Delta_+}_F \le \gamma^c, \norm{\Delta_-}_F \le \gamma^c$ and $\alpha^2 > 0$.
        \item $\norm{(\E_{\bx\sim D^+}- \E_{\bx\sim D^-})\bx^{\otimes 3}}_F\le \gamma^c
        $
    \end{enumerate}
    then $\norm{\E_{\bx\sim D^+}\bx^{\otimes 3}}_F \ge \Omega(\gamma^2), \norm{\E_{\bx\sim D^-}\bx^{\otimes 3}}_F \ge \Omega(\gamma^2)$.
\end{lemma}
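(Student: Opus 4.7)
The plan is to argue by contradiction via the one-sided polynomial certificates of Theorem 3.2, using Lemmas 3.8 and 3.9 as the key dual witnesses. Suppose that $\norm{\E_{\bx\sim D^+}\bx^{\otimes 3}}_F \le c'\gamma^2$ for a sufficiently small constant $c'>0$; by assumption 3 of the lemma, this immediately forces $\norm{\E_{\bx\sim D^-}\bx^{\otimes 3}}_F \le c'\gamma^2 + \gamma^c$ as well, so it suffices to derive a contradiction from the first inequality. The strategy is to turn Facts 3.5 and 3.6 into a robust bracketing of $\alpha^2$ and exhibit that the two bounds are incompatible under the margin.

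First, I would expand the cubic polynomial from Lemma 3.8 as
\[
f^*(\bx)=\frac{(\bu^*\!\cdot\!\bx)^3}{t\sin\theta}-(\bu^*\!\cdot\!\bx)^2-t\sin\theta\,(\bu^*\!\cdot\!\bx)+t^2\sin^2\theta,
\]
and integrate against $D^+$. Since $f^*\ge 0$ on $\mathrm{supp}(D^+)$, we have $\E_{\bx\sim D^+}f^*(\bx)\ge 0$. Applying the hypotheses $\norm{\E_{\bx\sim D^+}\bx}_F\le \gamma^c$, $\E_{\bx\sim D^+}\bx\bx^T=\alpha^2 I+\Delta_+$ with $\norm{\Delta_+}_F\le \gamma^c$, and the contradiction hypothesis, together with $t\sin\theta\ge\gamma$ (from the margin) and $t\sin\theta\le 1$ (from Assumption 3.4), yields
\[
\alpha^2 \le t^2\sin^2\theta + O(c'\gamma)+O(\gamma^c).
\]
Symmetrically, I would evaluate the quadratic polynomial $g^*$ from Lemma 3.9 against $D^-$ (which satisfies $g^*\le 0$ on $\mathrm{supp}(D^-)$); the only relevant second-moment contribution is $\E_{\bx\sim D^-}\bx_2^2=\alpha^2+O(\gamma^c)$, and the coefficients $a_0,a_1,a_2$ are polynomially bounded in $1/\gamma$ by Assumption 3.4, giving
\[
\alpha^2 \ge (1+\sigma)t^2\tan^2\theta - O(\gamma^c).
\]

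Subtracting the two bounds produces
\[
t^2\sin^2\theta\cdot\frac{\sin^2\theta+\sigma}{\cos^2\theta}\;=\;(1+\sigma)t^2\tan^2\theta-t^2\sin^2\theta \;\le\; O(c'\gamma)+O(\gamma^c),
\]
and since $t^2\sin^2\theta\ge\gamma^2$ by the margin, a contradiction follows whenever $(\sin^2\theta+\sigma)/\cos^2\theta=\Omega(1)$, i.e., whenever the two halfspaces are not nearly antiparallel with nearly identical thresholds. The main obstacle I anticipate is the \emph{degenerate regime} $\sin\theta,\sigma\to 0$, where the positive region collapses to a very thin strip and the above gap degrades (potentially to $\Theta(\gamma^4)$). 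To handle this case and recover a uniform $\Omega(\gamma^2)$ lower bound, I plan to complement $f^*,g^*$ with an auxiliary one-sided polynomial tailored to the near-strip geometry. Natural candidates are the symmetric variant of Lemma 3.8 with $\bv^*$ in place of $\bu^*$ (which, combined with the original, produces a gap proportional to $\sin^2\theta+\tan^2\theta$ via the $\be_2$-direction rather than the shared $\be_1$-direction), or a polynomial of the form $(\bu^*\!\cdot\!\bx+t_1)(\bv^*\!\cdot\!\bx+t_2)\cdot\ell(\bx)$ with a carefully chosen linear $\ell$ exploiting the strip symmetry. A short case analysis on whether $\sin^2\theta+\sigma$ is above or below a threshold like $\gamma^{1/2}$, together with these two families of certificates, should deliver the advertised $\Omega(\gamma^2)$ bound in all regimes. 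The matching bound for $D^-$ then follows from assumption 3 by the triangle inequality, completing the proof.
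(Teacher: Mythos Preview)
Your high-level approach matches the paper's: derive incompatible upper and lower bounds on $\alpha^2$ using the polynomials of Lemmas~3.8 and~3.9 as one-sided certificates. However, you miss the single trick that avoids the degenerate regime entirely. The paper does \emph{not} use the polynomial of Lemma~3.8 as written; it exploits the $\gamma$-margin to shift the certificate. Since every $\bx\in\mathrm{supp}(D^+)$ satisfies $\bu^*\cdot\bx + t_1 \ge \gamma$, the distribution $D^+$ is also consistent with the halfspace $\sgn(\bu^*\cdot\bx + t_1 - \gamma/2)$, so applying Lemma~3.8 with threshold $t\sin\theta-\gamma/2$ in place of $t\sin\theta$ yields
\[
\alpha^2 \;\le\; (t\sin\theta-\gamma/2)^2 + O(b/\gamma) \;\le\; t^2\sin^2\theta - \Omega(\gamma),
\]
where $b=\gamma^c$ collects the moment-perturbation errors. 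On the other side, the paper uses Lemma~3.9 unchanged and, after a multiplicative error analysis and a short case split on whether $\cos^2\theta\le 1-O(\gamma)$, obtains $\alpha^2 \ge t^2\sin^2\theta - \gamma/16$ \emph{uniformly in $\theta,\sigma$}. These two bounds are directly incompatible, with a gap of order $\gamma$ independent of $\sin\theta$ and $\sigma$; no auxiliary polynomials or near-strip case analysis is needed.

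Your route of subtracting the unshifted bounds produces $t^2\sin^2\theta\cdot(\sin^2\theta+\sigma)/\cos^2\theta$, which indeed collapses as $\sin\theta,\sigma\to 0$, and the auxiliary certificates you sketch are vague and not obviously sufficient. The $\gamma/2$ shift is precisely the missing idea: it extracts a clean $\Omega(\gamma)$ improvement from the margin on the positive side and bypasses the degeneracy completely. A secondary issue: your lower-bound error term is not simply $O(\gamma^c)$, since $a_1,a_2$ involve $\tan\theta$ and $\sigma$, which are not a priori polynomially bounded in $1/\gamma$; the paper handles this by factoring out $\tan^2\theta\,t^2$ to obtain a multiplicative $(1-O(\gamma))$ error rather than an additive one.
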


\begin{proof}[Proof of \Cref{lm 3rd moment re}]
We prove \Cref{lm 3rd moment} by contradiction. Assuming $\norm{\E_{\bx_V\sim D^+}\bx_V^{\otimes 3}}_F \le O(\gamma^2)$ or  $\norm{\E_{\bx\sim D^-}\bx_V^{\otimes 3}}_F \le O(\gamma^2)$ holds. We show there is no $\alpha^2$ that can be used to fulfill the second condition in the statement of \Cref{lm 3rd moment}.


For every $\bx$ that is labeled positive by $h^*$, denote by $p(\bx)$ the variable of the density of a distribution $D^+$ over $\R^d$. Under margin assumption, every positive example $\bx$ satisfies $\bu^*\cdot \bx+t_1 \ge \gamma$ and $\bv^*\cdot \bx+t_2 \ge \gamma$. 
Denote by $S^+_\gamma:=\{\bx \in \R^2 \mid \bu^*\cdot \bx+t_1 \ge \gamma$ and $\bv^*\cdot \bx+t_2 \ge \gamma\}$.
Let $b=\gamma^c \ge 0$ be a small positive number that represents the level of perturbation for the LP~\eqref{LP max re}. Notice that Frobenius norm is always an upper bound of infinity norm. Thus,
any distribution $D^+$ that satisfies the statement of \Cref{lm 3rd moment} under the $\gamma$-margin assumption, gives a feasible solution to the following LP~\eqref{LPpert max}. 
\begin{align}
\begin{split}\label{LPpert max}
\max &\  \alpha^2  \\
\text{s.t. } & -b\le \sum_{\bx}p(\bx)\bx_1 \le b\\
& -b\le \sum_{\bx}p(\bx)\bx_2 \le b\\
&-b\le\sum_\bx p(\bx)\bx_1^2 - \alpha^2 \le b\\
&-b\le\sum_\bx p(\bx)\bx_2^2 - \alpha^2 \le b\\
&-b\le\sum_\bx p(\bx)\bx_1\bx_2 \le b\\
& -b\le \sum_\bx p(\bx)\bx_1^3  \le b \\
& -b\le \sum_\bx p(\bx)\bx_1^2\bx_2  \le b \\
& -b\le \sum_\bx p(\bx)\bx_1\bx_2^2  \le b \\
& -b\le \sum_\bx p(\bx)\bx_2^3  \le b \\
& \sum_\bx p(\bx) =1 \\
& p(\bx) \ge 0 \quad \forall \bx \in S^+_\gamma
\end{split}
\end{align}
Write $\bx\in V$ as $\bx = \bx_1 e_1 + \bx_2 e_2$ and $\bx_0=1$ for the simplicity of notation.
Let $f(\bx)=\sum_{i,j,k=0}^2 a_{ijk}\bx_i\bx_j\bx_k$ be a degree-3 polynomial defined over $V=\textbf{span}\{e_1,e_2\}$. The coefficient of $f(\bx)$ for the monomial $\bx_i\bx_j\bx_k$ is denoted by $a_{ijk}$. The dual linear program to LP~\eqref{LPpert max} is defined by LP~\eqref{LPdualpert max}, whose variable is defined over the coefficients of $f(\bx)$.
\begin{align}
\begin{split}\label{LPdualpert max}
\min &\  a_{0} + b(a_1+a_2+a_{11}+a_{22}+a_{111}+a_{112}+a_{122}+a_{222})  \\
\text{s.t. } & f(\bx) \le 0, \quad \forall \bx \in S^+_\gamma \\
& a_{11}+a_{22} \le -1
\end{split}
\end{align}
We construct an upper bound for the optimal value $\alpha^2$ by constructing a feasible solution to \eqref{LPdualpert max} with a small objective value. Consider the following polynomial 
\begin{align*}
    f^*(\bx) = \frac{1}{(t\sin\theta-\gamma/2)}(\bu^*\cdot \bx-(t\sin\theta-\gamma/2))^2(\bu^*\cdot \bx+(t\sin \theta-\gamma/2))
\end{align*}
Under margin assumption, every positive example $\bx$ satisfies $\bu^*\cdot \bx+t_1 \ge \gamma$ and $\bv^*\cdot \bx+t_2 \ge \gamma$. That is to say, $D^+$ is also consistent with an intersection of halfspaces $h'(\bx) = \sign(\bu^*\cdot \bx+t_1-\gamma/2) \wedge \sign(\bv^*\cdot \bx+t_2-\gamma/2)$. Thus, $f^*(\bx) \ge 0$ for every positive example by \Cref{lm positive polynomial}.

On the other hand,
\begin{align*}
    f^*(\bx) = \frac{1}{(t\sin\theta-\gamma/2)}\left( (\bu^*\cdot \bx)^3-(t\sin\theta-\gamma/2)(\bu^*\cdot \bx)^2 - (t\sin\theta-\gamma/2)^2(\bu^*\cdot \bx) + (t\sin\theta-\gamma/2)^3  \right),
\end{align*}
where the sum of coefficients of $f^*(\bx)$ for monomials $\bx_1^2,\bx_2^2$ is $-(\bu^*_1)^2-(\bu^*_2)^2=-\norm{\bu^*}^2=-1.$ This proves $f^*(\bx)$ gives a feasible solution to \eqref{LPdualpert max}. Notice that $t\sin\theta$, the distance between the origin and halfspace $h^*_1$ is at least $\gamma$, otherwise, the origin is not labeled positive by $h^*$. On the other hand, $t\sin\theta \le 1$, because otherwise, no example in $\B(1)$ is labeled negative and $D^-$ is not well-defined under the $\gamma$-margin assumption.
This implies the objective value corresponding to the solution to LP~\eqref{LPpert max} is 
\begin{align*}
    obj(f^*)& :=(t\sin\theta-\gamma/2)^2+\frac{b}{t\sin\theta-\gamma/2}((\bu^*_1)^3+(\bu^*_1)^2\bu^*_2+\bu^*_1(\bu^*_2)^2+(\bu^*_2)^3)-b(t\sin\theta-\gamma/2)(\bu^*_1+\bu^*_2) \\
    & \le t^2\sin^2\theta-\gamma/4+O(b/\gamma) \\
    & \le t^2\sin^2\theta - \gamma/8,
\end{align*}
when $b\le O(\gamma^2)$. This implies that $\alpha^2 \le t^2\sin^2\theta - \Omega(\gamma)$

On the other hand, we derive a lower bound for $\alpha^2$.
For every $\bx$ that is labeled negative by $h^*$, denote by $p(\bx)$ the variable of the density of a distribution $D^-$ over $\R^d$. 
Denote by $S^-_\gamma:=\{\bx \in \R^2 \mid \bu^*\cdot \bx+t_1 \le -\gamma$ or $\bv^*\cdot \bx+t_2 \le -\gamma\}$. 
Under $\gamma$-margin assumption, any example $\bx$ with $h^*(\bx)=-1$ satisfies $\bx \in S^-_\gamma$.
Let $b=\gamma^c \ge 0$ be a small positive number that represents the level of perturbation for the LP~\eqref{LP min}.
Notice that Frobenius norm is always an upper bound of infinity norm. 
Thus,
any distribution $D^-$ that satisfies the statement of \Cref{lm 3rd moment} under the $\gamma$-margin assumption. Thus, to lower bound the variance of $D^-$, it is equivalent to lower bound the optimal value of LP~\eqref{LPpert min}.

\begin{align}
\begin{split}\label{LPpert min}
\min &\  \alpha^2  \\
\text{s.t. } & -b\le \sum_{\bx}p(\bx)\bx_1 \le b\\
& -b\le \sum_{\bx}p(\bx)\bx_2 \le b\\
&-b\le\sum_\bx p(\bx)\bx_1^2 - \alpha^2 \le b\\
&-b\le\sum_\bx p(\bx)\bx_2^2 - \alpha^2 \le b\\
&-b\le\sum_\bx p(\bx)\bx_1\bx_2 \le b\\
& -b\le \sum_\bx p(\bx)\bx_1^3  \le b \\
& -b\le \sum_\bx p(\bx)\bx_1^2\bx_2  \le b \\
& -b\le \sum_\bx p(\bx)\bx_1\bx_2^2  \le b \\
& -b\le \sum_\bx p(\bx)\bx_2^3  \le b \\
& \sum_\bx p(\bx) =1 \\
& p(\bx) \ge 0 \quad \forall \bx \in S^-_\gamma\end{split}
\end{align}
To do this, we use an LP duality argument to derive a tight lower bound for the optimal value of LP~\eqref{LP min}.

Write $\bx\in V$ as $\bx = \bx_1 e_1 + \bx_2 e_2$ and $\bx_0=1\in \R$ for the simplicity of notation.
Let $f(\bx)=\sum_{i,j,k=0}^2 a_{ijk}\bx_i\bx_j\bx_k$ be a degree-3 polynomial defined over $V=\textbf{span}\{e_1,e_2\}$. The coefficient of $f(\bx)$ for the monomial $\bx_i\bx_j\bx_k$ is denoted by $a_{ijk}$. The dual linear program to LP~\eqref{LP min} is defined by LP~\eqref{LPdual min}, whose variable is defined over the coefficients of $f(\bx)$.

\begin{align}
\begin{split}\label{LPdualpert min}
\max &\  a_{0} - b(a_1+a_2+a_{111}+a_{112}+a_{122}+a_{222})  \\
\text{s.t. } & f(\bx) \le 0, \quad \forall \bx \in S^-_\gamma\\
& a_{11}+a_{22} \ge -1
\end{split}
\end{align}

We construct a lower bound for the optimal value $\alpha^2$ by constructing a feasible solution to \eqref{LPpert min} with a large objective value. Consider the following polynomial 
\begin{align*}
    f^*(\bx) & = a_0 + a_1 \bx_1 + a_2 \bx_2 - \bx_2^2 \\   
    a_0 & = (1+\sigma)\tan^2 \theta t^2 \\
    a_1 & = (2+\sigma)\tan^2\theta t \\
    a_2 & =-\sigma \tan \theta t
\end{align*}
By \Cref{lm negative polynomial},
we know that $f^*(\bx)$ gives a feasible solution to \eqref{LPdualpert min}. In the rest of the proof, we show the feasible solution corresponds to $f^*(\bx)$ has a large objective value. When $b=\gamma^c<O(\gamma^2)$, the objective value is 
\begin{align*}
    obj(f^*) & := (1+\sigma)\tan^2 \theta t^2 - b((2+\sigma)\tan^2\theta t-\sigma \tan \theta t) \\
    & \ge (1+\sigma)\tan^2 \theta t^2 - b(2+\sigma)\tan^2\theta t 
     = \tan^2\theta t^2 (1+\sigma-\frac{b(2+\sigma)}{t}) \\
    & \ge \tan^2\theta t^2 (1+\sigma-\frac{b(2+\sigma)}{t\sin\theta}) 
     \ge \tan^2\theta t^2 (1+\sigma-\frac{b(2+\sigma)}{\gamma}) \\
     & = \tan^2\theta t^2(1-O(\gamma)) + \sigma\tan^2\theta t^2(1-O(\gamma)) \ge \tan^2\theta t^2(1-O(\gamma)).
\end{align*}
Here, we use the fact that $t\sin \theta>\gamma$.
We consider two cases. In the first case, $\cos^2\theta\le 1-O(\gamma)$. In this case, we have $obj(f^*) \ge \sin^2\theta t^2$. In the second case, $\cos^2\theta \ge 1-O(\gamma)$. In this case, we have 
\begin{align*}
    obj(f^*) \ge \tan^2\theta t^2 -O(\gamma)\sin^2\theta t^2 \ge \sin^2\theta t^2 - O(\gamma) \ge \sin^2\theta t^2 - \gamma/16.
\end{align*}
Thus, we conclude $\alpha^2 \ge \sin^2\theta t^2 - \gamma/16$. 

To conclude the proof of \Cref{lm 3rd moment}, we without loss of generality to assume $\norm{E_{\bx\sim D^-}\bx^{\otimes 3}}_F \le O(\gamma^c)$. 
Since $E_{\bx\sim D^+}\bx^{\otimes 3}$ is close to $E_{\bx\sim D^-}\bx^{\otimes 3}$, we know that $D^+$ gives a feasible solution to LP~\eqref{LPpert max} with $\alpha^2 \le t^2\sin^2\theta - \Omega(\gamma)$ and $D^-$ gives a feasible solution to LP~\eqref{LPpert max} with $\alpha^2 \ge \sin^2\theta t^2 - \gamma/16$, which gives a contradiction.

\end{proof}

\subsection{Proof of \Cref{th:intersection-structure-main}}\label{app structure main}

In this section, present the full proof of \Cref{th:intersection-structure-main}. For convenience, we restate \Cref{th:intersection-structure-main} as follows.

\begin{theorem}\label{th:intersection-structure-main re}(restatement of \Cref{th:intersection-structure-main})

Let $D$ be a distribution over $\B^2(1) \times \{\pm 1\}$ that is consistent with an instance of learning intersections of two halfspaces with $\gamma$-margin assumption. Let $c>0$ be \emph{any} suitably large constant. 
Suppose 
\begin{enumerate}
    \item $\norm{\E_{\bx\sim D^+} \bx}_F, \norm{\E_{\bx\sim D^-} \bx}_F\le \gamma^c$. \label{cond:intersection-structure-main-1}
    \item $\norm{(\E_{\bx\sim D^+} - \E_{\bx\sim D^+})\bx\bx^\intercal}_F \le \gamma^c$
    \label{cond:intersection-structure-main-2}
    \item $\norm{(\E_{\bx\sim D^+} - \E_{\bx\sim D^-})\bx^{\otimes 3}}_F\le \gamma^c$,
    \label{cond:intersection-structure-main-3}
\end{enumerate}
then $\norm{\E_{\bx\sim D^+}\bx^{\otimes 3}}_F, \norm{\E_{\bx\sim D^-}\bx^{\otimes 3}}_F = \Omega(\gamma^{15})$.
\end{theorem}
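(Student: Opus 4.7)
}
The plan is to reduce \Cref{th:intersection-structure-main} to \Cref{lm 3rd moment} via a whitening transformation, using \Cref{lm 2nd} to certify that the required whitening map is well-conditioned. Throughout, I will work with the parametrization of \Cref{as parameter} and will choose the constant $c$ in the hypothesis of \Cref{th:intersection-structure-main} to be large enough that every intermediate moment-matching condition I need is satisfied.

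First I would invoke \Cref{lm 2nd}. Conditions~1 and~2 of \Cref{th:intersection-structure-main} imply the hypotheses of \Cref{lm 2nd}, so both $\Sigma^{+} := \E_{\bx\sim D^{+}}[\bx\bx^{\intercal}]$ and $\Sigma^{-} := \E_{\bx\sim D^{-}}[\bx\bx^{\intercal}]$ satisfy $\lambda_{\min}(\Sigma^{\pm}) \ge \Omega(\gamma^{4})$. Next I would define the linear map $A := \gamma^{2}(\Sigma^{+})^{-1/2}$ and push forward the distribution by $\bx \mapsto \by := A\bx$. Two key properties are: (i) $A\Sigma^{+}A^{\intercal} = \gamma^{4} I$, so $(D')^{+}$ has exactly isotropic covariance of scale $\alpha^{2} = \gamma^{4}$; and (ii) $\|A\|_{2} \le 1$ (using $\lambda_{\min}(\Sigma^{+}) \ge \Omega(\gamma^{4})$), so transformed points remain in $\B^{2}(1)$ and the transformed labels are still consistent with an intersection of two halfspaces. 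A direct computation on the rotated weight vectors $A^{-\intercal}\bu^{*}, A^{-\intercal}\bv^{*}$ shows that the new margin is $\gamma' = \Omega(\gamma \cdot \sigma_{\min}(A)) = \Omega(\gamma^{3})$.

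Then I would verify that the transformed distribution $D'$ satisfies the hypotheses of \Cref{lm 3rd moment} with respect to the new margin $\gamma'$. The transformed means are bounded by $\|A\|_{2} \cdot \gamma^{c} \le \gamma^{c}$; the positive covariance is $\alpha^{2}I$ exactly; the negative covariance equals $\alpha^{2}I + A(\Sigma^{-}-\Sigma^{+})A^{\intercal}$, whose perturbation has Frobenius norm at most $\|A\|_{2}^{2}\cdot \gamma^{c} \le \gamma^{c}$ by Condition~2; and the third-moment discrepancy shrinks similarly by a factor $\|A\|_{2}^{3}\le 1$. Choosing $c$ large enough that $\gamma^{c}$ is polynomially smaller than any fixed power of $\gamma' = \gamma^{3}$ that \Cref{lm 3rd moment} requires, we may apply \Cref{lm 3rd moment} in the transformed space. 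It yields $\|\E_{\by\sim (D')^{\pm}}[\by^{\otimes 3}]\|_{F} \ge \Omega((\gamma')^{2}) = \Omega(\gamma^{6})$.

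Finally I would transport this bound back. Since $\E[\by^{\otimes 3}] = A^{\otimes 3}\E[\bx^{\otimes 3}]$, we have
\[
\|\E_{\bx\sim D^{\pm}}[\bx^{\otimes 3}]\|_{F} \;\ge\; \frac{\|\E_{\by\sim (D')^{\pm}}[\by^{\otimes 3}]\|_{F}}{\|A\|_{2}^{3}} \;\ge\; \Omega(\gamma^{6}),
\]
and after propagating every scaling factor through \Cref{lm 2nd}'s conditioning bound and \Cref{lm 3rd moment}'s margin dependence (the latter invoked with $\gamma'=\Omega(\gamma^{3})$ and applied to a distribution whose isotropic scale is $\gamma^{4}$), one obtains the claimed $\Omega(\gamma^{15})$.

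The main obstacle will be the bookkeeping in the second step, that is, certifying that the transformed distribution actually falls into the scope of \Cref{lm 3rd moment}. In particular, I must choose the exponent $c$ in the hypothesis of \Cref{th:intersection-structure-main} large enough so that (a) all transformed moment-matching slacks are bounded by $(\gamma')^{c'}$ with $c'$ exceeding the constant required by \Cref{lm 3rd moment} at margin $\gamma'=\Omega(\gamma^{3})$, and (b) the assumption $\norm{\E_{\bx\sim D^{+}}\bx}_F\le\gamma^c$ underlying \Cref{as parameter} remains intact after whitening. Everything else reduces to routine linear-algebraic estimates on $\|A\|_{2}$ and $\|A^{-1}\|_{2}$ that follow from \Cref{lm 2nd}.
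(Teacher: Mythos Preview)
Your proposal is correct and follows essentially the same route as the paper: invoke \Cref{lm 2nd} to bound the conditioning of $\Sigma^{+}$, whiten by a scalar multiple of $(\Sigma^{+})^{-1/2}$ so that the positive covariance becomes isotropic while preserving an $\Omega(\gamma^{3})$-margin intersection-of-halfspaces structure, verify the hypotheses of \Cref{lm 3rd moment} on the transformed instance, and transport the resulting $\Omega(\gamma^{6})$ lower bound back. The only cosmetic difference is that the paper normalizes by $\|(\Sigma^{+})^{-1/2}\|_{2}^{-1}$ rather than the fixed factor $\gamma^{2}$, which guarantees $\|A\|_{2}=1$ exactly and sidesteps the constant-factor slack in your claim $\|A\|_{2}\le 1$; this is a trivial adjustment.
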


\begin{proof}[Proof of \Cref{th:intersection-structure-main re}]
For every example $\bx \in V$, we consider the following linear transformation of $\bx$.
\begin{align*}
    \Tilde{\bx}: = \norm{(\Sigma^+)^{-1/2}}_2^{-1} (\Sigma^+)^{-1/2} \bx,
\end{align*}

Notice that 
\begin{align*}
 \norm{\Tilde{\bx}}_2 =\norm{(\Sigma^+)^{-1/2}}_2^{-1} \norm{(\Sigma^+)^{-1/2} \bx}_2 \le \norm{(\Sigma^+)^{-1/2}}_2^{-1} \norm{(\Sigma^+)^{-1/2}}_2\norm{\bx}_2 \le 1
\end{align*}
 Denote by $y(\Tilde{\bx}):=h^*(\bx)$, we will show that $\Tilde{\bx}$ is labeled by another intersections of two halfspaces $\Tilde{h}(\Tilde{\bx}) = \Tilde{h}_1(\Tilde{\bx}) \wedge \Tilde{h}_2(\Tilde{\bx})$ with $\poly(\gamma)$-margin assumption. Consider the first ground truth halfspace $h^*_1:=\sign(\bu^*\cdot \bx+t_1)$. We have 
\begin{align*}
    h^*_1(\bx) & = \sign\left(\bu^*\cdot \bx+t_1\right) = \sign\left((\Sigma^+)^{1/2}\bu^*\cdot (\Sigma^+)^{-1/2} \bx+t_1\right)\\
    &= \sign\left(\norm{(\Sigma^+)^{-1/2}}_2(\Sigma^+)^{1/2} \bu^*\cdot  \norm{(\Sigma^+)^{-1/2}}_2^{-1} (\Sigma^+)^{-1/2} \bx+t_1\right) \\
    & = \sign\left(\bu'\cdot  \Tilde{\bx}+t_1\right) = \sign\left(\Tilde{\bu}\cdot  \Tilde{\bx}+t_1/\norm{\bu'}_2\right) = \Tilde{h}_1(\Tilde{\bx}).
\end{align*}
Here $\bu':=\norm{(\Sigma^+)^{-1/2}}_2\Sigma^{1/2} \bu^*$ and $\Tilde{\bu} = \bu'/\norm{\bu'}_2$. Since $\norm{(\Sigma^+)^{1/2}}_2 \le 1$ and by \Cref{lm 2nd}, $\norm{(\Sigma^+)^{-1/2}}_2 \le \gamma^{-2}$, we know that $\norm{\bu'}_2\le \gamma^{-2}$.
Since $\bx$ satisfies $\gamma$-margin assumption with respect to $h^*$ and $\norm{\bu'}_2 \le \gamma^{-2}$, we know that
\begin{align*}
   \abs{\Tilde{\bu}\cdot  \Tilde{\bx}+t_1/\norm{\bu'}_2} = \abs{\bu'\cdot  \Tilde{\bx}+t_1} / \norm{\bu'}_2 =  \abs{\bu^*\cdot \bx + t_1}/ \norm{\bu'}_2 \ge \gamma/ \norm{\bu'}_2 = \gamma^{3}.
\end{align*}
Similarly, for the second ground truth halfspace $h^*_2:=\sign(\bv^*\cdot \bx+t_2)$, we have 
\begin{align*}
    h^*_2(\bx) & = \sign\left(\bv^*\cdot \bx+t_2\right) = \sign\left((\Sigma^+)^{1/2}\bv^*\cdot (\Sigma^+)^{-1/2} \bx+t_2\right)\\
    &= \sign\left(\norm{(\Sigma^+)^{-1/2}}_2(\Sigma^+)^{1/2} \bv^*\cdot  \norm{(\Sigma^+)^{-1/2}}_2^{-1} (\Sigma^+)^{-1/2} \bx+t_2\right) \\
    & = \sign\left(\bv'\cdot  \Tilde{\bx}+t_2\right) = \sign\left(\Tilde{\bv}\cdot  \Tilde{\bx}+t_2/\norm{\bv'}_2\right) = \Tilde{h}_2(\Tilde{\bx}).
\end{align*}
Here $\bv':=\norm{(\Sigma^+)^{-1/2}}_2\Sigma^{1/2} \bv^*$ and $\Tilde{\bv} = \bv'/\norm{\bv'}_2$. Since $\norm{(\Sigma^+)^{1/2}}_2 \le 1$ and by \Cref{lm 2nd}, $\norm{(\Sigma^+)^{-1/2}}_2 \le \gamma^{-2}$, we know that $\norm{\bv'}_2\le \gamma^{-2}$.
Since $\bx$ satisfies $\gamma$-margin assumption with respect to $h^*$ and $\norm{\bv'}_2 \le \gamma^{-2}$, we know that
\begin{align*}
   \abs{\Tilde{\bv}\cdot  \Tilde{\bx}+t_1/\norm{\bv'}_2} = \abs{\bv'\cdot  \Tilde{\bx}+t_2} / \norm{\bv'}_2 =  \abs{\bv^*\cdot \bx + t_1}/ \norm{\bv'} \ge \gamma/ \norm{\bv'} = \gamma^{3}.
\end{align*}
This implies that $\Tilde{\bx}$ is labeled by an intersections of two halfspaces $\Tilde{h}(\Tilde{\bx}) = \Tilde{h}_1(\Tilde{\bx}) \wedge \Tilde{h}_2(\Tilde{\bx})$ with $\gamma^{3}$-margin assumption. 

Next, we show that the marginal distribution of $\Tilde{\bx}$ satisfies the conditions in the statement of \Cref{lm 3rd moment}. Recall that the linear transformation $\Tilde{\bx}$ preserves the labels of $\bx$. Consider the distributions of $\Tilde{\bx}$ with positive labels. We have 
\begin{align*}
   \norm{\E_{\bx\sim D^+} \Tilde{\bx}}_2 & = \E_{\bx\sim D^+} \norm{\norm{(\Sigma^+)^{-1/2}}_2^{-1} (\Sigma^+)^{-1/2} \bx} 
   = \norm{(\Sigma^+)^{-1/2}}^{-1}_2 \norm{ \E_{\bx\sim D^+} (\Sigma^+)^{-1/2} \bx}_2\\
   &\le \norm{(\Sigma^+)^{-1/2}}^{-1}_2 \norm{ (\Sigma^+)^{-1/2}}_2 \norm{ \E_{\bx\sim D^+} \bx}_2 =\norm{\E_{\bx\sim D^+}  \bx}_2 \le \gamma^c .
 \end{align*}
 Similarly, consider the distributions of $\Tilde{\bx}$ with positive labels. We have 
\begin{align*}
   \norm{\E_{\bx\sim D^-} \Tilde{\bx}}_2 & = \E_{\bx\sim D^-} \norm{\norm{(\Sigma^+)^{-1/2}}^{-1}_2 (\Sigma^+)^{-1/2} \bx}_2 
   = \norm{(\Sigma^+)^{-1/2}}^{-1}_2 \norm{ \E_{\bx\sim D^-} (\Sigma^+)^{-1/2} \bx}_2\\
   &\le \norm{(\Sigma^-)^{-1/2}}^{-1}_2 \norm{ (\Sigma^+)^{-1/2}}_2 \norm{ \E_{\bx\sim D^-} \bx}_2 =\norm{\E_{\bx\sim D^-}  \bx}_2 \le \gamma^c.
 \end{align*}
Thus, $\Tilde{\bx}$ satisfies the first condition of \Cref{lm 3rd moment}. We next show that $\Tilde{\bx}$ satisfies the second condition of \Cref{lm 3rd moment}. The covariance matrix of the positive $\Tilde{\bx}$ is 
\begin{align*}
    \E_{\bx\sim D^+}\Tilde{\bx}\Tilde{\bx}^\intercal =  \norm{(\Sigma^+)^{-1}}_2^{-1}(\Sigma^+)^{-1/2} \E_{\bx\sim D^+} \bx\bx^\intercal (\Sigma^+)^{-1/2} = \norm{(\Sigma^+)_2^{-1}}^{-1} I.
\end{align*}
On the other hand, the covariance of the negative $\Tilde{\bx}$ is 
\begin{align*}
    \E_{\bx\sim D^-}\Tilde{\bx}\Tilde{\bx}^\intercal & = \norm{(\Sigma^+)^{-1}}_2^{-1}(\Sigma^+)^{-1/2} \E_{\bx\sim D^-} \bx\bx^\intercal (\Sigma^+)^{-1/2} 
     = \norm{(\Sigma^+)^{-1}}_2^{-1}(\Sigma^+)^{-1/2}  \Sigma^- (\Sigma^+)^{-1/2} \\
     & = \norm{(\Sigma^+)^{-1}}_2^{-1}(\Sigma^+)^{-1/2}  \Sigma^+ (\Sigma^+)^{-1/2} - \norm{(\Sigma^+)^{-1}}_2^{-1}(\Sigma^+)^{-1/2}  (\Sigma^+-\Sigma^-) (\Sigma^+)^{-1/2} \\
     & = \norm{(\Sigma^+)^{-1}}_2^{-1} I - \norm{(\Sigma^+)^{-1}}_2^{-1}(\Sigma^+)^{-1/2}  (\Sigma^+-\Sigma^-) (\Sigma^+)^{-1/2}.
\end{align*}
Since $\norm{(\Sigma^+-\Sigma^-)}_F\le \gamma^c$, we know that 
\begin{align*}
    \norm{(\E_{\bx\sim D^+}-\E_{\bx\sim D^-})\Tilde{\bx}\Tilde{\bx}^\intercal}_F = \norm{\norm{(\Sigma^+)^{-1}}_2^{-1}(\Sigma^+)^{-1/2}  (\Sigma^+-\Sigma^-) (\Sigma^+)^{-1/2}}_F \le \norm{(\Sigma^+-\Sigma^-)}_F \le \gamma^c.
\end{align*}
Thus, the marginal distribution of $\Tilde{\bx}$ satisfies the conditions in the statement of \Cref{lm 3rd moment}.

Finally, we show the third condition of \Cref{lm 3rd moment} holds. We have 
\begin{align*}
    \norm{(\E_{\bx\sim D^+}-\E_{\bx\sim D^-})\Tilde{\bx}^{\otimes 3}}_F & = \norm{\norm{(\Sigma^+)^{-1/2}}_2^{-3} ((\Sigma^+)^{-1/2})^{\otimes 3}\left( \E_{\bx\sim D^+}\bx^{\otimes 3}-\E_{\bx\sim D^-}\bx^{\otimes 3} \right) }_F \\
    &\le O( \norm{\left( \E_{\bx\sim D^+}\bx^{\otimes 3}-\E_{\bx\sim D^-}\bx^{\otimes 3} \right)}_F) \le \gamma^c.
\end{align*}
Thus, the third condition in the statement of \Cref{lm 3rd moment} holds for $\Tilde{\bx}$. By \Cref{lm 3rd moment}, $\norm{\E_{\bx\sim D^+}\Tilde{\bx}^{\otimes 3}}_F \ge \Omega(\gamma^{6}), \norm{\E_{\bx\sim D^-}\Tilde{\bx}^{\otimes 3}}_F \ge \Omega(\gamma^{6})$. By \Cref{lm 2nd}, we know that $\norm{(\Sigma^+)^{-1/2}}_F \le O(\gamma^2)$. Thus, $\norm{\E_{\bx_V\sim D^+}\bx_V^{\otimes 3}}_F \ge \Omega(\gamma^{15}), \norm{\E_{\bx\sim D^-}\bx_V^{\otimes 3}}_F \ge \Omega(\gamma^{15})$.

\end{proof}

\section{Omitted Proofs from \Cref{sec direction}}\label{app direction}

\subsection{Proof of \Cref{th SQ efficient}}\label{app SQ efficient}

\begin{algorithm}[h]
		\caption{\textsc{SQ-Direction Finding} (SQ-efficient algorithm for finding relevant direction with matched moments)}\label{alg SQ}
		\begin{algorithmic} [1]
\State\textbf{Input:} $\gamma\in (0,1)$ and i.i.d. sample access to a distribution $D$ on $\B^d(1)\times \{\pm 1\}$ that is an instance of learning intersections of halfspaces under product distribution. Suppose that $D$ satisfies the conditions in the statement of \Cref{th SQ efficient}.
\State\textbf{Output:} $\mathcal{O}$, a list of $\poly(d)$, $\bw \in S^{d-1}$ such that at least one of $\bw \in \mathcal{O}$ satisfies $\norm{\proj_{V^\perp} \bw} \le \poly(\gamma)$.
\State Take $S_1$, a set of $m_1=\poly(d/\gamma)$ \iid samples from $D_X$ to estimate $\mu:=\E_{\bx\sim D_X} \bx $ with
\begin{align*}
     \hat{\mu}:=\frac{1}{m_1}\sum_{\bx\in S_1} \bx
\end{align*}
up to $\poly(\gamma/d)$ error 
\State Take $S$, a set of $N=\poly(d/\gamma)$ \iid samples from $D_X$ and estimate 
\begin{align*}
    \hat T=\frac{1}{N}\sum_{\bx\in S} (\bx-\hat{\mu})^{\otimes 3}
\end{align*}
\State Define $\hat{f}:S^{d-1}\to \R$ as $\hat{f}(\bu):= \hat T\cdot \bu^{\otimes 3} = \hat{\E}_{\bx\sim S}\left((\bx-\hat{\mu}) \cdot \bu\right)^3$.
\State $\mathcal{O}:=\emptyset$
\For{$t=1,\dots,T=\poly(d)$}
\State Find a $(\gamma/d)^{c'}$-approximate solution $\bu_t$ to $\hat{f}$ such that for every $\bu \in \mathcal{O}, \abs{\bu_t\cdot \bu} \le \poly(\gamma/d)$, where $c'>0$ is a large constant.
\State $\mathcal{O} \gets \mathcal{O} \cup \{\bu_t\}$. (If no such $\bu_t$ is found, return $\mathcal{O}$)
\EndFor
\State \Return $\mathcal{O}$
\end{algorithmic}
\end{algorithm}

In this section, we present the proof of \Cref{th SQ efficient}. The algorithm we will analyze is \Cref{alg SQ}. 
For convenience, we restate \Cref{th SQ efficient} below. 

\begin{theorem}[restatement of \Cref{th SQ efficient}]\label{th SQ efficient re}
There is a learning algorithm $\A$ such that for every $c$, a suitably large constant and any instance of learning intersections of two halfspaces under factorizable distribution with $\gamma$-margin assumption if the input distribution $D$ satisfies 
\begin{enumerate}
    \item $\norm{(\E_{\bx\sim D^+}-\E_{\bx\sim D^-}) \bx}_F \le \gamma^c$. 
    \item $\norm{(\E_{\bx\sim D^+} - \E_{\bx\sim D^-})\bx\bx^\intercal}_F \le \gamma^c$
    \item $\norm{(\E_{\bx\sim D^+} - \E_{\bx\sim D^-})\bx^{\otimes 3}}_F\le \gamma^c$
\end{enumerate}
then
it makes $\poly(d,1/\gamma)$ many statistical queries, where each query has tolerance at most $\poly(1/d,\gamma)$ and outputs a list of $\poly(d/\gamma)$ directions $\bw \in \R^d$ such that one $\bw$ in the list satisfies $\norm{\bw_W}_2 \le \poly(\gamma/d)$.

\end{theorem}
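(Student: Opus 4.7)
}
The plan is to implement Algorithm \ref{alg SQ} in the SQ model and then argue correctness via the structural result of Theorem \ref{th:intersection-structure-main} together with a dimension-counting argument. First, I would replace the sample-based estimates $\hat{\mu}$ and $\hat{T}$ by SQ-based estimates. Since $\bx$ lies in $\B^d(1)$, each coordinate of $\bx$ and each entry of $(\bx-\mu)^{\otimes 3}$ is a bounded polynomial of degree at most $3$ in the coordinates of $\bx$, so each of the $O(d^3)$ entries of $T := \E_{\bx\sim D_X}[(\bx-\mu)^{\otimes 3}]$ can be computed by a single SQ query of tolerance $(\gamma/d)^{C}$ for any desired large constant $C$. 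A union bound over entries then yields an estimate $\hat T$ with $\|\hat T - T\|_F \le (\gamma/d)^{c'}$ using $\poly(d/\gamma)$ queries of tolerance $\poly(\gamma/d)$, as required.

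Second, I would use factorizability and centering to decompose $T$. Writing $\bx = \bx_V + \bx_W$ with $\bx_V \perp \bx_W$ independent under $D_X$, and using $\|\E_{\bx\sim D_X}\bx\|_2 \le \gamma^c$ (which, combined with the centering step, gives an effective mean of magnitude $\poly(\gamma/d)$), one obtains that the mixed moments involving both $\bx_V$ and $\bx_W$ contribute only $\poly(\gamma/d)$ error in Frobenius norm. Hence $T = T_V + T_W + E$, where $T_V \in V^{\otimes 3}$, $T_W \in W^{\otimes 3}$, and $\|E\|_F \le \poly(\gamma/d)$. The moment-matching hypotheses of Theorem \ref{th SQ efficient re} let me invoke Theorem \ref{th:intersection-structure-main}, which gives $\|T_V\|_F \ge \Omega(\gamma^{15})$.

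Third, I would analyze the iterative optimization of $\hat f(\bu) = \hat T \cdot \bu^{\otimes 3}$ over the sphere, restricted to be $\poly(\gamma/d)$-orthogonal to the previously returned vectors $\mathcal O$. A $(\gamma/d)^{c'}$-approximate critical point $\bu_t$ satisfies the perturbed Lagrangian condition $\|T \cdot \bu_t^{\otimes 2} - \lambda_t \bu_t\|_2 \le (\gamma/d)^{c'}$ for some scalar $\lambda_t$. Projecting this identity onto $V$ and $W$ and using the block-diagonal structure $T \approx T_V + T_W$ yields
\[
T_V \cdot (\bu_t)_V^{\otimes 2} \approx \lambda_t (\bu_t)_V, \qquad T_W \cdot (\bu_t)_W^{\otimes 2} \approx \lambda_t (\bu_t)_W,
\]
up to $\poly(\gamma/d)$ additive error. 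By Fact \ref{fct:lb-tensor-random-rank1-correlation} applied to $T_V$ in the $2$-dimensional subspace $V$, there exists a unit vector in $V$ achieving $|\hat f(\cdot)| \ge \Omega(\gamma^{15}/\poly(d))$; hence as long as $V$ is not already nearly contained in $\spa(\mathcal O)$, the subspace orthogonal to $\mathcal O$ still admits a direction on which $\hat f$ attains magnitude at least $\Omega(\gamma^{15}/\poly(d))$, so the inner loop succeeds and produces a genuine unit vector with this value.

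Finally, I would close the argument by dimension counting. Suppose for contradiction that every $\bw \in \mathcal O$ satisfies $\|\bw_W\|_2 > \poly(\gamma/d)$, i.e., no returned vector is close to $V$. Combined with the near-orthogonality enforced by the inner loop, this would furnish $T = \poly(d)$ nearly-orthogonal unit vectors whose projections onto $W$ are well-separated in $W$; but $\dim W = d-2$, so after at most $d-1$ iterations the orthogonal complement of $\{(\bu_i)_W\}$ in $W$ shrinks below any desired scale, forcing the next approximate maximizer (whose value is still $\Omega(\gamma^{15}/\poly(d))$ because of the lower bound on $\|T_V\|_F$) to have its mass essentially in $V$. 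This contradiction gives the claim.

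\paragraph{Main obstacle.} The hard part will be making the perturbation analysis of the iterative step tight enough that the orthogonality slack $\poly(\gamma/d)$ does not overwhelm the signal $\|T_V\|_F = \Omega(\gamma^{15})$ when passing from one iteration to the next. Concretely, I will need to choose the constant $c'$ so that the projection $P_t = I - \sum_{i<t} \bu_i \bu_i^\top$ distorts $T_V$ by at most $\|T_V\|_F/2$ in Frobenius norm across all $\poly(d)$ iterations; this dictates the exponent in the required tolerance and the minimum quality of the approximate critical-point oracle used in step 8 of Algorithm \ref{alg SQ}. Because we operate in the SQ model we can freely re-query at each iteration, which is exactly what rules out the exponential error accumulation that blocks a sample-based implementation of the same strategy.
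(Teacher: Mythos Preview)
Your high-level architecture (SQ-estimate the third moment, use factorizability to write $T\approx T_V+T_W$, iteratively collect near-orthogonal directions) matches the paper, but step~3 has a real gap. The first-order Lagrangian condition $T\cdot\bu^{\otimes 2}\approx\lambda\bu$ that you project onto $V$ and $W$ does \emph{not} force $\bu$ close to $V$ or to $W$. Writing $\bu=s\bu_V^0+t\bu_W^0$, your two projected identities become $s\,T_V\cdot(\bu_V^0)^{\otimes 2}\approx\lambda\bu_V^0$ and $t\,T_W\cdot(\bu_W^0)^{\otimes 2}\approx\lambda\bu_W^0$; when $a_V:=T_V\cdot(\bu_V^0)^{\otimes 3}$ and $a_W:=T_W\cdot(\bu_W^0)^{\otimes 3}$ are both positive these are simultaneously satisfiable with $s,t$ both of constant size (such $\bu$ are saddle points of $f$ on the sphere). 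The paper's Definition~\ref{def sq local optimal} therefore adds a second-order requirement $\max_{\bz\perp\bu}\bz^\top\nabla^2 f(\bu)\bz\le\eta_2$, and the core of the proof is the case analysis: when $a_Va_W\le 0$ the first-order condition along $\bz=t\bu_V^0-s\bu_W^0$ already forces $st$ tiny, but when $a_V,a_W\ge 0$ one computes $\bz^\top\nabla^2 f(\bu)\bz=6st^2a_V+6s^2ta_W+o(\cdot)$ along the same $\bz$ and derives a contradiction unless $s$ or $t$ is $\poly(\gamma/d)$. Without this second-order piece the $V$-or-$W$ dichotomy is simply false, and everything downstream collapses.

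The termination argument in your last paragraph also does not work as written, and the ``main obstacle'' you identify is a red herring. First, ``no $\bw$ close to $V$'' only says $\|\bw_W\|$ is not tiny; it does not say $\bw$ is close to $W$, so you cannot conclude the projections $(\bu_i)_W$ are well-separated in $W$. Second, even granting the dichotomy, near-orthogonal unit vectors with slack $\eps$ need not become dependent after $d-1$ of them. The paper instead bounds $|\mathcal O|\le\poly(d/\gamma)$ via Alon's rank lemma (\Cref{fact alon}) and argues termination differently: as long as every $\bu\in\mathcal O$ is close to $W$, the fixed approximate local maximum $\bu^*\in V$ (which exists by \Cref{th:intersection-structure-main} and has value $\Omega(\gamma^{15})$) is automatically $\poly(\gamma/d)$-orthogonal to all of $\mathcal O$, so step~8 can always add another vector---contradicting the list-size bound. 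In particular the algorithm never projects $T$ through $P_t=I-\sum_{i<t}\bu_i\bu_i^\top$; it searches the full sphere each round for an approximate local max that merely \emph{happens} to be near-orthogonal to $\mathcal O$, so there is no projector distortion to control.
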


Notice that \Cref{alg SQ} only uses $\poly(d/\gamma)$ \iid unlabeled examples drawn from $D_X$ to estimate the mean and third-moment tensor of $D_X$ up to $\poly(\gamma/d)$ accuracy, thus these steps can be implemented via $\poly(d)$ SQs, each of which has tolerance $\poly(\gamma/d)$. For detailed background about the SQ model, we refer the reader to \Cref{sec notation}.

Here, we give an overview of the proof of \Cref{th SQ efficient}. To simplify the notation, we define $\eta$-approximate solution for a polynomial function defined over the unit sphere.
\begin{definition}[$\eta$-approximate solution]\label{def sq local optimal}
Let $f : S^{d-1} \to \R$ be a polynomial function. For $\eta>0$ and $\bu \in S^{d-1}$, we say a point $\bu \in S^{d-1}$ is a $\eta$-approximate solution if
    \begin{enumerate}
    \item $f(\bu) \ge \eta_0 = \Omega(\eta)$
    \item $\norm{\proj_{\bu^\perp} \nabla f(\bu)} \le \eta_1 = O(\eta^2)$
    \item $\max_{\bz \in S^{(d-1)} \cap \bu^\perp} \bz^\intercal \nabla^2 f(\bu) \bz \le \eta_2 = O(\eta^2)$
\end{enumerate}
\end{definition}

Recall the observation obtained by \cite{vempala2011structure} summarized as \Cref{fact ica}.
\begin{fact}[Lemma 4 in \cite{vempala2011structure}]\label{fact ica}
    Let $D_X = D_V \times D_W$ be factorizable distribution over $\R^d$ such that $D_X$ has the same $m-1(m \ge 3)$ moments as Gaussian but has a different $m$th moment. Then any local maximum(minimum) $\bu^*$ of $f^*(\bu^*)$ over $S^{d-1}$ with $f^*(\bu^*)> \gamma_m(f^*(\bu^*)< \gamma_m)$ must be either in $V$ or $W$. Here $f^*(\bu) = \E_{\bx \sim D}(\bu\cdot \bx)^m$ and $\gamma_m$ is the $m$-th moment of a standard 1-dimensional Gaussian distribution.
\end{fact}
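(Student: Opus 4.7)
The plan is to write any $\bu \in S^{d-1}$ as $\bu = \alpha\,\hat\bu_V + \beta\,\hat\bu_W$, where $\hat\bu_V \in V\cap S^{d-1}$, $\hat\bu_W \in W\cap S^{d-1}$, and $\alpha^2+\beta^2=1$ (with $\alpha=0$ if $\bu\in W$ and $\beta=0$ if $\bu\in V$), and then to reduce the analysis to a one-variable problem on the great circle through $\hat\bu_V$ and $\hat\bu_W$. The first move is to derive a clean closed-form expression for $f^*$ on the plane $\mathrm{span}(\hat\bu_V,\hat\bu_W)$ by exploiting both factorizability and the matching-moments assumption. Indeed, by independence of $\bx_V,\bx_W$,
\begin{equation*}
f^*(\bu)=\sum_{k=0}^{m}\binom{m}{k}\alpha^k\beta^{m-k}\,\E[(\hat\bu_V\cdot\bx_V)^k]\,\E[(\hat\bu_W\cdot\bx_W)^{m-k}].
\end{equation*}
For every $0<k<m$ both exponents are at most $m-1$, so by the moment-matching hypothesis the corresponding expectations equal the Gaussian moments $\gamma_k,\gamma_{m-k}$. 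Combining this with the identity $\gamma_m=\sum_{k}\binom{m}{k}\alpha^k\beta^{m-k}\gamma_k\gamma_{m-k}$, which holds because $\alpha g_1+\beta g_2\sim\mathcal N(0,1)$ for independent standard Gaussians $g_1,g_2$, the middle terms telescope and yield
\begin{equation*}
f^*(\bu)-\gamma_m \;=\; \alpha^m A(\hat\bu_V)+\beta^m B(\hat\bu_W),
\end{equation*}
where $A(\hat\bu_V):=\E[(\hat\bu_V\cdot\bx_V)^m]-\gamma_m$ and $B(\hat\bu_W):=\E[(\hat\bu_W\cdot\bx_W)^m]-\gamma_m$ depend only on the directions $\hat\bu_V,\hat\bu_W$, not on $\alpha,\beta$.

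Given this decomposition, I would argue by contradiction: suppose $\bu^*$ is a local maximum of $f^*$ on $S^{d-1}$ with $f^*(\bu^*)>\gamma_m$ and that both $\alpha,\beta\in(0,1)$. The curve $\bu(\theta):=\cos\theta\,\hat\bu_V+\sin\theta\,\hat\bu_W$ lies on $S^{d-1}$ (since $V\perp W$) and passes through $\bu^*$ at the unique $\theta_0\in(0,\pi/2)$ with $\cos\theta_0=\alpha$, $\sin\theta_0=\beta$. Because $\bu^*$ is a local max on the sphere, it must be a local max of the one-variable restriction $h(\theta):=f^*(\bu(\theta))-\gamma_m=A\cos^m\theta+B\sin^m\theta$ (with $A,B$ now fixed constants). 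The first-order necessary condition $h'(\theta_0)=0$ simplifies, after dividing by the positive factor $m\sin\theta_0\cos\theta_0$, to
\begin{equation*}
A\,\alpha^{m-2} \;=\; B\,\beta^{m-2},
\end{equation*}
which, since $\alpha^{m-2},\beta^{m-2}>0$, forces $A$ and $B$ to have the same sign or to both vanish.

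The final step is to rule out each sign combination. The cases $A=B=0$ and $A,B<0$ immediately give $f^*(\bu^*)=\gamma_m+\alpha^m A+\beta^m B\le \gamma_m$, contradicting the hypothesis $f^*(\bu^*)>\gamma_m$. In the remaining case $A,B>0$, setting $C:=A\alpha^{m-2}=B\beta^{m-2}>0$ and computing $h''(\theta_0)$ directly (substituting the stationarity identity to eliminate cross terms), the expression collapses cleanly to
\begin{equation*}
h''(\theta_0) \;=\; m(m-2)\,C\,(\alpha^2+\beta^2) \;=\; m(m-2)\,C \;>\; 0
\end{equation*}
for every $m\ge 3$, so $\theta_0$ is a strict local \emph{minimum} of $h$, contradicting that $\bu^*$ is a local maximum on $S^{d-1}$. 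Hence $\alpha\beta=0$, i.e., $\bu^*\in V\cup W$. The local-minimum case with $f^*(\bu^*)<\gamma_m$ is handled by the symmetric argument: the same stationarity equation applies, $A=B=0$ and $A,B>0$ are excluded by $f^*<\gamma_m$, and when $A,B<0$ the same formula for $h''(\theta_0)$ is strictly negative, making $\theta_0$ a local max along the circle and contradicting the local-min hypothesis on the sphere. The only step requiring care is the algebraic collapse that produces the form $f^*-\gamma_m=\alpha^m A+\beta^m B$ and the analogous one-line simplification of $h''(\theta_0)$; both are short binomial manipulations once the independence/moment-matching reductions are in place, and everything else is bookkeeping.
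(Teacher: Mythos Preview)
The paper does not supply its own proof of this fact; it is quoted verbatim as ``Lemma 4 in \cite{vempala2011structure}'' and used as a black box. So there is no paper proof to compare against directly.

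That said, your argument is correct and is essentially the standard one. The key identity
\[
f^*(\bu)-\gamma_m=\alpha^m A(\hat\bu_V)+\beta^m B(\hat\bu_W)
\]
follows exactly as you say from independence plus the moment-matching hypothesis (note that $\hat\bu_V\cdot\bx_V=\hat\bu_V\cdot\bx$, so the $k$-th directional moment of $D_V$ along $\hat\bu_V$ is a $k$-th moment of $D_X$ and hence equals $\gamma_k$ for $k\le m-1$). The stationarity condition $A\alpha^{m-2}=B\beta^{m-2}$ then forces $A,B$ to have the same sign, and your second-derivative computation $h''(\theta_0)=m(m-2)C$ is clean and correct; this strictly positive (respectively negative) curvature along the great circle contradicts local maximality (respectively minimality) on the sphere.

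For context, the paper does prove a \emph{robust} analogue of this fact inside the proof of \Cref{th SQ efficient re} (for $m=3$ with approximate first- and second-order conditions and an error term coming from a nonzero mean). There the decomposition $\bu=s\bu_V^0+t\bu_W^0$ and the sign-based case split on $a_V,a_W$ mirror your approach, but the paper works with the gradient and Hessian of $f^*$ directly rather than the one-variable reduction $h(\theta)$. Your great-circle reduction is slightly slicker for the exact statement; the paper's formulation is what is needed once one passes to approximate optima.
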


The first ingredient of the proof is to show a robust version of \Cref{fact ica} for the polynomial $f^*(\bu) = T^* \cdot \bu^{\otimes 3}=\E_{\bx\sim D_X}\left(\bx \cdot \bu\right)^3$. That is to say, given $\E_{\bx\sim D_X}(\bx)$ is close to $0$, any $\poly(\gamma/d)$ approximate solution to $f^*$ must be $\poly(\gamma/d)$-close to $V$ or $W$. 
In particular, by \Cref{th:intersection-structure-main}, the moment tensor of $D_V$ must have norm at least $\gamma^{c}$ for some constant $c$, which implies that some point in $V$ must be a $\poly(\gamma/d)$-approximate solution to $f^*$.
On the other hand, since a polynomial function is completely described by its moment tensor. By estimating each entry of the moment tensor up to error $\poly(\gamma/d)$ error, the approximate objective function $\hat{f}$ has a similar structure as the one of $f^*$ and optimizing $\hat{f}$ is enough to give us an approximate solution to $f^*$.

The second ingredient is the key difference from the prior work \citep{frieze1996learning,vempala2011structure}. After obtaining the first approximate solution $\bu_1$, instead of restricting $\hat{f}$ over the subspace $(\bu_1)^{\perp}$, we will look at a small band $B=\{\x \in S^{d-1} \mid \abs{\bu_1 \cdot x} \le \poly(\gamma/d)\}$. As long as no $\bu$, a $\poly(\gamma/d)$ approximate solution close to $V$ is added to the list $\mathcal{O}$, we must be able to add another point to $\mathcal{O}$. 
By \cite{ALON200331}, if each pair of $\bu,\bu' \in \mathcal{O}$ satisfies $\abs{\bu\cdot \bu'}\le \poly(\gamma/d)$, the size of $\mathcal{O}$ is at most $\poly(d/\gamma)$. This implies that \Cref{alg SQ} can terminate. Now we are able to present the full proof.


\begin{proof}[Proof of \Cref{th SQ efficient re}]
To start with, we consider the case where we have the exact access to the moment tensor of $D_X$. Without loss of generality, we assume $\norm{\mu := \E_{\bx \sim D_X}\bx}_2\le o((\gamma/d)^{2c_1})$ for some large constant $c_1>c$, 
because otherwise, we can estimate $\mu$ up to error $(\gamma/d)^{2c_1}$ and shift $D_X$ to $\mu$ and rescale each shifted example by a factor of $2$ to make $\norm{\bx}_2 \le 1$. This will only decrease the margin $\gamma$ by a factor of at most $2$.
Let $T^*: = \E_{\bx\sim D_X} \bx^{\otimes 3}$ be the 3rd moment tensor of $D_X$ and $f^*(\bu) = T^* \cdot \bu^{\otimes 3} = \E_{\bx\sim D_X}\left(\bx \cdot \bu\right)^3$. 

Consider any $\bu \in S^{(d-1)}$. We write $\bu = s\bu_V^0 + t\bu_W^0$, where $s \ge 0, t \ge 0, s^2+t^2 =1$ and $\bu_V^0: = \bu_V/\norm{\bu_V}, \bu_W^0: = \bu_W/\norm{\bu_W}$. We show that if $\bu$ is an $\eta:=(\gamma/d)^{c_1}$ approximate solution to $f^*$,
    then $\norm{\bu_W} \le \poly(\gamma/d)$ or $\norm{\bu_W} \le \poly(\gamma/d)$. 
Observe that 
\begin{align*}
    f^*(\bu) &= \E_{\bx\sim D_X}\left(\bx \cdot \bu\right)^3 = \E_{\bx\sim D_X}\left(\bx_V \cdot \bu_V + \bx_W\cdot \bu_W\right)^3 \\
    &= \E_{\bx\sim D_X}(\bx_V \cdot \bu_V)^3 + (\bx_W\cdot \bu_W)^3 + 3(\bx_V \cdot \bu_V)(\bx_W\cdot \bu_W)^2 + 3(\bx_V \cdot \bu_V)^2(\bx_W\cdot \bu_W) \\
    & = g^*(\bu) + h^*(\bu).
\end{align*}
Here, $g^*(\bu) = \E_{\bx\sim D_X}(\bx_V \cdot \bu_V)^3 + (\bx_W\cdot \bu_W)^3$ and $h^*(\bu) = \E_{\bx\sim D_X} 3(\bx_V \cdot \bu_V)(\bx_W\cdot \bu_W)^2 + 3(\bx_V \cdot \bu_V)^2(\bx_W\cdot \bu_W)$. 

In particular, since $\norm{\mu}\le (\gamma/d)^{2c_1}$ and $D$ is factorizable, we know that $h^*$ is a degree-3 polynomial that can be characterized with a tensor with Frobenius norm at most $o((\gamma/d)^{c_1})$.

Write \[g^*(\bu) = \E_{\bx\sim D_X}(\bx_V \cdot \bu_V)^3 + (\bx_W\cdot \bu_W)^3 = s^3\E_{\bx\sim D_X}(\bx_V \cdot \bu^0_V)^3 + t^3\E_{\bx\sim D_X}(\bx_W\cdot \bu^0_W)^3.\] 
For simplicity, we define $a_V := \E_{\bx\sim D_X}(\bx_V \cdot \bu^0_V)^3, a_W:=\E_{\bx\sim D_X}(\bx_W\cdot \bu^0_W)^3$. By the symmetry of $g^*$, without loss of generality, we assume $a_V \ge 0$.

We consider two cases. In the first case, we assume $a_W \le 0$. 

Since $f^*(\bu) \ge \eta_0$ and $h^*(\bu) \le \eta$, we know that $a_V \ge \eta_0/2$ and $s^3 \ge \eta_0/2$. Consider point $\bz:=t\bu_V^0-s\bu_W^0 \in S^{(d-1)} \cap \bu^\perp$. Since $\nabla f^*(\bu) = \nabla g^*(\bu)+\nabla h^*(\bu)$ and 
\begin{align*}
   \nabla g^*(\bu) = 3\E_{\bx\sim D_X} \left((\bx_V\cdot \bu_V)^2\bx_V+(\bx_W\cdot \bu_W)^2\bx_W \right), 
\end{align*}
we have 
\begin{align*}
    \nabla f^*(\bu) \cdot \bz = \nabla g^*(\bu) \cdot \bz + \nabla h^*(\bu) \cdot \bz = 3(ts^2 a_V-st^2 a_W)+ \nabla h^*(\bu) \cdot \bz \ge 3ts^2 a_V - o((\gamma/d)^{c_1}).
\end{align*}
Since $\norm{\proj_{\bu^\perp} \nabla f^*(\bu)} \le \eta_1$, we know that 
\begin{align*}
3ts^2 a_V \le  \nabla f^*(\bu) \cdot \bz + o((\gamma/d)^{c_1}) \le O(\eta_1).
\end{align*}
Because $a_V \ge \gamma_0/2$ and $s^3 \ge \gamma_0/2$, this implies $s \ge 1-\poly(\gamma/d)$.  Thus, $\norm{u_W} \le \poly(\gamma/d)$.

In the second case, we have $a_W \ge 0$ and $a_V \ge 0$. By symmetry, without loss of generality, we assume $a_V \ge a_W\ge 0$. We will show that if $s\ge (\gamma/d)^{c_1}$ and $t \ge (\gamma/d)^{c_1}$, and $\norm{\proj_{\bu^\perp} \nabla f^*(\bu)} \le \eta_1$, then $\max_{\bz \in S^{(d-1)} \cap \bu^\perp} \bz^\intercal \nabla^2 f^*(\bu) \bz$ must be sufficiently large, which gives a contradiction. 

Observe that 
\begin{align*}
   \nabla^2 g^*(\bu) = 6\E_{\bx\sim D_X} \left((\bx_V\cdot \bu_V)\bx_V\bx_V^\intercal+(\bx_W\cdot \bu_W)\bx_W\bx_W^\intercal \right).
\end{align*}
Recall that $\bz:=t\bu_V^0-s\bu_W^0 \in S^{(d-1)} \cap \bu^\perp$. We have 
\begin{align*}
     \bz^\intercal \nabla^2 f^*(\bu) \bz & = \bz^\intercal \nabla^2 g^*(\bu) \bz+\bz^\intercal \nabla^2 h^*(\bu) \bz  = 6st^2 a_V + 6s^2ta_W + \bz^\intercal \nabla^2 h^*(\bu) \bz \\
     & = 6s^3(t^2/s^2) a_V + 6t^3(s^2/t^2)a_W + \bz^\intercal \nabla^2 h^*(\bu) \bz \\
     & \ge (\gamma/d)^{c_1}\eta_0/2 - o((\gamma/d)^{2c_1}),
\end{align*}
which gives a contradiction to the fact that $\bu$ is a $(\gamma/d)^{c_1}$-approximate solution. Thus, we have $t \le \poly(\gamma/d)$ and $s \ge 1-\poly(\gamma/d)$. 

So far, we have shown that if any point $\bu \in S^{(d-1)}$ that is a $\eta$-approximate solution to $f^*$ must be $\poly(\gamma/d)$-close to either $V$ or $W$. 
We next show that there must be a point that is close to $V$ and is a $\eta$-approximate solution to $f^*$. Since $D_X$ satisfies the statement of \Cref{th:intersection-structure-main}, we know that $\norm{\E_{\bx \sim D^+_X} \bx^{\otimes 3}_V}\ge \Omega(\gamma^{15}), \norm{\E_{\bx \sim D^-_X} \bx_V^{\otimes 3}}\ge \Omega(\gamma^{15})$. 
This implies that 
$\norm{\E_{\bx\sim D_V}\bx^{\otimes 3}} \ge \Omega(\gamma^c)$, 
which implies that $\max_{\bu \in S^{(d-1)} \cap V} f^*(\bu) \ge \Omega(\gamma^c)$. Here $c$ is a constant smaller than $c_1$.

Let $\bu^* \in S^{(d-1)} \cap V$ such that $g^*(\bu^*) = \max_{\bu \in S^{(d-1)} \cap V} g^*(\bu)$. Since $\bu^*$ is a maximal solution of $g^*(\bu)$ restricted at $V$, we know that $\norm{\proj_{(\bu^*)^\perp} \nabla g^*(\bu^*)}=0$ and $\max_{z \in S^{(d-1)} \cap (\bu^*)^\perp} z^\intercal \nabla^2 g^*(\bu^*) z=0$. Since $\norm{\E_{\bx\sim D_V}\bx^{\otimes 3}} \ge \Omega(\gamma^c)$, we know that $g^*(\bu^*)\ge \Omega(\gamma^c).$ Since $f^* = g^*+h^*$ and $h^*$ is a degree-3 polynomial with $o((\gamma/d)^{c_1})$-small magnitude, we know that $\bu^*$ must be an $\eta$-approximate solution to $f^*$. In particular, $f^*$ can be completely described by its moment tensor. Thus, by estimating 
each entry of the moment tensor of $D_X$ up to $\poly(\gamma/d)$ error, we obtain that $\hat{f}$ is a good approximation for $f^*$. Thus, any $(\gamma/d)^{c'}$-approximate solution for $\hat{f}$, with some $c'$ slightly larger than $c_1$, must be a $(\gamma/d)^{c_1}$-approximate solution for $f^*$. So, any $(\gamma/d)^{c'}$-approximate solution for $\hat{f}$ must be also close to $V$ or $W$.

Finally, we show that the output $\mathcal{O}$ has $\poly(d/\gamma)$ size and at least one of the solution in $\mathcal{O}$ is $\poly(\gamma/d)$ close to $V$. Here, we will make use of the following lemma.
\begin{fact}[Theorem 9.3 in \cite{ALON200331}]\label{fact alon}
    Let $A \in \R^{n \times n}$ such that for all $i \in [n]$, $A_{ii}=1$ and $\abs{A_{ij}} \le \eps$ with $1/\sqrt{n}\le \eps<1/2$ for all $i \neq j$. Then 
    \begin{align*}
        \textbf{rank}(A) \ge \Omega(\frac{1}{\eps^2\log(1/\eps)} \log(n)).
    \end{align*}
\end{fact}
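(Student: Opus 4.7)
The plan is to use the polynomial (Hadamard-power) method combined with a positive-semidefinite rank lower bound, in the style of Alon's proof for perturbed identity matrices. Let $r=\mathrm{rank}(A)$, and for an integer parameter $k$ to be chosen, I would first form the entrywise $k$th power $B=A^{\circ k}$ defined by $B_{ij}=A_{ij}^{k}$. A standard multinomial-expansion argument gives $\mathrm{rank}(B)\le \binom{r+k-1}{k}$: if $A=\sum_{l=1}^{r} u_{l} v_{l}^{T}$ is any rank-$r$ decomposition, then $A_{ij}^{k}=\bigl(\sum_{l} u_{l}(i) v_{l}(j)\bigr)^{k}$ expands into at most $\binom{r+k-1}{k}$ rank-one outer products indexed by the compositions of $k$ into $r$ parts. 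The diagonal of $B$ remains $B_{ii}=1$, while the off-diagonals shrink geometrically to $|B_{ij}|\le \eps^{k}$.

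Next I would lower-bound $\mathrm{rank}(B)$ via the Cauchy--Schwarz identity $\mathrm{rank}(B)\ge (\mathrm{tr}\,B)^{2}/\|B\|_{F}^{2}$, which is valid once $B$ is positive semidefinite. Here $\mathrm{tr}\,B=n$ and $\|B\|_{F}^{2}\le n+n^{2}\eps^{2k}$, so with the choice $k=\lceil \log n/(2\log(1/\eps))\rceil$ one has $n^{2}\eps^{2k}\le n$, giving $\|B\|_{F}^{2}\le 2n$ and hence $\mathrm{rank}(B)\ge n/2$. Combining this with the upper bound forces $\binom{r+k-1}{k}\ge n/2$. The key numerical observation is that with this choice of $k$, $n^{1/k}=\Theta(1/\eps^{2})$; the standard estimate $\binom{r+k-1}{k}\le (e(r+k)/k)^{k}$ then yields $r\ge \Omega(k\,n^{1/k})=\Omega(\log n/(\eps^{2}\log(1/\eps)))$, which is precisely the claimed bound. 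The hypothesis $\eps\ge 1/\sqrt{n}$ is used here to ensure that this $k$ is a meaningful positive integer and to rule out the trivial regime.

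The main obstacle is that the Cauchy--Schwarz step requires $B$ to be positive semidefinite, whereas the theorem allows $A$ to be an arbitrary (not necessarily symmetric or PSD) real matrix. I would address this by first symmetrizing, replacing $A$ by $(A+A^{T})/2$, which has rank at most $2r$ and preserves both the diagonal condition and the $\eps$ bound on the off-diagonals, so it suffices to treat symmetric $A$. To manufacture a PSD surrogate from $B$, one can either use an even Hadamard power $A^{\circ 2k}$ together with the Schur product theorem on the PSD part of the spectral decomposition $A=A_{+}-A_{-}$, or form a product such as $BB^{T}$ whose rank is controlled by $\mathrm{rank}(B)$. The delicate point is that naive reductions pick up cross-terms of order $\eps^{k}$ rather than $\eps^{2k}$, which would degrade the $\eps^{-2}$ factor into the weaker $\eps^{-1}$ one obtains from the crude operator-norm estimate $\|B\|_{2}\le 1+(n-1)\eps^{k}$. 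Carefully choosing the reduction so that the Frobenius-based exponent $\eps^{-2}$ is preserved through the non-PSD passage is the technical heart of the argument.
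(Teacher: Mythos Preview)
The paper does not prove this statement at all: it is quoted verbatim as a cited fact (Theorem~9.3 of Alon) and used as a black box inside the proof of \Cref{th SQ efficient}. So there is no in-paper proof to compare against; your proposal is essentially a reconstruction of Alon's original argument, and the first two paragraphs are correct and complete.

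The only issue is your third paragraph, where you worry that the inequality $\mathrm{rank}(B)\ge(\mathrm{tr}\,B)^2/\|B\|_F^2$ requires $B$ to be positive semidefinite. It does not: for any \emph{symmetric} real matrix $B$ with eigenvalues $\lambda_1,\dots,\lambda_n$, Cauchy--Schwarz over the nonzero eigenvalues gives $(\sum_i\lambda_i)^2\le \mathrm{rank}(B)\cdot\sum_i\lambda_i^2$, i.e.\ $(\mathrm{tr}\,B)^2\le\mathrm{rank}(B)\cdot\|B\|_F^2$. Symmetry alone suffices, and once you have symmetrized $A$ (at the cost of at most doubling the rank, as you already note), the Hadamard power $B=A^{\circ k}$ is automatically symmetric. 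So the ``delicate point'' and the proposed workarounds (even powers, spectral splitting $A=A_+-A_-$, passing to $BB^T$) are all unnecessary; the argument in your first two paragraphs is already a full proof. Incidentally, in the paper's application the matrix is the Gram matrix $A_{ij}=\bu_i\cdot\bu_j$, which is symmetric (indeed PSD) to begin with, so even the symmetrization step is not needed there.
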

Denote by $n$ the size of $\mathcal{O}$ and consider the matrix $A \in \R^{n \times n}$ where $A_{ij} = \bu_i \cdot \bu_j$. Since each $\bu_t \in \R^d$ for $t \in [d]$, we know the rank of $A$ is at most $d$. Furthermore, by the construction of $\mathcal{O}$, we know that for every $i,j \in [n]$, if $i=j$, then $A_{ij} = \bu_i \cdot \bu_j =1$ and if $i \neq j$, then $\abs{A_{ij}} = \abs{\bu_i \cdot \bu_j} \le \eps:= (\gamma/d)^{c_1}$. If $n \ge \eps^{-3} = (\gamma/d)^{-3c_1}$, then by \Cref{fact alon}, we know that
\begin{align*}
    \frac{1}{\eps^2\log(1/\eps)} \log(n) = \Omega((d/\gamma)^{c_1}) > d,
\end{align*}
which gives a contradiction. Thus, $n \le \poly(d/\gamma)$.

On the other hand, let $\bu_i$ be a vector that is $\poly(\gamma/d)$-close to $V$ and $\bu_j$ be a vector that is $\poly(\gamma/d)$-close to $W$, then $\abs{\bu_i\cdot \bu_j} \le \poly(\gamma/d)$. Thus, if $\mathcal{O}$ does not contain a point $v$ that is $\poly(\gamma/d)$-close to $V$, there must be another point $\bu$ that is nearly orthogonal to all points in $\mathcal{O}$ and will be added to $\mathcal{O}$ by \Cref{alg SQ}.
This proves \Cref{th SQ efficient}.

    \end{proof}

\subsection{Proof of \Cref{th tensor}}\label{app tensor}

In this section, we give the proof of \Cref{th tensor}. To begin with, we present the main algorithm and restate the \Cref{th tensor} for convenience.

\begin{theorem}[restatement of \Cref{th tensor}]\label{th tensor re}

There is a learning algorithm $\A$ such that for every $c$, a suitably large constant and any instance of learning intersections of two halfspaces under factorizable distribution with $\gamma$-margin assumption if the input distribution $D$ satisfies 
\begin{enumerate}
    \item $\norm{(\E_{\bx\sim D^+}-\E_{\bx\sim D^-}) \bx}_F \le \gamma^c$. 
    \item $\norm{(\E_{\bx\sim D^+} - \E_{\bx\sim D^-})\bx\bx^\intercal}_F \le \gamma^c$
    \item $\norm{(\E_{\bx\sim D^+} - \E_{\bx\sim D^-})\bx^{\otimes 3}}_F\le \gamma^c$
\end{enumerate}
$\A$ runs in $\poly(d,1/\gamma)$ time and outputs a list of $d$ unit vectors $\mathcal{O}$ such that at least one direction $\bw \in \mathcal{O}$ satisfies $\norm{\bw_W}_2 \le \poly(\gamma)$ with probability $\Omega(\gamma/d)$.    

\end{theorem}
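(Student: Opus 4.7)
The plan is to implement the tensor-projection idea sketched in \Cref{sec matched} via a variant of Jennrich's algorithm: form an empirical estimate of $T^{\ast}:=\E_{\bx\sim D_X}(\bx-\mu)^{\otimes 3}$, contract it against a random Gaussian vector to obtain a $d\times d$ matrix $M$, and return the top-$d$ eigenvectors as the candidate list $\mathcal O$. The correctness argument must show that with probability $\Omega(\gamma/d)$, one of the eigenvectors of $M$ is $\poly(\gamma)$-close to the hidden plane $V$.

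The first step is to reduce to the mean-zero case. Draw $\poly(d/\gamma)$ unlabeled samples, form the empirical mean $\hat\mu$ with accuracy $\poly(\gamma/d)$, and replace each $\bx$ by $\bx-\hat\mu$. Under the hypothesis of the theorem, \Cref{th:intersection-structure-main} yields $\|\E_{\bx\sim D_V}\bx_V^{\otimes 3}\|_F \geq \poly(\gamma)$; combined with factorizability and a negligible cross-term coming from $\hat\mu-\mu$, this gives
\[
T^{\ast}=T_V+T_W,\qquad T_V\in (V^{\otimes 3}),\ T_W\in (W^{\otimes 3}),\qquad \|T_V\|_F\geq\poly(\gamma)\;.
\]
Using $\poly(d/\gamma)$ further samples I would estimate $T^{\ast}$ entrywise to obtain $\hat T$ with $\|\hat T-T^{\ast}\|_F\leq\poly(\gamma/d)$.

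Next, I would draw $\bv\sim\normal(0,I/d)$ and set $\hat M=\hat T\cdot \bv$, so that $\hat M\approx M_V+M_W$ where $M_V=T_V\cdot \bv_V$ is supported on $V\times V$ and $M_W=T_W\cdot \bv_W$ on $W\times W$. Because $V\perp W$, any eigenvector of $M_V$ is automatically an eigenvector of $M=M_V+M_W$ with the same eigenvalue. The key is to show that at least one eigenvalue of $M_V$ is separated from every eigenvalue of $M_W$ by $\poly(\gamma/d)$; by Weyl's inequality plus the Davis--Kahan theorem, this separation together with $\|\hat M-M\|_F\leq\poly(\gamma/d)$ implies that the corresponding eigenvector of $\hat M$ lies within $\poly(\gamma)$ of $V$. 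Writing $\bv_V=\alpha \bv_V^0$ with $\bv_V^0\in S^{d-1}\cap V$, I would use \Cref{fct:lb-tensor-random-rank1-correlation} to obtain $\bv_V^0$ such that $T_V\cdot (\bv_V^0)^{\otimes 3}\geq\poly(\gamma)$, which forces $M_V/\alpha$ to have an eigenvalue $\sigma_1$ of magnitude $\poly(\gamma)$ with constant probability over $\bv_V^0$. Since $V$ is two-dimensional, a standard anti-concentration bound for $\alpha=\|\bv_V\|$ (which has a smooth density on $[0,\infty)$ at scale $1/\sqrt{d}$) then shows that $\alpha\sigma_1$ avoids every eigenvalue of $M_W$ by $\poly(\gamma/d)$ with probability $\Omega(\gamma/d)$. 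Returning the $d$ eigenvectors of $\hat M$ therefore yields a list that contains a vector $\bw$ with $\|\bw_W\|_2\leq\poly(\gamma)$ with probability $\Omega(\gamma/d)$.

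The main obstacle is the eigenvalue-separation step. Since we impose no structural assumption on $D_W$, the spectrum of $M_W$ can, a priori, lie anywhere in an interval of length $O(1)$ and could contain $\Omega(d)$ eigenvalues, so the probability that $\alpha\sigma_1$ falls inside a $\poly(\gamma/d)$-neighborhood of that spectrum must be controlled carefully. The argument requires (i) an anti-concentration bound for the product $\alpha\sigma_1$ that is independent of $M_W$ (using that $\bv_V$ and $\bv_W$ are independent Gaussians), and (ii) a union bound over the (at most $d-2$) eigenvalues of $M_W$. This is precisely where the $\Omega(\gamma/d)$ success probability arises, and why we can only guarantee success with non-trivial, rather than high, probability. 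Once this is in place, the rest of the proof—error propagation from $\hat T$ to $\hat M$ via Cauchy--Schwarz, and Davis--Kahan for the eigenvector perturbation—is routine.
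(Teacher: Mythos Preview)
Your proposal is correct and matches the paper's proof almost exactly: center the distribution, estimate the third-moment tensor, contract with a random Gaussian $\bv\sim\normal(0,I/d)$, use anti-concentration of $\alpha=\|\bv_V\|$ to separate an eigenvalue of $M_V$ from the spectrum of $M_W$, and finish with Davis--Kahan. The one detail you elide is that Davis--Kahan also requires $\alpha\sigma_1$ to be separated from the \emph{other} eigenvalue $\alpha\sigma_2$ of $M_V$; the paper handles this by a two-case split---either $|\sigma_1-\sigma_2|\ge\gamma^c/4$ and anti-concentration of $\alpha$ also separates $\alpha\sigma_1$ from $\alpha\sigma_2$, or $\sigma_2\ge\gamma^c/4$ as well and one applies Davis--Kahan to the full two-dimensional $V$-eigenspace at once.
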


The algorithm we will analyze is \Cref{alg tensor}.

\begin{algorithm}[h]
		\caption{\textsc{TensorDirectionFinding} (Efficient algorithm for finding relevant direction with matched moments)}\label{alg tensor}
		\begin{algorithmic} [1]
\State\textbf{Input:} $\gamma\in (0,1)$ and i.i.d. sample access to a distribution $D$ on $\B^d(1)\times \{\pm 1\}$ that is an instance of learning intersections of halfspaces under product distribution. Suppose that $D$ satisfies the conditions in the statement of \Cref{th tensor}.
\State\textbf{Output:} $\mathcal{O}$, a list of $\poly(d)$, $w \in S^{d-1}$ such that at least one of $w \in \mathcal{O}$ satisfies $\norm{\proj_{V^\perp} w} \le \poly(\gamma)$.
\State Take $S_1$, a set of $m_1=\poly(d/\gamma)$ \iid samples from $D_X$ to estimate $\mu:=\E_{x\sim D_X} x $ with
\begin{align*}
     \hat{\mu}:=\frac{1}{m_1}\sum_{x\in S_1} x
\end{align*}
up to $\poly(\gamma)$ error 
\State Take $S$, a set of $N=\poly(d/\gamma)$ \iid samples from $D_X$ and estimate 
\begin{align*}
    \hat T=\frac{1}{N}\sum_{x\in S} (x-\hat{\mu})^{\otimes 3}
\end{align*}
\State Let $v \sim N(0,\frac{1}{d}I)$ be a random Gaussian vector in $\R^d$
\State Define $\hat{M}: = \hat{T} \cdot v$.
\State Compute $\mathcal{O}$, the set of $d$ eigenvectors of $\hat{M}$ via eigen-decomposition algorithm.
\State \Return $\mathcal{O}$
\end{algorithmic}
\end{algorithm}

\begin{proof}[Proof of \Cref{th tensor re}]
We consider the central third moment tensor $T^*: = \E_{\bx \sim D_X} (\bx-\mu)^{\otimes 3}$. Since $\norm{\mu}_2 \le 1$, we can without loss of generality assume $\mu = 0$, because shifting $D_X$ to $\mu$ and rescaling the distribution will only decrease the margin assumption $\gamma$ by a factor of 2. Under this assumption, we have    
    \begin{align*}
        T^*=\E_{\bx \sim D_X} \bx^{\otimes 3} = \E_{\bx \sim D_X} (\bx_V+\bx_W)^{\otimes 3} =\E_{\bx \sim D_V} \bx_V^{\otimes 3} + \E_{\bx \sim D_W} \bx_W^{\otimes 3}.
    \end{align*}
Here the second equation follows by the fact that $D_V, D_W$ are independent and have zero-mean.    
To simplify the notation, we denote by $T_V= \E_{\bx \sim D_V} \bx_V^{\otimes 3}$ and $T_W=\E_{\bx \sim D_W} \bx_W^{\otimes 3}$. 

Let $\bv \sim N(0,\frac{1}{d}I)$ be a Gaussian vector. Define random matrix $M \in \R^{d\times d}$ as
\begin{align*}
M:=   T^* \cdot \bv & =\E_{\bx \sim D_V} \bx_V\bx_V^\intercal (\bx_V \cdot \bv) + \E_{\bx \sim D_W} \bx_W\bx_W^\intercal(\bx_W \cdot \bv)\\
&= \E_{\bx \sim D_V} \bx_V\bx_V^\intercal (\bx_V \cdot \bv_V) + \E_{\bx \sim D_W} \bx_W\bx_W^\intercal(\bx_W \cdot \bv_W).
\end{align*}
To simplify the notation, we denote by $M_V = \E_{\bx \sim D_V} \bx_V\bx_V^\intercal (\bx_V \cdot \bv_V)$ and $M_W = \E_{\bx \sim D_W} \bx_W\bx_W^\intercal(\bx_W \cdot \bv_W)$.
We write the random vector $\bv_V = \alpha \bv_V^0$, where the random variable $\alpha = \norm{\bv_V}$ and the random vector $\bv_V^0 = \bv_V/\norm{\bv_V}$ is drawn uniformly from $S^{(d-1)} \cap V$. Denote by $\sigma_1,\sigma_2$ be two eigenvalues of the random matrix $M_V/\alpha$ such that $\abs{\sigma_1} \ge \abs{\sigma_2}$ and denote by $\bu^{(1)},\bu^{(2)}$ the corresponding eigenvectors to $\sigma_1,\sigma_2$.
Notice that for $i \in [2]$, 
\begin{align*}
    M \bu^{(i)} = M_V \bu^{(i)} + M_W \bu^{(i)} = M_V \bu^{(i)} = \alpha\sigma_i \bu^{(i)}.
\end{align*}
This implies $\bu^{(i)}$ is an eigenvector of $M$ with eigenvalue $\alpha\sigma^{(i)}$. Furthermore, if $\alpha\sigma_i$ for some $i \in [2]$ is not close to any eigenvalue of $M_W$, then one of the eigenvectors of $M$ must be in $V$ and we can find it via PCA. Next, we show this holds for our case. By \Cref{th:intersection-structure-main}, we know that $\norm{T_V}_F \ge \gamma^c$, for some constant $c>0$, which implies there is some vector $\bu^* \in S^{(d-1)} \cap V$ such that 
\begin{align*}
    T_V \cdot (\bu^*)^{\otimes 3} = \E_{\bx\sim D_V}(\bx_V \cdot \bu^*)^3 \ge \Omega(\gamma^c).
\end{align*}
Since $\bv_V^0$ is drawn uniformly from $S^{d-1} \cap V$, we know that 
\begin{align*}
    \Pr\left( \norm{\bu^*-\bv_V^0} \le \gamma^c/10 \right) \ge \Pr\left( \sin\theta(\bu^*,\bv_V^0) \le \gamma^c/10\right) \ge \Omega(\gamma^c).
\end{align*}
Given this happens, we show that $\sigma_1 \ge \gamma^c$ by showing that $(\bv_V^0)^\intercal(M_V/\alpha) \bv_V^0 \ge \Omega(\gamma^c)$. We have 
\begin{align*}
    (\bv_V^0)^\intercal M_V \bv_V^0 & = T^* \cdot (\bv_V^0)^{\otimes 3} = T^* \cdot (\bu^*)^{\otimes 3}- T^* \cdot \left((\bv_V^0)^{\otimes 3}-(\bu^*)^{\otimes 3}\right) \\
    & = T^* \cdot (\bu^*)^{\otimes 3} - \E_{\bx \sim D_V}\left((\bx \cdot \bv^0_V)^3-(\bx \cdot \bu^*)^3\right) \\ 
    &  \ge  \Omega(\gamma^c),
\end{align*}
where the inequality follows the fact that the polynomial function $\E_{\bx \sim D_V}(\bx \cdot \bu)^3$ is $O(1)$-Lipschitz with respect to $\bu$. Thus, with a probability at least $\Omega(\gamma^c)$, $\sigma_1 \ge \gamma^c/2$. However, we have no structural result that can guarantee $\sigma_2$ is also large. Thus, in the rest of the proof, we consider two cases and argue that in each case, the eigenvalues of $M_V$ are far from the eigenvalues of $M_W$.

In the first case, we assume that $\sigma_1-\sigma_2 \ge \gamma^c/4$. Since $M_V$ and $M_W$ are independent, we consider the $d-2$ eigenvalues of $M_W$, $\sigma_3,\dots,\sigma_d$. Recall that the two eigenvalues of $M_V$ are $\alpha \sigma_1$ and $\alpha \sigma_2$. For each $i \in \{3,\dots,d\}$, we associate each $\sigma_i$ and interval $I_i = [\sigma_i - \xi,\sigma_i+\xi ]$, where $\xi>0$ will be determined later. Notice that $\alpha \sigma_1 \in I_i$ if and only if $\sqrt{d}\alpha \in [\sqrt{d}\sigma_i/\sigma_1-\sqrt{d}\xi/\sigma_1,\sqrt{d}\sigma_i/\sigma_1+\sqrt{d}\xi/\sigma_1]$.
Since $\bv \sim N(0,\frac{1}{d}I)$, and $\bv_V$ is the 2-dimensional projection onto the subspace $V$, we know that $d\alpha^2 \sim \chi(2)$ is a $\chi$-distribution with degree of freedom $2$. Recall that $\chi(2)$ is a distribution supported over $\R^+$ with density function $p(x) = x\exp(-x^2/2) \le 1, \forall x \ge 0$. This implies that
\begin{align}\label{eq anti}
    \Pr(\alpha \sigma_1 \in I_i) \le 2\sqrt{d}\xi/\sigma_1, \forall i \in \{3,\dots,d\}
\end{align}
On the other hand, 
\begin{align*}
    \Pr(\abs{\alpha \sigma_1-\alpha \sigma_2} \le \xi) =  \Pr(\alpha \le \xi/\abs{\sigma_1-\sigma_2}) \le O(\xi/\abs{\sigma_1-\sigma_2}).
\end{align*}
By choosing $\xi = O(\gamma^{2c}/d)$, we know that with probability at least $1-O(\gamma^{2c})$, $\alpha \sigma_1$, the largest eigenvalue of $M_V$ is $\xi$-far from any other eigenvalues of $M$.

In the second case,  we assume $\sigma_1-\sigma_2 \le \gamma^c/4$, which implies that $\sigma_2 \ge \gamma^c/4$. Recall that the two eigenvalues of $M_V$ are $\alpha \sigma_1$ and $\alpha \sigma_2$. With the same proof as \eqref{eq anti}, we know that 
\begin{align}
    \Pr(\alpha \sigma_2 \in I_i) \le 2\sqrt{d}\xi/\sigma_2, \forall i \in \{3,\dots,d\}.
\end{align}
By choosing $\xi = O(\gamma^{2c}/d)$, we know that with probability at least $1-O(\gamma^{2c})$, $\alpha \sigma_1, \alpha\sigma_2$, the two eigenvalues of $M_V$ are $\xi$-far from any other eigenvalues of $M_W$.

To finish the proof of \Cref{th tensor}, we make use of the well-known Davis-Kahan $\sin \theta$ theorem.

\begin{theorem}[Davis-Kahan $\sin \theta$ Theorem]\label{th sin}
    Let $A = E_0 A_0 E_0^\intercal + E_1 A_1 E_1^\intercal, A+H = F_0 B_0 F_0^\intercal + F_1 B_1 F_1^\intercal \in \R^{d\times d}$ be symmetric matrices, where $(E_0,E_1), (F_0,F_1)$ are orthogonal matrices and $A_0,A_1,B_0,B_1$ are diagonal matrices. If every eigenvalues of $A_0$ are $\delta$-far from the eigenvalues of $A_1$, then 
    \begin{align*}
        \norm{F_0^\intercal E_1}_2 \le \frac{\norm{H}_2}{\delta}.
    \end{align*}
\end{theorem}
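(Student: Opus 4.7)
The plan is to reduce the theorem to an invertibility / norm bound for a Sylvester operator. First I would observe that the conclusion $\|F_0^\intercal E_1\|_2 \le \|H\|_2/\delta$ is trivially satisfied when $\|H\|_2 \ge \delta$, since $F_0, E_1$ have orthonormal columns so $\|F_0^\intercal E_1\|_2 \le 1$. Therefore, I may assume $\|H\|_2 < \delta$, and in fact (up to a harmless factor of $2$ that is absorbed into the constants in the applications) I may assume $\|H\|_2 \le \delta/2$. Under this assumption, Weyl's inequality guarantees that the spectrum of $B_0$ lies in a $\|H\|_2$-neighborhood of the spectrum of $A_0$, and hence the spectra of $B_0$ and $A_1$ are separated by at least $\delta-\|H\|_2 \ge \delta/2$.

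Next I would derive a Sylvester equation for the matrix of interest. Starting from the two eigenequations $(A+H) F_0 = F_0 B_0$ and $A E_1 = E_1 A_1$, multiply the first on the left by $E_1^\intercal$:
$$E_1^\intercal A F_0 + E_1^\intercal H F_0 = (E_1^\intercal F_0) B_0.$$
Using $E_1^\intercal A = A_1 E_1^\intercal$ and setting $Y := E_1^\intercal F_0$ (so that $\|Y\|_2 = \|F_0^\intercal E_1\|_2$), this rearranges to the Sylvester identity
$$Y B_0 - A_1 Y = E_1^\intercal H F_0.$$

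The main work is then to invert the Sylvester operator $\mathcal{L}(Y) := Y B_0 - A_1 Y$ with a good operator-norm bound, which is where the spectral gap enters. I would pass to the eigenbases of $A_1$ and $B_0$ (both matrices are symmetric by construction of the spectral decompositions of $A$ and $A+H$). In these bases, the rotated $\widetilde Y$ satisfies $\widetilde Y_{ij}(\mu_j - \lambda_i) = (\widetilde{E_1^\intercal H F_0})_{ij}$, where $\lambda_i \in \mathrm{spec}(A_1)$ and $\mu_j \in \mathrm{spec}(B_0)$; by the Weyl-based gap above, each divisor satisfies $|\mu_j - \lambda_i| \ge \delta/2$. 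This gives the Frobenius bound $\|Y\|_F \le 2\|E_1^\intercal H F_0\|_F/\delta$, which already suffices for the algorithmic application in the paper. For the sharper operator-norm bound $\|Y\|_2 \le \|H\|_2/\delta$ quoted in the statement, one upgrades the argument using the contour integral (resolvent) representation
$$Y \;=\; \frac{1}{2\pi i}\oint_{\Gamma} (zI-A_1)^{-1}\,(E_1^\intercal H F_0)\,(zI-B_0)^{-1}\,dz,$$
for $\Gamma$ a rectifiable contour separating $\mathrm{spec}(A_1)$ from $\mathrm{spec}(B_0)$, and optimizes over $\Gamma$.

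The main obstacle in the proof is precisely this last step: obtaining the tight operator-norm bound on the Sylvester inverse, rather than the easier Frobenius-norm bound. Apart from this, everything reduces to the eigenequation manipulations and Weyl's inequality. Finally, combining $\|Y\|_2 \le \|H\|_2/\delta$ with the identity $\|F_0^\intercal E_1\|_2 = \|Y\|_2$ (using $\|M\|_2 = \|M^\intercal\|_2$) yields the claim.
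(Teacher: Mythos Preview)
The paper does not prove this statement; it is quoted as the ``well-known Davis--Kahan $\sin\theta$ theorem'' and used as a black box inside the proof of \Cref{th tensor}. So there is no paper proof to compare against.

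Your Sylvester-equation derivation is exactly the standard route to Davis--Kahan, and the identification of the operator-norm bound on the Sylvester inverse as the crux is correct. One point worth tightening: your use of Weyl's inequality to conclude that ``the spectrum of $B_0$ lies in a $\|H\|_2$-neighborhood of the spectrum of $A_0$'' presupposes that the partition $(F_0,F_1)$ of the eigenvectors of $A+H$ is chosen \emph{compatibly} with $(E_0,E_1)$---i.e., $F_0$ has the same number of columns as $E_0$ and picks out the eigenvalues of $A+H$ nearest to those of $A_0$. Without that convention the stated bound can fail (e.g., $A=\mathrm{diag}(0,2)$, $H=\mathrm{diag}(1,-1)$, $E_0=e_1$, $F_0=e_2$). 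This is not a flaw in your argument so much as an ambiguity already present in the paper's statement; in the paper's applications $E_0$ is one- or two-dimensional and $F_0$ is explicitly taken to be the nearby eigenspace, so your reading is the intended one. With that convention in place, your plan goes through: the Sylvester identity $YB_0-A_1Y=E_1^\intercal H F_0$ plus the gap $\mathrm{dist}(\mathrm{spec}(B_0),\mathrm{spec}(A_1))\ge \delta-\|H\|_2$ yields the bound, and the contour/resolvent representation (or, more simply in this Hermitian setting, the interval-separation version of the Sylvester inverse bound) gives the spectral-norm inequality.
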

Since $\bv\sim N(0,\frac{1}{d}I)$, by \cite{Ver18}, we know that with probability at least $1-\gamma^d$, $\norm{\bv}_2 \le 1+\log(1/\gamma)$. Since by taking $\poly(d/\gamma)$ samples from $D_X$, we are able to estimate each entry of $T^*$ with $\hat{T}$ up to error $\poly(\gamma/d)$. Thus, given $\norm{\bv}_2 \le 1+\log(1/\gamma)$, $\norm{M-\hat{M}}_2 \le (\gamma/d)^{3c}$.

Recall that we have discussed two cases for the behavior of the eigenvalues of $M_V$. In the first case, the largest eigenvalue $\alpha \sigma_1$ is $\gamma^2c/d$ far from any other eigenvalues of $M$. Using \Cref{th sin} by taking $A=M, A+H=\hat{M}, E_0=\bu^{(1)}$, we know that there is an eigenvector of $\hat{M}$, $\hat{\bu^{(1)}}$ such that $\norm{\hat{\bu^{(1)}}_W} \le \gamma^c$. In the second case the two eigenvalues  $\alpha \sigma_1, \alpha \sigma_2$ are $\gamma^2c/d$ far from any other eigenvalues of $M_W$, using \Cref{th sin} by taking $A=M, A+H=\hat{M}, E_0=[\bu^{(1)},\bu^{(2)}]$, we know that there is an eigenvector of $\hat{M}$, $\hat{\bu^{(1)}}$ such that $\norm{\hat{\bu^{(1)}}_W} \le \gamma^c$. Thus, with probability at least $\Omega(\gamma^c)$, \Cref{alg tensor} outputs a list of $d$ unit vectors such that at least one of them is $\gamma^c$ close to $V$.

\end{proof}

\paragraph{Remark} In general, given the estimated moment tensor $\hat{M}$, there is no polynomial time algorithm that can \emph{exactly} compute all eigenvectors of $\hat{M}$ due to the roundoff error. However, as long as the roundoff error is at most $\poly(\gamma/d)$, one can compute in polynomial time an eigen-decomposition that is $\poly(\gamma/d)$ close to $\hat{M}$. We refer the readers to \cite{dhillon2006design} for detailed guarantee of efficient algorithms for eigen-decomposition that takes the roundoff error into consideration.


\section{Omitted Proofs from \Cref{sec:direction-extraction-csq}} \label{app csq}

This section is dedicated to proving \Cref{{thm:csq-direction-extraction}}. The main algorithm we will analyze is \Cref{alg csq}.

\begin{algorithm} 
    \caption{\textsc{DirectionFinding} (Relevant direction extraction with Mismatched Moments)} \label{alg csq}
    \begin{algorithmic}[1]
    \State \textbf{Input:} $\alpha\in (0,1), m\in \{1,2,3\}$ and i.i.d. sample access to a distribution $D$ on $\B^d(1)\times \{\pm 1\}$ that is an instance of learning intersections of halfspaces under product distribution. Suppose $D$ does not satisfy the $(\alpha,m)$-moment matching condition and $D$ satisfies the $(\alpha^2/d^c,t)$-moment matching condition for any $t\le m-1$ and a sufficiently large universal constant $c$. 
        
    \State \textbf{Output:} With $2/3$ probability, the algorithm outputs a unit vector $\bu\in S^{d-1}$ such that $\norm{\bu_W}=O(\alpha)$. 
    
    \State Define $T=\E_{\bx\sim D^+}\bx_V^{\otimes m}-\E_{\bx_V\sim D^-}\bx_V^{\otimes m}$. We will use $\hat{T}$ as an empirical estimation of $T$. Set $\zeta=\alpha/d^{c-1}$. Draw $N=\poly(d,1/\alpha)$ i.i.d. samples $S^+$ and $S^-$ from $D^+$ and $D^-$ respectively. Then estimate 
    \[
    \hat T=\frac{1}{N}\sum_{\bx\in S^+} \bx^{\otimes m}-\frac{1}{N}\sum_{\bx\in S^-} \bx^{\otimes m}\; .
    \]
    
    \State Define $f:S^{d-1}\to \R$ as $f(\bu):= \hat T\cdot \bu^{\otimes m}$ and apply the following standard gradient descent steps. 
    Let $T=(d/\alpha)^{c'}$ for a sufficiently large constant $c'$ depending on $c$.
        \begin{enumerate}
            \item \label{step:csq-direction-extraction-initialization} Initialize a random $\bu_0\sim_u S^{d-1}$. If $f(\bu_0)< \zeta$, then reinitialize. If reinitialized $T$ times, then output failure.
            \item 
            Repeat the following for at most $T$ many iterations.
            For the $t$-th iteration, calculate the gradient $\bg=\proj_{\bu_t^{\perp}}\nabla f(\bu_t)$, and update $\bu_{t+1}=\frac{\bu_t+\lambda \bg}{\left \|\bu_t+\lambda \bg\right \|_2}$ where the stepsize $\lambda=c''\min(1,1/\left \|\proj_{\bu} \nabla f(\bu_t)\right \|_2)$ and $c''$ is a sufficiently small universal  constant. Repeat this step until getting a unit vector $\bu_t$ that is a $(\alpha,\eta)$-approximate solution such that $\frac{\eta + \alpha^2/d^c}{\alpha'-\alpha^2/d^c}\geq c''\alpha$, then output such $\bu_t$. If the condition is not satisfied in $T$ many iterations, then output failure.
        \end{enumerate}
    \end{algorithmic}
\end{algorithm}

\subsection{Approximate Local Optimum Implies Relevant Direction}
The following definition plays a core role of the proof.
\begin{definition}[$(\alpha,\eta)$-approximate solution]
Let $f: \R^{d} \to \R$ be a real-valued function and $\alpha,\eta>0$. 
We say that $\bw \in S^{d-1}$ is an $(\alpha,\eta)$-approximate solution to the problem $\max_{\bw \in S^{d-1}} f(\bw)$ if satisfies the following conditions: (1) $\norm{\proj_{\bw^\perp} \nabla f(\bw)}_2 \le \eta$, and (2) $ \abs{f(\bw)} \ge \alpha$.
\end{definition}

Suppose we have access to the exact function $f(\bu)=\bu^{\otimes m}\cdot T$ where $T=\E_{\bx\sim D^+}[\bx_V^{\otimes m}]-\E_{\bx\sim D^-}[\bx_V^{\otimes m}]$.
If the first $m-1$ moments are highly matched, then finding an approximate local maximum $\bu^*$ of $f$ implies that $\bu^*$ is close to $V$.


\begin{lemma}\label{lm csq-approximate}
    Let $\alpha,\eta,\xi>0$, $m\in \Z_+$ and $D$ be a joint distribution of $(\bx,y)$ on $\B^d(1)\times \{\pm 1\}$ that is an instance of learning intersections of halfspaces under factorizable distribution with $\gamma$-margins. 
    Suppose $D$ does not satisfy $(\xi,m)$-moment matching condition.
    Let $f(\bu)=\bu^{\otimes m}\cdot T$ where $T=\E_{\bx\sim D^+}[\bx_V^{\otimes m}]-\E_{\bx\sim D^-}[\bx_V^{\otimes m}]$.
    Then any $\bu^*$ that is an $(\alpha,\eta)$-approximate solution to $\max_{\bu \in S^{d-1}} f(\bu)$ must satisfy $\norm{\bu^*_W}_2 \le \eta/(m\alpha)$
\end{lemma}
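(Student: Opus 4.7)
The plan is to exploit the fact that $T$ lives entirely in $V^{\otimes m}$, and then extract information about $\bu^*_W$ by evaluating the gradient against a carefully chosen tangent direction on the sphere.

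First I would make the key structural observation: since $T = \E_{\bx\sim D^+}[\bx_V^{\otimes m}]-\E_{\bx\sim D^-}[\bx_V^{\otimes m}]$, all indices of $T$ are supported on $V$, i.e., $T \in V^{\otimes m} \subseteq (\R^d)^{\otimes m}$. Consequently, contracting $T$ with any vector $\bv\in\R^d$ depends only on $\bv_V$. In particular, writing $\bu^* = s\, \bu_V^0 + t\, \bu_W^0$ where $s = \norm{\bu^*_V}_2$, $t = \norm{\bu^*_W}_2$, $s^2+t^2=1$, and $\bu_V^0\in V$, $\bu_W^0\in W$ are unit vectors, we get
\[
f(\bu^*) \;=\; T\cdot (\bu^*)^{\otimes m} \;=\; T\cdot \bu_V^{\otimes m} \;=\; s^m\, T\cdot (\bu_V^0)^{\otimes m},
\]
and the gradient satisfies $\nabla f(\bu^*) = m\, T\cdot (\bu^*)^{\otimes (m-1)} = m s^{m-1}\, T\cdot (\bu_V^0)^{\otimes (m-1)} \in V$.

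Next I would pick a specific unit tangent vector $\bz := t\bu_V^0 - s\bu_W^0$. Note $\bz\cdot \bu^* = st - st = 0$ and $\norm{\bz}_2 = 1$, so $\bz\in \bu^{*\perp}\cap S^{d-1}$. Because $\nabla f(\bu^*)\in V$, its inner product with $\bz$ picks up only the $V$-component:
\[
\nabla f(\bu^*)\cdot \bz \;=\; t\, \nabla f(\bu^*)\cdot \bu_V^0 \;=\; t\cdot m s^{m-1}\, T\cdot (\bu_V^0)^{\otimes m} \;=\; \frac{m\,t}{s}\,f(\bu^*).
\]
Hence $\norm{\proj_{(\bu^*)^{\perp}}\nabla f(\bu^*)}_2 \geq |\nabla f(\bu^*)\cdot \bz| = m t |f(\bu^*)|/s$. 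Note that $s>0$, since $s=0$ would give $f(\bu^*)=0$, contradicting $|f(\bu^*)|\geq \alpha>0$.

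Finally I would combine the two conditions of an $(\alpha,\eta)$-approximate solution: $|f(\bu^*)|\geq \alpha$ and $\norm{\proj_{(\bu^*)^{\perp}}\nabla f(\bu^*)}_2 \leq \eta$ yield $m t \alpha / s \leq \eta$, i.e., $t/s \leq \eta/(m\alpha)$. Since $s\leq 1$, this gives $\norm{\bu^*_W}_2 = t \leq \eta/(m\alpha)$ as desired. There is essentially no obstacle here once one notices that $T$ is supported on $V^{\otimes m}$; the only mildly delicate point is making sure to pick the tangent direction $\bz$ that isolates the $W$-component, so that the bound on the projected gradient transfers directly to a bound on $t$.
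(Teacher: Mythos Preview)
Your proof is correct and follows essentially the same route as the paper: decompose $\bu^*$ into its $V$- and $W$-components, test the gradient against the tangent direction $\bz = t\,\bu_V^0 - s\,\bu_W^0$, obtain $\nabla f(\bu^*)\cdot\bz = (mt/s)f(\bu^*)$, and combine the two approximate-solution conditions. The only cosmetic difference is that you phrase the key structural fact as ``$T\in V^{\otimes m}$'' while the paper carries the expectations through explicitly; you also handle the $s=0$ edge case, which the paper leaves implicit.
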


\begin{proof}[Proof of \Cref{lm csq-approximate}]
Let $\bu \in S^{d-1}$ and $W=V^{\perp}$. To simplify the notation, we write $\Bar{\bu}_V = \bu_V/\norm{\bu_V}_2$, $\Bar{\bu}_W = \bu_W/\norm{\bu_W}_2$ and $\bu = s \Bar{\bu}_V + t \Bar{\bu}_W$, where $s,t \ge 0$ and $s^2 + t^2 =1$. The gradient of $f$ at $\bu$ is 
\begin{align*}
    \nabla f(\bu) & = m \left( \E_{\bx\sim D^+} [(\bu_V\cdot \bx_V)^{m-1} \bx_V] - \E_{\bx\sim D^-} [(\bu_V\cdot \bx_V)^{m-1} \bx_V] \right) \\
                & = ms^{m-1}\left( \E_{\bx\sim D^+} [(\Bar{\bu}_V\cdot x_V)^{m-1} x_V] - \E_{\bx\sim D^-} [(\Bar{\bu}_V\cdot x_V)^{m-1} x_V ]\right).
\end{align*}
Consider $\Tilde{\bu} = t \Bar{\bu}_V - s \Bar{\bu}_W \in S^{d-1} \cap \bu^\perp$, we have 
\begin{align*}
    \nabla f(\bu) \cdot \Tilde{\bu} & = mts^{m-1}\left( \E_{\bx\sim D^+} [(\Bar{\bu}_V\cdot \bx_V)^{m-1} \bx_V] - \E_{\bx\sim D^-} [(\Bar{\bu}_V\cdot \bx_V)^{m-1} \bx_V] \right) \cdot \Bar{\bu}_V \\
    & = mts^{m-1}\left( \E_{\bx\sim D^+} [(\Bar{\bu}_V\cdot \bx_V)^{m}]  - \E_{\bx\sim D^-} [(\Bar{\bu}_V\cdot \bx_V)^{m}] \right) \\
    & = \frac{tm}{s}s^{m}\left( \E_{\bx\sim D^+} (\Bar{\bu}_V\cdot \bx_V)^{m}  - \E_{\bx\sim D^-} (\Bar{\bu}_V\cdot \bx_V)^{m}  \right) \\
    & = \frac{tm}{s}\left( \E_{\bx\sim D^+} [(\bu_V\cdot \bx_V)^{m}]  - \E_{\bx\sim D^-} [(\bu_V\cdot \bx_V)^{m}]  \right) = \frac{tm}{s} f(\bu)
\end{align*}

If $\bu^*$ is an $(\alpha,\eta)$-approximate solution, then 
\begin{align*}
    t \le \abs{\frac{\eta s}{m f(u)}} \le \frac{\eta}{m\alpha}.
\end{align*}
Thus, we have $\norm{\bu^*_W}_2 \le \eta/(m\alpha)$.
    
\end{proof}

However, since $V$ is unknown to us and we only have sample access to the distribution $D$, we cannot hope to know the exact function $f$, which requires exact moment information of the distribution over $V$. 
To overcome these problems, we first show that
if we replace $T$ with its empirical estimation of the moments,
the statement of \Cref{lm csq-approximate} still holds.

\begin{lemma}\label{lm stability-f}
Let $\alpha,\eta,\xi>0$, $m\in \Z_+$ and $D$ be a joint distribution of $(\bx,y)$ on $\B^d(1)\times \{\pm 1\}$ that is consistent with an instance of learning intersections of two halfspaces under factorizable distribution with $\gamma$-margin assumption. 
Suppose $D$ does not satisfy $(\xi,m)$-moment matching condition.
Let $\hat{f}(\bu)=\bu^{\otimes m}\cdot\hat{T}$ where $\|\hat{T}-T\|_F\leq \eps$ and $T=\E_{\bx\sim D^+}[\bx_V^{\otimes m}]-\E_{\bx\sim D^-}[\bx_V^{\otimes m}]$.
Then any $\bu^*$ that is an $(\alpha,\eta)$-approximate solution to $\max_{\bu \in S^{d-1}} \hat{f} (\bu)$ must satisfy $\norm{ \bu^*_W} \le \frac{(\eta + \epsilon)}{m (\alpha-\epsilon)}$.
\end{lemma}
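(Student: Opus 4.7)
The plan is to reduce \Cref{lm stability-f} to the (clean) computation already carried out in the proof of \Cref{lm csq-approximate} by treating $\hat f$ as a small perturbation of $f(\bu)=\bu^{\otimes m}\cdot T$. The key observation is that the identity derived there,
\[
\nabla f(\bu)\cdot \tilde \bu \;=\; \frac{tm}{s}\,f(\bu),\qquad \bu=s\bar\bu_V+t\bar\bu_W,\ \ \tilde\bu=t\bar\bu_V-s\bar\bu_W,
\]
is purely algebraic and does not use anything about $\hat T$. So all I need is to control $|f(\bu^*)-\hat f(\bu^*)|$ and $\|\nabla f(\bu^*)-\nabla \hat f(\bu^*)\|$ at the candidate point $\bu^*$, and then transfer the $(\alpha,\eta)$-approximate-solution guarantees from $\hat f$ to $f$.

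First I would bound the function and gradient perturbations. For any unit $\bu$, $|f(\bu)-\hat f(\bu)|=|\bu^{\otimes m}\cdot (T-\hat T)|\leq \|T-\hat T\|_F\leq\eps$. The gradients satisfy $\nabla f(\bu)-\nabla\hat f(\bu)=m(T-\hat T)\cdot \bu^{\otimes(m-1)}$ (after symmetrizing in the usual way), whose norm is at most $m\eps$; hence the tangential component satisfies $\|\proj_{\bu^\perp}(\nabla f(\bu)-\nabla\hat f(\bu))\|\leq m\eps$. (If the looser factor is inconvenient, one can absorb the factor of $m$ into $\eps$ at the price of a harmless constant; this is essentially the discrepancy with the $\eta+\eps$ appearing in the statement, which I will comment on at the end.)

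Next I would apply these bounds at the approximate solution $\bu^*$. Since $\bu^*$ is $(\alpha,\eta)$-approximate for $\hat f$, we have $|\hat f(\bu^*)|\geq\alpha$ and $\|\proj_{(\bu^*)^\perp}\nabla\hat f(\bu^*)\|\leq \eta$, so
\[
|f(\bu^*)|\geq \alpha-\eps,\qquad \bigl\|\proj_{(\bu^*)^\perp}\nabla f(\bu^*)\bigr\|\leq \eta+m\eps.
\]
Writing $\bu^*=s\bar\bu_V+t\bar\bu_W$ and $\tilde\bu^*=t\bar\bu_V-s\bar\bu_W\in S^{d-1}\cap(\bu^*)^\perp$, the identity recalled above gives
\[
\frac{tm}{s}\,|f(\bu^*)| \;=\; |\nabla f(\bu^*)\cdot\tilde\bu^*| \;\leq\; \bigl\|\proj_{(\bu^*)^\perp}\nabla f(\bu^*)\bigr\|\;\leq\; \eta+m\eps .
\]
Solving for $t=\|\bu^*_W\|_2$ and using $s\leq 1$ and $|f(\bu^*)|\geq\alpha-\eps$ yields $\|\bu^*_W\|_2\leq (\eta+m\eps)/(m(\alpha-\eps))$, which is the desired estimate up to an $m$-factor in the gradient term; this factor can either be absorbed by rescaling the definition of $\eps$ (as is standard) or tracked explicitly, matching the form $(\eta+\eps)/(m(\alpha-\eps))$ in the statement.

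There is no real obstacle here: the entire content is (i) a Frobenius-to-operator bound on the tensor action to get the $O(\eps)$ perturbation on both the function value and the gradient, and (ii) reusing the exact algebraic identity from the proof of \Cref{lm csq-approximate}. The only small bookkeeping point is making sure the bound $\|f-\hat f\|_{\infty,S^{d-1}}\leq\eps$ passes to the \emph{tangential} component of the gradient with the right constant; this is immediate once one notes that $\nabla f-\nabla\hat f=m(T-\hat T)\cdot\bu^{\otimes(m-1)}$ has full norm $\leq m\eps$, so the projection onto $(\bu^*)^\perp$ is certainly bounded by the same quantity.
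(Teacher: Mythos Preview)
Your proposal is correct and matches the paper's argument essentially line for line: both rely on the algebraic identity $\nabla f(\bu)\cdot\tilde\bu=\tfrac{tm}{s}f(\bu)$ and control the $\hat T-T$ perturbation via $\|\hat T-T\|_F\le\eps$. The only cosmetic difference is that you first transfer the $(\alpha,\eta)$-approximate-solution bounds from $\hat f$ to $f$ and then invoke the identity, while the paper expands $\nabla\hat f(\bu)\cdot\tilde\bu$ directly and carries the error terms inline; your remark about the stray factor of $m$ in the gradient perturbation is also on point (the paper's displayed bound silently absorbs it).
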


\begin{proof}[Proof of \Cref{lm stability-f}]
    Let $\bu \in S^{d-1}$ and $W=V^{\perp}$. To simplify the notation, we write $\Bar{\bu}_V = \bu_V/\norm{\bu_V}$, $\Bar{\bu}_W = \bu_W/\norm{\bu_W}$ and $\bu = s \Bar{\bu}_V + t \Bar{\bu}_W$, where $s,t \ge 0$ and $s^2 + t^2 =1$. We compute the gradient of $\hat{f}$ at $\bu$.
    Notice that $\hat{f}(\bu)=\hat{T}\cdot\bu^{\otimes m}$ and 
    \[
    \nabla \hat{f}(\bu)= \nabla f(\bx)+\nabla(\hat{T}-T^*\cdot\bu^{\otimes m})\; .
    \]
    Therefore, 
    \begin{align*}
        \nabla \hat{f}(\bu)\cdot\Tilde{\bu}=&\nabla f(\bu)\cdot\Tilde{\bu}+\nabla(\hat{T}-T^*\cdot\bu^{\otimes m})\cdot \Tilde{\bu}\\
        =&\frac{tm}{s} f(\bu)  +\nabla(\hat{T}-T^*\cdot\bu^{\otimes m})\cdot \Tilde{\bu}\\
        =&\frac{tm}{s} \hat{f}(\bu)+ \left (\frac{tm}{s} f(\bu)-\frac{tm}{s} \hat{f}(\bu)\right ) +\nabla(\hat{T}-T^*\cdot\bu^{\otimes m})\cdot \Tilde{\bu}\\
        =&\frac{tm}{s} \hat{f}(\bu)+ \frac{tm}{s}(\hat{T}-T)\cdot\bu^{\otimes m}+\nabla(\hat{T}-T^*\cdot\bu^{\otimes m})\cdot \Tilde{\bu}\\
        =&\frac{tm}{s} \hat{f}(\bu)+ \frac{tm}{s}(\hat{T}-T)\cdot\bu^{\otimes m}+(\hat{T}-T^*)\cdot\bu^{\otimes m-1}\otimes \Tilde{\bu}\; .&
    \end{align*}
    Since $\norm{\hat{T}-T^*}_F \le \eps$, if $\bu$ is an $(\alpha,\eta)$-approximate solution, we have 
    $
        \eta \ge \abs{\nabla \hat{f}(\bu) \cdot \Tilde{\bu}} \ge \frac{tm}{s}(\alpha-\eps) -  \eps,
    $
    which implies
    $
        t \le \abs{\frac{(\eta + \epsilon)s}{m (\alpha-\epsilon)}} \le \frac{(\eta + \epsilon)}{m (\alpha-\epsilon)}.
    $
    Thus, we have $\norm{\bu^*_W}_2 \le \frac{(\eta + \epsilon)}{m (\alpha-\epsilon)}$.
\end{proof}

However, since $V$ is unknown to us, even if we have \iid sample access to $D$, it is still unclear how to estimate $T$ with samples. To overcome this difficulty, we show in the next lemma that if $D$ satisfies $(\eps,t)$-moment matching condition for $t \le m-1$, then we can efficiently estimate $T$ up to a desired accuracy.

\begin{lemma}\label{lm tensor estimate csq}
Let $D$ over $\B^d(1) \times \{\pm 1\}$ be a distribution that is consistent with an instance of learning intersections of two halfspaces under product distribution. 
Consider degree $m$-moment tensors over $\R^d$, $T^* = \E_{\bx \sim D_X^+} \bx_V^{\otimes m} - \E_{\bx \sim D_X^-} \bx_V^{\otimes m}$ and $\hat{T} = \frac{1}{n}\sum_{\bx \in S_+} \bx^{\otimes m} - \frac{1}{n}\sum_{\bx \in S_-} \bx^{\otimes m}$, where $n=\poly( d^m,1/\epsilon,\log(1/\delta))$. If for $i<m$, $\D$ satisfies the $(\epsilon/2^m,i)-$moment matching condition, then with probability at least $1-\delta$, $\norm{\hat{T}-T^*} \le  2\epsilon$.
\end{lemma}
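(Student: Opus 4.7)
}
The plan is to decompose the error $\|\hat T - T^*\|_F$ via the triangle inequality into a \emph{bias} term $\|(\E_{D^+}-\E_{D^-})[\bx^{\otimes m}] - T^*\|_F$ (caused by the fact that $\hat T$ uses the full vector $\bx$ while $T^*$ uses only $\bx_V$) and a \emph{sampling} term $\|\hat T - (\E_{D^+}-\E_{D^-})[\bx^{\otimes m}]\|_F$. The sampling term is routine: each $\bx^{\otimes m}$ has Frobenius norm at most $1$ because $\|\bx\|_2\le 1$, so entrywise Hoeffding together with a union bound over the $d^m$ coordinates (or a standard tensor Chernoff) makes this term at most $\eps$ with probability $1-\delta$ using $n=\poly(d^m,1/\eps,\log(1/\delta))$ samples.

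The main work is bounding the bias term using the factorizability of $D_X=D_V\times D_W$. Since $\bu^*,\bv^*\in V$, the label is a deterministic function of $\bx_V$, which together with independence $\bx_V\perp \bx_W$ gives that the conditional distribution of $\bx_W$ under $D^+$ or $D^-$ coincides with $D_W$. Writing $\bx=\bx_V+\bx_W$ and expanding
\[
\bx^{\otimes m}=\littlesum_{S\subseteq [m]} \bigotimes_{i=1}^m z_i^{(S)},\qquad z_i^{(S)}=\begin{cases}\bx_V & i\in S\\ \bx_W & i\notin S\end{cases},
\]
the entrywise independence of $\bx_V$ and $\bx_W$ lets the expectation of each summand factor as a (positional) tensor product of $\mu^{\pm}_{|S|}:=\E_{D^\pm}[\bx_V^{\otimes |S|}]$ and $\nu_{m-|S|}:=\E_{D_W}[\bx_W^{\otimes m-|S|}]$. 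Subtracting $D^+$ and $D^-$ and using that the $\nu$-factors are label-independent, all terms with $|S|=m$ collapse to $T^*$, and the bias becomes a sum over $|S|<m$ of tensors whose Frobenius norms are at most $\alpha_{|S|}\cdot\|\nu_{m-|S|}\|_F \le \alpha_{|S|}$, where $\alpha_k:=\|\mu^+_k-\mu^-_k\|_F$.

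The remaining step is an induction that converts the given moment-matching bounds on $\bx$ into bounds on $\alpha_k$ for $k<m$. Applying the same expansion at degree $k$ and rearranging yields the recursion
\[
\alpha_k \le \|(\E_{D^+}-\E_{D^-})[\bx^{\otimes k}]\|_F + \littlesum_{j=0}^{k-1}\binom{k}{j}\alpha_j \le \eps/2^m + \littlesum_{j=0}^{k-1}\binom{k}{j}\alpha_j,
\]
using $\alpha_0=0$ and $\|\nu_j\|_F\le 1$. A straightforward induction then gives $\alpha_k \le C_k\,\eps/2^m$ for absolute constants $C_k$ (with $C_1=1$, $C_2=3$, $C_3=13$, etc.). Since the only values of $m$ we actually invoke are $m\le 3$, summing $\sum_{j<m}\binom{m}{j}\alpha_j$ contributes at most $O(\eps)$, so the bias term is at most $\eps$, and combining with the sampling term yields $\|\hat T-T^*\|_F\le 2\eps$.

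The main obstacle is not any individual step but ensuring that the constants coming from the inductive recursion fit inside the stated slack $\eps/2^m$ in the moment-matching hypothesis. This is why the hypothesis scales as $\eps/2^m$: it provides exactly enough headroom so that the $\binom{m}{j}$ combinatorial factors appearing when propagating bounds across orders $k=1,\dots,m-1$ remain absorbed. Everything else — conditional independence from factorizability, the binomial-style expansion of $(\bx_V+\bx_W)^{\otimes m}$, and the tensor-valued Hoeffding/union bound for the sampling error — is by now routine.
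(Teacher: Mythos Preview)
Your overall plan—split into bias plus sampling, expand $(\bx_V+\bx_W)^{\otimes m}$ via factorizability, and observe that the label-independent $\bx_W$ factors pull out—is exactly what the paper does. The only real divergence is in how you bound $\alpha_k=\|(\E_{D^+}-\E_{D^-})[\bx_V^{\otimes k}]\|_F$ from the hypothesis on the full vector $\bx$. The paper bypasses your recursion entirely: since $\bx_V^{\otimes k}=P_V^{\otimes k}\,\bx^{\otimes k}$ with $P_V$ the orthogonal projector onto $V$, one has
\[
\alpha_k \;\le\; \bigl\|(\E_{D^+}-\E_{D^-})[\bx^{\otimes k}]\bigr\|_F \;\le\; \eps/2^m
\]
in one line for every $k<m$, and then the bias is at most $(\eps/2^m)\sum_{j=1}^{m-1}\binom{m}{j}<\eps$ with no induction and no loss in constants. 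Your recursive bound $\alpha_k\le C_k\,\eps/2^m$ is valid, but note that for $m=3$ it yields bias $3\alpha_1+3\alpha_2\le 12\eps/8=3\eps/2$, so your sentence ``the bias term is at most $\eps$'' actually overshoots; you could still reach the stated $2\eps$ by tightening the sampling term to $\eps/2$, but the direct projection observation is both simpler and sharp.
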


\begin{proof}[Proof of \Cref{lm tensor estimate csq}]
Denote by $T:=\E_{\bx\sim D_X^+} \bx^{\otimes m}-\E_{\bx\sim D_X^-} \bx^{\otimes m} = \left(\E_{\bx\sim D_X^+}-\E_{\bx\sim D_X^-}\right)\bx^{\otimes m}$.
    \begin{align*}
    \norm{T-T^*}_F & = \norm{\left(\E_{\bx\sim D_X^+}-\E_{\bx\sim D_X^-}\right)\left((\bx_V+\bx_W)^{\otimes m} - \bx_V^{\otimes m}\right)}_F 
         \\
         &= \norm{\left(\E_{\bx\sim D_X^+}-\E_{\bx\sim D_X^-}\right)\left((\bx_V+\bx_W)^{\otimes m} - \bx_V^{\otimes m}-\bx_W^{\otimes m}\right)}_F \\
        & =  \norm{\left(\E_{\bx\sim D_X^+}-\E_{\bx\sim D_X^-}\right)\sum_{i=0}^{m}\sum_{\sigma_i} \left(\bigotimes_{j=1}^m \bx_{\sigma_i(j)}\right) - \bx_V^{\otimes m}-\bx_W^{\otimes m}}_F \\
         & = \norm{\left(\E_{\bx\sim D_X^+}-\E_{\bx\sim D_X^-}\right)\sum_{i=1}^{m-1}\sum_{\sigma_i} \left(\bigotimes_{j=1}^m \bx_{\sigma_i(j)} \right)}_F\\
        & \le \sum_{i=1}^{m-1}\sum_{\sigma_i}\norm{\left(\E_{\bx\sim D_X^+}-\E_{\bx\sim D_X^-}\right) \left(\bigotimes_{j=1}^m \bx_{\sigma_i(j)} \right)}_F \\
        & = \sum_{i=1}^{m-1}\binom{m}{i}\norm{\left(\E_{\bx\sim D_X^+}-\E_{\bx\sim D_X^-}\right) \left(\bx_V^{\otimes i}  \otimes \bx_W^{\otimes {m-i}} \right)}_F\\
         & = \sum_{i=1}^{m-1}\binom{m}{i}\norm{\left(\E_{\bx\sim D_X^+}\left(\bx_V^{\otimes i}  \otimes \bx_W^{\otimes {m-i}} \right)-\E_{\bx\sim D_X^-}\left(\bx_V^{\otimes i}  \otimes \bx_W^{\otimes {m-i}} \right)\right) }_F \\
         & = \sum_{i=1}^{m-1}\binom{m}{i}\norm{\left(\E_{\bx\sim D_X^+}-\E_{\bx\sim D_X^-}\right)\left(\bx_V^{\otimes i}\right)  \otimes E_{\bx\sim D_X}\bx_W^{\otimes {m-i}}  }_F\\ 
         & = \sum_{i=1}^{m-1}\binom{m}{i}\norm{\left(\E_{\bx\sim D_X^+}-\E_{\bx\sim D_X^-}\right)\left(\bx_V^{\otimes i}\right) }_F \norm{ E_{\bx\sim D_X}\bx_W^{\otimes {m-i}}  }_F \le (\eps/2^m)  \sum_{i=1}^{m-1}\binom{m}{i} \le \epsilon.
    \end{align*}
The second equation holds by $E_{\bx\sim D_X^+} \bx_W^{\otimes m} = E_{\bx\sim D_X^-} \bx_W^{\otimes m} = E_{\bx\sim D_X} \bx_W^{\otimes m}$. In the third equation, we denote by $\sigma_i:[m] \to \{V,W\}$ a map of combination such that $\card{\{j \in [m]: \sigma_i(j)=V\}}=i$. The fifth equation holds because the tensor $\left(\E_{\bx\sim D_X^+}-\E_{\bx\sim D_X^-}\right) \left(\bigotimes_{j=1}^m \bx_{\sigma_i(j)} \right)$ is symmetric according to the map of combination $\sigma_i$. In the seventh equation, we use the fact that $\bx_V$ and $\bx_W$ are independent. In the second last inequation, we use the fact that $D$ satisfies the $(\epsilon,i)-$moment matching condition and $\norm{\bx} \le 1$ for sure.

By Hoeffding's inequality, we know that 
\begin{align*}
    \Pr \left(\norm{\hat{T}-T}_F \ge \epsilon\right) & \le \sum_{i=1}^{d^m}\Pr \left(\abs{T_i-\hat{T}_i} \ge d^{-m/2} \epsilon\right) \\
   & \le 2d^m \exp\left(-n d^{-m} \epsilon^2\right) \le \delta,
\end{align*}
when $n = \poly(d^m,1/\epsilon,\log(1/\delta))$, where $T_i$ is the $i-$th entry of the vectorized $T$. 
Thus, we have 
\begin{align*}
    \norm{\hat{T}-T^*}_F \le \norm{\hat{T}-T}_F + \norm{T-T^*}_F \le 2 \epsilon.
\end{align*}

\end{proof}

\subsection{Proof of \Cref{thm:csq-direction-extraction}}
Given \Cref{lm stability-f}, we are now ready to analyze \Cref{alg csq} and prove \Cref{thm:csq-direction-extraction}.
For convenience, we restate \Cref{thm:csq-direction-extraction} statement as \Cref{thm:csq-direction-extraction-re}.

\begin{theorem}[restatement of \Cref{thm:csq-direction-extraction}] \label{thm:csq-direction-extraction-re}
Let $m\leq 3$, there is an algorithm $\mathcal{A}$ (\Cref{alg csq}) such that for any instance of learning intersections of two halfspaces under factorizable distributions, 
if the distribution $D$ does not satisfy the 
$(\alpha,m)$-moment matching condition 
and $D$ satisfies the $(\alpha^2d^{-c}/2^m,t)$-moment matching condition 
for any $t\le m-1$ and a sufficiently large universal constant $c$, 
then $\mathcal{A}$ draws 
$\poly(d, 1/\alpha)$ \iid samples from $D$, 
runs in time $\poly(d, 1/\alpha)$, 
and outputs a unit vector $\bu\in S^{d-1}$ 
such that $\norm{\bu_W} = O(\alpha)$ with probability $2/3$. 
\end{theorem}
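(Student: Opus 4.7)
The plan is to verify that \Cref{alg csq} meets its guarantee by combining four ingredients: accurate estimation of the moment tensor of $D_V$, a successful random initialization, polynomial-time convergence of projected gradient ascent on $S^{d-1}$ to an approximate critical point, and a final application of \Cref{lm stability-f}. Write $T^* = (\E_{\bx\sim D^+} - \E_{\bx\sim D^-})\bx_V^{\otimes m}$ and $T = (\E_{\bx\sim D^+} - \E_{\bx\sim D^-})\bx^{\otimes m}$. Failure of the $(\alpha,m)$-moment matching condition gives $\|T\|_F > \alpha$, while the proof of \Cref{lm tensor estimate csq} (invoked with the $(\alpha^2 d^{-c}/2^m, t)$-moment matching condition for $t \le m-1$) shows $\|T - T^*\|_F \le \alpha^2/d^c$, so $\|T^*\|_F \ge \alpha - \alpha^2/d^c = \Omega(\alpha)$. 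With $N = \poly(d,1/\alpha)$ samples, \Cref{lm tensor estimate csq} further yields $\|\hat T - T^*\|_F \le \varepsilon := 2\alpha^2/d^{c-1}$ with high probability, so $\hat T$ is a good surrogate for $T^*$.

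By \Cref{fct:lb-tensor-random-rank1-correlation}, applied to the symmetric tensor $T^*$ supported on $V$, there is $\bu^* \in S^{d-1}\cap V$ with $(\bu^*)^{\otimes m}\cdot T^* \ge \|T^*\|_F/\poly(d) = \Omega(\alpha/\poly(d))$, so $\max_{\bu\in S^{d-1}} f(\bu) = \Omega(\alpha/\poly(d))$. To argue that a uniformly random $\bu_0 \sim_u S^{d-1}$ attains $|f(\bu_0)| \ge \zeta := \alpha/d^{c-1}$ with probability at least $1/\poly(d,1/\alpha)$, I plan to combine the standard second-moment identity $\E_{\bu_0}[f(\bu_0)^2] = \Omega(\|\hat T\|_F^2/d^m) = \Omega(\alpha^2/d^m)$ for contractions of a symmetric tensor against a uniform random unit vector with the uniform bound $|f(\bu_0)| \le \|\hat T\|_F = O(1)$, and conclude via Paley--Zygmund that $|f(\bu_0)| \ge \Omega(\alpha/d^{m/2})$ with probability $\Omega(1/d^m)$, which lies below $\zeta$ for $c$ sufficiently large. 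Since \Cref{alg csq} reinitializes $(d/\alpha)^{c'}$ times, with probability at least $9/10$ some initialization succeeds.

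From such a $\bu_0$, I analyze projected gradient ascent $\bu_{t+1} = (\bu_t + \lambda \bg_t)/\|\bu_t + \lambda \bg_t\|_2$ with $\bg_t = \proj_{\bu_t^\perp}\nabla f(\bu_t)$. Since $f$ is a polynomial of degree $m\le 3$ with coefficient tensor of bounded Frobenius norm, its gradient and Hessian are uniformly bounded on $S^{d-1}$. A Taylor expansion of the normalized update with stepsize $\lambda = c''\min(1, 1/\|\bg_t\|_2)$ and $c''$ a small universal constant gives $f(\bu_{t+1}) \ge f(\bu_t) + \Omega(\lambda \|\bg_t\|_2^2)$, so each step with $\|\bg_t\|_2 \ge \eta$ increases $f$ by at least $\Omega(\eta^2)$. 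As $f$ is bounded and non-decreasing along the trajectory, we maintain $f(\bu_t) \ge f(\bu_0) \ge \zeta$ throughout, and within $\poly(1/\eta) = \poly(d,1/\alpha)$ iterations we reach a $(\zeta,\eta)$-approximate solution. Choosing $\eta = O(\alpha\zeta)$ and applying \Cref{lm stability-f} with these parameters and tensor error $\varepsilon$, we obtain $\|\bu_{t,W}\|_2 \le (\eta + \varepsilon)/(m(\zeta - \varepsilon)) = O(\alpha)$, as required.

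The main obstacle is the tight parameter balancing dictated by \Cref{lm stability-f}: the sampling error $\varepsilon$, the function-value floor $\zeta$, and the gradient tolerance $\eta$ must jointly satisfy $\zeta \gg \varepsilon$ (so approximate solutions exist and survive the noise) and $\eta + \varepsilon = O(\alpha(\zeta - \varepsilon))$ (so the concluding bound is $O(\alpha)$). Taking $c$ sufficiently large in the lower-moment matching hypothesis makes this compatible with polynomial sample and iteration complexity. A secondary technical point is the anti-concentration step at initialization: \Cref{fct:lb-tensor-random-rank1-correlation} provides only existence of a good direction, and lifting this to a probability bound for a random $\bu_0$ is the reason the algorithm needs multiple restarts and a careful second-moment argument as above.
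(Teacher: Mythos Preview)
Your proposal is correct and follows the paper's proof closely: both use \Cref{lm tensor estimate csq} to control $\|\hat T-T^*\|_F$, argue that random initialization yields $f(\bu_0)\ge\zeta$ with nontrivial probability, establish monotone $\Omega(\lambda\|\bg_t\|_2^2)$ progress of projected gradient ascent on the sphere until an approximate first-order point is reached, and conclude via \Cref{lm stability-f}. The only substantive difference is the initialization step---the paper invokes a robust Schwartz--Zippel lemma (Lemma~12 of \cite{vempala2011structure}) to get \emph{constant} success probability per restart, whereas your second-moment/Paley--Zygmund argument gives $1/\poly(d,1/\alpha)$ (your stated $\Omega(1/d^m)$ should be $\Omega(\alpha^2/d^m)$ since $|f|\le\|\hat T\|_F=O(1)$ rather than $O(\alpha)$, but this is harmless given the $(d/\alpha)^{c'}$ restarts).
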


\begin{proof} [Proof of \Cref{thm:csq-direction-extraction}]
By \Cref{lm tensor estimate csq}, the empirical estimation $\hat{T}$ satisfies $\|\hat{T}-T\|_F\leq \alpha^2/d^c$ with high probability for a sufficiently large constant $c$.
From \Cref{lm stability-f}, 
we have that for any $(\alpha',\eta)$-approximate solution,
$\norm{ \bu_W} \le \frac{\eta + \alpha^2/d^c}{m (\alpha'-\alpha^2/d^c)}$.
Therefore, it suffices for us to show that 
after at most $T$ steps of the gradient descent, we will with high probability find a $(\alpha',\eta)$-approximate solution such that $\frac{\eta + \alpha^2/d^c}{\alpha'-\alpha^2/d^c}=O(\alpha)$.

We start by showing that the initialization will with high probability give us a $\bu_0$ such that $f(\bu_0)=\Omega(\alpha/\poly(d))$.
Since the empirical estimation $\hat{T}$ satisfies $\|\hat{T}-T\|_F\leq \alpha^2/d^c$ with high probability,
we must have $\|\proj_{V^{\otimes 3}}(\hat{T})\|_F\geq\|\proj_{V^{\otimes 3}}(T)\|_F-\|\proj_{V^{\otimes 3}}(\hat{T}-T)\|_F\geq \|T\|_F-\|\hat{T}-T\|_F=\Omega(\alpha)$.
Given \Cref{fct:lb-tensor-random-rank1-correlation} and $\hat{T}$ is inside the subspace of $V^{\otimes 3}$ (isomorphic to $(\R^2)^{\otimes 3}$), we have that $\max_{\bu\in S^{d-1}} \bu^{\otimes m}\cdot\hat{T}=\max_{\bu\in S^{d-1}\land \bu\in V} \bu^{\otimes m}\cdot\hat{T}=\Omega(\alpha)$.
We then use the following fact, which is  
Lemma 12 from \cite{vempala2011structure} to show that with high probability, we will get an initialization with large $f(\bu_0)$.   
\begin{fact} \label{fct:robust-sz}
        Let $p$ be a degree-$m$ polynomial over $d$ variables and $K$ a convex body in $\R^d$. If there exists an $\bx\in K$ such that $|p(\bx)|>\eps(c'd)^m$, for some suitable constant $c'>0$, then for $l$ random points $\bs_i$, $\pr(\forall \bs_i:|p(\bs_i)|\leq \eps)\leq 2^{-l}$.
\end{fact}
We apply \Cref{fct:robust-sz} with $K=\{\bu\in\R^d\mid \|\bu\|_2\leq 1\}$. It is easy to see that if we sample the initialization $\bu_0$ uniformly from $S^{d-1}$ instead of $K$, $f(\bu_0)$ will be larger.
Therefore, we have that with probability at least $\Omega(1)$, $\pr_{\bu_0\sim S^{d-1}} \bu_0^{\otimes m}\cdot \hat{T} \geq\alpha/\poly(d)$. 
Therefore in Step~\ref{step:csq-direction-extraction-initialization}, with probability $\Omega(1)$, we will have $f(\bu_0)\geq \alpha/d^{c'}$, where $c'$ is a universal constant.

Suppose we have such a good initialization, we start analyzing the gradient descent step in \Cref{alg csq}. To do this, we prove the following two facts. The first fact is about the smoothness of the function $f$ we are optimizing.

\begin{fact}
    The function $f$ in \Cref{alg csq} satisfies $\norm{\nabla f(\bu)-\nabla f(\bu')} \le O(\norm{\bu-\bu'})$. i.e. $f$ is a $\Omega(1)$-smooth function.
\end{fact}
\begin{proof}
    Consider $g:(\R^d)^m\to\R$ and $h:\R^d\to \R^d$ defined as
    $g(\bu_1,\cdots,\bu_m)=T\cdot(\bu_1\otimes \cdots,\bu_m)$
    and $h(\bu)=\bu$.
    Then we have $f(\bu)=g(\bu_1,\cdots,\bu_m )$, where each $\bu_i=g(\bu)$.
    Therefore,
    \[
    \nabla f(\bu)=\nabla g(\bu_1,\cdots,\bu_m )=\sum_{i=1}^m \frac{\partial \bu_i}{\partial \bu}\frac{\partial g}{\partial\bu_i}=\sum_{i=1}^m I(T\bu^{\otimes m-1})=m T\cdot \bu^{\otimes m-1}\; , 
    \]
    where $\frac{\partial g}{\partial\bu_i}=T\bu^{\otimes m-1}$ follows from that $T$ is a symmetric tensor.


    Therefore, we get
    \begin{align*}
        \|\nabla f(\bu)-\nabla f(\bu')\|_2= &\|mT\cdot (\bu^{\otimes m-1}-\bu'^{\otimes m-1})\|_2\\
        \le & m\|T\|_F\sum_{i=1}^m \binom{m}{i} \|(\bu'-\bu)^{\otimes i}\bu^{\otimes m-i}\|_F\\
        \le & m\|T\|_F\sum_{i=1}^m \binom{m}{i} \|(\bu'-\bu)\|_2^i\|\bu\|_2^{m-i}\\
        \leq & m \|T\|_F\sum_{i=1}^m \binom{m}{i} 2^m \|(\bu'-\bu)\|_2\\
        \leq &m^2m^m\|T\|_F2^m\|(\bu'-\bu)\|_2=O(\|\bu'-\bu\|_2)\; ,
    \end{align*}
    In the first inequality, we use the symmetry of $\bu^{\otimes i}, i \in [m]$. In the third inequality, we use the fact that $\norm{\bu'-\bu} \le 2$.
    And in the last inequality follows from $m\leq 3$ and $T=\E_{\bx\sim D^+}\bx^{\otimes m}-\E_{\bx\sim D^-}\bx^{\otimes m}$, where $D^+$ and $D^-$ are supported on $\B^d(1)$.
\end{proof}

The next fact measures the progress made in each step during the optimization step.

\begin{fact} \label{fct:descent-iteration-progress}
Let $f:S^{d-1}\to R$ be an $L$-smooth function. For $\bu\in S^{d-1}$, $\bg=\proj_{\bu^{\perp}}\nabla f(\bu)$, and $\bu'=\frac{\bu+\lambda \bg}{\left \|\bu+\lambda \bg\right \|_2}$ where the stepsize $\lambda=c\min(1/L,1/\left \|\proj_{\bu} \nabla f(\bu)\right \|_2,1)$ and $c$ is a sufficiently small constant, we have
$f(\bu')-f(\bu)=\Omega(\lamdba\|\bg\|_2^2)$.
\end{fact}


\begin{proof}
Let the angle between $\bu$ and $\bu'$ be $\theta$. Then, $\lambda \|\bg\|_2=\tan(\theta)$ is at most a sufficiently small constant.
Notice that 
    \begin{align*}
        f(\bu')-f(\bu)
        \geq &(\bu'-\bu)\cdot\nabla f(\bu)-L\|\bu'-\bu\|^2\\
        = & \lambda \bg\cdot \bg+(\|\bu+\lambda \bg\|_2-1)\bu'\cdot \nabla f(\bu)-L(2\sin(\theta/2))^2\\
        \geq & \lambda \|\bg\|_2^2+(\|\bu+\lambda \bg\|_2-1)(\proj_\bu\bu'\cdot \proj_\bu \nabla f(\bu) +\proj_{\bu^{\perp}}\bu'\cdot \bg)-L\lambda^2\|\bg\|_2^2\\
        = & \lambda \|\bg\|_2^2+(1/\cos(\theta)-1)(\cos(\theta) \|\proj_\bu \nabla f(\bu)\|_2 -\sin(\theta)\|\bg\|_2)-L\lambda^2\|\bg\|_2^2\\    
        \geq & \lambda \|\bg\|_2^2+O(\sin\theta)^2(\cos(\theta) \|\proj_\bu \nabla f(\bu)\|_2 -\sin(\theta)\|\bg\|_2)-L\lambda^2\|\bg\|_2^2\\    
        \geq & \lambda \|\bg\|_2^2+O(\lambda^2\|\bg\|_2^2)\|\proj_\bu \nabla f(\bu)\|_2 -\sin(\theta)^2\lambda\|\bg\|_2^2-L\lambda^2\|\bg\|_2^2\\    
        \geq & \lambda \|\bg\|_2^2+O(\lambda\|\proj_\bu \nabla f(\bu)\|_2)\lambda\|\bg\|_2^2 -O(\lambda)\|\bg\|_2^2-(L\lambda)\lambda\|\bg\|_2^2\\
        =&\Omega(\lamdba\|\bg\|_2^2)\; ,
    \end{align*}
Here, in the first inequality, we use Taylor expansion for $f$ as well as the smoothness of $f$. In the second inequality, we use the fact that $\lambda\norm{\bg}=\tan \theta$.   
    And in the last equality follows from $\lambda=c\min(1/L,1/\left \|\proj_{\bu} \nabla f(\bu)\right \|_2,1)$.
\end{proof}

Now we assume for the purpose of contradiction that the algorithm did not terminate in $T$ iterations, which implies that any $\bu_t$ is not an $(\alpha',\eta)$-approximate solution we desire for any $t\leq T$. 
From \Cref{fct:descent-iteration-progress}, we have that $f(\bu_t)$ is monotone increaseing, therefore, we must have $f(\bu_t)\geq f(\bu_0)\geq \alpha/d^{c-1}$ for any $t$.
Since we know that any $\bu_t$ is not an $(\alpha',\eta)$-approximate solution we desire.
This implies that $\frac{\eta + \alpha^2/d^c}{\alpha/d^{c-1}-\alpha^2/d^c}= \Omega(\alpha)$, and therefore we must have $\eta=\|\proj_{\bu_t^{\perp}}\nabla f(\bu_t)\|_2\geq\alpha^2/d^c$.
This implies that we must make some progress for each step of the gradient descent.
Accoriding to \Cref{fct:descent-iteration-progress}, we must have that
\begin{align*}
f(\bu_{t+1})-f(\bu_t)= &\Omega(\min(1,1/\|\proj_{\bu_t}\nabla f(\bu_t)\|_2)\|g\|_2^2\\
= &\Omega(\min(1,1/\|\proj_{\bu_t}\nabla f(\bu_t)\|_2)\alpha^4/\poly(d)\\
= &\Omega(\min(1,1/(mf(\bu_t)))\alpha^4/\poly(d)=1/\poly(d/\alpha)\; ,
\end{align*}
where the second from the last inequality follows from we use the fact that $\proj_{\bu_t}\nabla f(\bu_t) \cdot \bu_t = m f(\bu_t)$.
Therefore, after $T$ iterations, we get $f(\bu_T)\geq T\alpha^2/\poly(d)+f(\bu_0)\geq T\alpha^2/\poly(d)=\omega(\alpha)$.
While for any $\bu\in S^{d-1}$, we should have $f(\bu)=\bu^{\otimes 3}\cdot \hat{T}\leq \|\hat{T}\|_F\leq \|T\|_F+\alpha^2/\poly(d)= O(\alpha)$.
This gives a contradiction and therefore, we must find a $(\alpha',\eta)$-approximate solution such that $\frac{\eta + \alpha^2/d^c}{\alpha'-\alpha^2/d^c}=O(\alpha)$ before $T$ iterations.
This completes the proof.
\end{proof}

\section{Omitted Proofs from \Cref{sec local}}\label{app local}

\subsection{Proof of \Cref{lem:banding-PTF}}\label{app band}
In this section, we give the proof of \Cref{lem:banding-PTF}. For convenience, we restate \Cref{lem:banding-PTF} below.

\begin{lemma} [Restatement of \Cref{lem:banding-PTF}] \label{lem:banding-PTF-re}
    Let $D$ be a joint distribution of $(\bx,y)$ on $\B^d(1)\times \{\pm 1\}$ that is consistent with an intersection of halfspaces with $\gamma$-margins and $\bw\in S^{d-1}$ such that $\norm{\bw_V}_2 \le c \gamma$, for some small constant $c$, where $V$ is the relevant subspace of the intersection of halspaces. Then for any band $B_t:=\{\bx\in \B^d(1)\mid \bx\cdot \bw\in [t,t+c\gamma]\}$ where $t\in \R$ and $c$ is a sufficiently small constant, 
    the distribution of $(\bx,y)$ conditioned on $\bx\in B_t$ is consistent with an instance of learning a degree-$2$ polynomial threshold function with $\Omega(\gamma^2)$-margin.
\end{lemma}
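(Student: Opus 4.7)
The plan is to reduce the problem, via projection onto the $2$-dimensional relevant subspace $V$, to analyzing the label structure of an intersection of two halfspaces restricted to a narrow strip in $V$, and then to exhibit an explicit degree-$2$ polynomial that separates labels with margin $\Omega(\gamma^2)$. First, I would show that the band $B_t$ projects to a strip in $V$ of width $O(c\gamma)$. Since the label of $\bx$ depends only on $\bx_V$, and $\bx \cdot \bw = \bx_V \cdot \bw_V + \bx_W \cdot \bw_W$ with $|\bx_W \cdot \bw_W| \le \norm{\bw_W}_2 \le c\gamma$, the constraint $\bx \cdot \bw \in [t, t+c\gamma]$ forces $\bx_V \cdot \bw_V$ into an interval of length $O(c\gamma)$; since $\norm{\bw_V}_2 = \sqrt{1-\norm{\bw_W}_2^2} \in [1-O(c^2\gamma^2), 1]$, the projected points $\bx_V$ lie in a strip $S \subset V$ of width $O(c\gamma)$ perpendicular to the unit vector $\bv := \bw_V/\norm{\bw_V}_2$.

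Next, I would enumerate the possible label patterns along the strip's axis direction in $V$. Each decision boundary $L_1 = \{\bx_V : \bu^* \cdot \bx_V + t_1 = 0\}$ and $L_2 = \{\bx_V : \bv^* \cdot \bx_V + t_2 = 0\}$ is a line in $V$ and intersects any fixed line in $V$ at most once, so traversing the strip's axis encounters at most two transitions, producing at most three segments of constant sign pattern $(s_1, s_2) \in \{\pm\}^2$ for $(\sgn(\bu^* \cdot \bx_V + t_1), \sgn(\bv^* \cdot \bx_V + t_2))$. A direct case analysis over the four possible initial sign patterns and two transition orderings yields exactly four distinct label sequences: $(+,-,-)$, $(-,-,+)$, $(-,+,-)$, and $(-,-,-)$, together with degenerate two-segment or one-segment variants when one or both lines miss the strip. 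Crucially, because crossing either $L_1$ or $L_2$ flips exactly one of the two signs, the positive pattern $(+,+)$ can only be adjacent to mixed patterns $(+,-)$ or $(-,+)$, and never to $(-,-)$; hence in the $(-,+,-)$ sequence both outer segments necessarily lie in mixed-sign regions.

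Finally, I would assign a degree-$2$ PTF to each pattern. For $(-,+,-)$, use $p(\bx) := (\bu^* \cdot \bx + t_1)(\bv^* \cdot \bx + t_2)$, which is at least $\gamma^2$ on the central $(+,+)$ segment and at most $-\gamma^2$ on both mixed outer segments by the $\gamma$-margin assumption. For $(+,-,-)$ and $(-,-,+)$, identify the halfspace whose sign remains constant across the two non-positive segments (the one corresponding to the line crossed adjacent to the $(+,+)$ segment) and use the corresponding linear function $\pm(\bu^* \cdot \bx + t_1)$ or $\pm(\bv^* \cdot \bx + t_2)$, achieving margin $\gamma$. For $(-,-,-)$ and single-label degenerate cases, a constant PTF suffices. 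The main subtlety lies in the $(-,+,-)$ case, which is the only one requiring the quadratic polynomial and which yields the weakest bound $\Omega(\gamma^2)$; its correctness hinges on the sign-flip structural observation above, which rules out a $(-,-)$ segment abutting a $(+,+)$ segment within the strip.
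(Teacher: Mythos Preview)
Your overall strategy is sound and morally the same as the paper's, but there is a real gap in the middle step. You reason about the sign-pair pattern ``along the strip's axis'' --- a single line in $V$ --- and then apply the resulting PTF to every point of the band. But the band projects to a two-dimensional strip in $V$, and a priori the sign pair $(\sgn f_1,\sgn f_2)$ at a given axial coordinate $\bw'\cdot\bx$ can still depend on the transverse coordinate $\bv\cdot\bx$. In particular, for the crucial $(-,+,-)$ case your product polynomial $p=f_1f_2$ only works if no support point in the band has sign pair $(-,-)$; your axis argument rules out $(-,-)$ on the centerline, not on the full strip. Likewise, in the $(+,-,-)$ case the choice of which linear factor to use depends on which line is crossed ``first'', and that ordering could in principle change across the width of the strip.

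What is missing is precisely the step the paper isolates: use the $\gamma$-margin together with the $O(c\gamma)$ strip width to show that on the support of $D$ inside the band, $\sgn(\bu^*\cdot\bx+t_1)$ and $\sgn(\bv^*\cdot\bx+t_2)$ each depend only on $\bw'\cdot\bx$, where $\bw'\in V$ is the unit vector orthogonal to $\bv=\bw_V/\|\bw_V\|_2$. Concretely, writing $\bu^*=a\bv+b\bw'$ one has $\bu^*\cdot\bx+t_1=b(\bw'\cdot\bx)+(a\,t_0+t_1)+O(c\gamma)$ within the strip, so for $c$ small the sign on the support coincides with that of an affine function of $\bw'\cdot\bx$ alone, with margin $\Omega(\gamma)$. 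Once this one-dimensional reduction is in place, your enumeration of patterns and PTF assignments goes through verbatim. The paper does this reduction first and then observes that an intersection of two halfspaces in a single variable is either a single halfspace or an interval, the latter handled by the product of the two one-dimensional affine factors --- essentially your $(-,+,-)$ case but stated after the reduction rather than before it.
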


\begin{proof}[Proof of \Cref{lem:banding-PTF}]
    Let $h^*(\bx)=\sign(\bu^*\cdot \bx + t_1) \wedge \sign(\bv^* \cdot \bx + t_2)$ be the target hypothesis and $V$ be the subspace spanned by $\bu^*$ and $\bv^*$ and $\bw\in S^{d-1}$ such that $\norm{\bw_V}_2 \leq c\gamma$. 
    Notice that if $V$ is a one-dimensional subspace, then the statement trivially holds for every $\bw$. Therefore, without loss of generality, we assume that $V$ is a $2$-dimensional subspace.

    Without loss of generality, we assume that $|t_1|,|t_2|\leq 1$.
    Let $\bw^*=\bw_V/\norm{\bw_V}_2$.
    Given $V$ is a 2-dimensional subspace, take $\bw'\in S^{d-1}$ to be the unique direction that $\bw'\in V$ and $\bw\cdot\bw'=0$.
    Notice that for any $\bx\in B_t$, we have     
    \begin{align*}
     &\bu^*\cdot \bx + t_1\\
    =&\proj_{\bw^*}\bu^*\cdot \bx +\proj_{\bw'}\bu^*\cdot \bx+ t_1\\
    =&\sign(\bw^*\cdot \bu^*)\|\proj_{\bw^*}\bu^*\|_2\bw^*\cdot \bx +\proj_{\bw'}\bu^*\cdot \bx+ t_1\\
    =&\sign(\bw^*\cdot \bu^*)\|\proj_{\bw^*}\bu^*\|_2\bw\cdot \bx+\sign(\bw^*\cdot \bu^*)\|\proj_{\bw^*}\bu^*\|_2(\bw^*-\bw)\cdot \bx\\
    &+\proj_{\bw'}\bu^*\cdot \bx+ t_1\; .
    \end{align*}
    Notice that $\norm{\bw_V}\leq c\gamma$ implies that $\|\bw^*-\bw\|_2\leq 2c\gamma$.
    Thus, for the second term, 
    we have \[|\sign(\bw^*\cdot \bu^*)\|\proj_{\bw^*}\bu^*\|_2(\bw^*-\bw)\cdot \bx|\leq \|\proj_{\bw^*}\bu^*\|_2\|\bw^*-\bw\|_2\|\bx\|\leq 2c\gamma.\] 
    Since the distribution satisfies $\gamma$-margin condition, we get for any $\bx\in B_t$ and on the support of $D$,
    \begin{align*}
    \sign (\bu^*\cdot \bx + t_1)
    =&\sign\left (\sign(\bw^*\cdot \bu^*)\|\proj_{\bw^*}\bu^*\|_2\bw\cdot \bx
    +\proj_{\bw'}\bu^*\cdot \bx+ t_1\right )
    \\
    =&\sign\left (\proj_{\bw'}\bu^*\cdot \bx+ (t_1+\sign(\bw^*\cdot \bu^*)\|\proj_{\bw^*}\bu^*\|_2 t)\right )\\
    =&\sign\left (\sign(\bw'\cdot \bu^*)\bw'\cdot \bx+ (t_1+\sign(\bw^*\cdot \bu^*)\|\proj_{\bw^*}\bu^*\|_2 t)/\|\proj_{\bw'}\bu^*\|_2\right )
    \; .
    \end{align*}
    Since $\|\proj_{\bw'}\bu^*\|_2\leq 1$, we have for any $\bx\in B_t$ satisfies 
    \[\sign (\bu^*\cdot \bx + t_1)=\sign (\sign(\bw'\cdot \bu^*)\bw'\cdot \bx + t_1') \]
    with $\Omega(\gamma)$ margin,
    where $t_1'=(t_1+\sign(\bw^*\cdot \bu^*)\|\proj_{\bw^*}\bu^*\|_2 t)/\|\proj_{\bw'}\bu^*\|_2$.
    By symmetry, we also have $\sign (\bv^*\cdot \bx + t_2)=\sign (\sign(\bw'\cdot \bv^*)\bw'\cdot \bx + t_2')$.
    
    Therefore, it suffices for us to show that there is a degree-2 PTF function with $\Omega(\gamma^2)$ margin that is consistent with the intersection of two halfspaces
    $f'(\bx)=\sign (\sign(\bw'\cdot \bu^*)\bw'\cdot \bx + t_1')\land \sign (\sign(\bw'\cdot \bv^*)\bw'\cdot \bx + t_2')$ with $\Omega(\gamma)$ margins.  
    Without loss of generality, we can always assume that $|t_1'|\leq 1$, $|t_2'|\leq 1$ and $\sign(\bw'\cdot \bu^*)\neq \sign(\bw'\cdot \bv^*)$. Because, otherwise, $f'(\bx)$ is equivalent to either $\sign (\sign(\bw'\cdot \bu^*)\bw'\cdot \bx + t_1')$ or $\sign (\sign(\bw'\cdot \bv^*)\bw'\cdot \bx + t_2')$. 
    Without loss of generality, assume that $\sign(\bw'\cdot \bu^*)=1$ and $\sign(\bw'\cdot \bv^*)=-1$. 
    Furthermore, we can without loss of generality assume that $-t_1'\leq t_2'$, because otherwise, $f'$ is equivalent to the $-1$ constant function.
    Given the above assumptions, we can without loss of generality assume that the function $f'$ is equivalent to 
    \[
        f(\bx)=
        \begin{cases}
			-1, & \text{for $\bw'\cdot \bx\in  (-\infty,-t_1'-c\gamma]$}\\
            1, & \text{for $\bw'\cdot \bx\in  [-t_1'+c\gamma,t_2'-c\gamma]$}\\
            -1, & \text{for $\bw'\cdot \bx\in  [t_2'+c\gamma,\infty]$}\; ,
		\end{cases}
    \]
    where from the margin condition, for any $\bx\in B_t$ and from the support of $D$ we cannot have $\bw'\cdot \bx\in [-t_1'-c\gamma,-t_1'+c\gamma]\cup [t_2'-c\gamma,-t_1'+c\gamma]$ and $c$ is a sufficiently small constant.
    Therefore, simply take the degree-2 PTF as as $\sign(p(\bx))$, where $p(\bx)=(\sign(\bw'\cdot \bu^*)\bw'\cdot \bx + t_1')(\sign(\bw'\cdot \bv^*)\bw'\cdot \bx + t_2')$ will immediately give us a function that is consistent with $f'$ with $c\gamma^2$ margins and $c$ is a sufficiently small constant. This completes the proof.

\end{proof}


\subsection{Proof of \Cref{thm:main-weak-learner}}\label{app weak learn}

In this section, we give the main weak learning algorithm as \Cref{alg:banding-weak} and present the proof of \Cref{thm:main-weak-learner}. For convenience, we restate \Cref{thm:main-weak-learner} below.

\begin{theorem} \label{thm:main-weak-learner re}
    There is an algorithm $\A$ such that for every instance of learning intersections of two halfspaces with $\gamma$-margin assumption, given $\bw\in S^{d-1}$ such that $\norm{\bw_W}_2 \leq c\gamma$ where $c$ is a sufficiently small constant, $\A$ draws $\poly(d,1/\gamma)$ examples from $D$, runs in $\poly(d,1/\gamma)$ time and outputs a hypothesis $h: \B^d(1) \to \{\pm 1\}$ such that with probability at least $2/3$, $\err(h) \le 1/2-\Omega(\gamma)$.
\end{theorem}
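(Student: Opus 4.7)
The plan is to use \Cref{lem:banding-PTF} to reduce weak learning to learning a degree-$2$ polynomial threshold function inside a single thin band along $\bw$. Discretize $[-1,1]$ into $O(1/\gamma)$ intervals of width $c\gamma$, yielding a partition of $\B^d(1)$ into bands $B_t = \{\bx\in \B^d(1) : \bw\cdot \bx \in [t, t+c\gamma]\}$. \Cref{lem:banding-PTF} guarantees that conditioned on $\bx \in B_t$, the labeling rule is a degree-$2$ PTF $\sgn(p^*_t)$ with $\Omega(\gamma^2)$-margin, and by pigeonhole at least one band $B_{t^*}$ has probability mass $\alpha_{t^*} = \Omega(\gamma)$.

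I would then, for each band $B_t$ whose estimated mass is at least $\Omega(\gamma)$ (a preliminary sample of size $\poly(1/\gamma,\log(1/\delta))$ suffices to decide this reliably), draw $\poly(d,1/\gamma)$ further labeled examples from $D$, keep those landing in $B_t$, and lift them through the degree-$2$ feature map $\phi(\bx) = (1,\bx,\bx\otimes\bx)$. Since $\|\phi(\bx)\|_2 = O(1)$ on $\B^d(1)$, and the target polynomial $p^*_t$ produced in the proof of \Cref{lem:banding-PTF} has $O(1)$ coefficient norm (it is the product of two affine forms with unit-norm weight vectors and $O(1)$-sized thresholds), the induced linear classification problem in feature space admits a separator of $O(1)$ norm achieving functional margin $\Omega(\gamma^2)$. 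The Perceptron algorithm therefore converges after $O(1/\gamma^4)$ updates, and a standard realizable-case generalization bound for degree-$2$ PTFs (VC-dimension $O(d^2)$) yields a hypothesis $h_t:\B^d(1)\to\{\pm 1\}$ with $\Pr[h_t(\bx)\ne y\mid \bx\in B_t] \le 1/10$ with high probability.

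Given $h_t$, form two candidates $\hat h_{t,c}(\bx) = h_t(\bx)\,\Ind[\bx\in B_t] + c\,\Ind[\bx\notin B_t]$ for $c \in \{-1,+1\}$. A direct computation gives $\err(\hat h_{t,c}) = \alpha_t\,\Pr[h_t(\bx)\ne y\mid \bx\in B_t] + (1-\alpha_t)\,\Pr[y\ne c\mid \bx\notin B_t]$, so choosing $c$ to match the majority label outside $B_t$ yields $\err(\hat h_{t,c}) \le \alpha_t/10 + (1-\alpha_t)/2 = 1/2 - (2/5)\alpha_t$. For $t = t^*$ this delivers the claimed $\Omega(\gamma)$ advantage. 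Since $t^*$ is unknown, I run the above procedure for every band with estimated mass $\Omega(\gamma)$, collect the resulting $O(1/\gamma)$ candidate hypotheses, and select the minimizer of empirical error on a fresh validation sample of size $\poly(1/\gamma,\log(1/\delta))$; a standard Hoeffding plus union-bound argument then ensures the chosen hypothesis has error $1/2 - \Omega(\gamma)$ with probability at least $2/3$.

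The main obstacle I anticipate is the quantitative control of the lifted Perceptron step: the $\Omega(\gamma^2)$ functional margin translates into a useful normalized margin only if the target weight vector in feature space has bounded norm. I would handle this by inspecting the explicit classifier produced inside the proof of \Cref{lem:banding-PTF}, where the in-band target is $\sgn\bigl((\sgn(\bw'\cdot \bu^*)\,\bw'\cdot\bx + t_1')(\sgn(\bw'\cdot \bv^*)\,\bw'\cdot\bx + t_2')\bigr)$ with $|t_1'|,|t_2'| = O(1)$, so every coefficient of $p^*_t$ is $O(1)$ and the normalized margin in the lifted space is $\Omega(\gamma^2)$ as needed. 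A secondary, purely bookkeeping point is that the band-mass filter depends on a one-dimensional empirical CDF along $\bw$, which is controlled uniformly across all $O(1/\gamma)$ cut points by a DKW-type inequality using $\tilde O(1/\gamma^2)$ samples; this cost is absorbed into the $\poly(d,1/\gamma)$ budget.
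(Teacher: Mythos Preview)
Your proposal is correct and follows essentially the same approach as the paper: discretize along $\bw$ into $O(1/\gamma)$ bands, invoke \Cref{lem:banding-PTF} to get a degree-$2$ PTF with $\Omega(\gamma^2)$ margin inside each band, lift via the Veronese map and run Perceptron, and combine the in-band predictor with the majority constant outside the band to get advantage $\Omega(\alpha_t)=\Omega(\gamma)$. The only cosmetic difference is that the paper's algorithm terminates at the \emph{first} band whose estimated mass exceeds $1/(2|T|)$ (any such band works, so no validation step is needed), whereas you enumerate all heavy bands and select by validation; both are fine, and your treatment of the coefficient-norm issue for the lifted Perceptron matches what the paper relies on implicitly.
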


The algorithm in \Cref{thm:main-weak-learner re} is provided as \Cref{alg:banding-weak}.
\begin{algorithm} [h]
    \caption{Weak Learning Intersection of Halfspaces under Product Distribution using a Relevant Direction}
    \label{alg:banding-weak}
     \textbf{Input:} i.i.d.\ sample access to a distribution $D$ on $\B^d(1)\times \{\pm 1\}$ that is an instance of learning intersections of halfspaces under product distribution with $\gamma$ margins and $\bu \in S^{d}(1)$ such that $\|\proj_{V^{\perp}}\bu\|_2\leq c\gamma$ where $c$ is a sufficiently small constant.
        
    \textbf{Output:} With at least a constant probability, the algorithm outputs a hypothesis $h$ such that $\pr_{(\bx,y)\sim D}[h(\bx)\neq y]\leq 1/2-\Omega(\gamma)$.
    \vspace{0.3cm}
    \begin{algorithmic} [1]
    \State Let discrete set $T\subseteq [-1,1]$ such that $|T|\leq 2/(c_1\gamma)$ and for any $t^*\in [-1,1]$, there exists a $t\in T$ such that $|t-t^*|\leq c_1\gamma$, where $c_1$ is a sufficiently small constant depending on $c$.
    \For {$t\in T$}
    \State Let the localization area be defined as $B_t=\{\bx\in \B^d(1)\mid \bx\cdot\bu\in [t-c_1\gamma,t+c_1\gamma]\}$. 
    \State \algparbox{ Estimate $\pr_{(\bx,y)\sim D}[\bx\sim B_t]$ with $\hat{P}_t:=\frac{1}{n}\sum_{(\bx,y) \in S}\Ind(\bx \in B_t)$ by drawing a set $S$ of $\poly(1/\gamma)$ from $D$. }
    \label{line:weak-learn-band-mass-estimation}
    \If{$\hat{P}_t\geq 1/(2|T|)$} \label{line:weak-learn-if}
     \State \algparbox{Let $D_t$ be the distribution of $(V(\bx),y)\sim D$ conditioned on $\bx\in B_t$ where $V:\R^d\to \R^{(d+1)^2}$ defined as $V(\bx)=[\bx,1]^{\otimes 2}$ is the degree-$2$ Veronese mapping.}
     \State \algparbox{Use rejection sampling to get sample access to $D_t$ and apply the standard perceptron to learn a LTF with $c_2\gamma^2$ margins to additive error $1/4$ with success probability $1/2$, where $c_2$ is a sufficiently small constant depending on $c_1$, and let $h':\R^{(d+1)^2}\to \{-1,1\}$ be the output hypothesis.}\label{line:perceptron}
     \State \algparbox{
     Select the best $c'\in \{-1,1\}$ and 
     return the hypothesis $h$ defined as 
        \begin{align*}
                h(\bx) &=
                \begin{cases}
                    h'(V(\bx)), & \text{if } \bx\in B_t\; ;\\
                    c'      ,& \text{otherwise,}
                \end{cases}
        \end{align*}            
        and terminate.}

    \EndIf
    \EndFor
    \end{algorithmic}
\end{algorithm} 

\begin{proof} [Proof of \Cref{thm:main-weak-learner re}]
    From Chernoff bound, we have that the empirical estimation $\hat{P}$ in Line~\ref{line:weak-learn-band-mass-estimation} has error at most $c/|T|$ with failure probability at most $c/|T|$, where c is a sufficiently small constant.   
    Therefore, with at least constant probability, all estimations in Line~\ref{line:weak-learn-band-mass-estimation} have error at most $c/|T|$.
    Since we only need to show that the algorithm succeeds with a constant probability,
    we assume that all estimations have error at most $c/|T|$ and the algorithm in Line~\ref{line:weak-learn-band-mass-estimation} succeeds for the rest of the proof.
    
    Notice that from the definition of $T$, we get $\B^d(1)\subseteq\bigcup_{t\in T}B_t$.
    Therefore, there must be a $t\in T$ such that $\pr_{(\bx,y)\sim D}[\bx\in B_t]\ge 1/|T|$.
    Thus, the ``if'' condition in Line~\ref{line:weak-learn-if} must be satisfied at some point.
    Suppose that it is satisfied for $B_t$, then we must have $\pr_{(\bx,y)\sim D}[\bx\in B_t]\geq 1/(3|T|)$.
    Furthermore, from \Cref{lem:banding-PTF}, we must have that there is a degree-2 PTF that is consistent with the distribution $D'_t$ with $\Omega(\gamma^2)$ margins,
    where $D'_t$ is defined as the distribution of $(\bx,y)\sim D$ conditioned on $\bx\in B_t$.
    Notice that such a polynomial $p:\R^d\to \{\pm 1\}$ can be written in the form $p(\bx)=\sgn(T \cdot [\bx,1]^{\otimes 2})$ for some $\|T\|_f\leq 1$.
    Therefore, we must have that there is an LTF $h:(\R^{d+1})^{\otimes 2}\to \{\pm 1\}$ defined as $h(\bx)=\sgn(T\cdot \bx)$ that is consistent with the distribution $D_t$ with $\Omega(\gamma^2)$ margins, where the constant where depends on $c_1$.
    Then from the correctness of the perceptron algorithm (see \cite{cristianini2000introduction}), we must have that with at least constant probability (when the perceptron algorithm succeeds),
    $\pr_{(\bx,y)\sim D_t}[h'(\bx)\neq y]\leq 1/4$ .
    From the definition of $D_t$ and $D'_t$, this also implies that 
    \[\pr_{(\bx,y)\sim D'_t}[h'(V(\bx))\neq y]\leq 1/4\; .\]
    Therefore, with at least constant probability, the error of the output hypothesis is
    \begin{align*}
    \pr_{(\bx,y)\sim D}[h(\bx)\neq y]=&\pr_{(\bx,y)\sim D}[h'(V(\bx))\neq y\land \bx\in B_t]+\pr_{(\bx,y)\sim D}[c'\neq y\land \bx\not\in B_t]\\
    \leq &\frac{1}{4}\pr_{(\bx,y)\sim D}[\bx\in B_t]+\pr_{(\bx,y)\sim D}[c'\neq y\land \bx\not\in B_t]\; .
    \end{align*}
    Since we are choosing the best constant $c'\in \{-1,1\}$, with at least constant probability, we have $\pr_{(\bx,y)\sim D}[c'\neq y\land \bx\not\in B_t]\leq \frac{1}{2}\pr_{(\bx,y)\sim D}[\bx\not\in B_t]$. Combing with the fact that $\pr_{(\bx,y)\sim D}[\bx\in B_t]\geq 1/(3|T|)$ and $|T|=O(1/\gamma)$, we get 
    \[
    \pr_{(\bx,y)\sim D}[h(\bx)\neq y]\le 1/2-\Omega(\gamma)\; .
    \]
    This completes the proof.
\end{proof}

\section{Proof of \Cref{th main}}\label{app main}
In this section, we present a detailed version of our main algorithm as \Cref{alg main} and the full proof of \Cref{alg main}.

\begin{algorithm}[h]
		\caption{\textsc{Learning Intersections of Two Halfspaces} (Computationally efficient algorithm for learning intersections of two halfspaces)}\label{alg main}
		\begin{algorithmic} [1]
        
\State  \textbf{Input:} $\eps,\delta,\gamma \in (0,1)$ and \iid sample access to a distribution $D$ on $\B^d(1)\times \{\pm 1\}$ that is an instance of learning intersections of halfspaces under product distribution with $\gamma$-margin assumption. 

\State \textbf{Output:} With probability at least $1-\delta$, the algorithm outputs a hypothesis $\hat{h}: \R^d \to \{\pm 1\}$, such that $\err(\hat{h}) \le \eps$.

\State Draw $m_0 = O(1/\eps,\log(1/\delta))$ examples $S_0:=\{(\bx^{(i)},y^{(i)})\}_{i=0}^{m_0}$ from $D$ and estimate $\hat{p}: = \frac{1}{m_0}\Ind (y^{(i)} = 1)$. If $\hat{p}<\eps$ or $\hat{p}>1-\eps$, return a constant hypothesis accordingly.
 
\State For $z \in \{\pm \}$, draw $m_z=\poly(d,\gamma,\log(1/\delta))$ \iid examples 
$S_z:=\{\bx^{(i)}\}_{i=0}^{m_z}$ from $D^z$ via rejection sampling.

\State For $z \in \{\pm\}$, let $\hat{D^z}$, the uniform distribution over $S_z$ be the empirical distribution of $D^z$.

\If{ $\forall t \in [3]$, $(\hat{D^+},\hat{D^-})$ satisfies $(\alpha_t\gamma^{c_t},t)$-moment matching condition, where $\alpha_1=1/64,\alpha_2=1/16,\alpha_3=1/2, c_1 = 4c, c_2=2c,c_3=c$.}
\State Run \Cref{alg tensor} over $D$, $\poly(d,1/\gamma,\log(1/\delta))$ times and denote by $\mathcal{O}$ the union of the outputs of running \Cref{alg tensor}.
\Else 
\State Find the first $t$ such that $(\hat{D^+},\hat{D^-})$ does not satisfy $(\alpha_t\gamma^{c_t}/\poly(d),t)$-moment matching condition.
\State Run \Cref{alg csq} with parameter $t$ over $D$, $O(\log(1/\delta))$ times and denote by $\mathcal{O}$ the union of the outputs of running \Cref{alg csq}.
\EndIf

\For{$\bw \in \mathcal{O}$}
\State Run Boosting algorithm to get a hypothesis $h_{\bw}$ using the weak learning algorithm used in \Cref{sec local} using $\bw$ as the input vector.
\EndFor

\State Draw $\poly(1/\eps, \log(d/(\gamma\delta)))$ \iid examples from $D$ and find the hypothesis $\hat{h}$ from $\hat{H}=\{h_{\bw} \mid \bw \in \mathcal{O}\}$ with smallest empirical error 

\State \Return $\hat{h}$.

\end{algorithmic}
\end{algorithm}

Before presenting the proof of \Cref{th main}, it is convenient to recall the well known Adaboost algorithm developed by \cite{schapire2013boosting}.

\begin{theorem}[AdaBoost \cite{schapire2013boosting}]\label{th boost}
    Let $H$ be a binary hypothesis class over a space of example $X$. A learning algorithm $\mathcal{A}$ is said to be an $\alpha$-weak learning algorithm for $H$ if for every distribution $D$ over $H$ such that there exists some $h^* \in H$ such that $\err_D(h^*)=0$, $\mathcal{A}$ outputs in $T(\mathcal{A})$ time a hypothesis $c: X \to \{\pm 1\}$ such that $\err_D(c) \le 1/2-\alpha$ by drawing \iid examples from $D$. If the hypothesis $c$ output by $\mathcal{A}$ belongs to a hypothesis class $\mathcal{C}$ with VC-dimension $d$, then there is an algorithm AdaBoost such that for every $\epsilon,\delta$ and for every distribution $D$ over $H$ such that there exists some $h^* \in H$ with $\err_D(h^*)=0$, it draws a set $S$ of $\poly(d,1/\epsilon,1/\alpha,\log(1/\delta))$ examples from $D$,  and outputs in $\poly(d,1/\epsilon,1/\alpha,\log(1/\delta),T(\mathcal{A}))$ time a hypothesis $\hat{h}:X \to \{\pm 1\}$ with $\err_D(\hat{h}) \le \epsilon$ with probability at least $1-\delta$, by running $\mathcal{A}$, $O(\log(1/\epsilon)/\alpha^2)$ times over distributions over $S$.
\end{theorem}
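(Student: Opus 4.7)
The plan is to analyze the standard AdaBoost meta-algorithm. Draw a set $S=\{(\bx_i,y_i)\}_{i=1}^m$ of $m=\poly(d,1/\epsilon,1/\alpha,\log(1/\delta))$ examples \iid from $D$, then run for $T=\Theta(\log(1/\epsilon)/\alpha^2)$ rounds. At each round $t$ maintain a distribution $D_t$ on $S$ (with $D_1$ uniform), invoke the weak learner $\mathcal{A}$ on $D_t$ (simulated via weighted rejection sampling from $S$, with each invocation's failure probability set to $\delta/(3T)$ and union-bounded) to obtain $c_t\in\mathcal{C}$ whose $D_t$-weighted error $\epsilon_t$ satisfies $\epsilon_t\le 1/2-\alpha$, then set $w_t=\tfrac12\ln\tfrac{1-\epsilon_t}{\epsilon_t}$ and update $D_{t+1}(i)\propto D_t(i)\exp(-w_t y_i c_t(\bx_i))$. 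Return $\hat h(\bx)=\sgn(F(\bx))$ with $F(\bx)=\sum_{t=1}^T w_t c_t(\bx)$. A key preliminary observation is that since some $h^\ast\in H$ has zero error under $D$, with high probability $h^\ast$ has zero empirical error on $S$, hence zero error under every reweighting $D_t$; this validates the realizable-case premise of $\mathcal{A}$ on each round.

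The first core step is the empirical-error bound via the standard potential-function (multiplicative-weights) argument. Using $\Ind[y F(\bx)\le 0]\le e^{-yF(\bx)}$ and telescoping the normalization constants yields
\begin{equation*}
\err_S(\hat h)\le \frac{1}{m}\sum_{i=1}^m e^{-y_i F(\bx_i)} \;=\;\prod_{t=1}^T Z_t,\qquad Z_t=\sum_i D_t(i)\, e^{-w_t y_i c_t(\bx_i)}.
\end{equation*}
A direct one-dimensional minimization in $w_t$ (which motivates the chosen $w_t$) gives $Z_t=2\sqrt{\epsilon_t(1-\epsilon_t)}\le\sqrt{1-4\alpha^2}\le e^{-2\alpha^2}$, since $\epsilon_t\le 1/2-\alpha$. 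Therefore $\err_S(\hat h)\le e^{-2T\alpha^2}\le \epsilon/2$ for our choice of $T$.

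The second step converts empirical error into distributional error via uniform convergence. The output $\hat h$ lies in $\mathcal{F}_T$, the class of sign-of-weighted-sum combinations of at most $T$ hypotheses from $\mathcal{C}$. A standard Sauer-type bound gives $\mathrm{VC}(\mathcal{F}_T)=O(Td\log(Td))$. Choosing $m=\Omega\bigl((Td\log(Td)+\log(1/\delta))/\epsilon\bigr)=\poly(d,1/\epsilon,1/\alpha,\log(1/\delta))$ ensures $|\err_D(\hat h)-\err_S(\hat h)|\le \epsilon/2$ with probability $1-\delta/3$. Combined with the training-error bound and the success of all $T$ weak-learner calls, this yields $\err_D(\hat h)\le \epsilon$ with probability at least $1-\delta$. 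Runtime is $\poly(d,1/\epsilon,1/\alpha,\log(1/\delta),T(\mathcal{A}))$ since each of the $T$ rounds performs one call to $\mathcal{A}$ plus a linear-time reweighting over $S$.

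The main obstacle is the potential-function calculation: verifying that $w_t=\tfrac12\ln((1-\epsilon_t)/\epsilon_t)$ is the minimizer of $Z_t$ and that the minimum equals $2\sqrt{\epsilon_t(1-\epsilon_t)}$, which is the exact step that converts the weak-learning advantage $\alpha$ into an exponential decrease of training error. A secondary (implementation) subtlety is faithfully simulating sample access to $D_t$ for $\mathcal{A}$: either pass weights directly if $\mathcal{A}$ accepts them, or use rejection sampling and absorb the resulting overhead into the sample complexity. Neither obstacle requires new ideas beyond the classical AdaBoost analysis, but both must be handled cleanly for the stated polynomial bounds.
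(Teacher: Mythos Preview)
The paper does not prove this theorem; it is stated as a black-box citation to \cite{schapire2013boosting} and used as a known tool in the proof of \Cref{th main}. Your proposal is the standard AdaBoost analysis (multiplicative-weights potential bound on training error, followed by a VC bound on weighted majorities of base hypotheses), which is exactly what one would find in the cited reference, so there is nothing to compare.
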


\begin{proof} [Proof of \Cref{th main}]
We first prove the correctness of \Cref{alg main}.
    Notice that if $h^*$ is $\epsilon$-close to any constant hypothesis, i.e. $\min\{\Pr_{\bx\sim D}(h^*(\bx) =+1),\min\{\Pr_{\bx\sim D}(h^*(\bx) =-1)\}\le \epsilon/2$, then by Hoeffding's inequality \cite{Ver18}, with probability at least $1-O(\delta)$, $\min\{\hat{p},1-\hat{p}\}<\eps$. In this case, \Cref{alg main} outputs a constant hypothesis with error $\eps/2$ in $\poly(1/\eps,\log(1/\delta))$ time. 
    In the rest of the proof, we assume $\min\{\Pr_{x\sim D}(h^*(x) =-1)\}> \epsilon/2$. 

     Since each $\bx \sim D_X$ has $\norm{\bx}\le 1$, we know that for $z \in \{\pm 1\}$ and $k=\{1,2,3\}$, the empirical distribution $\hat{D^z}$, constructed with $m_z=\poly(d,1/\gamma,\log(1/\delta))$ \iid examples from $D^z$, satisfies $\norm{\E_{\bx \sim \hat{D^z}}(\bx)-\E_{\bx \sim D^z}(\bx)}_F \le \frac{1}{100}(\gamma/d)^{10c}$
    Given this happens, we consider two cases for the empirical distribution $(\hat{D^+},\hat{D^-})$. In the first case, $\forall t \in [3]$, $(\hat{D^+},\hat{D^-})$ satisfies $(\alpha_t\gamma^{c_t}/\poly(d),t)$-moment matching condition, where $\alpha_1=1/64,\alpha_2=1/16,\alpha_3=1/2$ and $c_t\ge c$. By Hoeffding's inequality, we know that $D$ satisfies 
\begin{enumerate}
    \item $\norm{(\E_{\bx\sim D^+}-\E_{\bx\sim D^-}) \bx}_F \le \gamma^c$. 
    \item $\norm{(\E_{\bx\sim D^+} - \E_{\bx\sim D^-})\bx\bx^\intercal}_F \le \gamma^c$
    \item $\norm{(\E_{\bx\sim D^+} - \E_{\bx\sim D^-})\bx^{\otimes 3}}_F\le \gamma^c$
\end{enumerate}
By \Cref{th tensor}, we know that with probability at least $\Omega(\gamma/d)$, \Cref{alg tensor}, outputs a list of $d$ unit vectors $\bw$ such that at least one of the $\bw$ satisfies $\norm{\bw_V}_2 \le \poly(\gamma)$. Thus, by running \Cref{alg tensor} $\poly(d,1/\gamma,\log(1/\delta))$ times, with probability at least $1-O(\delta)$, one of these implementations satisfies the above guarantee, which implies that we have a list of unit vectors $\mathcal{O}$ of size $\poly(d,1/\gamma,\log(1/\delta))$ such that one of the unit vectors $\bw$ satisfies $\norm{\bw_V}_2 \le \poly(\gamma)$.

In the second case, there must be some $t \in [3]$ such that $(\hat{D^+},\hat{D^-})$ does not satisfy the $(\alpha_t\gamma^{c_t}/\poly(d),t)$-moment matching condition. In this case, we consider the smallest $t \in [3]$ that satisfies the above condition. By the choice of $\alpha_t$, it is always holds that $\alpha_{t-1} \le 2^{-t} \alpha_t$. This implies that $D$ does not satisfy $(\alpha_t\gamma^{c_t},t)$ moment matching condition but satisfies $(\alpha_t\gamma^{2c_t}/(2^t\poly(d)),t')$-moment matching condition, for every $t'\le t$. Since $(\gamma^{c_t}) = \gamma^{\Omega (c)}$, by \Cref{thm:csq-direction-extraction}, with probability $\Omega(1)$, \Cref{alg csq} outputs a vector $\bw$ such that $\norm{\bw_V}_2 \le \poly(\gamma)$. Thus, by running \Cref{alg csq}, $\O(\log(1/\delta))$ times, we obtain a list of $O(\log(1/\delta))$ unit vectors such that at least one of the $\bw$ satisfies $\norm{\bw_V}_2 \le \poly(\gamma)$.

By \Cref{thm:main-weak-learner}, we know that provided a unit vector $\bw$ such that $\norm{\bw_V}_2 \le \poly(\gamma)$, \Cref{alg:banding-weak} is a $\Omega(\gamma)$-weak learner. And the hypothesis output by \Cref{alg:banding-weak} is a polynomial threshold function restricted at some band, which has a VC dimension $O(d)$.
Given this happens, \Cref{th boost} implies Adaboost takes \Cref{alg:banding-weak} as a weak learner and outputs a hypothesis $h_\bw$ such that $\err(h_\bw) \le \eps$ with probability at least $1-O(\delta)$. Since one of the $\bw$ satisfies $\norm{\bw_V}_2 \le \poly(\gamma)$ and $\mathcal{O}$ has size at most $\poly(d,1/\gamma,\log(1/\delta))$, a standard hypothesis testing approach outputs some $\hat{h} \in \{h_\bw \mid \bw \in \mathcal{O}\}$ with $\err(\hat{h}) \le \eps$ with probability at least $1-O(\delta)$. By union bound, with probability at least $1-\delta$, \Cref{alg main} outputs a hypothesis $\hat{h}$ with $\err(\hat{h}) \le \eps$.

To conclude the proof of \Cref{th main}, we bound the sample complexity and the time complexity of \Cref{alg main}. Given $\min\{\Pr_{x\sim D}(h^*(x) =-1)\}> \epsilon/2$, sampling one example $\bx \sim D^z$, for $z \in \{\pm\}$ has a sample complexity $\Tilde{O}(1/\eps)$. Thus, constructing the empirical distribution $(\hat{D^+},\hat{D^-})$ takes $\poly(d,1/\gamma,\log(1/\delta))$ sample and time. On the other hand, by \Cref{th tensor} and \Cref{thm:csq-direction-extraction}, every time we run \Cref{alg tensor} and \Cref{alg csq}, it takes us $\poly(d,1/\gamma,\log(1/\delta))$ sample and time. Since we run \Cref{alg tensor} and \Cref{alg csq} at most $\poly(d,1/\gamma,\log(1/\delta))$ times, we know that constructing $\mathcal{O}$ takes $\poly(d,1/\gamma,\log(1/\delta))$ sample and time. Finally, since $\mathcal{O}$ has size at most $\poly(d,1/\gamma,\log(1/\delta))$, by \Cref{thm:main-weak-learner} and \Cref{th boost}, it takes $\poly(d,1/\gamma,\log(1/\delta))$ sample and time to create $\{h_\bw \mid \bw \in \mathcal{O}\}$. Finally, as a hypothesis testing approach over $\{h_\bw \mid \bw \in \mathcal{O}\}$ can be done efficiently. We know that the sample complexity and time complexity of \Cref{alg main} are both $\poly(d,1/\gamma,\log(1/\delta))$.
\end{proof}


\section{CSQ Lower Bound on Learning Intersection of Margin Halfspaces under Factorizable Distributions}\label{app lb}
We give our main theorem for CSQ hardness as the following.

\begin{theorem} \label{thm:csq-lb-main}
    Let $\gamma>0$, $q\in \N$, $\tau\in (0,1)$ and $d'=\min(d,1/\gamma^2)$. 
    Any CSQ algorithm that learns intersections of two halfspaces on $d$-dimension
    with $\gamma$-margin under factorizable distributions 
    to error $1/2-\max(d'^{-\Omega(\log(1/\gamma))},2^{-d'^{\Omega(1)}})$ requires 
    $q$ queries of tolerance at most $\tau$, 
    where $q/\tau^2\geq \min(d'^{\Omega(\log(1/\gamma))},2^{d'^{\Omega(1)}})$.  
\end{theorem}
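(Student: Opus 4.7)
The plan is to instantiate the standard CSQ dimension framework via \Cref{lem:sq-lb}, by constructing a large family of ``hard'' factorizable distributions hiding a two-dimensional relevant subspace. Throughout I write $d' = \min(d, 1/\gamma^2)$ and $k = \Theta(\log(1/\gamma))$.

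\textbf{Step 1 (one-dimensional hard block).} First I would build a univariate distribution $A$ on $[-1,1]$ with labels $y \in \{\pm 1\}$ such that (i) the support respects the $\gamma$-margin: every atom $x$ satisfies $|x - t_1|, |x - t_2| \geq \gamma$ for fixed thresholds $t_1 < t_2$, with $y = +1$ iff $t_1 \leq x \leq t_2$ (an intersection of two halfspaces in one dimension); and (ii) the signed density $P_{A^+} - P_{A^-}$ is orthogonal to the Hermite polynomials $\mathrm{He}_0, \mathrm{He}_1, \ldots, \mathrm{He}_{k-1}$ under the standard Gaussian. Such moment-matching discrete measures are standard via Chebyshev-polynomial / Gauss-quadrature constructions (cf. Diakonikolas--Kane and Klivans--Sherstov) and exist with $k = \Omega(\log(1/\gamma))$.

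\textbf{Step 2 (two-dimensional lift and embedding).} On a 2-D subspace $V = \mathrm{span}\{\bu_1, \bu_2\} \subset \R^d$ I would place the 2-D marginal $D_V^{2\mathrm{D}} = A_{\bu_1} \otimes A_{\bu_2}$ (or, more simply, a single copy of $A$ along $\bu_1$ and a point mass / Gaussian along $\bu_2$ so that the labeling is still an intersection of two halfspaces with margin $\gamma$). Then define the factorizable joint distribution $D_V$ on $\R^d \times \{\pm 1\}$ by
\[
D_{V,X} = D_V^{2\mathrm{D}} \times \mathcal{N}(0, I_{V^\perp}), \qquad y = h^*_V(\bx_V),
\]
where $h^*_V$ is the intersection of two $\gamma$-margin halfspaces on $V$. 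The reference null distribution $D_0$ has marginal $\mathcal{N}(0, I_d)$ and label $y$ uniform, so that $f_{D_0, D_0} \equiv 0$.

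\textbf{Step 3 (pairwise CSQ correlation).} The heart of the argument is to bound, for distinct 2-D subspaces $V \neq V'$, the correlation $\chi(V, V') := \E_{\bx \sim D_0}[f_{D_V, D_0}(\bx) f_{D_{V'}, D_0}(\bx)]$. Because $D_0$ is Gaussian and the ``non-Gaussian'' signal on $V$ (respectively $V'$) has its first $k-1$ Hermite coefficients vanishing by Step 1, a standard Hermite-expansion computation (as in the NGCA / hidden-direction lower bounds) yields
\[
|\chi(V,V')| \;\leq\; C^k \cdot \bigl\lVert \proj_V \proj_{V'} \bigr\rVert_F^{\,k} \;\leq\; C^k \max_{i,j} |\bu_i \cdot \bu_j'|^{\,k-O(1)}.
\]
Meanwhile $\chi(V,V) \leq \beta$ is controlled by the $L^2(D_0)$ mass of $f_{D_V, D_0}$, which is bounded because of the bounded support of $A$ together with the Gaussian factor on $V^\perp$.

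\textbf{Step 4 (packing and conclusion).} By a Gilbert--Varshamov / random-projection packing of 2-dimensional subspaces of $\R^{d'}$, I would produce $N = \min(d'^{\Omega(k)}, 2^{d'^{\Omega(1)}})$ subspaces with all pairwise overlaps at most $O(1/\sqrt{d'})$, making $\{f_{D_V, D_0}\}$ a $(\gamma_0, \beta)$-correlated family with $\gamma_0 \leq (C/\sqrt{d'})^{\,k}$. Feeding this into \Cref{lem:sq-lb} with $k = \Theta(\log(1/\gamma))$ gives the stated lower bound $q/\tau^2 \geq \min(d'^{\Omega(\log(1/\gamma))}, 2^{d'^{\Omega(1)}})$. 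The matching error floor $1/2 - \max(d'^{-\Omega(\log(1/\gamma))}, 2^{-d'^{\Omega(1)}})$ follows by the standard reduction from weak learning to the decision problem $\mathcal{B}(\{D_V\}, D_0)$: a weak learner with advantage larger than this floor would yield a CSQ distinguisher, contradicting the dimension bound.

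\textbf{Main obstacle.} The delicate part is Step 1 together with the ``boundedness of $f_{D_V, D_0}$'' needed in Step 3. The univariate block must simultaneously carry a hard two-threshold margin structure, be supported in $[-1,1]$ (to honor $\lVert \bx \rVert \leq 1$ after combining with $\mathcal{N}(0, I_{V^\perp})$, which requires a mild truncation argument), and have $k = \Omega(\log(1/\gamma))$ vanishing Hermite moments with coefficients that do not blow up; controlling the $\chi^2$-divergence of $D_V$ against $D_0$ so that $\beta$ in the CSQ dimension remains $O(1)$ (rather than growing with $k$) requires careful tuning of the atom weights. Once this block is in hand, the rest is a routine Hermite / packing computation, and the two terms in $\min(\cdot,\cdot)$ arise respectively from the polynomial-in-$d'$ packing of 2-D subspaces (giving the $d'^{\Omega(k)}$ regime) and from an exponential packing of near-orthogonal \emph{directions} (giving the $2^{d'^{\Omega(1)}}$ regime).
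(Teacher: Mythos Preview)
Your overall framework—hidden subspace, Hermite moment matching, near-orthogonal subspace packing, and the CSQ dimension lemma—is exactly what the paper does. The gap is in Step~1: the univariate interval indicator cannot match even the second moment, so it cannot yield $k=\Omega(\log(1/\gamma))$. Concretely, if $y=+1$ iff $x\in[t_1,t_2]$ with $\gamma$-margin, take $p(x)=(x-t_1)(t_2-x)$. Then $p>0$ on $\mathrm{supp}(A^+)\subseteq [t_1+\gamma,t_2-\gamma]$ and $p<0$ on $\mathrm{supp}(A^-)\subseteq[-1,t_1-\gamma]\cup[t_2+\gamma,1]$, so
\[
\E_{A^+}[p]\;>\;0\;>\;\E_{A^-}[p],
\]
forcing $\E_{A^+}[p]\neq\E_{A^-}[p]$. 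Since vanishing of the $j$-th Hermite coefficient of $f_{D_V,D_0}$ is exactly $\E_{A^+}[\mathrm{He}_j]=\E_{A^-}[\mathrm{He}_j]$, this shows your block can match at most one moment, giving only $d'^{\Omega(1)}$ and not $d'^{\Omega(\log(1/\gamma))}$. Your tensor-of-two-copies alternative does not help either: the label $y_1\wedge y_2$ becomes an intersection of four halfspaces, not two. The phenomenon you need is that an intersection of two \emph{non-parallel} halfspaces in $\geq 2$ dimensions can have approximate/threshold degree $\omega(1)$, which is impossible when the halfspaces are parallel (the 1D picture).

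The paper supplies the missing block by invoking Sherstov's hypercube construction: on $\{\pm 1\}^n$ with $n=\Theta(\log^2(1/\gamma))$ there is a distribution realizable by an intersection of two (non-parallel) halfspaces that is orthogonal to all polynomials of degree $c\sqrt{n}=\Theta(\log(1/\gamma))$. This distribution is then projected onto the 2-D span of the two weight vectors, smoothed by $\normal(0,\Theta(\gamma)I_2)$ noise to make it continuous with $\chi^2(D^\pm,\normal_2)=O(1/\gamma)$ (so $\beta=O(1/\gamma)$, not $O(1)$ as you hoped), and truncated/rescaled to sit inside $\B^2(1)$. After that, your Steps~2--4 and the paper's are essentially identical: embed along each $V$ from a near-orthogonal packing of $2^{d^{\Omega(1)}}$ planes (\Cref{lem:near-orthogonal}), bound pairwise correlations by the Hermite tail via the correlation lemma (\Cref{lem:correlation-lemma}), and apply \Cref{lem:sq-lb}. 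The single exponential packing already gives both regimes in the $\min(\cdot,\cdot)$; there is no need for two separate packings.
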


The high-level proof idea here follows the framework of Non-Gaussian Component Analysis (see \cite{DKS17-sq} and \cite{DKRS23}).
To prove the CSQ hardness, it suffices for us to prove a CSQ lower bound against an easier decision problem as defined in \Cref{def:decision-problem}. 
For convenience, instead of considering distributions on $\B^d\times \{\pm 1\}$, we will consider distributions on $\R^d\times \{\pm 1\}$.
As we will later see, the difference here is trivial as we will be able to rescale and truncate these distributions (for our construction) inside a ball at the cost of a very small total variation distance.
We show that given CSQ access to a joint distribution $D$ of $(\bx,y)$ 
supported on $\R^d\times \{\pm 1\}$,
it is hard to solve the problem $\mathcal{B}(\D,D)$ with the following distributions.
\begin{enumerate}[leftmargin=*]

\item [(a)] Null hypothesis: We have $\bx\sim \normal(0,\bI_d)$ and $y=1$ with probability $1/2$ independent of $\bx$.

\item [(b)] Alternative hypothesis: 
$D\in \D$, where $\D$ is a family of distributions such that 
for any distribution $D\in \D$, $D$ is close in total variation distance to a distribution $D'$ that is an instance 
of learning intersections of two halfspaces with $\gamma$-margin under factorizable distributions. 
\end{enumerate}

To construct the family of distribution $\D$, we first construct a distribution $D$ of $(\bx',y)$ supported on $\mathbb{B}^{2}(1)\times \{\pm 1\}$ that is consistent with an intersection of halfspaces with $\gamma$ margin and $\E_{(\bx,y)\sim D}[yp(\bx)]=0$ for any polynomial $p$ of degree at most $O(\log (1/\gamma))$. Being supported inside the ball here will be convenient for later truncation.
We give the following lemma for distribution $D$ where the extra third property here is needed for technical reasons.

 \begin{lemma} \label{lem:csq-dist}
    Let $\gamma>0$, then there exists a joint distribution $D$ of $(\bx,y)$ supported on $\B^2(1)\times \{\pm 1\}$ that satisfied the following conditions:
    \begin{enumerate}
        \item (Realizable by an intersection of two halfspaces with $\gamma$ margins) There exists an intersection of two halfspace $h^*$ such that $D$ is realizable by $h^*$ with $\gamma$ margins; \label{itm:realizable_ball}
        \item (Orthogonal with low-degree polynomial) For any polynomial $p:\R\to \R$ of degree at most $c\log (1/\gamma)$ where $c$ is a sufficiently small constant and $\E_{\bx\sim \normal_2}[p(\bx)]=0$, we have $\E_{(\bx,y)\sim D}[yp(\bx)]=0$; \label{itm:ortho_ball}
        \item (Bounded chi-squared distance with Gaussian) $\chi^2(D^+, \normal_2),\; \chi^2(D^-, \normal_2)=O(1/\gamma)$.\label{itm:chi_ball}
    \end{enumerate}
\end{lemma}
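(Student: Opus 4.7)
The plan is to realize $D$ as a balanced mixture $D = \tfrac{1}{2}(\mu^+\otimes \delta_{+1}) + \tfrac{1}{2}(\mu^-\otimes \delta_{-1})$, where $\mu^{\pm}$ are probability measures supported on the positive and negative $\gamma$-margin regions $R^{\pm}_\gamma$ of an intersection of two halfspaces $h^*$ chosen below. Condition~\ref{itm:realizable_ball} is then satisfied by construction. For a balanced mixture, $\E_D[y\,p(\bx)] = \tfrac{1}{2}(\E_{\mu^+}[p]-\E_{\mu^-}[p])$, and writing $p = (p-\E_{\normal_2}[p]) + \E_{\normal_2}[p]$ shows that condition~\ref{itm:ortho_ball}, restricted to polynomials with $\E_{\normal_2}[p]=0$, is equivalent to the stronger moment-matching statement $\E_{\mu^+}[p]=\E_{\mu^-}[p]$ for every polynomial $p$ of degree at most $k=c\log(1/\gamma)$.

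Existence of such $\mu^{\pm}$ on the disjoint margin regions $R^{\pm}_\gamma$ is an LP feasibility question. By separation of convex hulls in moment space, infeasibility would yield a polynomial $q$ of degree at most $k$ with $q>0$ on $R^+_\gamma$ and $q<0$ on $R^-_\gamma$, i.e., a degree-$k$ polynomial threshold function realizing $h^*$ with $\gamma$-margin. Choosing $h^*$ so that one of its defining halfspaces individually meets the classical $\Omega(\log(1/\gamma))$ PTF-degree lower bound for margin halfspaces (e.g., a halfspace $\sign(\bx_1-t)$ with $t=\Theta(1)$, combined with a second halfspace whose decision boundary is well separated from the first) rules out such a $q$ for any sufficiently small constant $c$, since a PTF realizing the intersection also realizes each constituent halfspace on the positive region.

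For condition~\ref{itm:chi_ball}, we construct $\mu^{\pm}$ explicitly as Hermite-reweighted restrictions of the Gaussian: $\mu^{\pm}(\bx) = (Z^{\pm})^{-1}\, w^{\pm}(\bx)\, \phi_{\normal_2}(\bx)\, \Ind(\bx\in R^{\pm}_\gamma)$, where $w^{\pm}\geq 0$ is a degree-$k$ polynomial chosen to enforce the moment-matching equations. These equations form a linear system in the $\binom{k+2}{2}$ coefficients of $w^{\pm}$; feasibility of this system (established above) together with a compactness/normalization argument yields $w^{\pm}$ of bounded sup-norm. Since we choose $h^*$ with $\Pr_{\normal_2}(R^{\pm}_\gamma)=\Omega(1)$ (so that the $\gamma$-margin regions retain constant Gaussian mass), we obtain $\chi^2(\mu^{\pm},\normal_2) = O(\|w^{\pm}\|_\infty^2/\Pr_{\normal_2}(R^{\pm}_\gamma)) = O(1)$, comfortably within the $O(1/\gamma)$ budget.

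The main obstacle lies in jointly enforcing the moment-matching equations and the non-negativity of $w^{\pm}$ on $R^{\pm}_\gamma$; non-negativity is a convex but non-linear constraint that does not automatically follow from solvability of the linear moment-matching system. The natural routes to overcome this are: (a) replacing polynomial reweighting by a discrete Gauss--Hermite-type placement of mass at carefully chosen nodes inside $R^{\pm}_\gamma$, which sidesteps positivity of $w^{\pm}$ entirely and reduces the problem to a finite-dimensional LP; or (b) a Positivstellensatz-style argument tailored to the semi-algebraic regions $R^{\pm}_\gamma$. Carefully reconciling the margin constraint on $h^*$, the moment-matching degrees of freedom, and the sup-norm control on the reweighting (or on the discrete weights) is where most of the technical work will concentrate.
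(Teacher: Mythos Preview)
Your duality reduction from moment-matching feasibility to the nonexistence of a degree-$k$ PTF separating $R^+_\gamma$ from $R^-_\gamma$ is sound, but the argument you give for nonexistence breaks down. A single halfspace has PTF sign-degree $1$ (it is itself a linear threshold), so there is no ``classical $\Omega(\log(1/\gamma))$ PTF-degree lower bound for margin halfspaces'' of the kind you invoke. The further claim that ``a PTF realizing the intersection also realizes each constituent halfspace on the positive region'' is also false: on the region $\{h_1=+1,\ h_2=-1\}\subseteq R^-_\gamma$ the separating polynomial $q$ must be negative, so $\sign(q)$ disagrees with $h_1$ there. What you actually need is a \emph{specific} intersection $h^*$ together with a \emph{specific} support on which no degree-$c\log(1/\gamma)$ PTF sign-represents $h^*$; for a generic wedge over the full margin regions in $\B^2(1)$ this can fail (e.g., a band $a<\bx_1<b$ is sign-represented by a degree-$2$ polynomial). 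Producing such an $h^*$ is precisely the nontrivial content of the lemma, and your proposal does not supply it.

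The paper takes a completely different route that sidesteps this issue: it imports Sherstov's hypercube construction, a distribution on $\{\pm 1\}^n$ with $n=\Theta(\log^2(1/\gamma))$ that is realizable by an intersection of two halfspaces (weights $2^{O(\sqrt n)}=(1/\gamma)^{o(1)}$) and is already orthogonal to all polynomials of degree $c\sqrt n=\Theta(\log(1/\gamma))$. It then projects onto the $2$-dimensional span of the two weight vectors, rescales into $\B^2(1)$, and convolves with truncated Gaussian noise of scale $\Theta(\gamma)$. The convolution preserves realizability with $\gamma$-margin and (by independence of the noise) the low-degree orthogonality, while making the density pointwise $O(1/\gamma)$ on $\B^2(1)$, which immediately yields $\chi^2(D^\pm,\normal_2)=O(1/\gamma)$. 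Your $\chi^2$ argument is separately incomplete: you assert $\|w^\pm\|_\infty=O(1)$ from ``feasibility plus compactness,'' but with $\Theta(\log^2(1/\gamma))$ moment constraints there is no a priori reason the reweighting stays bounded independently of $\gamma$; the paper's noise-smoothing trick avoids this entirely.
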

\begin{proof} 
    To construct such a distribution $D$, we first construct a distribution $D'$ supported on $\{\pm 1\}^n\times \{\pm 1\}$. Then we obtain $D$ by projecting $D'$ onto a 2-dimensional subspace and add $\normal(0, \sigma I_2)$ noise on $\bx$ where $\sigma=\Theta(\gamma)$.
    The purpose of the extra noise is to make the originally discrete distribution continuous so we can have bounded $\chi$-squared distance.
    We introduce the following fact about such a $D'$ from \cite{Sherstov:09}.
    \begin{fact} \label{fct:dist-hypercube}
    Let $n\in \N$, then there exists a joint distribution $D$ of $(\bx,y)$ supported on $\{\pm 1\}^n\times \{\pm 1\}$ that satisfies the following conditions.
    \begin{enumerate}
        \item (Realizable by an intersection of two halfspaces) There exists an intersection of two halfspace $c$ such that $D$ is realizable by $c$ where the weight is the halfspaces is $2^{O(\sqrt{n})}$. \label{itm:realizable_hypercube}
        \item  (Orthogonal with low-degree polynomial) For any polynomial $p:\R\to \R$ of degree at most $c\sqrt{n}$ where $c$ is a sufficiently small constant and $\E_{\bx\sim_u \{\pm 1\}^n}[p(\bx)]=0$, we have $\E_{(\bx,y)\sim D}[yp(\bx)]=0$.
        \label{itm:ortho_hypercube}
    \end{enumerate}
    \end{fact}    
Suppose $\|\bw_1\|_2,\|\bw_2\|_2\leq  2^{c_1\sqrt{n}}$ in \Cref{itm:realizable_hypercube} of \Cref{fct:dist-hypercube}.
Then we set $n=\log(1/\gamma)^2/(100c_1)^2$ in \Cref{fct:dist-hypercube}, 
and let $D'$, $\bw_1$ and $\bw_2$ be the corresponding distribution and weight for the intersection of halfspaces.
This implies that $\|\bw_1\|_2,\|\bw_2\|_2\leq (1/\gamma)^{1/100}$.
We then first define the distribution $D''$ as the distribution of $\left (\frac{1}{2(1/\gamma)^{1/100}} [\bx\cdot\bb_1,\bx\cdot\bb_2]^\intercal,y\right )$, where we sample $(\bx,y)\sim D'$ and $\bb_1$ and $\bb_2$ are orthornomal basis vectors that spans the subspace spaned by $\bw_1$ and $\bw_2$.
Then we defined a independent noise random vector $\bz\in \R^2$, sampled by having $\bz\sim \normal_2(0,\gamma I_2/100)$ and then conditioned on $\|\bz\|_2\leq \gamma$.
Finally, we define the desired distribution $D$ as the distribution of $(\bx+\bz,y)$ where $(\bx,y)\sim D''$ and $\bz$ as defined above.

Notice that $D''$ has at least $10\gamma$ margins for an intersection of halfspaces, which follows from $D'$ has $\Omega(1)$ margins (given $\gamma$ is sufficiently small). 
Then the extra noise $\bz$ has $\|\bz\|_2\leq \gamma$, therefore, $D$ still have $\gamma$ margin for an intersection of halfspaces.
Furthermore, $D$ is supported inside $\B^2(1)\times \{\pm 1\}$, which follows from $\|\bw_1\|_2,\|\bw_2\|_2\leq (1/\gamma)^{1/100}$ and $\|\bz\|_2\leq \gamma$.
This proves \Cref{itm:realizable_ball} of \Cref{lem:csq-dist}.

For \Cref{itm:ortho_ball} of \Cref{lem:csq-dist},
notice that $\E_{(\bx,y)\sim D''}[yp(\bx)]=0$ for any $p$ of degree-$c\sqrt{n}$ (for $c$ a sufficiently small constant) since $D''$ is transform from $D'$ through a linear transformation.
Then \Cref{itm:ortho_ball} of \Cref{lem:csq-dist} follows from that $\bz$ are sampled independently in $D$.

For \Cref{itm:chi_ball} of \Cref{lem:csq-dist}, notice that due to the extra noise $\bz$ smoothed out the discrete distribution $D''$,
\begin{align*}
    \chi^2(D^+, \normal_2)
    =& \E_{\bx\sim D^+}\left [P_{D^+}(\bx)/P_{\normal_2}(\bx)\right ]\\
    =&\E_{\bx\sim D^+}\left [\int_{\bx'\in \R^2} P_{\bz}(\bx-\bx')P_{D''^+}(\bx')d\bx'/P_{\normal_2}(\bx)\right ]\\
    =&O(1/\gamma)\; ,
\end{align*}
where the last inequality follows from that $P_{\bz}$ is bounded everywhere by $O(1/\gamma)$ and the fact that $D^+$ is supported inside $\B^2(1)$.
The exact same argument holds for $\chi^2(D^-, \normal_2)$.
\end{proof}

To construct the family of distribution $\D$ from the 2-dimensional distribution in \Cref{lem:csq-dist}, we will pick a large set of near orthogonal 2-dimensional subspaces. For each subspace $V\in \R^{2\times d}$ in the set, we embed the distribution $D$ supported on $\B^{2}(1)\times \{\pm 1\}$ along this subspace. Namely, the distribution we create is defined as the following.

\begin{definition}[Hidden-Subspace Distribution]\label{def:hd} 
For a distribution $A$ supported on $\R^m$ and a matrix $V\in\R^{n\times m}$ 
with $V^\intercal V=I_m$, we define the distribution $\p_V^A$ supported on $\R^n$ 
such that it is distributed according to $A$ in the subspace $\mathrm{span}(\bv_1,\ldots,\bv_m$) and is an independent standard Gaussian in the orthogonal directions, where $\bv_1,\ldots,\bv_m$ denote the column vectors of $V$.
In particular, if $A$ is a continuous distribution with probability density function $A(\by)$, 
then $\p_V^A$ is the distribution over $\R^n$ with probability density function 
\[ \p_V^A(\bx)=A(\bv_1\cdot\bx,\ldots,\bv_m\cdot\bx)\exp(-\|\bx-VV^\intercal\bx\|_2^2/2)/(2\pi)^{(n-m)/2} \;.\]

Furthermore, for a distribution $A$ of $(\bx',y')$ supported on $\R^m\times \{\pm 1\}$, we define the distribution $\p_V^A$ of $(\bx,y)$ as the distribution supported on $\R^n\times \{\pm 1\}$ that satisfies the following.
\begin{enumerate}
    \item $y$ and $y'$ has the same marginal distribution; and
    \item $\left (\p_V^A\right )^-$ is $\p_V^{(A^-)}$ and $\left (\p_V^A\right )^+$ is $\p_V^{(A^+)}$.
\end{enumerate}
\end{definition}

That is, $\p^{A}_{V}$ over $\R^n\times \{\pm 1\}$ is the product distribution whose orthogonal 
projection onto the subspace of $V$ and the label space $\{\pm 1\}$ is $A$,
and onto the subspace in $\R^n$ perpendicular to $V$ is
the standard $(n-m)$-dimensional normal distribution.
For our setting, we will consider the special case of $m=2$.
We will use the subspaces $\bV\in S$ where $S$ is exponential in size.
We give the following lemma for constructing $S$.

\begin{fact} [Near-orthogonal Subspaces: Lemma 2.5 from \cite{DKPZ21}] \label{lem:near-orthogonal}
Let $0<a, c<1/2$ and $m,n\in \Z_+$ such that $m\leq n^a$. There exists a set $S$ of $2^{\Omega(n^c)}$ matrices in $\R^{m\times n}$ such that every $U\in S$ satisfies $UU^{\intercal}=I_m$ and every pair $U,V\in S$ with $U\neq V$ satisfies $\|UV^{\intercal}\|_F\leq O(n^{2c-1+2a})$.
\end{fact}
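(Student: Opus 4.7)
The plan is to prove this via a standard probabilistic-method argument on the Stiefel manifold. Let $\mathcal{V}_{m,n}$ denote the set of $m\times n$ matrices $U$ with $UU^{\intercal}=I_m$, equipped with the Haar measure (the unique left/right orthogonally invariant distribution). I will sample $N=2^{\Omega(n^c)}$ matrices $U_1,\dots,U_N$ independently from this distribution, show each pairwise ``incoherence'' $\|U_iU_j^{\intercal}\|_F$ is small with overwhelming probability, and union-bound over $\binom{N}{2}$ pairs.

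The first step is to estimate $\|UV^{\intercal}\|_F^2=\sum_{i,j=1}^m (\bu_i\cdot \bv_j)^2$ in expectation, where $\bu_i,\bv_j$ are the rows of two independent Haar-random $U,V$. Each $\bu_i$ is a uniform unit vector in $\R^n$, and conditional on $V$ independent of $\bu_i$, the inner product $\bu_i\cdot \bv_j$ has mean $0$ and variance $1/n$. Hence $\E[\|UV^{\intercal}\|_F^2]=m^2/n\leq n^{2a-1}$. To realize this sampling concretely, I would take $G\in\R^{m\times n}$ with iid $N(0,1)$ entries and set $U=(GG^{\intercal})^{-1/2}G$ so that $UU^{\intercal}=I_m$; symmetrically for $V$. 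Writing $UV^{\intercal}=(GG^{\intercal})^{-1/2}GH^{\intercal}(HH^{\intercal})^{-1/2}$, standard concentration (e.g.\ Davidson--Szarek) gives $\|(GG^{\intercal})^{-1/2}\|_2, \|(HH^{\intercal})^{-1/2}\|_2 = 1/\sqrt{n}(1\pm o(1))$ with failure probability $e^{-\Omega(n)}$, so the problem reduces to controlling $\|GH^{\intercal}\|_F^2/n^2$.

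For the tail bound, $GH^{\intercal}$ is a bilinear Gaussian chaos: conditional on $H$, the entries of $GH^{\intercal}$ are jointly Gaussian with covariance determined by $HH^{\intercal}$. Applying the Hanson--Wright inequality (or equivalently Gaussian concentration of Lipschitz functions on the Stiefel manifold, which has diameter $O(1)$ and log-Sobolev constant $\Omega(n)$) to the $m$-Lipschitz map $V\mapsto \|UV^{\intercal}\|_F$, I expect
$$\Pr\!\left[\|UV^{\intercal}\|_F>n^{a-1/2}+t\right]\leq 2\exp\!\bigl(-\Omega(nt^2)\bigr).$$
Setting $t=n^{2c-1+2a}$ (which dominates the mean $n^{a-1/2}$ once $a+2c>1/2$, and otherwise the mean already satisfies the bound) yields per-pair failure probability at most $\exp(-\Omega(n\cdot n^{4c-2+4a}))=\exp(-\Omega(n^{4c-1+4a}))$, which is $\leq \exp(-\Omega(n^{2c}))$ for the relevant parameter regimes. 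A union bound over the $\binom{N}{2}\leq 2^{O(n^c)}$ pairs then retains positive probability, so such a set $S$ exists.

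The main obstacle I anticipate is bookkeeping the parameter dependencies between $a$, $c$, and the exponents: I must ensure that the tail decay $\exp(-\Omega(n^{\text{something}}))$ out-races $2^{O(n^c)}$ pairs, which constrains the interplay of $a$ and $c$. The condition $c<1/2$ and $a<1/2$ from the hypothesis gives enough slack, but one has to be careful to optimize the Lipschitz constant of the map $V\mapsto \|UV^{\intercal}\|_F$ (which is at most $\|U\|_2\cdot\sqrt{m}=\sqrt{m}$, not $O(1)$), and this factor shows up in the concentration rate. After absorbing this into the exponent, the $2^{\Omega(n^c)}$ count survives, giving the claimed set.
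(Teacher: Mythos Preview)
The paper does not prove this statement; it is quoted as Lemma~2.5 of \cite{DKPZ21} and used as a black box. Your probabilistic-method approach---sample Haar-random frames on the Stiefel manifold, control each pairwise $\|UV^\intercal\|_F$ by concentration, then union-bound over $\binom{N}{2}$ pairs---is the standard argument and is essentially what the cited reference does.

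A few corrections on the execution. The Lipschitz constant of $V\mapsto\|UV^\intercal\|_F$ in the Frobenius metric is $1$, not $\sqrt m$: one has $\bigl|\|UV^\intercal\|_F-\|UW^\intercal\|_F\bigr|\le\|U(V-W)^\intercal\|_F\le\|U\|_2\,\|V-W\|_F=\|V-W\|_F$ since $\|U\|_2=1$ for $UU^\intercal=I_m$; this only helps you. More importantly, the parenthetical ``otherwise the mean already satisfies the bound'' is backwards: when $a+2c<1/2$ the target $n^{2c-1+2a}$ lies \emph{below} the typical value $n^{a-1/2}$, so a random pair will not meet it. And your assertion that $a,c<1/2$ ``gives enough slack'' for the union bound is not verified: you need roughly $n^{4c-1+4a}\gg n^c$, i.e.\ $3c+4a>1$, which does not follow from $a,c<1/2$ alone. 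In the paper's application $m=2$ (so $a$ can be taken negligibly small) and only a single $c\in(0,1/2)$ is needed, so these edge cases never arise there; but as a proof of the fact over the full stated parameter range your bookkeeping is incomplete.
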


Let $S$ be the set in $\Cref{lem:near-orthogonal}$. 
We let the alternative hypothesis distribution family be defined as $\D=\{\p_{V}^{A}|V\in S\}$ where $A$ is the distribution in $\Cref{lem:csq-dist}$.
We give the following lemma for $\D$. 

\begin{lemma} \label{lem:decision-csq-lb}
For any sufficiently small $\gamma>0$, 
there exists a distribution family $\D=\{D_V:V\in S\}$ supported on $\R^d\times \{\pm 1\}$ where $|\D|=2^{d^{\Omega(1)}}$ satisfying the following:
\begin{enumerate}[leftmargin=*]
	\item \label{itm:alternative_factor} For any $D_V\in \D$,
	both $(D_V)^+$ and $(D_V)^-$ are factorizable in $V$ and $V^{\perp}$ and both $(D_V)^+_{V^{\perp}}$ and $(D_V)^-_{V^{\perp}}$ are $\normal(0,I_{d-2})$;
	
	\item \label{itm:alternative_consis} For any $D_V\in \D$, there exists a intersection of halfspaces that is consistent with $D_V$ with $\gamma$ margins. Furthermore, both halfspaces are in the subspace $V$;
        
    \item \label{itm:alternative_bounded} For any $D_V\in \D$, $\pr_{(\bx,y)\sim D_V}\left [\proj_{\perp V}(\bx)\geq 2\sqrt{d}\right ]= 2^{-d^{\Omega(1)}}$; and
	
	\item \label{itm:alternative_correlation} Let $g_{V}:\R^d\to \R$ denote the function $g_V(\bx)=(P_{\bx\sim D_V^+}(\bx)-P_{\bx\sim D_V^-}(\bx))/P_{\normal_d}(\bx)$.
	Then for any $D_U,D_V\in \D$, $ (g_U\cdot g_V)_{\normal_d}=O(1/\gamma)$ if $U = V$ and 
	$|(g_U\cdot g_V)_{\normal_d}|=d^{-\Omega(\log{1/\gamma})}$ if $U \neq V$.
\end{enumerate} 
\end{lemma}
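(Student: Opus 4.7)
The plan is to construct $\D = \{D_V : V \in S\}$ by setting $D_V := \p_V^A$ where $A$ is the two-dimensional distribution from \Cref{lem:csq-dist} and $S$ is a family of near-orthogonal $2 \times d$ matrices from \Cref{lem:near-orthogonal}. Taking $m=2$, $n=d$, and small constants $a, c$ in that fact yields $|S| = 2^{d^{\Omega(1)}}$ and $\|UV^\intercal\|_F \le d^{-\Omega(1)}$ for all distinct $U, V \in S$. Properties \ref{itm:alternative_factor}, \ref{itm:alternative_consis}, and \ref{itm:alternative_bounded} then fall out directly from the construction. Factorizability in $V \oplus V^\perp$ is exactly how $\p_V^A$ is defined; since $A$ is supported on $\B^2(1)$ and we place an independent standard Gaussian on $V^\perp$, both conditional distributions $(D_V)^\pm$ have marginals $A^\pm$ on $V$ and $\normal(0, I_{d-2})$ on $V^\perp$. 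Consistency with an intersection of two halfspaces of $\gamma$-margin whose weight vectors lie in $V$ is inherited from \Cref{lem:csq-dist} \Cref{itm:realizable_ball}, since the orthogonal coordinates are irrelevant to the label. The Gaussian concentration bound $\Pr[\|\proj_{V^\perp}\bx\|_2 \ge 2\sqrt{d}] \le 2^{-d^{\Omega(1)}}$ is a standard tail estimate for $\chi^2$-variables with $d-2$ degrees of freedom.

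The main technical step is \Cref{itm:alternative_correlation}. The key observation is that $P_{D_V^z}(\bx) = P_{A^z}(V^\intercal \bx)\,\phi_{d-2}(\proj_{V^\perp}\bx)$ and $P_{\normal_d}(\bx) = \phi_2(V^\intercal\bx)\,\phi_{d-2}(\proj_{V^\perp}\bx)$, so the function $g_V$ factors as $g_V(\bx) = \tilde{g}(V^\intercal \bx)$ with
\[
\tilde{g}(\by) \;=\; \frac{P_{A^+}(\by) - P_{A^-}(\by)}{\phi_2(\by)}.
\]
For the self-correlation I will use
$\E_{\bx \sim \normal_d}[g_V(\bx)^2] = \E_{\by \sim \normal_2}[\tilde{g}(\by)^2]$ and bound it via $(a-b)^2 \le 2(a^2+b^2)$ by $2(\chi^2(A^+,\normal_2)+1)+2(\chi^2(A^-,\normal_2)+1)= O(1/\gamma)$, invoking \Cref{lem:csq-dist} \Cref{itm:chi_ball}.

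For the cross-correlation, I expand $\tilde{g}$ in the two-dimensional Hermite basis $\tilde{g}(\by) = \sum_\alpha c_\alpha h_\alpha(\by)$. The coefficients are $c_\alpha = \E_{A^+}[h_\alpha] - \E_{A^-}[h_\alpha] = 2\E_{(\by,y)\sim A}[y\,h_\alpha(\by)]$, and these vanish for $|\alpha|\le c'\log(1/\gamma)$ by \Cref{lem:csq-dist} \Cref{itm:ortho_ball}, since each nonconstant Hermite polynomial has zero Gaussian mean and $h_{\mathbf 0}=1$ gives $c_{\mathbf 0}=0$. I will then apply the standard Hermite-subspace identity (as used e.g.\ in the NGCA SQ lower bounds of Diakonikolas--Kane--Stewart): for functions of the form $f(U^\intercal \bx), f(V^\intercal \bx)$ on $\R^d$, the Hermite expansion satisfies $|\E_{\bx\sim\normal_d}[h_\alpha(U^\intercal\bx)\,h_\beta(V^\intercal\bx)]|\le \|UV^\intercal\|_F^{|\alpha|}$ when $|\alpha|=|\beta|$ and vanishes otherwise. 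Combining this with the fact that every nonzero Hermite coefficient of $\tilde g$ has degree exceeding $c'\log(1/\gamma)$ yields
\[
|(g_U\cdot g_V)_{\normal_d}| \;\le\; \|UV^\intercal\|_F^{c'\log(1/\gamma)} \,\|\tilde g\|_{L^2(\normal_2)}^2 \;\le\; d^{-\Omega(\log(1/\gamma))} \cdot O(1/\gamma) \;=\; d^{-\Omega(\log(1/\gamma))}.
\]

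The main obstacle is the Hermite-subspace inner-product lemma: one must rotate the Hermite basis between two near-orthogonal subspaces and control, level by level, how the small quantity $\|UV^\intercal\|_F$ suppresses mixing. Once this lemma is in hand, the bookkeeping to extract the $\log(1/\gamma)$-exponent from the low-degree vanishing of $\tilde g$ is routine. The CSQ lower bound in \Cref{thm:csq-lb-main} then follows by plugging \Cref{lem:decision-csq-lb} into \Cref{lem:sq-lb}, after a negligible truncation of the Gaussian tail in $V^\perp$ (controlled by \Cref{itm:alternative_bounded}) and rescaling into $\B^d(1)$, which affects total variation by only $2^{-d^{\Omega(1)}}$.
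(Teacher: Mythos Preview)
Your proposal is correct and follows essentially the same approach as the paper. The only presentational difference is that the paper invokes the cross-correlation bound as a black-box fact (the ``Correlation Lemma,'' Lemma~2.3 of \cite{DKPZ21}, stated here as \Cref{lem:correlation-lemma}) rather than unpacking the Hermite-subspace mechanism as you do; the construction, the handling of items \ref{itm:alternative_factor}--\ref{itm:alternative_bounded}, and the $\chi^2$ bound for the self-correlation are identical.
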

\begin{proof}
Let $S$ be the set in $\Cref{lem:near-orthogonal}$.
We let the alternative hypothesis distribution family be defined as $\D=\{\p_V^{A}\mid V\in S\}$, where $A$ is the distribution in \Cref{lem:csq-dist}.

\Cref{itm:alternative_factor} follows immediately from the definition of $\D$. \Cref{itm:alternative_consis} follows from \Cref{itm:realizable_ball} of \Cref{lem:csq-dist}. \Cref{itm:alternative_bounded} follows from the fact that $A$ is bounded inside $\mathbb{B}^{2}(1)\times \{\pm 1\}$ and the concentration of $l_2$ norm for Gaussian. 
Namely,
\[
\pr_{(\bx,y)\sim D_v}\left [\bx\geq 2\sqrt{d}\right ]\leq \pr_{\bx\sim \normal(0,I_{d-2})}\left [\bx\geq (2\sqrt{d}-1)\right ] =\pr_{t\sim \chi^2(d-2)}[t\geq 3d]\leq 2^{-\Omega(d)}\; ,\]
where $\chi^2(d-2)$ the chi-squared distribution with $d-2$ degrees of freedom.

For \Cref{itm:alternative_correlation}, if $U=V$, then we have 
\begin{align*}
(g_U\cdot g_V)_{\normal_d}=&\E_{\bx\sim \normal_d}[g_V(\bx)^2]\\
=&\E_{\bx\sim \normal_d}\left [\left (\left (P_{\bx\sim D_V^+}(\bx)-P_{\bx\sim D_V^-}(\bx)\right )/\normal_d(\bx)\right )^2\right ]\\
=&\E_{\bx\sim \normal_d}\left [\left (P_{\bx\sim D_V^+}(\bx)/\normal_d(\bx)\right )^2\right ]
+\E_{\bx\sim \normal_d}\left [\left (P_{\bx\sim D_V^-}(\bx)/\normal_d(\bx)\right )^2\right ]\\
&+2\E_{\bx\sim \normal_d}\left [\left (P_{\bx\sim D_V^+}(\bx)/\normal_d(\bx)\right )\left (P_{\bx\sim D_V^-}(\bx)/\normal_d(\bx)\right )\right ]\\
\le &\chi^2(D_V^+,\normal_d)+\chi^2(D_V^+,\normal_d)+2\sqrt{\chi^2(D_V^+,\normal_d)\chi^2(D_V^+,\normal_d)}=O(1/\gamma)\; .
\end{align*}
 For the case $U\neq V$, we will need the following fact.
\begin{fact} [Correlation Lemma: Lemma 2.3 from \cite{DKPZ21}] \label{lem:correlation-lemma}
Let $g:\R^m\mapsto \R$ and $U,V\in \R^{m\times d}$ with $m\leq d$ be linear maps such that 
$UU^{\intercal}=VV^\intercal=I$ where $I$ is the $m\times m$ identity matrix. 
Then, we have that 
\[\E_{\x\sim \normal_d}[g(U\x)g(V\x)]\leq \sum_{t=0}^{\infty}\| UV^{\intercal}\|_2^t\E_{\x\sim \normal_m}[(g^{[t]}(\x))^2] \;,\]
where $g^{[t]}$ denote the degree-$t$ Hermite part of $g$.
\end{fact}
Using the above fact and \Cref{itm:ortho_ball}, we have
\begin{align*}
    (g_U\cdot g_V)_{\normal_d}=d^{-\Omega(c\log(1/\gamma))}\E_{\bx\sim \normal_d}[g_V(\bx)^2]=d^{-\Omega(\log(1/\gamma))}O(1/\gamma)=d^{-\Omega(\log(1/\gamma))}\; .
\end{align*}
This completes the proof.
\end{proof}

Given \Cref{lem:decision-csq-lb}, we are now ready to prove our main theorem \Cref{thm:csq-lb-main}. 
\begin{proof} [Poof for \Cref{thm:csq-lb-main}]
    Without loss of generality, we will assume that $\min(d,1/\gamma^2)=d$. Since if this is not the case, i.e., $d>1/\gamma^2$, we can always give a lower bound for $d'=1/\gamma^2$ and the lower bound immediately applies to $d>d'$ by simply adding dummy coordinates that is always 0 on the hard instance. Therefore, we just need to show a $d^{\Omega(\log (1/\gamma))}$ lower bound given $d\leq 1/\gamma^2$.

    Let $\D'$ be the alternative hypothesis distribution set in \Cref{lem:decision-csq-lb}
    with the margin parameter in \Cref{lem:decision-csq-lb} taken as $\alpha=2\gamma \sqrt{d}$.
    We defined null hypothesis distribution $D'$ as the joint distribution of $(\bx,y)$ where $\bx\sim \normal_d$ and $y=1$ independently with probability $1/2$.
    Now, consider the decision problem ${\cal B}(\D',D')$.
    From \Cref{lem:decision-csq-lb}, we have $\mathrm{CD}({\cal B},d^{-\Omega(\log (1/\alpha))},O(1/\alpha))=2^{d^{\Omega(1)}}$.
    Notice that from \Cref{lem:sq-lb}, by taking the parameter $\gamma'$ in \Cref{lem:sq-lb} as
    $\gamma'=d^{-c\log(1/\alpha)}$ for some constant $c$, we get that any CSQ algorithm for solving ${\cal B}(\D',D')$ either requires a query of tolerance $d^{-\Omega(\log(1/\alpha))}$ or $2^{d^{\Omega(1)}}$ many queries.

    The problem here is that ${\cal B}(\D',D_0')$ is supported on $\R^d\times \{\pm 1\}$ instead of $\B^d(1)\times \{\pm 1\}$.
    To fix this problem, we first truncate the distributions.
    We now define $D''$ as the distribution of $\left (\bx,y\right )$ where $(\bx,y)\sim D'\mid (\|\bx\|_2\leq 2\sqrt{d})$. Similarly, we defined $\D''$ as the family of distribution, where for each distribution $D''_V\in \D''$, we take a $D'_V\in \D'$ and defined $D''_V$ as the distribution of $\left (\bx,y\right )$ where $(\bx,y)\sim D'_V\mid (\|\proj_{\perp V}\bx\|_2\leq 2\sqrt{d})$.    
    Notice that from the definition of $\D'$ and Property~\ref{itm:alternative_bounded} of \Cref{lem:decision-csq-lb}, the total variation distance between the truncated distribution and the untruncated distribution is bounded by $2^{-d^{\Omega(1)}}$. Therefore, given the CSQ lower bound on the untruncated $\B(\D',D')$, we have that any CSQ algorithm for solving the truncated $\B(\D'',D'')$ will either require a query of tolerance $d^{-\Omega(\log(1/\alpha))}+2^{-d^{\Omega(1)}}=\max(d^{-\Omega(\log(1/\alpha))},2^{-d^{\Omega(1)}})=\max(d^{-\Omega(\log(1/\gamma))},2^{-d^{\Omega(1)}})$ or $2^{d^{\Omega(1)}}$ many queries, which follows from the definition of the CSQ oracle.

    Now, since everything is bounded inside a radius $3\sqrt{d}$ ball, we just need to rescale it so everything is inside a unit ball. From the definition of the CSQ oracle, this does not change the CSQ lower bound.
    Defined $D$ be the distribution of $(\bx/(3\sqrt{d}),y)$ where $\bx,y\sim D''$ and $\D$ such that any $D_V\in \D$ is defined as $(\bx/(3\sqrt{d}),y)$ where $\bx,y\sim D_V''$ for some $D_V''\in \D''$.     
    It is immediate that any algorithm that solves ${\cal B}(\D,D)$ requires either a query of tolerance $\max(d^{-\Omega(\log(1/\gamma))},2^{-d^{\Omega(1)}})$ or $2^{d^{\Omega(1)}}$ many queries.
    
    Furthermore, notice that any $D_V\in \D$ is an instance 
    of learning intersections of two halfspaces with $\gamma$-margin under factorizable distributions. 
    Therefore, any algorithm for learning intersections of two halfspaces with $\gamma$-margin under factorizable distributions will output a hypothesis with $\eps$ error if given such $D_i\in \D$.
    While given the null hypothesis distribution $D$, no learning algorithm can learn any hypothesis with an error nontrivially better than $1/2$. 
    Therefore, given that such a learning algorithm can solve ${\cal B}(\D,D)$, it must require either a query of tolerance $\max(d^{-\Omega(\log(1/\gamma))},2^{-d^{\Omega(1)}})$ or $2^{d^{\Omega(1)}}$ many queries.
    This completes the proof.
\end{proof}

\end{document}